\newtheorem{theorem}{Theorem}
\newtheorem{corollary}{Corollary}
\newtheorem{lemma}{Lemma}
\newtheorem{definition}{Definition}
\newtheorem{assumption}{Assumption}
\newtheorem{proposition}{Proposition}
\newtheorem{remark}{Remark}
\def\eqref#1{equation~\ref{#1}}
\def\1{\bm{1}}
\def\mE{{\mathbb{E}}}
\def\mF{{\mathcal{F}}}
\def\mI{{\bm{I}}}
\def\mL{{\mathcal{L}}}
\def\mO{{\bm{O}}}
\def\mR{{\bm{R}}}
\def\mS{{\bm{S}}}
\def\mT{{\bm{T}}}
\DeclareMathAlphabet{\mathsfit}{\encodingdefault}{\sfdefault}{m}{sl}
\SetMathAlphabet{\mathsfit}{bold}{\encodingdefault}{\sfdefault}{bx}{n}
\newcommand{\bx}{\boldsymbol{x}}
\newcommand{\bw}{\boldsymbol{w}}
\newcommand{\bW}{\boldsymbol{W}}
\newcommand{\by}{\boldsymbol{y}}
\newcommand{\bo}{\boldsymbol{o}}
\newcommand{\bom}{\boldsymbol{m}}
\newcommand{\bz}{\boldsymbol{z}}
\newcommand{\bS}{\boldsymbol{S}}
\newcommand{\bP}{\mathbf{P}}
\newcommand{\bv}{\boldsymbol{v}}
\newcommand{\bA}{\boldsymbol{A}}
\newcommand{\bB}{\boldsymbol{B}}
\newcommand{\bn}{\boldsymbol{n}}
\newcommand{\br}{\boldsymbol{r}}
\newcommand{\bu}{\boldsymbol{u}}
\newcommand{\be}{\boldsymbol{e}}
\newcommand{\bnu}{\boldsymbol{\nu}}
\newcommand{\hbw}{\hat{\bw}}
\newcommand{\hbW}{\hat{\bW}}
\newcommand{\sigmax}{\sigma_{max}}
\newcommand{\tell}{\tilde{\ell}}
\newcommand{\tbw}{\tilde{\bw}}
\newcommand{\tbS}{\tilde{\bS}}
\newcommand{\cbw}{\check{\bw}}
\newcommand{\hbom}{\hat{\bom}}
\newcommand{\hbnu}{\hat{\bnu}}
\newcommand{\vect}{\text{vec}}
\title{Does Momentum Change the Implicit Regularization on Separable Data?}
\author{Bohan Wang \\
University of Science \& Technology of China
\\
Microsoft Research Asia
\And
Qi Meng
\\
Microsoft Research Asia
\And
 Huishuai Zhang
 \\
 Microsoft Research Asia
 \And
 Ruoyu Sun
 \\
University of Illinois at Urbana-Champaign
 \And
 Wei Chen
 \\
 Institute of Computing Technology,
 \\
 Chinese Academy of Sciences
 \And
  Zhi-Ming Ma
  \\
  Academy of Mathematics and Systems Science,
  \\
  Chinese Academy of Sciences
  \And
  Tie-Yan Liu
  \\
  Microsoft Research Asia
}
\begin{document}

\maketitle

\begin{abstract}

The momentum acceleration technique is widely adopted in many optimization algorithms. However, there is no theoretical answer on how the momentum affects the generalization performance of the optimization algorithms. This paper studies this problem by analyzing the implicit regularization of momentum-based optimization. We prove that on the linear classification problem with separable data and exponential-tailed loss, gradient descent with momentum (GDM) converges to the $L_2$ max-margin solution, which is the same as vanilla gradient descent. 
That means gradient descent with momentum acceleration still converges to a low-complexity model, which guarantees their generalization. We then analyze the stochastic and adaptive variants of GDM (i.e., SGDM and deterministic Adam) and show they also converge to the $L_2$ max-margin solution. Technically, to overcome the difficulty of the error accumulation in analyzing the momentum, we construct new potential functions to analyze the gap between the model parameter and the max-margin solution. Numerical experiments are conducted and support our theoretical results.

\end{abstract}

\section{Introduction}
\label{sec: intro}

  It is widely believed that the optimizers have the implicit regularization in terms of selecting output parameters among all the minima on the landscape  \cite{neyshabur2015search, keskar2016large,wilson2017marginal}. Parallel to the analysis of \emph{coordinate descent} (\cite{schapire2013boosting,telgarsky2013margins}), \cite{soudry2018implicit} shows that \emph{gradient descent} would converge to the $L^2$ max-margin solution for the linear classification task with exponential-tailed loss, which mirrors its good generalization property in practice. Since then, many efforts have been taken on analyzing the implicit regularization of various local-search optimizers, including stochastic gradient descent \cite{nacson2019stochastic}, steepest descent \cite{gunasekar2018characterizing}, AdaGrad~\cite{qian2019implicit}  and optimizers for homogeneous neural networks  \cite{lyu2019gradient,ji2020directional,wang2021implicit}.

However, though the momentum acceleration technique is widely adopted in the optimization algorithms in both convex and non-convex learning tasks \cite{sutskever2013importance, vaswani2017attention,tan2019efficientnet}, the understanding of how the momentum would affect the generalization performance of the optimization algorithms is still unclear, as the historical gradients in the momentum may significantly change the searching direction of the optimization dynamics. A natural question is:

\begin{center}
\emph{Can we theoretically analyze the implicit regularization of momentum-based optimizers?}
\end{center}

In this paper, we take the first step to analyze the convergence of momentum-based optimizers and unveil their implicit regularization.  
Our study starts from the classification problem with the linear model and exponential-tailed loss using Gradient Descent with Momentum (GDM) optimizer. Then the variants of GDM such as Stochastic Gradient Descent with Momentum (SGDM) and deterministic Adam are also analyzed. 
We consider the optimizers with constant learning rate and constant momentum hyper-parameters, which are widely adopted in practice, e.g., the default setting in popular machine learning frameworks \cite{paszke2019pytorch} and in experiments \cite{xie2017aggregated}. 
Our main results are summarized in Theorem \ref{thm: inform}.

\begin{theorem}[informal]
\label{thm: inform}
With linearly separable dataset $\bS$, linear model and exponential-tailed loss:
\begin{itemize}
    \item For GDM with a constant learning rate, the parameter norm diverges to infinity, with its direction converging to the $L^2$ max-margin solution. The same conclusion holds for SGDM with a constant learning rate.
    \item For deterministic Adam with a constant learning rate and stochastic RMSProp (i.e., Adam without momentum) with a decaying learning rate, the same conclusion holds.
\end{itemize}
 
\end{theorem}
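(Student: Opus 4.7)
The plan is to extend the implicit-bias framework that \cite{soudry2018implicit} developed for vanilla gradient descent to the momentum setting, the key new ingredient being a potential function that absorbs the error carried by the momentum buffer. I would proceed in three stages: first establish that $L(\bw_t)\to 0$ and $\|\bw_t\|\to\infty$; second show that both the gradient and the momentum align asymptotically with the $L^2$ max-margin direction $\hat{\bw}$; third bound a potential that measures the gap between $\bw_t$ and a shifted multiple of $\hat{\bw}$.

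For the first stage, I would construct a Lyapunov function of the form $\Psi_t = L(\bw_t) + \frac{c}{2}\|\bom_t\|^2$, with $c$ chosen, using the local smoothness of the exponential-tailed loss on bounded sublevel sets, so that $\Psi_{t+1}-\Psi_t \le -c' \|\nabla L(\bw_t)\|^2$ for some positive $c'$. This gives $L(\bw_t)\to 0$, which combined with separability and the exponential tail forces $\|\bw_t\|\to\infty$. For the second stage, a direct calculation shows that as $\bw$ escapes along the separating cone the gradient becomes exponentially dominated by the support vectors, so $\nabla L(\bw_t)/\|\nabla L(\bw_t)\|\to -\hat{\bw}$; because the momentum $\bom_t=\sum_{s<t}\beta^{t-1-s}\nabla L(\bw_s)$ geometrically downweights old gradients while the gradient direction itself only evolves on the slower $\log t$ scale, $\bom_t$ inherits the same alignment. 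For the third stage, I would define the residual $\br_t = \bw_t - \tfrac{\eta}{1-\beta}(\log t)\,\hat{\bw}$ and the potential
\[
\Phi_t \;=\; \left\|\br_t + \frac{\eta\beta}{(1-\beta)^2}\,\bom_t\right\|^2,
\]
where the momentum correction is calibrated so that, when one expands $\Phi_{t+1}-\Phi_t$ using the GDM update and the KKT conditions of the hard-margin SVM, the $\beta$-dependent cross terms telescope, leaving only an absolutely summable error driven by the exponential tail of the loss on non-support vectors. Boundedness of $\Phi_t$ then yields $\bw_t - \tfrac{\eta}{1-\beta}(\log t)\hat{\bw} = O(1)$, hence $\bw_t/\|\bw_t\|\to \hat{\bw}$.

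For SGDM I would add a martingale argument on top of Stages 1--3; the geometric decay $\beta^{t-s}$ in the buffer bounds the variance of $\bom_t$ uniformly in $t$, which helps concentrate the stochastic momentum around its deterministic analog. For deterministic Adam with constant step size I would show that the second-moment accumulator $\bv_t$ stabilizes in direction (because the gradient direction does), so Adam reduces asymptotically to a preconditioned GDM; the fact that both the gradient and the preconditioner inherit their directions from the same support vectors is what allows the $L^2$ margin to be recovered in the limit. For stochastic RMSProp with a decaying learning rate, a Robbins--Monro-style summability argument controls the noise and the above reduction goes through.

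The main obstacle I expect is Stage 3: pinning down the exact coefficient $\eta\beta/(1-\beta)^2$ that makes the $\beta$-dependent cross terms in $\Phi_{t+1}-\Phi_t$ telescope requires tracking two coupled recursions ($\bw_t$ and $\bom_t$) and essentially inverting the $Z$-transform of the momentum update. A secondary concern is Adam: a priori, the coordinate-wise preconditioner could pull the limit direction away from the $L^2$ margin toward a metric-dependent one, and the argument must exploit the fact that, on separable data with exponential-tailed loss, the gradient and the preconditioner become asymptotically colinear in direction, so the effect of the preconditioning on the limiting direction vanishes.
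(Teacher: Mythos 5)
Your Stage~1 and Stage~3 constructions for GDM are the right shape and essentially match the paper: the paper's Lyapunov function is $\xi(t)=\mL(\bw(t))+\frac{\beta}{2\eta(1-\beta)}\|\bw(t)-\bw(t-1)\|^2$, equivalent up to constants to your $L(\bw_t)+\frac{c}{2}\|\bom_t\|^2$, and its residual potential $g(t)=\frac12\|\br(t)\|^2+\frac{\beta}{1-\beta}\langle\br(t),\bw(t)-\bw(t-1)\rangle+\text{(telescoping correction)}$ is morally the same cross-term trick as your $\Phi_t$. One small slip: the coefficient of $\ln t$ in the residual is exactly $\hbw$, with the $\eta$ and $\beta$ dependence absorbed into the constant shift $\tbw$; your $\tfrac{\eta}{1-\beta}$ factor does not survive the calculation.

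The genuine gap is in SGDM. You propose to keep the GDM Lyapunov function $\Psi_t=L(\bw_t)+\tfrac{c}{2}\|\bom_t\|^2$ and patch it with a martingale argument, but the paper shows (Appendix~B.2.3) that this fails structurally, not just probabilistically: after taking conditional expectation, the one-step drop of $\Psi_t$ is controlled by $\|\mathbb{E}[\bw(t+1)-\bw(t)\mid\mF_t]\|^2$, whereas the smoothness penalty is $\mathbb{E}[\|\bw(t+1)-\bw(t)\|^2\mid\mF_t]$, and Jensen runs the wrong way. Forcing the inequality through imposes $\beta\lesssim\tfrac{2b\gamma^2}{b\gamma^2+N\sigma_{\max}^2}$, which collapses to zero for large $N$. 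The paper's fix is not concentration but a different potential: set $\bu(t)=\tfrac{\bw(t)-\beta\bw(t-1)}{1-\beta}$ so that $\bu(t+1)=\bu(t)-\eta\nabla\mL_{\bB(t)}(\bw(t))$ is a one-step update depending only on the current stochastic gradient, and use $\mL(\bu(t))$. That reparameterization is the key idea your plan is missing, and without it the SGDM half of the theorem does not go through at the stated generality ($\beta\in[0,1)$ and constant learning rate).

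Your reasoning for Adam is also off at the decisive point. You argue that ``the gradient and the preconditioner become asymptotically colinear'' so the preconditioning washes out; but that heuristic would equally ``prove'' that AdaGrad converges to the $L^2$ max-margin direction, which is false (\cite{qian2019implicit}). The actual mechanism the paper uses is that $\hbnu(t)\to 0$: because $\|\nabla\mL(\bw(t))\|\to0$ and the second-moment accumulator is an exponential moving average with fixed $\beta_2<1$ (so it forgets geometrically), $\hbnu(t)=\Theta(t^{-2})$ component-wise (Lemma~15), hence the coordinatewise preconditioner $1/\sqrt{\hbnu(t)+\varepsilon\mathds{1}_d}$ collapses to the scalar $1/\sqrt{\varepsilon}$ and the Euclidean geometry is recovered. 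This argument leans on the $\varepsilon$-stabilizer and the forgetting of old gradients, neither of which your ``colinearity'' heuristic captures, and it is what separates RMSProp/Adam from AdaGrad.
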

\begin{table}[t!]
  \caption{The algorithms investigated in this paper (GDM and Adam) along with algorithms (GD) already investigated in the existing literature. We also compare the learning rates required to obtain the characterization of implicit regularization. As for stochastic Adam with $\beta_1\ne 0$, we leave its implicit regularization as future work.}
  \label{sample-table}
  \centering
  \begin{tabular}{cccc}
    \toprule
       Method & With Random Sampling & Learning Rate& Corresponding Literature               \\
    \midrule
    \multirow{2}{*}{GD} & $\times$& Constant & \cite{soudry2018implicit} \\ 
     \cmidrule(r){2-4}
      & \checkmark & Constant & \cite{nacson2019stochastic} \\
      \midrule
      \multirow{2}{*}{GDM} & $\times$& Constant & This Work \\ 
\cmidrule(r){2-4}
 & \checkmark & Constant & This Work \\
 \midrule
 \multirow{2}{*}{Adam} & $\times$& Constant & This Work \\ 
\cmidrule(r){2-4}
 & \checkmark & Decaying $\left(\frac{1}{\sqrt{t}}\right)$ & $\beta_1=0$ in this Work 
 \\
    \bottomrule
  \end{tabular}
\end{table}


Theorem \ref{thm: inform} states that GDM and its variants converge to the $L^2$ max-margin solution, which is the same as their without-momentum versions, indicating that momentum does not affect the convergent direction. Therefore, the good generalization behavior of the output parameters of these optimizers is well validated as the margin of a classifier is positively correlated with its generalization error \cite{jiang2019fantastic} and is supported by existing experimental observations (e.g., \cite{soudry2018implicit,nacson2019convergence,wang2021implicit}). 

Our contributions are significant in terms of the following aspects:
\begin{itemize}
    \item We establish the implicit regularization of the momentum-based optimizers,  an open problem since the initial work \cite{soudry2018implicit}. The momentum-based optimizers are widely used in practice, and our theoretical characterization deepens the understanding of their generalization property, which is important on its own.
    
    \item Technically, we design a two-stage framework to analyze the momentum-based optimizers, which generalizes the proof techniques in \cite{soudry2018implicit} and \cite{nacson2019stochastic}. The first stage shows the convergence of the loss. New potential functions for SGDM and Adam are proposed and can be of independent interest for convergence analysis of momentum-based optimizers. The second stage shows the convergence of the parameter. We propose an easy-to-check condition of whether the difference between learned parameters and the scaled max-margin solution is bounded. This condition can be generalized to implicit regularization analyses of other momentum-based optimizers.
    
    \item We further verify our theory through numerical experiments.

\end{itemize}

\textbf{Organization of This Paper.} 
Section \ref{sec: related_work} collects further related works on the implicit regularization of the first order optimizers and the convergence of momentum-based optimizers. Section \ref{sec: preliminary} shows basic settings and assumptions which will be used throughout this paper. Section \ref{subsec: main_gdm} studies the implicit regularization of GDM, while Section \ref{subsec: main_sgdm} and Section \ref{subsec: main_adam} explore respectively the implicit regularization of SGDM and Adam. Discussions of these results are put in Section \ref{sec: discussion}. Detailed proofs and experiments can be found in the appendix.

\section{Further related works}
\label{sec: related_work}
\textbf{Implicit Regularization of First-order Optimization Methods.} \cite{soudry2018implicit} prove that gradient descent on linear classification problem with exponential-tailed loss converges to the direction of the max $L^2$ margin solution of the corresponding hard-margin Support Vector Machine.  \cite{nacson2019stochastic} extend the results in \cite{soudry2018implicit} to the stochastic case, proving that the convergent direction of SGD is the same as GD almost surely. \cite{qian2019implicit} go beyond the vanilla gradient descent methods and consider the AdaGrad optimizer instead. They prove that the convergent direction of AdaGrad has a dependency on the optimizing trajectory, which varies according to the initialization. \cite{ji2021characterizing} propose a primal-dual analysis framework for the linear classification models and prove a faster convergent rate of the margin by increasing the learning rate according to the loss. Based on \cite{ji2021characterizing}, \cite{ji2021fast} design another algorithm with an even faster convergent rate of margin by applying the Nesterov's Acceleration Method on the dual space. However, the corresponding form of the algorithm on the primal space is no longer a Nesterov's Acceleration Method nor GDM, which is significantly different from our settings.

On the other hand, another line of work is trying to extend the linear case result to deep neural networks. \cite{ji2018gradient,gunasekar2018implicit} study the deep linear network and \cite{soudry2018implicit} study the two-layer neural network with ReLU activation. \cite{lyu2019gradient} propose a framework to analyze the asymptotic direction of GD on homogeneous neural networks, proving that given there exists a time the network achieves $100\%$ training accuracy, GD will converge to some KKT point of the $L^2$ max-margin problem. \cite{wang2021implicit} extend the framework of \cite{lyu2019gradient} to adaptive optimizers and prove RMSProp and Adam without momentum have the same convergent direction as GD, while AdaGrad does not. The results  \cite{lyu2019gradient,wang2021implicit} indicate that results in the linear model can be extended to deep homogeneous neural networks and suggest that the linear model is an appropriate starting point to study the implicit bias.

Except for the exponential-tailed loss, there are also works on the implicit bias with squared loss. Interesting readers can refer to \cite{rosasco2015learning,lin2017optimal,ali2020implicit} etc. for details.

\textbf{Convergence of Momentum-Based Optimization Methods.} {  For convex optimization problems, the convergence rate of Nesterov's Acceleration Method \cite{nesterov1983method} has been proved in various approaches (e.g., \cite{nesterov1983method,su2014differential,wilson2021lyapunov}).}  In contrast, although GDM (Polyak's Heavy-Ball Method) was proposed in \cite{polyak1964some} before the Nesterov's Acceleration Method, the convergence of GDM on convex loss with Lipschitz gradient was not solved until \cite{ghadimi2015global} provides an ergodic convergent result for GDM, i.e., the convergent result for the running average of the iterates. \cite{sun2019non} provide a non-ergodic analysis when the training loss is coercive (the training loss goes to infinity whenever parameter norm goes to infinity), convex, and globally smooth. However, all existing results cannot be directly applied to exponential-tailed loss, which is non-coercive.

There are also works on the convergence of SGDM under various settings. \cite{yan2018unified} prove SGDM converges to a bounded region assuming both bounded gradient norm and bounded gradient variance. The bounded gradient norm assumption is further removed by  \cite{yu2019linear,liu2020improved}. Nevertheless, a converging-to-stationary-point analysis is required in the implicit regularization analysis. Thus their results can not be directly applied. \cite{tao2021role} analyze a particular case when the momentum parameter increases over iterations, which, however, does not agree with the practice where the momentum parameter is fixed.

As for (stochastic) Adam, its convergence analysis is still an open problem, and the current analyses are restricted to specific settings (e.g., bounded gradient, dynamical momentum hyperparameters). We recommend interested readers to refer to  \cite{defossez2020simple,de2018convergence,shi2021rmsprop,guo2021novel} for details.

\section{Preliminaries}
\label{sec: preliminary}
This paper focuses on the linear model with the exponential-tailed loss. We mainly investigate binary classification. However, the methodology can be easily extended to the multi-class classification problem { (please refer to Appendix \ref{appen: multi-class} for details)}.  

\textbf{Problem setting.} The dataset used for training is denoted as  $\bS=(\bx_i,\by_i)_{i=1}^N$, where $\bx_i\in \mathbb{R}^d$ is the $i$-th input feature, and $\by_i\in \mathbb{R}$ is the $i$-th label ($i=1,2,\cdots,N$). We will use the linear model to fit the label: for any feature $\bx\in \mathbb{R}^d$ and parameter $\bw\in \mathbb{R}^{d}$, the prediction is given by $\langle\bw, \bx\rangle$.

For binary classification, given any data $\bz_i=(\bx_i,\by_i)\in \bS$, the individual loss for parameter $\bw$ is given as $\ell(\by_i\langle \bw,\bx_i \rangle)$. As only $\by_i\bx_i$ is used in the loss, we then ensemble the feature and label together and assume $\by_i=1$ ($\forall i\in \{1,\cdots,N\}$) without the loss of generality. We then drop $\by_i$ for brevity and redefine $\bS=(\bx_i)_{i=1}^N$. The spectral norm of the data matrix $(\bx_1,\cdots,\bx_N)$ is defined as $\sigma_{max}$.
We use 
$    \tilde{\ell}(\bw,\bx)\triangleq\ell(\langle \bw,{\bx} \rangle)
$ for brevity.

The optimization target is defined as the averaged loss:
$
    \mL(\bw)=\frac{\sum_{i=1}^N\tilde{\ell}(\bw,\bx_i)}{N}.
$

\textbf{Optimizer.} Here we will introduce the update rules of GDM, SGDM and {  deterministic Adam}. GDM's update rule is 
\small
\begin{equation}
\label{eq: def_gdm}
    \bom(0)=\boldsymbol{0},\bom(t)=\beta\bom(t-1)+(1-\beta) \nabla \mL(\bw(t)), \bw(t+1)=\bw(t)-\eta \bom(t).
\end{equation}
\normalsize

SGDM can be viewed as a stochastic version of GDM by randomly choosing a subset of the dataset to  update. Specifically, SGDM changes the update of $\bom(t)$ into 
\small
\begin{equation}
    \label{eq: def_sgdm}
   \bom(t)=\beta\bom(t-1)+(1-\beta) \nabla \mL_{\bB(t)}(\bw(t)),
\end{equation}
\normalsize
    where $\bB(t)$ is a subset of $\bS$ with size $b$ which can be sampled either with replacement (\textbf{abbreviated as "w/. r"}) or without replacement (i.e., with random shuffling, \textbf{abbreviated as "w/o. r"}), and $\mL_{\bB(t)}$ is defined as 
$
    \mL_{\bB(t)}(\bw)=\frac{\sum_{\bx\in \bB(t)}\tilde{\ell}(\bw,\bx)}{b}
$. We also define $\mF_t$ as the sub-sigma algebra over the mini-batch sampling, such that $\forall t\in\mathbb{N}$, $\bw(t)$ is adapted with respect to the sigma algebra flow $\mF_t$.

The Adam optimizer can be viewed as a variant of SGDM in which the preconditioner is adopted, whose form is characterized as follows:
\small
\begin{gather}
\nonumber
    \bom(0)=\boldsymbol{0}, \bom(t)=\beta_1 \bom(t-1)+(1-\beta_1)\nabla \mL_{\bB(t)}(\bw(t)),
    \\
\nonumber
\bnu(0)=0,\bnu(t)=\beta_2 \bnu(t-1)+(1-\beta_2)\nabla \mL(\bw(t)) \odot\nabla \mL(\bw(t))
\\\nonumber
 \hbom(t)=\frac{1}{1-\beta^t_1}\bom(t),
    \hbnu(t)=\frac{1}{1-\beta^t_2}\bnu(t),
\\
\label{eq: def_adam}
   \textit{(Update Rule)}:~ \bw(t)=\bw(t-1)-\eta\frac{1}{\sqrt{\hbnu(t-1)+\varepsilon\mathds{1}_d}}\odot\hbom(t-1),
\end{gather}
\normalsize
where $\frac{1}{\sqrt{\hbnu(t-1)+\varepsilon\mathds{1}_d}}$ is called the preconditioner.


\textbf{Assumptions:} The analysis of this paper are based on { three common assumptions in existing literature (first proposed by \cite{soudry2018implicit})}. They are respectively on the separability of the dataset, the individual loss behavior at the tail, and the smoothness of the individual loss. We list them as follows:
\begin{assumption}[Linearly Separable Dataset]
\label{assum: separable}
There exists one parameter $\bw\in \mathbb{R}^d$, such that
\small
\begin{equation*}
    \langle \bw, \bx_i \rangle>0,~\forall i\in [N].
\end{equation*}
\normalsize
\end{assumption}

\begin{assumption}[Exponential-tailed Loss]
\label{assum: exponential-tailed}
The individual loss $\ell$ is exponential-tailed, i.e.,
\begin{itemize}
    \item Differentiable and  monotonically decreasing to zero, with its derivative  converging to zero at positive infinity and to non-zero at negative infinity, i.e., $\lim_{x\rightarrow\infty}\ell(x)$ $=\lim_{x\rightarrow \infty}\ell'(x)=0$, $\varlimsup_{x\rightarrow -\infty }\ell' (x)<0 $, and $\ell'(x){ <}0$, $\forall x\in \mathbb{R}$;
    \item Close to exponential loss when $x$ is large enough, i.e., there exist positive constants $c,a,\mu_+,\mu_-,x_+$, and  $x_-$, such that,
    \small
    \begin{align}
    \label{eq: tail_upper_bound}
        \forall x>x_+: -\ell'(x)\le c(1+e^{-\mu_+ x})e^{-ax},
        \\
    \label{eq: tail_lower_bound}
        \forall x>x_-: -\ell'(x)\ge c(1-e^{-\mu_- x})e^{-ax}.
    \end{align}
    \normalsize
\end{itemize}
\end{assumption}

\begin{assumption}[Smooth Loss]
 \label{assum: smooth}
Either of the following assumptions holds regarding the case: 
 
 \textbf{(D):}  (Without Stochasticity) The individual loss $\ell$ is locally smooth, i.e., for any $s_0 \in \mathbb{R}$, there exists a positive real $H_{s_0}$, such that $\forall x,y\ge s_0$, $\vert \ell'(x)-\ell'(y) \vert\le H_{s_0}\vert x-y \vert $.
 
 \textbf{(S):} (With Stochasticity) The individual loss $\ell$ is globally smooth, i.e., there exists a positive real $H$, such that $\forall x,y\in \mathbb{R}$, $\vert \ell'(x)-\ell'(y) \vert \le H\vert x-y \vert $.
 \end{assumption}
 
We provide explanations of these three assumptions, respectively. Based on Assumption \ref{assum: separable}, we can formally define the margin and the maximum margin solution of an optimization problem:
\begin{definition}
\label{def: margin}
Let the margin $\hat{\gamma}(\bw)$ of parameter $\bw$ defined as the lowest score of the prediction of $\bw$ over the dataset $\bS$, i.e.,  $\hat{\gamma}(\bw)=\min_{\bx\in\bS}\langle \bw, \bx\rangle$. We then define the maximum margin solution $\hbw$ and the  {$L^2$ max} margin $\gamma$ of the dataset S as follows: 
\small
\begin{equation*}
    \hbw\overset{\triangle}{=} \arg\min_{\hat{\gamma}(\bw)\ge 1} \Vert\bw\Vert^2,~ \gamma\overset{\triangle}{=}\frac{1}{\Vert \hbw\Vert}
\end{equation*}
\normalsize
\end{definition}

Since $\Vert \cdot\Vert^2$ is strongly convex and set $\{\bw: \hat{\gamma}(\bw)\ge 1\}$ is convex, $\hbw$ is uniquely defined.

Assumption \ref{assum: exponential-tailed} constraints the loss to be exponential-tailed, which is satisfied by many popular choices of $\ell$, including the exponential loss ($\ell_{exp}(x)=e^{-x}$) and the logistic loss ($\ell_{log}(x)=\log(1+ e^{-x})$).
Also, as $c$ and $a$ can be respectively absorbed by resetting the learning rate and data as $\eta=c\eta$ and $\bx_i=a \bx_i$, without loss of generality,  \textbf{in this paper we only analyze the case that $c=a=1$}.
 
The globally smooth assumption (Assumption \ref{assum: smooth}. (S)) is strictly stronger than the locally smooth assumption (Assumption \ref{assum: smooth}. (D)). One can easily verify that both the exponential loss  and the logistic loss  meet Assumption \ref{assum: smooth}. (\textbf{D}), and the logistic loss also meets Assumption \ref{assum: smooth}. (\textbf{S}).

\section{The implicit regularization of GDM}
\label{subsec: main_gdm}
In this section, we analyze the implicit regularization of GDM with a two-stage framework \footnote{It should be noticed that the proof sketches in Sections \ref{subsec: main_gdm}, \ref{subsec: main_sgdm}, and \ref{subsec: main_adam} only hold for almost every dataset (means except a zero-measure set in $\mathbb{R}^{d\times N}$), as we want the presentation more simple and straightforward. However, the proof can be extended to every dataset with a more careful analysis (please refer to Appendix \ref{appen: every_data_set} for details). }. Later, we will use this framework to investigate SGDM further and {  deterministic Adam}. The formal theorem of the implicit regularization of GDM is as follows:

\begin{theorem}
\label{thm: gdm_main}
Let Assumptions \ref{assum: separable}, \ref{assum: exponential-tailed}, and \ref{assum: smooth}. (\textbf{D}) hold. Let $\beta \in [0,1)$ and $\eta<2\frac{N}{\sigma_{max}^2H_{\ell^{-1}}(N\mathcal{L}(\boldsymbol{w}_1))}$ ($\ell^{-1}$ is the inverse function of $\ell$). Then, for almost every data set $S$, with arbitrary initialization point $\bw(1)$, GDM (Eq.~\eqref{eq: def_gdm}) satisfies that
$\bw(t)-\ln(t) \hbw$ is bounded as $t\rightarrow \infty$, and $\lim_{t\rightarrow\infty}\frac{\bw(t)}{\Vert \bw(t) \Vert} =\frac{\hbw}{\Vert \hbw \Vert}$.
\end{theorem}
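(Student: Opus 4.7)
The plan is to follow the two-stage framework advertised by the authors. In Stage 1, I would establish that $\mathcal{L}(\bw(t))\to 0$ at a quantitative rate (e.g., $\mathcal{L}(\bw(t))=O(1/t)$ up to log factors), which combined with Assumption \ref{assum: separable} forces $\Vert\bw(t)\Vert\to\infty$ and $\langle\bw(t),\bx_i\rangle\to\infty$ for every $i$. In Stage 2, I would exploit the exponential tail of $\ell$ to show that $\bw(t)$ aligns with $\hbw$ and, more strongly, that $\bw(t)-\ln(t)\hbw$ stays bounded.

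For Stage 1, the technical obstacle is that the standard descent lemma $\mathcal{L}(\bw(t+1))\le \mathcal{L}(\bw(t))-\frac{\eta}{2}\Vert\nabla\mathcal{L}(\bw(t))\Vert^2$ fails for GDM since the step direction $\bom(t)$ is not the current gradient. I would construct a Lyapunov functional of heavy-ball type, e.g.
\[
\Phi(t)=\mathcal{L}(\bw(t))+A\Vert\bom(t-1)\Vert^2,
\]
and, using the specific bound on $\eta$ in the hypothesis (which ensures the effective smoothness constant $H_{\ell^{-1}(N\mathcal{L}(\bw(1)))}$ controls the next iterate), prove $\Phi(t+1)\le \Phi(t)-B\Vert\nabla\mathcal{L}(\bw(t))\Vert^2$ for a positive $B=B(\eta,\beta)$. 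Because $\ell$ is only \emph{locally} smooth (Assumption \ref{assum: smooth}.(D)), I would first bootstrap a monotonicity property of $\mathcal{L}$ along the iterates to keep the trajectory inside a region of uniform smoothness; the stated condition on $\eta$ is tuned exactly so this works. Summing then yields $\sum_t\Vert\nabla\mathcal{L}(\bw(t))\Vert^2<\infty$ and a rate for $\mathcal{L}(\bw(t))\to 0$.

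For Stage 2, the crucial identity is the momentum unrolling
\[
\bw(t+1)=\bw(1)-\eta\sum_{s=1}^{t}\bigl(1-\beta^{t-s+1}\bigr)\nabla\mathcal{L}(\bw(s)),
\]
so the GDM iterate differs from the vanilla-GD iterate driven by the same gradient sequence only by the bounded "tail" term $\eta\sum_{s=1}^{t}\beta^{t-s+1}\nabla\mathcal{L}(\bw(s))$, which is controllable once Stage 1 gives us $\Vert\nabla\mathcal{L}(\bw(s))\Vert\to 0$ summably. Then, mimicking \cite{soudry2018implicit}, I would decompose $\bw(t)=\ln(t)\hbw+\br(t)$, use the dual KKT characterization of $\hbw$ (support-vector coefficients $\alpha_i\ge 0$ with $\hbw=\sum_{i\in\mathcal{S}}\alpha_i\bx_i$) to match the leading-order asymptotic of $-\nabla\mathcal{L}(\bw(t))\sim \frac{1}{N}\sum_{i\in\mathcal{S}}e^{-\langle\bw(t),\bx_i\rangle}\bx_i$ with the logarithmic growth of $\ln(t)\hbw$, and check that the remaining recursion for $\br(t)$ is bounded.

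The main obstacle is precisely this error accumulation from the momentum in Stage 2. In vanilla GD the residual $\br(t)$ obeys a one-step recursion whose boundedness follows from a clean telescoping argument; with GDM each update is a history-weighted sum of past gradients evaluated at iterates of very different magnitudes, threatening the delicate cancellations that make $\br(t)$ bounded. My remedy is the "easy-to-check condition" alluded to in the introduction: isolate a summability-type condition on the momentum-induced perturbation $\bw(t+1)-\bw(t)+\eta\nabla\mathcal{L}(\bw(t))$ so that it collapses to a geometric-series estimate once Stage 1 provides quantitative decay of $\nabla\mathcal{L}(\bw(t))$. Verifying this condition and feeding it into the Soudry-style fixed-point argument should close the proof and yield both $\bw(t)-\ln(t)\hbw=O(1)$ and the directional convergence $\bw(t)/\Vert\bw(t)\Vert\to\hbw/\Vert\hbw\Vert$.
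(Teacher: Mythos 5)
Your Stage~1 plan is exactly the paper's: the heavy-ball Lyapunov functional $\Phi(t)=\mathcal{L}(\bw(t))+A\Vert\bom(t-1)\Vert^2$ is, after substituting $\bw(t)-\bw(t-1)=-\eta\bom(t-1)$, the same as the potential $\xi(t)=\mathcal{L}(\bw(t))+\tfrac{\beta}{2(1-\beta)\eta}\Vert\bw(t)-\bw(t-1)\Vert^2$ in Lemma~\ref{lem: update_rule_loss_gdm}, and your idea of bootstrapping the trajectory inside a loss sublevel set to upgrade local to effective global smoothness is exactly what the continuity/contradiction argument of Lemma~\ref{lem: update_rule_loss_gdm_extended} in the appendix does. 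Up to that point there is nothing to add.

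Stage~2 as you describe it has a genuine gap. You assert that the momentum-induced perturbation $a_t:=\bw(t+1)-\bw(t)+\eta\nabla\mathcal{L}(\bw(t))$ ``collapses to a geometric-series estimate'' once $\Vert\nabla\mathcal{L}(\bw(s))\Vert\to 0$ ``summably''. But Stage~1 yields only $\sum_t\Vert\nabla\mathcal{L}(\bw(t))\Vert^2<\infty$; in fact the asymptotic regime is $\Vert\nabla\mathcal{L}(\bw(t))\Vert=\Theta(1/t)$, which is \emph{not} absolutely summable. Writing out the recursion, $a_t=\beta\bigl(\bw(t)-\bw(t-1)\bigr)+\eta\beta\nabla\mathcal{L}(\bw(t))=\beta a_{t-1}+\eta\beta\bigl(\nabla\mathcal{L}(\bw(t))-\nabla\mathcal{L}(\bw(t-1))\bigr)$, so $a_t$ is a geometric convolution of gradient increments of order $\Theta(1/t)$: it is square-summable but $\sum_t\Vert a_t\Vert=\infty$. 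Therefore the naive bound $\sum_t\vert\langle\br(t),a_t\rangle\vert\le(\sup_t\Vert\br(t)\Vert)\sum_t\Vert a_t\Vert$ does not close, and the boundedness of $\br(t)$ does not follow from a geometric-series estimate. This is precisely why the paper introduces a second potential (Lemma~\ref{claim: direction_convergence}) containing the cross term $\tfrac{\beta}{1-\beta}\langle\br(t),\bw(t)-\bw(t-1)\rangle$ together with a compensating Abel-summed double sum; the resulting discrete integration-by-parts cancels the troublesome cross contributions and uses only the $L^2$-summability $\sum_t\Vert\bw(t+1)-\bw(t)\Vert^2<\infty$ delivered by Stage~1. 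A secondary concern is that the ``vanilla-GD iterate driven by the same gradient sequence'' in your momentum unrolling is not an actual GD trajectory (its gradients are evaluated at the GDM iterates, not at its own iterates), so you cannot directly invoke the implicit-regularization result of \cite{soudry2018implicit} for it; one must run the residual analysis on $\bw(t)$ itself, which brings back the $a_t$ issue above.
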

Theorem \ref{thm: gdm_main} shows that the implicit regularization of GDM agrees with GD in linear classification with exponential-tailed loss (c.f. \cite{soudry2018implicit} for results on GD). This consistency can be verified by existing and our experiments (c.f. Section \ref{sec: discussion} for detailed discussions).

\begin{remark}[On the hyperparameter setting]
Firstly, the learning rate upper bound $2\frac{N}{\sigma_{max}^2H_{\ell^{-1}}(N\mathcal{L}(\boldsymbol{w}_1))}$ agrees with that of GD exactly \cite{soudry2018implicit}, indicating our analysis is tight.
Secondly, Theorem \ref{thm: gdm_main} adopts a constant momentum hyper-parameter, which agrees with the practical use (e.g., $\beta$ is fixed to be $0.9$ \cite{xie2017aggregated}).  Also, Theorem \ref{thm: gdm_main} puts no restriction on the range of $\beta$, which allows wider choices of hyper-parameter tuning. 
\end{remark}

We then present a proof sketch of Theorem \ref{thm: gdm_main}, which is divided into two parts: we first prove that the sum of squared gradients is bounded, which indicates both the loss and the norm of gradient converge to $0$ and the parameter diverges to infinity; these properties will then be applied to show the difference between $\bw(t)$ and $\ln(t) \hbw$ is bounded, and therefore, the direction of $\hbw$ dominates as $t\rightarrow \infty$.

\textbf{Stage I: Loss Dynamics.} The goal of this stage is to characterize the dynamics of the loss and prove the convergence of GDM. The core of this stage is to select a proper potential function $\xi(t)$, which is required to correlate with the training loss $\mL$ and be non-increasing along the optimization trajectory. For GD, since $\mL$ is non-increasing with a properly chosen learning rate, we can pick $\xi(t)=\mL(t)$. However, as the update of GDM does not align with the direction of the negative gradient, training loss $\mL(t)$ in GDM is no longer monotonously decreasing, and the potential function requires special construction. Inspired by \cite{sun2019non}, we choose the following $\xi(t)$:

\begin{lemma}
\label{lem: update_rule_loss_gdm}
Let all conditions in Theorem \ref{thm: gdm_main} hold. Define $\xi(t)\overset{\triangle}{=}\mL(\bw(t))+\frac{1-\beta}{2\beta\eta} \Vert \bw(t)-\bw(t-1)\Vert^2.$ Define ${ C_1}$ as a positive real with ${ C_1}\triangleq\frac{\sigma_{max}^2H_{\ell^{-1}}(N\mathcal{L}(\boldsymbol{w}_1))}{2N} \eta$. We then have  
\small
\begin{align}
\label{eq: gdm_direct_taylor}
   \xi(t)
    \ge\xi(t+1)
    +\frac{1-{ C_1}}{\eta}\Vert \bw(t+1)-\bw(t)\Vert^2.
\end{align}    \normalsize
\end{lemma}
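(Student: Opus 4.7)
The plan is to derive a single-step descent inequality for $\xi$ by combining smoothness of $\mL$ with the momentum recurrence, and then using Young's inequality to absorb the resulting cross-term into the $\Vert \bw(t)-\bw(t-1)\Vert^2$ component of the potential.

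First, I would show that $\mL$ is smooth along the trajectory with Lipschitz constant $H_{\mL}=\sigma_{max}^2 H_{\ell^{-1}(N\mL(\bw_1))}/N$, so that $C_1=\eta H_{\mL}/2$. By the chain rule, $\nabla\mL(\bw)=\frac{1}{N}\sum_i \ell'(\langle\bw,\bx_i\rangle)\bx_i$; the crude bound $\ell(\langle\bw,\bx_i\rangle)\le N\mL(\bw)$ together with monotonicity of $\ell$ gives $\langle\bw,\bx_i\rangle\ge \ell^{-1}(N\mL(\bw))$, so Assumption \ref{assum: smooth}.(\textbf{D}) yields the claimed Lipschitz estimate on the sub-level set $\{\bw:\mL(\bw)\le \mL(\bw_1)\}$. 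An inductive bootstrap keeps the iterates in this sub-level set: once the lemma is established up through step $t$, monotonicity of $\xi$ gives $\mL(\bw(t))\le \xi(t)\le \xi(1)=\mL(\bw_1)$, so the descent lemma applies at step $t+1$.

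With smoothness in hand, I would apply the descent lemma to obtain $\mL(\bw(t+1))-\mL(\bw(t))\le -\eta\langle\nabla\mL(\bw(t)),\bom(t)\rangle + C_1\eta\Vert\bom(t)\Vert^2$, using $\bw(t+1)-\bw(t)=-\eta\bom(t)$. Taking the inner product of the momentum recurrence $\bom(t)=\beta\bom(t-1)+(1-\beta)\nabla\mL(\bw(t))$ with $\bom(t)$ gives the key algebraic identity $\langle\nabla\mL(\bw(t)),\bom(t)\rangle=\frac{1}{1-\beta}\Vert\bom(t)\Vert^2-\frac{\beta}{1-\beta}\langle\bom(t-1),\bom(t)\rangle$. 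Substituting and handling the cross term by Young's inequality $\langle\bom(t-1),\bom(t)\rangle\le \frac{\alpha}{2}\Vert\bom(t-1)\Vert^2+\frac{1}{2\alpha}\Vert\bom(t)\Vert^2$ with $\alpha$ tuned so that the resulting coefficient of $\Vert\bom(t-1)\Vert^2=\Vert\bw(t)-\bw(t-1)\Vert^2/\eta^2$ matches the weight on $\Vert\bw(t)-\bw(t-1)\Vert^2$ inside $\xi$ makes the $\Vert\bw(t)-\bw(t-1)\Vert^2$ contribution telescope against $\xi(t)-\xi(t+1)$. After rewriting $\eta^2\Vert\bom(t)\Vert^2=\Vert\bw(t+1)-\bw(t)\Vert^2$ and collecting terms, the residual coefficient of $\Vert\bw(t+1)-\bw(t)\Vert^2$ should simplify to $-(1-C_1)/\eta$, yielding the claim.

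The main obstacle is the coefficient bookkeeping that ties together the Young parameter, the potential weight, and the learning-rate constraint. The Young balance must be chosen so that the $\Vert\bw(t)-\bw(t-1)\Vert^2$ contributions cancel exactly while the net coefficient of $\Vert\bw(t+1)-\bw(t)\Vert^2$ retains magnitude at least $(1-C_1)/\eta$. The hypothesis $\eta<2N/(\sigma_{max}^2 H_{\ell^{-1}(N\mL(\bw_1))})$ is equivalent to $C_1<1$ and is precisely what keeps this final coefficient strictly negative, turning the estimate into a genuine descent statement.
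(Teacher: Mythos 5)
Your algebraic plan is the right one and, once set up, matches the paper's computation: write the descent inequality for $\mL$, substitute the momentum recurrence, split the cross term $\langle\bom(t-1),\bom(t)\rangle$ by Young so that the $\Vert\bw(t)-\bw(t-1)\Vert^2$ piece is absorbed into the potential, and identify the residual coefficient with $(1-C_1)/\eta$. (One small note: for the coefficients to cancel exactly, the potential weight must be $\tfrac{\beta}{2(1-\beta)\eta}$ as in the appendix, not $\tfrac{1-\beta}{2\beta\eta}$; with that choice the Young parameter comes out to $\alpha=1$.)

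However, the ``inductive bootstrap'' you use to invoke the descent lemma does not actually close. From the induction hypothesis through step $t$ you obtain $\mL(\bw(t))\le\xi(t)\le\xi(1)=\mL(\bw_1)$, which places $\bw(t)$ in the sub-level set where $\nabla\mL$ is $(\sigma_{max}^2 H_{s_0}/N)$-Lipschitz. But the descent inequality
\begin{equation*}
\mL(\bw(t+1))\le\mL(\bw(t))+\langle\nabla\mL(\bw(t)),\bw(t+1)-\bw(t)\rangle+\tfrac{H_{\mL}}{2}\Vert\bw(t+1)-\bw(t)\Vert^2
\end{equation*}
requires a Lipschitz bound on $\nabla\mL$ along the entire segment $[\bw(t),\bw(t+1)]$, i.e.\ it needs $\bw(t+1)$ (or at least the segment) to lie in the smooth region too. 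You cannot deduce $\mL(\bw(t+1))\le\mL(\bw_1)$ from the induction hypothesis alone; that is precisely what the inequality at step $t$ is supposed to deliver, so the argument is circular. This is the real content of the lemma under the merely \emph{locally} smooth Assumption~\ref{assum: smooth}.(\textbf{D}): globally smooth losses would make your induction trivially valid, but here it is not.

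The paper closes the gap by moving to a continuous interpolation $\bw(t+\alpha)=\bw(t)+\alpha(\bw(t+1)-\bw(t))$ and running a reduction to absurdity on the pair $(t^*,\alpha^*)$ with $\alpha^*=\inf\{\alpha\in[0,1]:\text{the inequality fails at }(t^*,\alpha)\}$. For $\alpha<\alpha^*$ the inequality holds, which confines $\bw(t^*+\alpha)$ to the sub-level set; then a Taylor expansion at $\bw(t^*)$ (valid because the segment up to $\alpha^*$ stays smooth) shows the inequality continues to hold slightly past $\alpha^*$, a contradiction. You would need to supply some version of this continuity/infimum argument — the plain step-to-step induction you propose does not suffice.
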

\begin{remark}
Although this potential function is obtained by \cite{sun2019non} by directly examining Taylor's expansion at $\bw(t)$, the proof here is non-trivial as we only require the loss to be locally smooth instead of globally smooth in \cite{sun2019non}. We need to prove that the smoothness parameter along the trajectory is upper bounded. We defer the detailed proof to Appendix \ref{appen: gdm_sum_finite}.
\end{remark}


By Lemma \ref{lem: update_rule_loss_gdm}, we have that $  \xi(t)$ is monotonously decreasing by gap $\frac{1-C_1}{\eta}\Vert \bw(t+1)-\bw(t)\Vert^2$. As $ \xi(1)=\mL(\bw(1))$ is a finite number, we have $\sum_{t=1}^{\infty}\Vert \bw(t+1)-\bw(t)\Vert^2<\infty$. By that $(1-\beta)\eta\nabla \mL(\bw(t))=(\bw(t+1)-\bw(t))-\beta(\bw(t)-\bw(t-1))$, it immediately follows that $\sum_{t=1}^{\infty}$ $\Vert \nabla \mL (\bw(t))\Vert^2<\infty$.


 \textbf{Stage II. Parameter Dynamics.} The goal of this stage is to characterize the dynamics of the parameter and show that GDM asymptotically converges (in direction) to the max-margin solution $\hbw$. To see this, we define a residual term $\br(t)\overset{\triangle}{=}\bw(t) -\ln (t)\hbw-\tbw$ with some constant vector $\tbw$ (specified in Appendix \ref{subsubsec: form of the solution_gdm}). If we can show the norm of $\br(t)$ is bounded over the iterations, we complete the proof as $\ln (t)\hbw$ will then dominates the dynamics of $\bw(t)$.
 
 For simplicity, we use the continuous dynamics approximation of GDM \cite{sun2019non} to demonstrate why $\br(t)$ is bounded:
\small
\begin{equation}
    \frac{\beta}{1-\beta}\frac{\mathrm{d}^2 \bw(t)}{\mathrm{d} t^2}+ \frac{\mathrm{d} \bw(t)}{\mathrm{d} t}+\eta \nabla \mL(\bw(t))=0.\label{gdmflow}
\end{equation}
\normalsize
We start by directly examining the evolution of $\Vert \br(t)\Vert$, i.e.,
\begin{small}
 \begin{align*}
   &\frac{1}{2}\Vert\br(T)\Vert^2-\frac{1}{2}\Vert\br(1)\Vert^2=\int_{1}^T\frac{1}{2} \frac{\mathrm{d}\Vert \br(s) \Vert^2}{\mathrm{d} s} \mathrm{d} s
   \\
   =&\int_{1}^T\left\langle \br(s), - \eta\nabla \mL(\bw(s))-\frac{1}{s} \hbw\right\rangle\mathrm{d} s+\int_{1}^T\frac{\beta}{1-\beta}\left\langle \br(s), -\frac{\mathrm{d}^2 \bw(s)}{\mathrm{d} s^2}\right\rangle \mathrm{d} s,
 \end{align*}
\end{small}
which by integration by part leads to
\begin{tiny}
\begin{equation*}
    \operatorname{RHS}=\int_{1}^T\left\langle \br(s), - \eta\nabla \mL(\bw(s))-\frac{1}{s} \hbw\right\rangle\mathrm{d} s+\frac{\beta}{1-\beta}\left(\left\langle \br(T), -\frac{\mathrm{d} \bw(T)}{\mathrm{d} t}\right\rangle-\left\langle \br(1), -\frac{\mathrm{d} \bw(1)}{\mathrm{d} t}\right\rangle+\int_{1}^T\left\langle  \frac{\mathrm{d} \br(s)}{\mathrm{d} s}, \frac{\mathrm{d} \bw(s)}{\mathrm{d} s}\right\rangle \mathrm{d} s\right),
\end{equation*}
\end{tiny}
We then check the terms one by one:
\begin{itemize}
    \item $\int_{1}^T\left\langle \br(s), - \eta\nabla \mL(\bw(s))-\frac{1}{s} \hbw\right\rangle\mathrm{d} s$: This term also occurs in the analysis of GD \cite{soudry2018implicit}, and has been proved to be bounded;
    \item $\langle \br(T), -\frac{\mathrm{d} \bw(T)}{\mathrm{d} t}\rangle$: as shown in Stage I, $\frac{\mathrm{d} \bw(T)}{\mathrm{d} t}\rightarrow 0$ (i.e., $\bw(T)-\bw(T-1)$ $\rightarrow 0$ in the discrete case)  as $T\rightarrow \infty$. Thus, this term is $\boldsymbol{o}(\Vert \br(T)\Vert)$;
    \item $\int_{1}^T\left\langle  \frac{\mathrm{d} \br(s)}{\mathrm{d} s}, \frac{\mathrm{d} \bw(s)}{\mathrm{d} s}\right\rangle \mathrm{d} s$: finite due to $\frac{\mathrm{d} \br(s)}{\mathrm{d} s}=\frac{\mathrm{d} \bw(s)}{\mathrm{d} s}-\frac{1}{s}\hbw$ and  $\int_{1}^\infty\Vert\frac{\mathrm{d} \bw(s)}{\mathrm{d} s} \Vert^2 \mathrm{d} s$ is finite (i.e., $\sum_{t=1}^{\infty}$ $ \Vert \bw(t+1)-\bw(t) \Vert^2<\infty$ in the discrete case)  by Stage I.
\end{itemize}

Putting them together, we show that $\Vert\br(T)\Vert^2+\boldsymbol{o}(\Vert\br(T)\Vert)$ is upper bounded over the iterations, which immediately leads to that $\Vert\br(T)\Vert$ is bounded. Applying similar methodology to the discrete update rule, we have the following lemma (the proof can be found in Appendix \ref{subsubsec: form of the solution_gdm}). 

\begin{lemma}
\label{claim: direction_convergence}
Define potential function $g:\mathbb{Z}^{+}\rightarrow\mathbb{R}$ as 
\begin{small}
\begin{align*}
    g(t)\overset{\triangle}{=}\frac{1}{2}\Vert \br(t)\Vert^2+\frac{\beta}{1-\beta}\langle \br(t),\bw(t)-\bw(t-1)\rangle.
\end{align*}
\end{small}
$g(t)$ is upper bounded, which further indicates $\Vert \br(t) \Vert $ is upper bounded.
\end{lemma}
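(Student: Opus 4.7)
My approach would be to mimic the continuous-time computation sketched in the excerpt, but at the level of the discrete iteration. I would prove boundedness of $g(t)$ by showing that the increments $g(t+1)-g(t)$ are summable (up to a term that can be absorbed back into $\frac{1}{2}\|\br(t)\|^2$). The key algebraic identity comes from the GDM update rule, which gives $\bw(t+1)-\bw(t)=\beta(\bw(t)-\bw(t-1))-\eta(1-\beta)\nabla\mL(\bw(t))$; substituting this into the ``second-difference'' piece that appears when one expands $g(t+1)-g(t)$ eliminates the $\beta$-weighted inner product in $g$ and replaces it with a term proportional to $\langle \br(t),\nabla \mL(\bw(t))\rangle$. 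This is the discrete analogue of the integration-by-parts step that turns $\int \langle \br,\ddot{\bw}\rangle\,ds$ into boundary terms plus $\int\langle \dot\br,\dot\bw\rangle\,ds$.

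After expansion, I expect $g(t+1)-g(t)$ to split into three pieces whose sums I would control as follows. The first is a quadratic term of order $\|\bw(t+1)-\bw(t)\|^{2}+(\ln(1+1/t))^{2}$, which is summable directly from Stage I. The second is a boundary-style term of the form $C\langle \br(t+1),\bw(t+1)-\bw(t)\rangle$; by Cauchy--Schwarz and AM--GM this is bounded by $\tfrac{1}{4}\|\br(t+1)\|^{2}+C'\|\bw(t+1)-\bw(t)\|^{2}$, so after rearrangement on the left-hand side only half of the $\frac{1}{2}\|\br\|^{2}$ in $g$ is consumed, and the residue is summable. The third and crucial piece is
\begin{equation*}
\sum_{t=1}^{T}\left\langle \br(t),\,-\eta\,\nabla\mL(\bw(t))-\frac{1}{t}\hbw\right\rangle,
\end{equation*}
which is the same quantity that appears in the GD analysis of Soudry et al.\ and is already known to be bounded for a suitably chosen constant $\tbw$. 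The vector $\tbw$ is defined (in Appendix \ref{subsubsec: form of the solution_gdm}) so that, to leading order, $-\eta\nabla\mL(\bw(t))$ on the support-vector data aligns with $\tfrac{1}{t}\hbw$, using $\hbw=\sum_{i\in\mathrm{SV}}\alpha_i\bx_i$ from the KKT conditions of the max-margin problem together with the exponential-tail behaviour $-\ell'(x)\sim e^{-x}$; non-support-vector contributions decay geometrically because their margin exceeds $1$.

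\textbf{Main obstacle.} The delicate step is the passage from ``gradient norm summable'' (Stage I) to the bound on $\langle \br(t),-\eta\nabla\mL(\bw(t))-\tfrac{1}{t}\hbw\rangle$: summability of $\|\nabla\mL(\bw(t))\|^{2}$ is too weak by itself, because $\|\br(t)\|$ might a priori grow, so Cauchy--Schwarz alone would give a vacuous bound. Instead one must exploit the asymptotic alignment between $-\nabla\mL(\bw(t))$ and $\hbw$ term-by-term (not in norm), and show that the difference is summable in the direction of $\br(t)$. The role of $\tbw$ is precisely to absorb the $O(1)$ misalignment created by the polynomial prefactors $e^{-\mu_\pm\langle\bw(t),\bx_i\rangle}$ in the exponential-tail expansion of $-\ell'$. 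A second, smaller, difficulty is that the Stage-I summability $\sum_t\|\bw(t+1)-\bw(t)\|^{2}<\infty$ gives convergence along a subsequence but not monotone decay of $\|\bw(t+1)-\bw(t)\|$; I would handle this by keeping $\|\bw(t+1)-\bw(t)\|^{2}$ inside the summable residue rather than trying to pull it out as an $o(1)$ factor, after which monotonicity is not needed. Combining these bounds yields $g(t)\le C$ uniformly in $t$, and since $\|\bw(t)-\bw(t-1)\|\to 0$ along the relevant subsequence the cross-term in $g$ is $o(\|\br(t)\|)$, giving $\|\br(t)\|=O(1)$ as required.
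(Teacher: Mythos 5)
Your decomposition and strategy coincide with the paper's proof in Appendix~\ref{subsubsec: form of the solution_gdm} (Lemmas~\ref{lem:gdm_r_t_bounded}--\ref{lem: bounded_g_gdm}): the paper adds the correction term $-\frac{\beta}{1-\beta}\sum_{\tau\le t}\langle \br(\tau)-\br(\tau-1),\bw(\tau)-\bw(\tau-1)\rangle$ into the definition of $g$ so that the telescoping yields exactly $g(t+1)-g(t)=\tfrac12\|\br(t+1)-\br(t)\|^2+\langle\br(t),-\eta\nabla\mL(\bw(t))-\ln\tfrac{t+1}{t}\hbw\rangle$, then proves that this correction term and the two pieces on the right are all summable, and finally absorbs the remaining cross term in $g$ into $\tfrac12\|\br\|^2$ exactly as you propose. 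One small correction to your ``second difficulty'': since $\sum_t\|\bw(t+1)-\bw(t)\|^2<\infty$ with nonnegative summands, you in fact have $\|\bw(t+1)-\bw(t)\|\to 0$ along the \emph{whole} sequence, not merely a subsequence, so the final absorption step is unconditional and no subsequence caveat is needed.
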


\begin{remark}
Our technique for analyzing GDM here is essentially more complex and elaborate than that for GD in \cite{soudry2018implicit} due to the historical information of gradients GDM. The approach in \cite{soudry2018implicit} cannot be directly applied. It is worth mentioning that we provide a more easy-to-check condition for whether $\br(t)$ is bounded, i.e., "is $g(t)$ upper-bounded?". This condition can be generalized for other momentum-based implicit regularization analyses. E.g., SGDM and Adam later in this paper.
\end{remark}

\section{Tackle the difficulty brought by random sampling}
\label{subsec: main_sgdm}
In this section, we analyze the implicit regularization of SGDM. Parallel to GDM, we establish the following implicit regularization result for SGDM:
\begin{theorem}
\label{thm: sgdm_main}
Let Assumption \ref{assum: separable}, \ref{assum: exponential-tailed}, and \ref{assum: smooth}. (\textbf{S}) hold. Let $\beta \in [0,1)$ and $
     \eta< \frac{1}{\frac{H\beta\sigmax^3}{\sqrt{Nb}\gamma (1-\beta)}+\frac{H\sigmax^4}{2b\gamma^2}}$. Then, with arbitrary initialization point $\bw(1)$, SGDM (w/. r) satisfies
$\bw(t)-\ln(t) \hbw$ is bounded as $t\rightarrow \infty$ and $\lim_{t\rightarrow\infty}\frac{\bw(t)}{\Vert \bw(t) \Vert} =\frac{\hbw}{\Vert \hbw \Vert}$, almost surely (a.s.).
\end{theorem}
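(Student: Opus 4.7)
The plan is to adapt the two-stage framework used for GDM to the stochastic setting, replacing pointwise monotonicity arguments with supermartingale (Robbins--Siegmund-type) arguments, since with mini-batch sampling neither the loss $\mathcal{L}(\bw(t))$ nor the GDM potential decreases deterministically. Because $\bw(t)$ is no longer confined to any a priori bounded level set of $\mathcal{L}$, I expect to need the global smoothness hypothesis (Assumption \ref{assum: smooth}.(\textbf{S})), which is why that stronger assumption appears in the theorem. Unbiasedness of the with-replacement mini-batch gradient, $\mathbb{E}[\nabla \mathcal{L}_{\bB(t)}(\bw(t))\mid \mathcal{F}_t]=\nabla \mathcal{L}(\bw(t))$, together with a variance bound of the form $\mathbb{E}[\|\nabla \mathcal{L}_{\bB(t)}(\bw(t))-\nabla \mathcal{L}(\bw(t))\|^2\mid\mathcal{F}_t]\le \frac{\sigma_{max}^2}{b}\mathcal{L}(\bw(t))^2\cdot (\text{const})$ for exponential-tailed losses, will be the main probabilistic tools.

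\textbf{Stage I (loss dynamics).} I will keep the GDM-style potential $\xi(t)=\mathcal{L}(\bw(t))+\frac{1-\beta}{2\beta\eta}\|\bw(t)-\bw(t-1)\|^2$, and, using global $H$-smoothness, expand $\mathcal{L}(\bw(t+1))$ around $\bw(t)$. Taking conditional expectation given $\mathcal{F}_t$, the linear term becomes deterministic and reproduces the GDM one-step inequality, while the quadratic smoothness term generates an extra noise contribution controlled by $\frac{H\sigma_{max}^4}{2b\gamma^2}\eta$, exactly the second piece of the learning-rate bound. A further cross-term between the momentum buffer and the noise accounts for the $\frac{H\beta\sigma_{max}^3}{\sqrt{Nb}\gamma(1-\beta)}$ piece. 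The result is a Robbins--Siegmund-type inequality $\mathbb{E}[\xi(t+1)\mid \mathcal{F}_t]\le \xi(t)-c\|\bw(t+1)-\bw(t)\|^2+(\text{positive summable noise})$ after absorbing the constants using the learning-rate condition. Robbins--Siegmund then yields, almost surely, convergence of $\xi(t)$, finiteness of $\sum_t\|\bw(t+1)-\bw(t)\|^2$, and hence (via the momentum identity $(1-\beta)\eta\nabla\mathcal{L}(\bw(t))$ $=(\bw(t+1)-\bw(t))-\beta(\bw(t)-\bw(t-1))+(\text{mean-zero noise})$) finiteness of $\sum_t\|\nabla\mathcal{L}(\bw(t))\|^2$ a.s. In particular $\mathcal{L}(\bw(t))\to 0$ and $\|\bw(t)\|\to\infty$ a.s.

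\textbf{Stage II (parameter dynamics).} I set $\br(t)=\bw(t)-\ln(t)\hbw-\tbw$ with the same correction vector $\tbw$ as in the GDM analysis (Appendix \ref{subsubsec: form of the solution_gdm}) and reuse the GDM potential $g(t)=\tfrac{1}{2}\|\br(t)\|^2+\tfrac{\beta}{1-\beta}\langle \br(t),\bw(t)-\bw(t-1)\rangle$. Computing $g(t+1)-g(t)$ and taking conditional expectations, the deterministic drift decomposes into (i) the same $\langle \br(t),-\eta\nabla\mathcal{L}(\bw(t))-\tfrac{1}{t}\hbw\rangle$-type sum that \cite{soudry2018implicit} showed to be summable, (ii) a quadratic term in $\bw(t+1)-\bw(t)$ that is summable by Stage I, and (iii) a martingale difference $\eta(1-\beta)\langle \br(t)-\tfrac{\beta}{1-\beta}(\bw(t)-\bw(t-1)),\,\nabla\mathcal{L}(\bw(t))-\nabla\mathcal{L}_{\bB(t)}(\bw(t))\rangle$. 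For the martingale part, I will bound the conditional second moment by $\text{const}\cdot(1+\|\br(t)\|)^2 \cdot \mathcal{L}(\bw(t))^2$, which, using the a.s. summability of $\mathcal{L}(\bw(t))^2$ obtained in Stage I, lets me apply a stopping-time / Burkholder-style argument (or a nonnegative supermartingale comparison) to conclude $g(t)$ is a.s. bounded. Once $g(t)$ is bounded and $\bw(t)-\bw(t-1)\to 0$ a.s., the GDM argument gives $\|\br(t)\|$ bounded a.s., whence $\bw(t)/\|\bw(t)\|\to\hbw/\|\hbw\|$ a.s.

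\textbf{Main obstacle.} The delicate step is the martingale control in Stage II: the martingale increments are bilinear in the unbounded residual $\br(t)$ and in the stochastic gradient noise, so naive Azuma or $L^2$ bounds fail. I expect to introduce an auxiliary stopped process $g(t\wedge \tau_K)$ where $\tau_K=\inf\{t:\|\br(t)\|>K\}$, show that the stopped $g$ is an a.s.\ convergent nonnegative supermartingale up to a summable noise term (using $\sum_t\mathcal{L}(\bw(t))^2<\infty$ a.s.\ from Stage I), and then let $K\to\infty$ together with the event $\{\tau_K=\infty\}$ of probability $\to 1$. Properly aligning the learning-rate constant with the two noise-source coefficients (yielding the $\frac{H\beta\sigma_{max}^3}{\sqrt{Nb}\gamma(1-\beta)}+\frac{H\sigma_{max}^4}{2b\gamma^2}$ bound) is the other finicky calculation but is routine once the potential functions are chosen.
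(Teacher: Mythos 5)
Your Stage I plan breaks exactly where the paper says it must: you propose to keep the GDM potential $\xi(t)=\mathcal{L}(\bw(t))+\frac{1-\beta}{2\beta\eta}\|\bw(t)-\bw(t-1)\|^2$ and telescope it in conditional expectation, but after the Taylor expansion the linear term (via $\nabla\mathcal{L}(\bw(t))=\frac{1}{(1-\beta)\eta}(\beta(\bw(t)-\bw(t-1))-\mathbb{E}[\bw(t+1)-\bw(t)\mid\mathcal{F}_t])$) produces a $-\frac{2-\beta}{2(1-\beta)\eta}\|\mathbb{E}[\bw(t+1)-\bw(t)\mid\mathcal{F}_t]\|^2$ drop, whereas the smoothness term and the next potential increment both require $\mathbb{E}[\|\bw(t+1)-\bw(t)\|^2\mid\mathcal{F}_t]$. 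Jensen gives $\|\mathbb{E}[\cdot\mid\mathcal{F}_t]\|^2\le\mathbb{E}[\|\cdot\|^2\mid\mathcal{F}_t]$, so the cancellation goes the wrong way; even using the structural estimate $\mathbb{E}[\|\bw(t+1)-\bw(t)\|^2\mid\mathcal{F}_t]\le\frac{N\sigma_{max}^2}{b\gamma^2}\|\mathbb{E}[\bw(t+1)-\bw(t)\mid\mathcal{F}_t]\|^2$ (the analogue of Lemma \ref{lem: first_second}), forcing a net drop imposes $\beta\le\frac{2b\gamma^2-H\eta\sigma_{max}^4}{N\sigma_{max}^2+b\gamma^2-H\eta\sigma_{max}^4}$, which shrinks to $0$ as $N$ grows. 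The theorem asserts the conclusion for all $\beta\in[0,1)$, so no amount of Robbins--Siegmund bookkeeping on this potential can recover the stated result. The paper sidesteps the issue by switching the potential to $\mathcal{L}(\bu(t))$ where $\bu(t)=\frac{\bw(t)-\beta\bw(t-1)}{1-\beta}$: the point is that $\bu(t+1)=\bu(t)-\eta\nabla\mathcal{L}_{\bB(t)}(\bw(t))$ has an SGD-style one-step update, so the descent inequality couples the loss at $\bu(t)$ to the \emph{current} stochastic gradient without ever invoking the reverse-Jensen inequality; the residual $\nabla\mathcal{L}(\bu(t))-\nabla\mathcal{L}(\bw(t))$ is controlled by past update magnitudes, which are absorbed by a geometric sum. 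That is the missing idea in your Stage I, and it is exactly what produces the paper's constant-$\beta$ learning-rate threshold rather than a $\beta$ restriction.

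In Stage II your route is genuinely different from the paper's. You would decompose $g(t+1)-g(t)$ into a deterministic drift plus a martingale difference and then appeal to stopping-time/Burkholder control, acknowledging yourself that the increments are bilinear in the unbounded residual $\br(t)$ and the gradient noise. This can in principle be made to work, but it is considerably more delicate than you suggest: the argument that $\mathbb{P}(\tau_K=\infty)\to1$ is precisely what you are trying to establish (boundedness of $\|\br(t)\|$ a.s.), so one has to be careful to avoid circularity by pushing through a truncated-drift/localization argument à la Robbins--Siegmund with growing weights, which requires careful alignment of the summable-noise rates. The paper bypasses this entirely by redefining $\br(t)=\bw(t)-\ln(t)\hbw-\tbw-\bn(t)$ with the SGD correction vector $\bn(t)$ from Lemma \ref{lem: define_n_t} (Lemma 5 of \cite{nacson2019stochastic}): because $\bn(t)$ tracks the realized batches, the increment $g(t+1)-g(t)$ becomes a pathwise deterministic quantity of the form $\frac{1}{2}\|\br(t+1)-\br(t)\|^2+\langle\br(t),-\eta\nabla\mathcal{L}_{\bB(t)}(\bw(t))-\frac{N}{bt}\sum_{\bx_i\in\bB(t)\cap\bS_s}\bv_i\bx_i\rangle$ whose tail is summable by the same exponential-tail bookkeeping as in GDM, almost surely, with no martingale estimate needed. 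Your plan could perhaps be repaired to prove the theorem, but you should first adopt the $\mathcal{L}(\bu(t))$ potential in Stage I (non-negotiable), and you should seriously consider whether the $\bn(t)$-based residual isn't simply cleaner than the stopping-time approach you sketch.
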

Similar to the GDM case, Theorem \ref{thm: sgdm_main} shows that the implicit regularization of SGDM under this setting is consistent with SGD (c.f. \cite{nacson2019stochastic} for the implicit regularization of SGD). This matches the observations in practice (c.f. Section \ref{sec: discussion} for details), and is later supported by our experiments (e.g., Figure \ref{fig: main_text}). We add two remarks on the learning rate upper bound and extension to SGDM (w/. r).

\begin{remark}[On the learning rate] Firstly, our learning rate upper bound $\frac{1}{\frac{H\beta\sigmax^3}{\sqrt{Nb}\gamma (1-\beta)}+\frac{H\sigmax^4}{2b\gamma^2}}$ exactly matches that of SGD $2\frac{b\gamma^2}{H\sigma^4_{max}}$ \cite{nacson2019stochastic} when $\beta=0$, and matches that of SGD in terms of the order of $\sigmax$, $H$, and $b$ when $\beta\ne 0$. This indicates our analysis is tight. Secondly, as the bound is monotonously increasing with respect to batch size $b$, Theorem \ref{thm: sgdm_main} also sheds light on the learning rate tuning, i.e., the larger the batch size is, the larger the learning rate is. 

\end{remark}

\begin{remark}(On SGDM (w/o. r))
\label{remark: mini_batch SGDM}
Theorem \ref{thm: sgdm_main} can be similarly extended to SGDM (w/o. r). We defer the detailed description of the corresponding theorem together with the proof to Appendix \ref{appen: mini_SGDM}.
\end{remark}


Next, we show the proof sketch for Theorem 3. The proof also contains two stages, where Stage II is similar to that for GDM. However, we highlight that \textbf{Stage I for SGDM is not a trivial extension of that for GDM and has its merit for other optimization analyses for SGDM}. 
Specifically, the methodology used to construct GDM's potential function fails for SGDM due to the random sampling. We defer a detailed discussion to Appendix \ref{subsec: explanation}. We then need to find a proper potential function for SGDM.
Inspired by SGD's simple update rule, 
we rearrange the update rule of SGDM such that only the gradient information of the current step is contained, i.e., 
\begin{small}
\begin{equation*}
    \frac{\bw(t+1)-\beta \bw(t)}{1-\beta}=\frac{\bw(t)-\beta \bw(t-1)}{1-\beta}-\eta \nabla \mL_{\bB(t)}(\bw(t)).
\end{equation*}
\end{small}
By defining $\bu(t)\triangleq \frac{\bw(t)-\beta \bw(t-1)}{1-\beta}=\bw(t)+\frac{\beta}{1-\beta}(\bw(t)-\bw(t-1))$, we have that $\bu(t)$ is close to $\bw(t)$ (differs by order of one-step update $\bw(t)-\bw(t-1)$), and the update rule of $\bu(t)$ only contains the current-step-gradient information $ \nabla \mL(\bw(t))$. We then select potential function as $\mL(\bu(t))$, and a simple Taylor's expansion directly leads to:
\begin{equation*}
    \mE[\mL(\bu(t+1))|\mF_t]\approx \mL(\bu(t))-\eta\langle \nabla \mL(\bw(t)), \nabla \mL(\bu(t)) \rangle\approx \mL(\bu(t))-\eta\Vert \nabla \mL(\bw(t))\Vert^2,
\end{equation*}
i.e., $\mL(\bu(t))$ is a proper potential function. We formalize the above discussion into the following lemma.

\begin{lemma}
\label{lem: sgdm_loss_update}
Let all conditions in Theorem \ref{thm: sgdm_main} hold. Then, there exists a positive constant $C_2$, such that
\small
\begin{equation*}
    \mE[ \mL(\bu(t+1))]\le \mL(\bu(1))-\sum_{s=1}^t C_2 \eta \mE\Vert \nabla \mL(\bw(s)) \Vert^2.
\end{equation*}
\normalsize
\end{lemma}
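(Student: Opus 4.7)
My plan exploits the observation already highlighted in the text: the auxiliary variable $\bu$ satisfies the clean SGD-type recursion $\bu(t+1)=\bu(t)-\eta\nabla\mL_{\bB(t)}(\bw(t))$, because the $\beta\bom(t-1)$ terms in the definition of $\bu$ telescope away. Under Assumption \ref{assum: smooth}.(\textbf{S}), the loss $\mL$ is globally $L$-smooth with $L\le H\sigmax^2/N$, so the standard descent lemma applied at $\bu(t)$ along the step $\bu(t+1)-\bu(t)$ gives
\begin{equation*}
\mL(\bu(t+1))\le \mL(\bu(t))-\eta\langle\nabla\mL(\bu(t)),\nabla\mL_{\bB(t)}(\bw(t))\rangle+\frac{L\eta^2}{2}\|\nabla\mL_{\bB(t)}(\bw(t))\|^2.
\end{equation*}
Taking the $\mF_t$-conditional expectation and using $\mE[\nabla\mL_{\bB(t)}(\bw(t))\mid\mF_t]=\nabla\mL(\bw(t))$, the first-order term becomes $-\eta\langle\nabla\mL(\bu(t)),\nabla\mL(\bw(t))\rangle$, which I would split as $-\eta\|\nabla\mL(\bw(t))\|^2-\eta\langle\nabla\mL(\bu(t))-\nabla\mL(\bw(t)),\nabla\mL(\bw(t))\rangle$ to isolate the clean descent from the momentum-induced cross error.

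I would then bound the cross error by smoothness and AM-GM, $|\langle\nabla\mL(\bu(t))-\nabla\mL(\bw(t)),\nabla\mL(\bw(t))\rangle|\le\tfrac{L^2}{2\alpha}\|\bu(t)-\bw(t)\|^2+\tfrac{\alpha}{2}\|\nabla\mL(\bw(t))\|^2$ for a tunable $\alpha>0$. The displacement $\bu(t)-\bw(t)=\tfrac{\beta}{1-\beta}(\bw(t)-\bw(t-1))=-\tfrac{\beta\eta}{1-\beta}\bom(t-1)$ turns this into $O(\eta^2)\|\bom(t-1)\|^2$, reintroducing the long memory of momentum. I would dispatch this via the convex-combination representation $\bom(t-1)=(1-\beta)\sum_{k=1}^{t-1}\beta^{t-1-k}\nabla\mL_{\bB(k)}(\bw(k))$ together with Jensen's inequality, giving $\|\bom(t-1)\|^2\le(1-\beta)\sum_{k=1}^{t-1}\beta^{t-1-k}\|\nabla\mL_{\bB(k)}(\bw(k))\|^2$; then summing $t=1,\ldots,T$ and swapping the order of summation collapses the double sum into $\sum_{k=1}^{T-1}\mE\|\nabla\mL_{\bB(k)}(\bw(k))\|^2$ with no $(1-\beta)^{-1}$ blow-up, since $(1-\beta)\sum_{t>k}\beta^{t-1-k}\le 1$.

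To close the argument I would reduce $\mE[\|\nabla\mL_{\bB(t)}(\bw(t))\|^2\mid\mF_t]$, which appears in both the second-order descent term and the aggregated momentum contribution, to a multiple of $\|\nabla\mL(\bw(t))\|^2$. The key mechanisms are (i) the separability inequality $\|\nabla\mL(\bw)\|\ge \gamma\cdot\tfrac{1}{N}\sum_{i}|\ell'(\langle\bw,\bx_i\rangle)|$, obtained by projecting $-\nabla\mL(\bw)$ onto $\hbw$ (of norm $1/\gamma$); (ii) the exp-tail bounds of Assumption \ref{assum: exponential-tailed}, which give $|\ell'|$ lower- and upper-bounded by multiples of $\ell$ up to tail corrections; and (iii) the global boundedness of $|\ell'|$ implied by Assumption \ref{assum: smooth}.(\textbf{S}). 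Combining these yields $\mE[\|\nabla\mL_{\bB(t)}(\bw(t))\|^2\mid\mF_t]\le \kappa\|\nabla\mL(\bw(t))\|^2$ for a constant $\kappa$ scaling like $\sigmax^2/(b\gamma^2)$. Telescoping $\mL(\bu(t+1))$ from $t=1$ to $T$ and collecting all $\mE\|\nabla\mL(\bw(s))\|^2$ contributions then gives an inequality of the form $\mE\mL(\bu(T+1))\le \mL(\bu(1))-\eta\bigl(1-c_1\eta-c_2(\beta)\eta\bigr)\sum_s\mE\|\nabla\mL(\bw(s))\|^2$, and the learning-rate hypothesis of Theorem \ref{thm: sgdm_main} is tuned precisely so that the bracket is bounded below by some fixed $C_2>0$. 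The hard part is the bookkeeping of constants (AM-GM parameter choices, geometric-sum factors of $1-\beta$, and powers of $\gamma^{-1}$) needed to match the explicit learning-rate bound; the decisive structural move is replacing the GDM-style potential $\mL(\bw(t))+\tfrac{1-\beta}{2\beta\eta}\|\bw(t)-\bw(t-1)\|^2$ by $\mL(\bu(t))$, since under random sampling the GDM potential fails to admit an in-expectation one-step descent (see Appendix \ref{subsec: explanation}).
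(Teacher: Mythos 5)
Your proposal follows essentially the same route as the paper: use the telescoped recursion $\bu(t+1)=\bu(t)-\eta\nabla\mL_{\bB(t)}(\bw(t))$, apply the descent lemma for $\mL$ at $\bu(t)$, take conditional expectation, split $-\eta\langle\nabla\mL(\bu(t)),\nabla\mL(\bw(t))\rangle$ into a clean $-\eta\|\nabla\mL(\bw(t))\|^2$ plus a cross error, AM–GM that error into $\|\bu(t)-\bw(t)\|^2$, expand the momentum memory as a weighted sum of past stochastic gradients, and exchange the resulting double sum after telescoping. This is exactly the paper's proof of Lemma~\ref{lem: sgdm_loss_update}, including the final choice of the AM–GM parameter to hit the stated learning-rate threshold.

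One ingredient in your step (iii) is off target. To get $\mE[\|\nabla\mL_{\bB(t)}(\bw(t))\|^2\mid\mF_t]\le\kappa\|\nabla\mL(\bw(t))\|^2$, the paper's Lemma~\ref{lem: first_second} does \emph{not} use the exponential-tail asymptotics (Eqs.~\eqref{eq: tail_upper_bound}--\eqref{eq: tail_lower_bound}, which only hold for $x>x_\pm$) nor the global bound on $|\ell'|$ from the smoothness assumption (which would only yield an absolute bound, not one proportional to $\|\nabla\mL\|^2$). What actually closes the bound is the sign coherence $\ell'(x)<0$ for \emph{all} $x$, which gives $\sum_i\ell'(\langle\bw,\bx_i\rangle)^2\le\bigl(\sum_i\ell'(\langle\bw,\bx_i\rangle)\bigr)^2$ pointwise; combined with your projection onto $\hbw$ this yields $\kappa=\tfrac{N\sigmax^2}{b\gamma^2}$ (note the factor of $N$ you dropped). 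If you literally tried to run your stated mechanism via tail corrections you would get stuck for parameters $\bw$ where some margins $\langle\bw,\bx_i\rangle$ are not yet in the tail regime.
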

By letting $T\rightarrow \infty$ in the second claim of Lemma \ref{lem: sgdm_loss_update}, we have $\sum_{s=1}^\infty \mE\Vert \nabla \mL(\bw(s)) \Vert^2<\infty$, which is indeed what we need in Stage I.

\section{Analyze the effect of preconditioners}
\label{subsec: main_adam}

\subsection{Implicit regularization of deterministic Adam}
This section presents the implicit regularization of deterministic Adam, i.e., Adam without random sampling.

\begin{theorem}
\label{thm: adam_main}
Let Assumption \ref{assum: separable}, \ref{assum: exponential-tailed}, and \ref{assum: smooth}. (\textbf{D}) hold. Let $1>\beta_2>\beta_1^4\ge 0$, and the learning rate $\eta$ is a small enough constant (The upper bound of learning rate is complex, and we defer it to Appendix \ref{subsec: choice_learning_rate}).  Then, with arbitrary initialization point $\bw(1)$, {  deterministic Adam} (Eq.~\ref{eq: def_adam}) satisfies that 
$\bw(t)-\ln(t) \hbw$ is bounded as $t\rightarrow \infty$, and $\lim_{t\rightarrow\infty}\frac{\bw(t)}{\Vert \bw(t) \Vert} =\frac{\hbw}{\Vert \hbw \Vert}$.
\end{theorem}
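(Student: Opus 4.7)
The plan is to follow the same two-stage framework developed for GDM in Theorem~\ref{thm: gdm_main}. Stage~I establishes $\sum_{t=1}^\infty \Vert \nabla \mL(\bw(t))\Vert^2<\infty$, which forces both the loss and the gradient to vanish while the parameter norm diverges. Stage~II then defines a residual $\br(t) \triangleq \bw(t) - \ln(t)\hbw - \tbw$ and uses a potential function of the type introduced in Lemma~\ref{claim: direction_convergence} to show $\Vert\br(t)\Vert$ stays bounded. The additional difficulty relative to GDM is the coupling of the momentum $\bom(t)$, the second-moment accumulator $\bnu(t)$, and the adaptive preconditioner $\bP(t)\triangleq 1/\sqrt{\hbnu(t-1)+\varepsilon\mathds{1}_d}$; the condition $\beta_2>\beta_1^4$ in the statement is almost certainly a quantitative tension between these two timescales.

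\textbf{Stage~I.} First I would design a Lyapunov-type potential $\xi(t)$ in the spirit of Lemma~\ref{lem: update_rule_loss_gdm}, enlarged to absorb both the momentum inertia and the preconditioner drift. A natural ansatz is
\[
\xi(t) = \mL(\bw(t)) + \sum_{i=1}^d c_i(\hnu_i(t-1))\,(\bw_i(t)-\bw_i(t-1))^2,
\]
where the coefficient $c_i$ depends on coordinate-wise preconditioner entries. Expanding $\mL(\bw(t+1))$ to second order along the Adam update and regrouping should produce, after using $\beta_2>\beta_1^4$ to dominate a cross term of the form $(\bnu(t)-\bnu(t-1))\odot\bom(t-1)^{\odot 2}$ by $\nabla\mL$ terms, a descent inequality $\xi(t+1)\le \xi(t) - C\Vert \nabla \mL(\bw(t))\Vert^2/\max_i\sqrt{\hnu_i(t)+\varepsilon}$. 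Monotone descent of $\xi$ forces boundedness of the loss, hence of the gradient, hence of $\bnu(t)$, so the preconditioner is bounded from below by a positive constant; summing the descent bound then yields $\sum_t \Vert \nabla \mL(\bw(t))\Vert^2<\infty$. Telescoping also yields $\sum_t \Vert \bw(t+1)-\bw(t)\Vert^2<\infty$, which is crucial for Stage~II.

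\textbf{Stage~II.} Because $\nabla\mL(\bw(t))\to 0$, we have $\bnu(t)\to 0$ and therefore $\bP(t)\to (1/\sqrt{\varepsilon})\mathds{1}_d$. Asymptotically Adam behaves like GDM with effective learning rate $\eta/\sqrt{\varepsilon}$, which is why the same max-margin direction $\hbw$ is expected. I would then mimic the GDM construction: define $\tbw$ as the solution of a preconditioner-weighted version of the shifted linear system used in the GDM proof, and apply the easy-to-check criterion from Lemma~\ref{claim: direction_convergence} by examining
\[
g(t) = \tfrac{1}{2}\Vert \br(t)\Vert^2 + \tfrac{\beta_1}{1-\beta_1}\langle \br(t),\bw(t)-\bw(t-1)\rangle + R(t),
\]
where the remainder $R(t)$ absorbs the time dependence of $\bP(t)$. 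The one-step increments of $g$ then split into (i) a GD-type inner product $\langle \br(t), -\eta \bP(t)\nabla \mL(\bw(t)) - \tfrac{1}{t}\hbw\rangle$, which is bounded exactly as in the GD/GDM analyses once the perturbation $\Vert \bP(t)-(1/\sqrt{\varepsilon})\mathds{1}_d\Vert$ is shown to be summable (it follows from Stage~I because $\hbnu(t)$ decays like $\Vert \nabla \mL(\bw(t))\Vert^2$); (ii) boundary terms of the form $\langle \br(t),\bw(t)-\bw(t-1)\rangle$, which are $\boldsymbol{o}(\Vert\br(t)\Vert)$ since $\bw(t)-\bw(t-1)\to 0$; and (iii) cross terms $\langle \bw(t+1)-\bw(t),\bw(t)-\bw(t-1)\rangle$, summable by Stage~I. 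Concluding as in Lemma~\ref{claim: direction_convergence} gives $\Vert \br(t)\Vert$ bounded and hence the claimed directional convergence.

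\textbf{Main obstacle.} The hard part will be engineering the Stage~I potential so that the momentum coupling (as in GDM) and the second-moment coupling (from the preconditioner) fit into a single monotone quantity. The condition $\beta_2>\beta_1^4$ does not appear in GDM's analysis and seems specifically tailored to bound the rate at which $\bnu(t)$ can change against the squared momentum buffer $\bom(t)^{\odot 2}$; the algebra needed to make the resulting cross-term non-positive while keeping a residual proportional to $\Vert \nabla \mL(\bw(t))\Vert^2$ is where I expect most of the work and most of the smallness assumption on $\eta$ to be consumed. A secondary obstacle in Stage~II is that the effective update direction is $\bP(t)\nabla \mL(\bw(t))$ rather than $\nabla \mL(\bw(t))$, so $\tbw$ cannot be taken as the same constant offset used for GDM; a preconditioner-weighted analog must be exhibited and its defect from the true offset must be shown to be summable.
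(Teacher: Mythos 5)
Your two-stage plan matches the paper's structure and your intuition about the preconditioner freezing to $(1/\sqrt{\varepsilon})\mathds{1}_d$ is exactly right, but there are two genuine gaps in the sketch.

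First, the role of $\beta_2>\beta_1^4$ is not what you describe. In the paper it is not used to dominate a cross-term of the form $(\bnu(t)-\bnu(t-1))\odot\bom(t-1)^{\odot 2}$. Instead, when you expand the Adam update via the identity
$(1-\beta_1^{t}) \sqrt{\varepsilon\mathds{1}_d+\hbnu(t)}\odot\left(\bw(t+1)-\bw(t)\right)-\beta_1(1-\beta_1^{t-1})\sqrt{\varepsilon\mathds{1}_d+\hbnu(t-1)}\odot\left(\bw(t)-\bw(t-1)\right) = -\eta(1-\beta_1)\nabla\mL(\bw(t))$,
the two update-difference terms carry mismatched preconditioners $\hbnu(t)$ versus $\hbnu(t-1)$. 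To absorb the inner product between them into the Lyapunov telescoping, one must bound the coordinate-wise ratio $\sqrt[4]{(\varepsilon+\hbnu(t-1))/(\varepsilon+\hbnu(t))}$. Using $\hbnu(t)\ge \beta_2(1-\beta_2^{t-1})\hbnu(t-1)/(1-\beta_2^t)$ this ratio is at most $\sqrt[4]{(1-\beta_2^t)/(\beta_2(1-\beta_2^{t-1}))}$, and the inequality $f_t(x)\ge\sqrt[4]{f_t(x^4)}$ for $f_t(x)=\frac{1-x^t}{x(1-x^{t-1})}$ is what translates this fourth-root bound into a factor that the $\beta_1$-momentum telescope can tolerate; that is exactly why the exponent in $\beta_2>\beta_1^4$ is four. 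Your Lyapunov ansatz with a coordinate-wise coefficient $c_i(\hnu_i(t-1))$ is close to the paper's choice (which uses $\sqrt[4]{\varepsilon+\hbnu_i(t-1)}$ as the weight), but without identifying this fourth-root mechanism the descent inequality will not close.

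Second, and more substantially, your Stage II is missing an indispensable intermediate step. You assert that the perturbation $\Vert \bP(t)-(1/\sqrt{\varepsilon})\mathds{1}_d\Vert$ is summable because $\hbnu(t)$ decays like $\Vert\nabla\mL(\bw(t))\Vert^2$; that statement is correct, but summability of $\hbnu(t)$ alone is not enough. The increments $g(t+1)-g(t)$ contain a term of the form $\langle\br(t),[\sqrt{\varepsilon}\mathds{1}_d-(1-\beta_1^t)\sqrt{\varepsilon\mathds{1}_d+\hbnu(t)}]\odot(\bw(t+1)-\bw(t))\rangle$, which is of order $\Vert\br(t)\Vert\cdot\hbnu(t)\cdot\Vert\bw(t+1)-\bw(t)\Vert$. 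With only Stage~I in hand you get $\Vert\br(t)\Vert = \mathcal{O}(\sqrt{t})$ (Cauchy--Schwarz on $\sum\Vert\bw(s+1)-\bw(s)\Vert^2<\infty$), and $\sqrt{t}\cdot\hbnu(t)$ need not be summable just because $\hbnu(t)$ is. The paper resolves this by first proving a convergence-rate lemma specific to Adam: $\mL(\bw(t))=\Theta(t^{-1})$, $\Vert\bw(t)\Vert=\Theta(\ln t)$, $\Vert\bw(t)-\bw(t-1)\Vert=\Theta(t^{-1})$, and hence $\hbnu(t)=\mathcal{O}(t^{-2})$. This is obtained by sharpening the descent inequality into a recursion of the form $\xi(t)^2\le C(\xi(t)-\xi(t+1))$ and does not appear in the GDM analysis. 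With the rate you get $\Vert\br(t)\Vert=\mathcal{O}(\ln t)$ and $\hbnu(t)=\mathcal{O}(t^{-2})$, so the offending term is $\mathcal{O}(t^{-3}\ln t)$ and summable. Without establishing some quantitative decay of $\hbnu(t)$, your Stage~II summability arguments do not go through.

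Your choice of $\tbw$ in Stage~II is actually simpler than you fear — because $\bP(t)\to(1/\sqrt{\varepsilon})\mathds{1}_d$, one takes the same constant-vector solution as in GDM, merely rescaled by $1/\sqrt{\varepsilon}$; no preconditioner-weighted analog is needed, and the "defect" from that offset is precisely the term above that the rate lemma controls.
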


\begin{remark}[On the $\beta_1$ and $\beta_2$ range]
Almost all existing literature assume a time-decaying hyperparameter choice of $\beta_1$ or $\beta_2$ (c.f., \cite{kingma2014adam,chen2018convergence}). On the other hand, our result proves that deterministic Adam converges with constant settings of $\beta_1$ and $\beta_2$, which agrees with the practical use.
\end{remark}
\begin{remark}[(Discussion on the results in \cite{soudry2018implicit})]
\cite{soudry2018implicit} observe that, on a synthetic dataset, the direction of the output parameter by Adam still does not converge to the max-margin direction after $2\times 10^6$ iterations (but is getting closer). At the same time, GD seems to converge to the max-margin direction. This seems to contradict with Theorem \ref{thm: adam_main}. We reconduct the experiments (please refer to Appendix \ref{appen:ill} for details) and found (1). such a phenomenon occurs because a large learning rate is selected, using which GD will also stick in a direction close to the max-margin direction (but not equal); (2). the constructed synthetic dataset is ill-posed, as the non-support data is larger than the support data by order of magnitude (which is also rare in practice). On a well-posed dataset, we observe that both Adam and GD will converge to the max-margin solution rapidly (Figure \ref{fig: main_text}).
\end{remark}

We simply introduce the proof idea here and put the full proof in Appendix \ref{subsec: implicit_bias_adam}. The proof of deterministic Adam is very similar to that of GDM with minor changes in potential functions. Specifically, $\xi(t)$ in Lemma \ref{lem: update_rule_loss_gdm} is changed to
\begin{small}
\begin{equation*}
    \xi(t)\triangleq\mL(\bw(t))+\frac{1}{2}\frac{1-\beta_1^{t-1}}{\eta(1-\beta_1)}\left\Vert \sqrt[4]{\varepsilon\mathds{1}_d+\hbnu(t-1)}\odot (\bw(t)-\bw(t-1))\right\Vert ^2,
\end{equation*}
\end{small}
and $g(t)$ in Lemma \ref{claim: direction_convergence} is changed to
\begin{small}
\begin{align*}
   &g(t)\overset{\triangle}{=}\left\langle \br(t),(1-\beta_1^{t-1}) \sqrt{\varepsilon\mathds{1}_d+\hbnu(t-1)}\odot\left(\bw(t)-\bw(t-1)\right)\right\rangle
   \frac{\beta_1}{1-\beta_1}+\frac{\sqrt{\varepsilon}}{2}\Vert \br(t)\Vert^2.
\end{align*}
\end{small}
The rest of the proof then flows similarly to the GDM case.

\subsection{What if random sampling is added?}
\label{randomsam}
We have obtained the implicit regularization for GDM, SGDM, and deterministic Adam. One may wonder whether the implicit regularization of stochastic Adam can be obtained. Unfortunately, the gap can not be closed yet. This is because an implicit regularization analysis requires the knowledge of the loss dynamics, little of which, however, has been ever known even for stochastic RMSProp (i.e., Adam with $\beta_1=0$ in Eq. (\ref{eq: def_adam})) with constant learning rates. Specifically, the main difficulty lies in bounding the change of conditioner $\frac{1}{\sqrt{\varepsilon\mathds{1}_d+\hbnu(t)}}$ across iterations, which is required to make the drift term $\langle \nabla \mL(\bw(t)), \mE (\bw(t+1)-\bw(t))\rangle$ (derived by Taylor's expansion of the epoch start from $Kt$) negative  to ensure a non-increasing loss. 

On the other hand, if we adopt decaying learning rates $\eta_t=\frac{1}{\sqrt{t}}$, \cite{shi2021rmsprop} shows $\beta_2$ close enough to $1$, the following equation holds for stochastic RMSProp (w/o. r) (recall that $K\triangleq\frac{N}{b}$ is the epoch size)
\small
\begin{equation}
\label{eq: shi_main}
    \sum_{t=0}^T \frac{1}{\sqrt{t+1}}\left\Vert \nabla \mL\left(\bw\left(Kt\right)\right)\right\Vert=\mathcal{O}(\ln T).
\end{equation}
\normalsize

Based on this result, we have the following theorem for stochastic RMSProp (w/o. r):
\begin{theorem}
\label{thm: ib_rms}
Let Assumptions \ref{assum: separable}, \ref{assum: exponential-tailed}, and \ref{assum: smooth}. (S) hold. Let $\beta_2$ be close enough to $1$.
Then, with arbitrary initialization point
$\bw(1)$, stochastic RMSProp (w/o. r) converges to the $L^2$ max-margin solution.
\end{theorem}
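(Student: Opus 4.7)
The approach is to instantiate the two-stage framework of Sections~\ref{subsec: main_sgdm} and~\ref{subsec: main_adam}, using \eqref{eq: shi_main} as the Stage~I input; it substitutes for the monotone potential decrease that is unavailable for stochastic RMSProp, which is exactly why the theorem is stated with a decaying learning rate. Stage~I promotes \eqref{eq: shi_main} to $\mL(\bw(t))\to 0$, $\Vert\bw(t)\Vert\to\infty$, and $\hbnu(t)\to\boldsymbol{0}$. Stage~II then shows that the residual $\br(t)=\bw(t)-c(t)\hbw-\tbw$ stays bounded almost surely, for a growth $c(t)\sim\ln\bigl(\sum_{s\le t}\eta_s\bigr)\sim\tfrac{1}{2}\ln t$ (the scaling dictated by matching the exponential-tailed decay of $\nabla\mL$ along $\hbw$ under the effective step $\eta_t/\sqrt{\varepsilon}$) and a constant $\tbw$ chosen as in Appendix~\ref{subsubsec: form of the solution_gdm}; this delivers direction convergence.

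\textbf{Stage~I.} Since $\hbnu\ge\boldsymbol{0}$, the preconditioner is componentwise at most $1/\sqrt{\varepsilon}$, and mini-batch gradients are uniformly bounded on the sublevel set of $\mL(\bw(1))$, so each iterate moves by $O(\eta_t)$. Global smoothness (Assumption~\ref{assum: smooth}.(S)) then gives $\Vert\nabla\mL(\bw(Kt+j))-\nabla\mL(\bw(Kt))\Vert=O(\eta_t)$ for $0\le j<K$, upgrading \eqref{eq: shi_main} from epoch boundaries to all iterates: $\sum_{t=1}^{T}\eta_t\Vert\nabla\mL(\bw(t))\Vert=O(\ln T)$. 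The Polyak--Lojasiewicz-type inequality $\Vert\nabla\mL(\bw)\Vert\ge\gamma\mL(\bw)$ --- immediate from $\langle-\nabla\mL(\bw),\hbw\rangle\ge\mL(\bw)$ combined with Cauchy--Schwarz --- together with $\sum_t\eta_t=\infty$ forces $\liminf_t\mL(\bw(t))=0$, and the per-step bound $\vert\mL(\bw(t+1))-\mL(\bw(t))\vert=O(\eta_t)$ then promotes this to $\mL(\bw(t))\to 0$. Exponential-tailed-ness yields $\Vert\nabla\mL(\bw(t))\Vert\to 0$, hence $\hbnu(t)\to\boldsymbol{0}$ and the preconditioner converges to $\mathds{1}_d/\sqrt{\varepsilon}$.

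\textbf{Stage~II.} Following Lemma~\ref{claim: direction_convergence} and its Adam analogue, I would build a quadratic potential $g(t)$ in $\br(t)$ preconditioned by $(\varepsilon\mathds{1}_d+\hbnu(t-1))^{1/4}$, with cross terms absorbing RMSProp's recursion, and bound $\mE[g(t+1)\mid\mF_t]-g(t)$. The deterministic drift splits into: (i) a GD-style inner product $\langle\br(t),-\eta_t\nabla\mL(\bw(t))/\sqrt{\varepsilon}-(c(t+1)-c(t))\hbw\rangle$, handled as in \cite{soudry2018implicit} adapted to the decaying learning rate; (ii) a preconditioner-error term of order $\eta_t\Vert\hbnu(t-1)\Vert\cdot\Vert\br(t)\Vert$, summable because $\Vert\hbnu(t)\Vert\to 0$; and (iii) a martingale-difference contribution from without-replacement sampling, with quadratic variation controlled via Cauchy--Schwarz by $(\sup_t\eta_t\Vert\nabla\mL\Vert)\cdot\sum_t\eta_t\Vert\nabla\mL\Vert<\infty$. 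Summability of all three pieces will give $\sup_t\mE[g(t)]<\infty$, hence $\Vert\br(t)\Vert=O(1)$ a.s.\ and $\bw(t)/\Vert\bw(t)\Vert\to\hbw/\Vert\hbw\Vert$.

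\textbf{Main obstacle.} The hardest step is the \emph{simultaneous} handling of three ingredients in Stage~II: the decaying learning rate forces the target scaling off the usual $\ln t$ behaviour, so every estimate must be phrased via $\sum_s\eta_s$; the preconditioner contributes a non-gradient-aligned perturbation whose cumulative effect on $\br(t)$ must be shown finite despite the non-independence of $\hbnu(t)$ and $\nabla\mL(\bw(t))$; and the without-replacement stochasticity forces martingale control built on only the weaker $L^1$ gradient bound \eqref{eq: shi_main}, rather than the $L^2$ summability available for SGDM in Lemma~\ref{lem: sgdm_loss_update}. The without-replacement assumption is doubly essential here: \eqref{eq: shi_main} is stated only for it, and its full-epoch coverage lets us replace stochastic gradients by full gradients at each epoch boundary up to an $O(\eta_t)$ smoothness error.
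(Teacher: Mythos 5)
Your plan attacks the right pieces (Shi et al.\ as Stage~I input, a $\tfrac12\ln t$ growth for $c(t)$, a three-way split of the Stage~II drift into learning-rate error, preconditioner error, and a core term), which does match the paper's architecture. But there are two genuine gaps, and a third structural mismatch worth noting.

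First, your Stage~I stops well short of what Stage~II actually needs, and the step you rely on is wrong. You claim the bound $\langle-\nabla\mL(\bw),\hbw\rangle\ge\mL(\bw)$ is global; that requires $-\ell'(x)\ge\ell(x)$ pointwise, which fails for logistic loss as $x\to-\infty$ (there $-\ell'(x)\to 1$ but $\ell(x)\to\infty$). The comparability of $\Vert\nabla\mL\Vert$ and $\mL$ in the paper (Corollary~\ref{coro:gradient_small_equivalence}) holds only once one of them is already small, which is why it requires thresholds $C_g$, $C_l$. Moreover, passing from $\liminf_t\mL(\bw(t))=0$ to $\mL(\bw(t))\to 0$ via the per-step change $|\mL(\bw(t+1))-\mL(\bw(t))|=O(\eta_t)$ is not valid when $\sum_t\eta_t=\infty$: that allows unbounded back-and-forth fluctuation. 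The paper's Lemma~\ref{lem: loss_convergence_rmsprop} does something sharper: it defines the set $\mT$ of epochs where $\Vert\nabla\mL(\bw(Kt))\Vert\lesssim t^{-1/2}$, shows $\mT$ is unbounded, and then uses the \emph{epoch-level descent inequality} in Corollary~\ref{coro: naichen} (the $-\frac{C_3}{\sqrt{t+1}}\Vert\nabla\mL(\bw(Kt))\Vert$ term) to propagate smallness forward and prove the quantitative rate $\mL(\bw(Kt))=\mathcal{O}(t^{-1/2})$, hence $\Vert\nabla\mL_{\bB(\tau)}(\bw(\tau))\Vert=\mathcal{O}(\tau^{-1/2})$ at every step (Corollary~\ref{coro: loss_rms_mb}). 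That rate, not just $\mL\to 0$, is the input Stage~II uses: it is what makes $\Vert\bw(t+1)-\bw(t)\Vert=\eta_t\cdot\mathcal{O}(t^{-1/2})=\mathcal{O}(t^{-1})$ and hence $\sum_t\Vert\br(t+1)-\br(t)\Vert^2<\infty$.

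Second, the closing estimate in your Stage~II is false as written. You ask for quadratic-variation control of the form $(\sup_t\eta_t\Vert\nabla\mL\Vert)\cdot\sum_t\eta_t\Vert\nabla\mL\Vert<\infty$. But Eq.~\eqref{eq: shi_main} gives $\sum_{t\le T}\eta_t\Vert\nabla\mL(\bw(Kt))\Vert=\mathcal{O}(\ln T)$, not $\mathcal{O}(1)$; and with the actual rate the summands are $\eta_t\Vert\nabla\mL\Vert\sim t^{-1}$, which diverges in $L^1$. Only the \emph{squares} $\eta_t^2\Vert\nabla\mL\Vert^2\sim t^{-2}$ are summable, and this is precisely the $A$-term bound. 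Relatedly, the paper's Stage~II is not a martingale argument at all: because the sampling is without replacement, Lemma~\ref{lem: define_n_t} gives a \emph{deterministic} decomposition of the running sum $\frac{N}{b}\sum_{s\le t}\frac{1}{s}\sum_{\bx_i\in\bB(s)\cap\bS_s}\bv_i\bx_i$ into $\ln(\cdot)\hbw+\bn(t)+\cbw$, and the third drift term in Eq.~\eqref{eq: rms_ib_mid_result} is then handled pathwise exactly as in Lemma~\ref{lem: sgdm_ib}. There is no conditional-expectation step and no need for a variance bound. If you take the martingale route, you would have to reconstruct a replacement for Lemma~\ref{lem: define_n_t} from scratch and prove tighter concentration than what is actually available.

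In short: the Stage~II decomposition is the right one, but Stage~I must deliver a $t^{-1/2}$ rate for the loss and gradient (not just vanishing), and the residual control should be pathwise via Lemma~\ref{lem: define_n_t} rather than through an $L^1$ quadratic-variation bound that is not actually finite.
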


\begin{remark}[On the decaying learning rate]
The decaying learning rate is a "stronger" setting compared to the constant learning rate, both in the sense that GDM, SGDM, and {  deterministic Adam} can be shown to converge to the max-margin solution following the same routine as Theorems \ref{thm: gdm_main}, \ref{thm: sgdm_main}, and \ref{thm: adam_main}, and in the sense that we usually adopt constant learning rate in practice.
\end{remark}

The proof is on the grounds of a novel characterization of the loss convergence rate derived from Eq. (\ref{eq: shi_main}), and readers can find the details in Appendix \ref{appen: random_shuffling_rmsprop}. 
Furthermore, the proof can be easily extended to the Stochastic Adaptive Heavy Ball (SAHB) algorithm  (proposed by \cite{tao2021role}), which can be viewed as a momentum version of RMSProp (but different from Adam) with the following update rule
\small
\begin{equation*}
    \bw(t+1)=\bw(t)-\bom(t),\bom(t)=\eta_t(1-\beta_1)\frac{\nabla \mL_{\bB(t)}(\bw(t))}{\sqrt{\varepsilon \mathds{1}_d+\hbnu (t)}}+\beta_1(\bw(t)-\bw(t-1)).
\end{equation*}
\normalsize

The proof of Theorem \ref{thm: ib_rms} can be easily extended to SAHB, based on the fact that $\bu(t)=\frac{\bw(t)-\beta_1\bw(t-1)}{1-\beta_1}$ has a simple update, i.e.,
\small
\begin{gather*}
\nonumber
    \bu(t+1)=-\eta_t\frac{\nabla \mL_{\bB(t)}(\bw(t))}{\sqrt{\varepsilon \mathds{1}_d+\hbnu (t)}}+\bu(t).
\end{gather*}  
\normalsize
and $\mL(\bu(t))$ can be used as a potential function just as the SGDM case.
\begin{figure}[htbp]
\centering
\vspace{-1mm}

\begin{minipage}{1.0\textwidth}
\centering
\begin{subfigure}{.45\textwidth}
          \centering
          \includegraphics[width=0.9\textwidth]{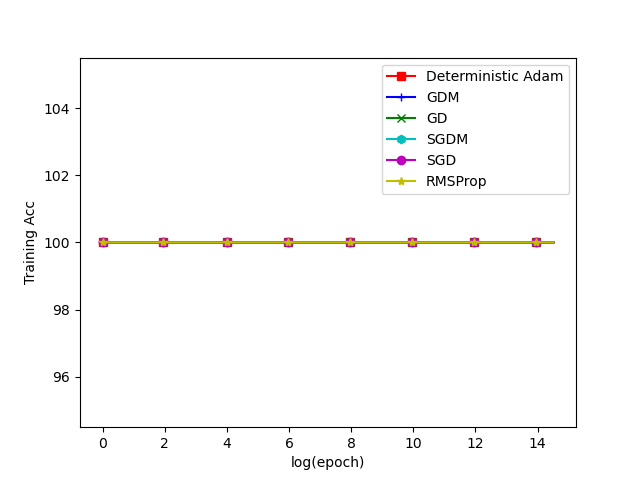}
          \caption{Training Accuracy}
        \end{subfigure}%
        \hspace{0.4mm}
\begin{subfigure}{.45\textwidth}
          \centering
          \includegraphics[width=0.9\textwidth]{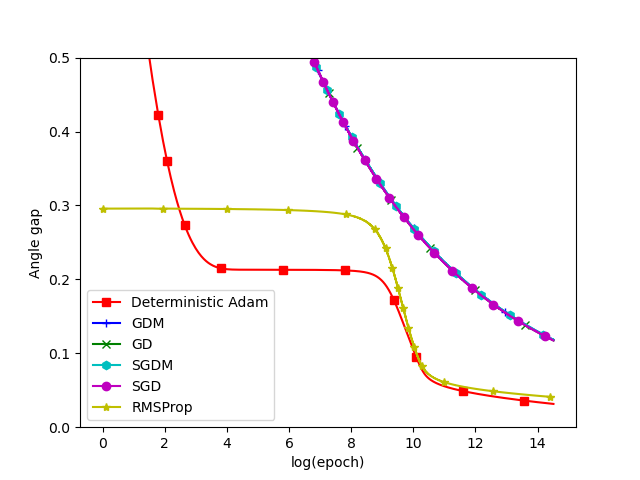}
          \caption{Angle gap between the parameter direction and the max margin solution}
\end{subfigure}%
\end{minipage}%
\centering
\caption{Comparison of the implicit regularization of (S)GD, (S)GDM, deterministic Adam and stochastic RMSProp. We use the synthetic dataset in \cite{soudry2018implicit} with learning rate $0.1$. Figure (b) shows (1). all the optimizers converge to the max margin solution, and (2). the asymptotic behaviors with \& without momentum are similar. The experimental observation support our theoretical results.}
\label{fig: main_text}
\end{figure}

\section{Discussions}
\label{sec: discussion}

\textbf{Consistency with the Experimental Results.} We conduct experiments to verify our theoretical findings. Specifically, we (1). run GD, GDM, SGD, SGDM, and Adam on a synthetic dataset to observe their implicit regularization; (2). run GD and Adam on ill-posed dataset proposed in \cite{soudry2018implicit} to verify Theorem \ref{thm: adam_main}; (3). run SGD and SGDM on neural networks to classify the MNIST dataset and compare their implicit regularization. The experimental observations stand with our theoretical results. Furthermore, it is worth mentioning that experimental phenomenons that adding momentum will not change the implicit regularization have also been observed by existing literature \cite{soudry2018implicit,nacson2019convergence,wang2021implicit}. 

\textbf{Influence of hyperparameters on convergence rates.} Our results can be further extended to provide a precise characterization of the influence of the hyperparameters $\eta$ and $\beta$ on the convergence rate of (S)GDM. Specifically, in Appendix \ref{appen: precise}, we show that the asymptotic convergence rate of (S)GDM is $C\frac{1}{\eta}\frac{1}{t}$, where $C$ is some constant independent of $\beta$ and $\eta$. Therefore, increase $\eta$ can lead to a faster convergence rate (with learning rate requirements in Theorems \ref{thm: gdm_main} and \ref{thm: sgdm_main} satisfied). However, changing $\beta$ does not affect the convergence rate, which is also observed in our experiments (e.g., Figure \ref{fig: main_text}). While this seems weird, as replacing gradient with momentum in deep learning often accelerates the training, we hypothesize that such an acceleration appears as the landscape of neural networks is highly non-convex and thus can not be observed in the case considered by this paper.

\textbf{Gap Between The Linear Model and Deep Neural Networks.} While our results only hold for the linear classification problem, extending the results to the deep neural networks is possible. Specifically, existing literature \cite{lyu2019gradient,wang2021implicit} provide a framework for deriving implicit regularization for deep homogeneous neural networks. However, the approach in \cite{lyu2019gradient,wang2021implicit} can not be trivially applied to the momentum-based optimizers, as their proofs require the specific gradient-based updates to lower bound a smoothed margin (c.f., Theorem 4.1, \cite{lyu2019gradient}). It remains an exciting work to see how our results can be expanded to GDM
and Adam for deep neural networks.

\section{Conclusion}
This paper studies the implicit regularization of momentum-based optimizers in linear classification with exponential-tailed loss. Our results indicate that for SGD and the deterministic version of Adam, adding momentum will not influence the implicit regularization, and the direction of the parameter converges to the $L^2$ max-margin solution. Our theoretical results stand with existing experimental observations, and developed techniques such as the potential functions may inspire the analyses on other momentum-based optimizers. Motivated by the results and techniques for linear cases in this paper, it has the potential to extend them to the homogeneous neural network in the future. Another topic left for future work is to derive the implicit regularization of constant learning rate stochastic Adam. As discussed in Section~\ref{randomsam}, this topic is non-trivial, and it requires new techniques and assumptions to be developed.

\newpage
\bibliography{iclr2022_conference}
\bibliographystyle{abbrv}

\newpage
\appendix
\newpage
\begin{center}
    {\Large \textbf{Supplementary Materials for \\``{   Does Momentum  Change the Implicit Regularization on Separable Data?''}}}
\end{center}
\section{Preparations}
\label{appen: preparation}
This section collect definitions and lemmas which will be used throughout the proofs.
\subsection{Characterization of the max-margin solution}
\label{subsec: characterization_max_margin}
This section collects several commonly-used characterization of the max-margin solution from \cite{nacson2019stochastic} and \cite{soudry2018implicit}.

To start with, we define support vectors and support set, which are two common terms in margin analysis. Recall that in the main text, we assume that without the loss of generality, $\by_i=1$, $\forall i\in \{1,\cdots,N\}$.

\begin{definition}[Support vectors and support set] For any $i\in [N]$, $\bx_i$ is called a support vector of the dataset $\bS$, if  {  
\begin{equation*}
    \langle \bx_i, \hbw \rangle=1.
\end{equation*}
Correspondingly, $\bx_i$ is called a non-support vector if $ \langle \bx_i, \hbw \rangle>1$. The support set of $\bS$ is then defined as
\begin{equation*}
    \bS_{s}=\{\bx\in \bS:\langle \bx, \hbw \rangle=1 \}.
\end{equation*}
}
\end{definition}

The following lemma delivers $\hbw$ as an linear combination of support vectors.

\begin{lemma}[Lemma 12, \cite{soudry2018implicit}]
\label{lem: structure of max-margin vector}
For almost every datasets $S$, there exists a unique vector $\bv=(\bv_1,\cdots,\bv_N)$, such that $\hbw$ can be represented as 
\begin{equation}
\label{eq: structure of hbw}
    \hbw= \sum_{i=1}^N \bv_i \bx_i,
\end{equation}
where $\bv$ satisfies $\bv_i=0$ if $\bx_i\notin\bS_{s}$, and $\bv_i>0$ if $\bx_i\in\bS_{s}$. Furthermore, the size of $\tbS_{s}$ is at most $d$.
\end{lemma}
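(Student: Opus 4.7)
The plan is to recognize that the definition of $\hbw$ is nothing but the hard-margin SVM primal: minimize $\tfrac{1}{2}\|\bw\|^2$ subject to $\langle \bw,\bx_i\rangle\ge 1$ for every $i\in[N]$. This is a strictly convex QP with affine constraints; Assumption \ref{assum: separable} supplies a strictly feasible point so Slater's condition holds, and the KKT system is therefore necessary and sufficient at $\hbw$. Stationarity of the Lagrangian yields $\hbw=\sum_{i=1}^N \bv_i\bx_i$ with dual multipliers $\bv_i\ge 0$, while complementary slackness forces $\bv_i(\langle \hbw,\bx_i\rangle-1)=0$, hence $\bv_i=0$ whenever $\bx_i\notin\bS_s$. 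So the \emph{existence} of a nonnegative $\bv$ supported on $\bS_s$ is immediate from KKT; the genuine content of the lemma lies in (a) uniqueness of $\bv$, (b) strict positivity $\bv_i>0$ on $\bS_s$, and (c) $|\bS_s|\le d$.

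For these three points I would invoke a genericity argument: viewing $\bS$ as a point in $\mathbb{R}^{d\times N}$, the plan is to show that each bad event cuts out a proper algebraic subvariety of $\mathbb{R}^{d\times N}$, hence has Lebesgue measure zero, and then take a finite union over the possible supports $J\subseteq[N]$. For (c), fix a candidate support $J$ with $|J|>d$. The support vectors indexed by $J$ all lie on the $(d-1)$-dimensional affine hyperplane $\{\bx:\langle\hbw,\bx\rangle=1\}$, and since $\hbw\in\mathrm{span}\{\bx_i:i\in J\}$ is itself expressible in the first $d$ such vectors, requiring a further $\bx_j$ ($j\in J$, $j$ beyond the first $d$) to also sit on this same hyperplane is one scalar polynomial equation on the dataset coordinates; this is codimension-$1$, hence measure zero. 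Union bounding over the finitely many $J$ gives (c).

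Having restricted to datasets where $|\bS_s|\le d$, I would next exclude the measure-zero set of configurations where the support vectors are linearly dependent. On the remainder, linear independence of $\{\bx_i\}_{i\in\bS_s}$ immediately makes the representation $\hbw=\sum_{i\in\bS_s}\bv_i\bx_i$ have unique coefficients, giving (a). For (b), suppose toward contradiction that $\bv_{i_0}=0$ for some $\bx_{i_0}\in\bS_s$. Then $\hbw$ is also feasible and optimal for the reduced problem on $\bS\setminus\{\bx_{i_0}\}$, yet $\bx_{i_0}$ just happens to satisfy $\langle\hbw,\bx_{i_0}\rangle=1$ anyway. This pins $\bx_{i_0}$ onto the specific affine hyperplane determined (through $\hbw$) by $\bS\setminus\{\bx_{i_0}\}$, again one scalar equation on $\bx_{i_0}$ after stratifying by which subset of the remaining data is its support --- a codimension-$1$, measure-zero event.

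The main obstacle is making these genericity statements fully rigorous: one must carefully stratify $\mathbb{R}^{d\times N}$ by the combinatorial type of the KKT solution (which subset $J\subseteq[N]$ indexes the support), argue that on each stratum the forbidden coincidence is a nontrivial polynomial identity in the entries of $\bS$, and then use that a finite union of proper algebraic subvarieties of $\mathbb{R}^{d\times N}$ has Lebesgue measure zero. Once this bookkeeping is in place, (a)--(c) follow simultaneously on the complement of a measure-zero set and the lemma is complete.
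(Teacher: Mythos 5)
The paper does not contain its own proof of this lemma; it is quoted verbatim from \cite{soudry2018implicit} (their Lemma~12) and used as a black box, so there is no internal argument to compare against. Your reconstruction is nevertheless the standard route and, to my reading, matches the approach of the cited source. The KKT segment is correct and routine: the hard-margin SVM $\min \tfrac{1}{2}\|\bw\|^2$ s.t.\ $\langle\bw,\bx_i\rangle \ge 1$ is a strictly convex QP, Assumption~\ref{assum: separable} supplies a Slater point, and stationarity plus complementary slackness give $\hbw=\sum_i \bv_i\bx_i$ with $\bv_i\ge 0$ and $\bv_i=0$ off the support set. The remaining three claims are indeed genericity statements.

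Two small remarks that would tighten your sketch. First, for the cardinality bound $|\bS_s|\le d$, you do not actually need to route through "$\hbw$ expressed via the first $d$ support vectors" (which is slightly circular, since it presupposes linear independence of those $d$, another genericity fact). A cleaner statement: all support vectors lie on the affine hyperplane $\{\bx:\langle\hbw,\bx\rangle=1\}$, which has affine dimension $d-1$; if $|\bS_s|\ge d+1$, then $d+1$ of the $N$ data points are affinely dependent, which is a vanishing-determinant condition on the raw data, independent of $\hbw$, hence measure zero after a finite union over $(d+1)$-subsets. Second, you are right to flag the stratification bookkeeping as the real technical obstacle: the strata "support equals $J$" are semialgebraic and data-dependent, so one has to argue that the coincidence cut out by your extra polynomial equation is a \emph{proper} subvariety when intersected with each positive-measure stratum. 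That is exactly the kind of care the original authors take; your plan identifies the right structure but does not carry it out, which is acceptable for a cited auxiliary result but would need to be filled in for a self-contained proof.
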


By Lemma \ref{lem: structure of max-margin vector}, we further have the following corollary:
\begin{corollary}
\label{coro:definition_of_tbw}
For almost every datasets $S$, the unique $\bv$ given by Lemma \ref{lem: structure of max-margin vector} further satisfies that for any positive
constant ${   C_3}$, there exists a non-zero vector {   $\tbw$}, such that, $\bx_i\in\bS_{s}$, we have
\begin{equation}
\label{eq: represent_v}
    {   C_3} e^{-\langle \bx_i,\tbw\rangle}=\bv_i.
\end{equation}
\end{corollary}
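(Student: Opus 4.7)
The plan is to realize the desired equation as a linear system in $\tbw$ and solve it. Taking logarithms of $C_3 e^{-\langle \bx_i,\tbw\rangle} = \bv_i$ shows the corollary is equivalent to
\begin{equation*}
    \langle \bx_i,\tbw\rangle = \ln C_3 - \ln \bv_i \quad \text{for every } \bx_i\in\bS_{s},
\end{equation*}
which is well-defined since $\bv_i>0$ on support vectors by Lemma \ref{lem: structure of max-margin vector}.

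First I would invoke Lemma \ref{lem: structure of max-margin vector} to get $|\bS_{s}|\le d$, and combine this with a standard genericity argument to conclude that for almost every dataset the vectors in $\bS_{s}$ are linearly independent (the set of datasets in which some $k\le d$ prescribed points are linearly dependent is cut out by the vanishing of $k\times k$ minors, hence forms a proper algebraic subvariety of $\mathbb{R}^{d\times N}$ of Lebesgue measure zero). Linear independence immediately makes the above linear system consistent, with an affine solution set of dimension $d-|\bS_{s}|\ge 0$; pick any element as $\tbw$.

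Finally, I would check that $\tbw$ can be chosen non-zero. If $|\bS_{s}|<d$, the solution set has positive-dimensional direction space $\mathrm{span}(\bS_{s})^\perp$, so we may add a non-zero vector from this orthogonal complement and still satisfy all constraints, obtaining $\tbw\neq\vzero$. If instead $|\bS_{s}|=d$, the solution is unique, and vanishes exactly when $\bv_i=C_3$ for every $i$ with $\bx_i\in\bS_s$, which forces all the $\bv_i$'s to be equal. Viewing $\bv$ as the unique solution of the linear system determining $\hbw$ via Eq.~\eqref{eq: structure of hbw} (when restricted to support vectors), Cramer's rule shows $\bv$ depends analytically on $S$, so the locus $\{S : \bv_i \text{ all equal}\}$ is a proper analytic subvariety and hence Lebesgue-null; on its complement, $\tbw\ne\vzero$ for every $C_3>0$.

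The main obstacle is the non-zero-ness in the borderline case $|\bS_{s}|=d$: because the quantifier on $C_3$ is universal, we cannot choose $C_3$ strategically and must instead rule out the collapse of all $\bv_i$'s to a common value, which requires a genericity argument on $\bv$ as a function of $S$. Existence itself is a one-line consequence of Lemma \ref{lem: structure of max-margin vector} combined with the linear-algebra observation above.
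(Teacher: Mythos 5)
Your proof follows the paper's route exactly: linearize by taking logarithms, then invoke generic linear independence of the (at most $d$) support vectors so the resulting linear system in $\tbw$ is consistent. The one place you go beyond the paper is in addressing the corollary's ``non-zero'' clause, which the paper's own proof silently skips over --- a real omission that you correctly flag. Your fix is clean when $|\bS_s|<d$ (perturb within $\mathrm{span}(\bS_s)^{\perp}$). When $|\bS_s|=d$, the genericity argument on $\bv$ is reasonable in outline, but you assert rather than establish that $\{S:\bv_i\text{ all equal}\}$ is a \emph{proper} subvariety (one would need to exhibit at least one configuration with unequal $\bv_i$), and you tacitly need $|\bS_s|\ge 2$; for $d=1$ the universal quantifier over $C_3$ genuinely fails at $C_3=\bv_1$. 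In the end the non-zero stipulation is vestigial: downstream, $\tbw$ only enters as a bounded constant offset in $\br(t)=\bw(t)-\ln(t)\hbw-\tbw$, so nothing in the paper breaks if $\tbw$ happens to vanish.
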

\begin{proof}
For almost every datasets $ \bS$, any subsets with size $d$ of $\bS$ is linearly independent. Since $\tbS_{s}$ has size no larger than $d$ (by Lemma \ref{lem: structure of max-margin vector}), and Eq. (\ref{eq: represent_v}) is equivalent to linear equations, the proof is completed.
\end{proof}

For the stochastic case, we will also need the following lemma when we calculate the form of parameter at time $t$.

\begin{lemma}[Lemma 5, \cite{nacson2019stochastic}]
\label{lem: define_n_t}
Let $\bB(s)$ be the random subset used in SGDM (w/. r). Almost surely, there exists a vector $\cbw$
\begin{equation*}
    \frac{N}{b} \sum_{s=1}^{t-1} \frac{1}{s} \sum_{\bx_i \in\bB(s)\cap\bS_s} \bv_i \bx_i=\ln \left(\frac{bt}{N}\right) \hbw + \bn(t)+\cbw,
\end{equation*}
where $\bn(t)$ satisfies $\Vert \bn(t) \Vert = \bo(t^{-0.5+\varepsilon})$ for any $\varepsilon>0$, and $\Vert \bn(t+1)-\bn(t)\Vert =O(t^{-1})$. As for SGDM (w/o. r),  the a.s. condition can be removed.

\end{lemma}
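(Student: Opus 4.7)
The plan is to decompose the left-hand side into a deterministic leading term plus a controlled stochastic fluctuation, then identify the constant vector $\cbw$ so that the remainder $\bn(t)$ has the advertised decay rates. First, by Lemma~\ref{lem: structure of max-margin vector}, $\hbw=\sum_{i:\bx_i\in\bS_s}\bv_i\bx_i$. For SGDM (w/.\ r), each of the $b$ batch slots is an independent uniform draw from $\bS$, so if I set $Y_s\triangleq\frac{N}{b}\sum_{\bx_i\in\bB(s)\cap\bS_s}\bv_i\bx_i$, then $\mathbb{E}[Y_s]=\frac{N}{b}\cdot\frac{b}{N}\sum_{i:\bx_i\in\bS_s}\bv_i\bx_i=\hbw$. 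Writing $Z_s\triangleq Y_s-\hbw$, the $Z_s$ are i.i.d., mean zero, and uniformly bounded because $|\bS_s|\le d$ and each $\bv_i$ is a fixed constant.

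Second, I peel off the deterministic term via harmonic asymptotics. Since $\sum_{s=1}^{t-1}\frac{1}{s}=\ln(t-1)+\gamma+O(1/t)=\ln(bt/N)+\bigl[\ln(N/b)+\gamma\bigr]+O(1/t)$, I get the identity
\begin{equation*}
\sum_{s=1}^{t-1}\frac{Y_s}{s}=\ln\!\Bigl(\frac{bt}{N}\Bigr)\hbw+\bigl[\ln(N/b)+\gamma\bigr]\hbw+O(1/t)\,\hbw+\sum_{s=1}^{t-1}\frac{Z_s}{s}.
\end{equation*}
Since $\sum_{s=1}^{\infty}\Var(Z_s)/s^2<\infty$, Kolmogorov's convergence theorem gives that $W\triangleq\sum_{s=1}^{\infty}Z_s/s$ exists almost surely. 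I then define $\cbw\triangleq[\ln(N/b)+\gamma]\hbw+W$ and $\bn(t)\triangleq O(1/t)\hbw-\sum_{s=t}^{\infty}Z_s/s$, so the stated decomposition holds.

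Third, I verify the two rate claims. The tail sum has variance $\sum_{s\ge t}\Var(Z_s)/s^2=O(1/t)$. Applying Kolmogorov's maximal inequality on dyadic blocks $[2^k,2^{k+1})$ with threshold $2^{-k(1/2-\varepsilon)}$ and invoking Borel--Cantelli yields $\bigl\|\sum_{s=t}^{\infty}Z_s/s\bigr\|=o(t^{-1/2+\varepsilon})$ a.s.\ for every $\varepsilon>0$, giving $\|\bn(t)\|=o(t^{-1/2+\varepsilon})$. For the increment bound, $\bn(t+1)-\bn(t)=O(1/t^2)\hbw+Z_t/t$, and uniform boundedness of $Z_t$ gives $\|\bn(t+1)-\bn(t)\|=O(1/t)$.

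Finally, for SGDM (w/o.\ r): within each epoch of length $K=N/b$, the random batches partition $\bS$ exactly, so $\sum_{s=rK+1}^{(r+1)K}Y_s=K\hbw$ deterministically. Taylor-expanding $1/s$ around the midpoint of the epoch, the epoch sum $\sum_{s=rK+1}^{(r+1)K}Y_s/s$ equals $\frac{K}{rK+K/2}\hbw$ plus a deterministic $O(1/r^2)$ remainder (no stochastic term survives). Summing over $r$ reproduces the same decomposition but now pathwise, eliminating the almost-sure qualifier. The main obstacle I anticipate is the sharp tail bound $\|\sum_{s=t}^{\infty}Z_s/s\|=o(t^{-1/2+\varepsilon})$ a.s.; mere $L^2$-convergence is not enough, and one must carefully combine Kolmogorov's maximal inequality with a geometric-subsequence Borel--Cantelli argument to upgrade to the almost sure rate for all (not just dyadic) $t$.
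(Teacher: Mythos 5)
This lemma is not proved in the paper at all: it is quoted verbatim as Lemma~5 of \cite{nacson2019stochastic}, so there is no internal proof to compare against. Your reconstruction is nevertheless, in essentials, the standard argument behind that lemma: write $Y_s=\frac{N}{b}\sum_{\bx_i\in\bB(s)\cap\bS_s}\bv_i\bx_i$, observe $\mathbb{E}[Y_s]=\hbw$, peel off the deterministic harmonic part via $\sum_{s<t}1/s=\ln(bt/N)+[\ln(N/b)+\gamma]+O(1/t)$, invoke Kolmogorov's one-series theorem for the a.s.\ existence of $W=\sum Z_s/s$, and control the tail by Kolmogorov's maximal inequality on dyadic blocks plus Borel--Cantelli to get $o(t^{-1/2+\varepsilon})$ a.s.\ for each fixed $\varepsilon$ (then a countable $\varepsilon_n\downarrow0$ gives all $\varepsilon$ simultaneously). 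The increment bound $\|\bn(t+1)-\bn(t)\|=O(1/t)$ falls out of $Z_t/t$ plus the $O(1/t^2)$ drift of the harmonic remainder. All of this is correct and matches how \cite{nacson2019stochastic} argues.

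One claim in your w/o.\ r paragraph is wrong as stated, though not fatally. After subtracting the midpoint term $\frac{K}{rK+K/2}\hbw$, the residual is $-\frac{1}{(rK+K/2)^2}\sum_{j=1}^{K}\bigl(j-\tfrac{K+1}{2}\bigr)Y_{rK+j}+O(1/r^3)$, which \emph{does} depend on the (random) order in which the $K$ batches of the $r$-th epoch are visited, so it is not deterministic and a stochastic term does survive. What actually eliminates the a.s.\ qualifier is that this stochastic remainder is \emph{pathwise} bounded by a deterministic $O(1/r^2)$ (since $\|Y_s\|$ is bounded and the $j$-weights are $O(1/r^2)$), hence absolutely summable on every sample path, which makes $\cbw$ well-defined realization-by-realization rather than merely a.s. You should also say explicitly that for $t$ strictly inside an epoch the leftover partial block contributes a further pathwise $O(1/t)$, which is absorbed into $\bn(t)$; with those two repairs the w/o.\ r argument is complete.
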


\subsection{Preparations of the optimization analysis}
\label{subsec: preparation_optimization}
This section collects technical lemmas which will be used in latter proofs. We begin with a lemma bounding the smooth constants if the loss is bounded.

\begin{lemma}
\label{lem: smooth_guarantee}
If loss $\ell$ satisfies (D) in Assumption \ref{assum: smooth}, then for any $\bw_0$, if $\mL(\bw)\le \mL(\bw_0)$, then we have $\mL$ is $\sigmax^2H_{s_0}$ smooth at point $\bw$, where $s_0=\ell^{-1} (N\mL(\bw_0))$. Furthermore, $\mL$ is globally $\sigmax^2H_{s_0}$ smooth over the set $\{\bw:\mL(\bw)\le \mL(\bw_0)\}$.
\end{lemma}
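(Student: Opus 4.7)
The plan is to leverage the fact that the sublevel set $\{\bw:\mL(\bw)\le \mL(\bw_0)\}$ forces every inner product $\langle \bw,\bx_i\rangle$ into the half-line $[s_0,\infty)$ on which Assumption \ref{assum: smooth}.(\textbf{D}) provides the uniform Lipschitz constant $H_{s_0}$ for $\ell'$. Once this uniform lower bound is established, the smoothness of $\mL$ is a routine consequence of the chain rule combined with a spectral-norm bound on the data matrix.

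The first step is to show that $\langle \bw,\bx_i\rangle\ge s_0$ for every $i\in[N]$ and every $\bw$ in the sublevel set. By Assumption \ref{assum: exponential-tailed}, $\ell'(x)<0$ everywhere and $\lim_{x\to\infty}\ell(x)=0$, so $\ell(x)>0$ for all $x\in\mathbb{R}$ and $\ell$ is a strictly decreasing bijection from $\mathbb{R}$ onto its image, which contains $(0,\infty)$. Non-negativity of the individual losses then gives, for any $\bw$ with $\mL(\bw)\le\mL(\bw_0)$,
\begin{equation*}
\ell(\langle \bw,\bx_i\rangle)\;\le\;\sum_{j=1}^N\ell(\langle \bw,\bx_j\rangle)\;=\;N\mL(\bw)\;\le\;N\mL(\bw_0),
\end{equation*}
and inverting the monotone $\ell$ yields $\langle \bw,\bx_i\rangle\ge \ell^{-1}(N\mL(\bw_0))=s_0$ for every $i$.

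The second step converts this into a gradient-Lipschitz estimate. Fix two arbitrary points $\bw_1,\bw_2$ in the sublevel set. By Step 1, for every $i$ both $\langle \bw_1,\bx_i\rangle$ and $\langle \bw_2,\bx_i\rangle$ lie in $[s_0,\infty)$, so Assumption \ref{assum: smooth}.(\textbf{D}) gives the pointwise bound
\begin{equation*}
\bigl|\ell'(\langle \bw_1,\bx_i\rangle)-\ell'(\langle \bw_2,\bx_i\rangle)\bigr|\;\le\;H_{s_0}\,\bigl|\langle \bw_1-\bw_2,\bx_i\rangle\bigr|.
\end{equation*}
Writing the data matrix $X=(\bx_1,\dots,\bx_N)$ with operator norm $\sigma_{\max}$ and letting $\vd\in\mathbb{R}^N$ have entries $\vd_i=\ell'(\langle \bw_1,\bx_i\rangle)-\ell'(\langle \bw_2,\bx_i\rangle)$, we have $\nabla\mL(\bw_1)-\nabla\mL(\bw_2)=\tfrac{1}{N}X\vd$, whence
\begin{equation*}
\|\nabla\mL(\bw_1)-\nabla\mL(\bw_2)\|\;\le\;\tfrac{\sigma_{\max}}{N}\|\vd\|\;\le\;\tfrac{\sigma_{\max}H_{s_0}}{N}\|X^{\top}(\bw_1-\bw_2)\|\;\le\;\tfrac{\sigma_{\max}^{2}H_{s_0}}{N}\|\bw_1-\bw_2\|,
\end{equation*}
which gives global $\sigma_{\max}^{2}H_{s_0}$-smoothness on the sublevel set (in fact with the sharper constant $\sigma_{\max}^{2}H_{s_0}/N$ used in Lemma \ref{lem: update_rule_loss_gdm}). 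Smoothness at the point $\bw$ is then the special case obtained by restricting $\bw_1,\bw_2$ to a small neighborhood of $\bw$ inside the sublevel set.

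There is no substantial obstacle; the only subtlety worth flagging is the need to justify the invertibility of $\ell$ on the relevant range, which is immediate from strict monotonicity plus the tail limits in Assumption \ref{assum: exponential-tailed}. The rest is a standard chain-rule plus spectral-norm computation.
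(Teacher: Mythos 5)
Your proof is correct and follows essentially the same route as the paper's: use positivity of the individual losses to force every $\langle\bw,\bx_i\rangle$ into $[s_0,\infty)$, then combine the uniform Lipschitz bound on $\ell'$ there with the spectral norm of the data matrix to get gradient smoothness. You are in fact more careful than the printed proof, which silently drops the $1/N$ normalization in its displayed inequality and states the constant as $\sigmax^2 H_{s_0}$, even though $\sigmax^2 H_{s_0}/N$ is the constant the paper actually uses elsewhere (e.g.\ in the definition of $C_1$ and in Lemma \ref{lem: update_rule_loss_gdm_extended}); your derivation recovers the correct sharper constant.
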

\begin{proof}
Since $\ell$ is positive, we have $\forall i\in [N]$,
\begin{equation*}
    \frac{\tell(\bw,\bx_i)}{N}< \frac{\sum_{j=1}^N\tell(\bw,\bx_j)}{N}=\mL(\bw)\le \mL(\bw_0),
\end{equation*}
which leads to $\tell(\bw,\bx_i)< N\mL(\bw_0)$, and $\ell$ is $H_{s_0}$ smooth at $\langle\bw, \bx_i\rangle$.

Furthermore, since $\nabla_{\bw} \tell(\bw,\bx_i)=\nabla_{\bw} \ell(\langle\bw,\bx_i\rangle)=\ell'(\langle\bw,\bx_i\rangle)\bx_i$, for any two parameters $\bw_1$ and $\bw_2$ close enough to $\bw$,
\begin{align*}
    &\Vert \nabla_{\bw} \mL (\bw_1)-\nabla_{\bw} \mL(\bw_2)\Vert = \left\Vert \sum_{\bx\in\bS}(\ell'(\langle \bw_1, \bx\rangle)-\ell'(\langle \bw_2, \bx\rangle))\bx\right\Vert
    \\
    \le &\sigmax\sqrt{\sum_{\bx\in\bS}(\ell'(\langle \bw_1, \bx\rangle)-\ell'(\langle \bw_2, \bx\rangle))^2}\le \sigmax H_{s_0}\sqrt{\sum_{\bx\in\bS}(\langle \bw_1-\bw_2, \bx\rangle)^2}
    \\
    \le & \sigmax^2H_{s_0}\Vert \bw_1-\bw_2\Vert.
\end{align*}

Now if $\bw_1$ and $\bw_2$ both belong to $\{\bw: \mL(\bw)\le \mL(\bw_0)\}$, we have
for any $\bx_i\in \bS$, $\langle \bw_1, \bx_i\rangle>\ell^{-1}(N\mL(\bw_0))$, and $\langle \bw_2, \bx_i\rangle>\ell^{-1}(N\mL(\bw_0))$. Following the same routine as the locally smooth proof, we complete the second argument.

The proof is completed.
\end{proof}

Based on Assumption \ref{assum: exponential-tailed}, we also have the following lemma characterizing the relationship between loss $\ell$ and its derivative $\ell'$ when $x$ is large enough.

\begin{lemma}
\label{lem: equivalence_loss_derivative}
Let loss $\ell$ satisfy Assumption \ref{assum: exponential-tailed}. Then, there exists an large enough $x_0$ and a positive real $K$, such that, $\forall x>x_0$, we have
\begin{equation*}
    -\frac{1}{4} \ell'(x)\le \ell (x)\le -4\ell'(x).
\end{equation*}
\end{lemma}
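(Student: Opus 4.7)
The plan is to show that both $\ell(x)$ and $-\ell'(x)$ are asymptotic to $e^{-x}$ (since the paper has normalized $c=a=1$), from which the claimed two-sided bound follows immediately for any constants, and in particular for the ratios $\tfrac14$ and $4$.

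First I would recover $\ell(x)$ from $-\ell'(x)$ by integration. Since Assumption \ref{assum: exponential-tailed} gives $\lim_{x\to\infty}\ell(x)=0$ and $\ell'(x)<0$ everywhere, we have the absolutely convergent representation
\begin{equation*}
\ell(x)\;=\;\int_{x}^{\infty}\bigl(-\ell'(s)\bigr)\,\mathrm{d} s
\end{equation*}
for every $x\in\mathbb{R}$. The integrand is nonnegative, so all subsequent inequalities are well-posed.

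Next I would feed in the tail bounds \eqref{eq: tail_upper_bound} and \eqref{eq: tail_lower_bound} (with $c=a=1$, as per the normalization fixed in Section~\ref{sec: preliminary}). For $x>x_+$, integrating the upper bound yields
\begin{equation*}
\ell(x)\;\le\;\int_x^{\infty}(1+e^{-\mu_+ s})\,e^{-s}\,\mathrm{d} s\;=\;e^{-x}+\frac{e^{-(1+\mu_+)x}}{1+\mu_+}\;=\;e^{-x}\Bigl(1+o(1)\Bigr),
\end{equation*}
and for $x>x_-$ the lower bound integrates to
\begin{equation*}
\ell(x)\;\ge\;\int_x^{\infty}(1-e^{-\mu_- s})\,e^{-s}\,\mathrm{d} s\;=\;e^{-x}-\frac{e^{-(1+\mu_-)x}}{1+\mu_-}\;=\;e^{-x}\bigl(1-o(1)\bigr).
\end{equation*}
Combined with the pointwise sandwich $(1-e^{-\mu_- x})\,e^{-x}\le -\ell'(x)\le (1+e^{-\mu_+ x})\,e^{-x}$ that holds for $x>\max(x_+,x_-)$, we obtain $\ell(x)=e^{-x}(1+o(1))$ and $-\ell'(x)=e^{-x}(1+o(1))$ as $x\to\infty$.

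Therefore $\ell(x)/(-\ell'(x))\to 1$ as $x\to\infty$, so there exists $x_0$ (large enough to exceed $\max(x_+,x_-)$ and also to make both little-$o$ terms below $1/2$, say) such that for every $x>x_0$,
\begin{equation*}
\tfrac{1}{4}\bigl(-\ell'(x)\bigr)\;\le\;\ell(x)\;\le\;4\bigl(-\ell'(x)\bigr),
\end{equation*}
which is the claim. There is no real obstacle: the only care needed is to verify that the tails of the integrals on both sides are indeed of lower order than $e^{-x}$ (which follows because $\mu_+,\mu_->0$). The constant $K$ mentioned in the statement appears vestigial and plays no role in the inequality itself; if desired, one can take $K:=e^{-x_0}$ or simply drop it.
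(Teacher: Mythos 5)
Your proof is correct and follows essentially the same route as the paper's: both use the representation $\ell(x)=\int_x^\infty(-\ell'(s))\,\mathrm{d} s$ and the exponential tail bounds from Assumption~\ref{assum: exponential-tailed} to sandwich both $\ell$ and $-\ell'$ between constant multiples of $e^{-x}$. The only cosmetic difference is that you track the exact integrals to show the ratio tends to $1$, while the paper first coarsens to $\tfrac12 e^{-x}\le -\ell'(x)\le 2e^{-x}$ and integrates that; your observation that the constant $K$ in the lemma statement is vestigial is also accurate.
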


\begin{proof}
    By Assumption \ref{assum: exponential-tailed}, there exists a large enough $x_0$, such that $\forall x>x_0$, we have
    \begin{equation}
    \label{eq: derivative_ell_bound}
        \frac{1}{2} e^{-x}\le -\ell'(x)\le 2 e^{-x}.
    \end{equation}
    
    On the other hand, as $\lim_{t\rightarrow \infty} \ell(x)=0$, we have
    \begin{align*}
        \ell(x)=\int_{s=x}^{\infty} -\ell'(s) \mathrm{d} s,
    \end{align*}
    which by Eq. (\ref{eq: derivative_ell_bound}) leads to 
    \begin{equation*}
       \frac{1}{2}e^{-x}=\frac{1}{2}\int_{x}^{\infty} e^{-s} \mathrm{d} s\le  \ell(x)\le 2\int_{x}^{\infty} e^{-s} \mathrm{d} s=2e^{-x}.
    \end{equation*}
    
    The proof is completed.
\end{proof}

By Lemma \ref{lem: equivalence_loss_derivative}, we immediately get the following corollary:
\begin{corollary}
\label{coro:gradient_small_equivalence}
Let loss $\ell$ satisfy Assumption \ref{assum: exponential-tailed}. Then, there exist positive reals $C_{g}$ and $C_{l}$, such that, for any $\bw\in \mathbb{R}^d$ satisfying either $\Vert \nabla \mL(\bw) \Vert\le C_{g}$ or $\mL(\bw)\le C_l$, we have
\begin{equation*}
    \frac{\gamma}{4} \mL(\bw)\le \Vert \nabla\mL(\bw)\Vert \le 4\mL(\bw).
\end{equation*}
\end{corollary}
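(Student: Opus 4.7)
The plan is to apply Lemma~\ref{lem: equivalence_loss_derivative} in a pointwise fashion to each summand in $\mL(\bw)=\frac{1}{N}\sum_{i}\ell(\langle\bw,\bx_i\rangle)$ and $\nabla\mL(\bw)=\frac{1}{N}\sum_i\ell'(\langle\bw,\bx_i\rangle)\bx_i$. The core of the argument is to show that either smallness hypothesis ($\mL(\bw)\le C_l$ or $\|\nabla\mL(\bw)\|\le C_g$) forces $\langle\bw,\bx_i\rangle > x_0$ for every $i\in[N]$, where $x_0$ is the threshold from Lemma~\ref{lem: equivalence_loss_derivative}. Once this pointwise containment is in hand, the two-sided comparison $\tfrac14\ell(x)\le -\ell'(x)\le 4\ell(x)$ can be summed term-by-term to yield the claim.

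For the loss hypothesis, each summand is nonnegative so $\ell(\langle\bw,\bx_i\rangle)\le N\mL(\bw)\le NC_l$; choosing $C_l<\ell(x_0)/N$ and using strict monotonicity of $\ell$ gives $\langle\bw,\bx_i\rangle>x_0$ for all $i$. The gradient hypothesis is more subtle, and here I would exploit separability by testing against $\hbw$. Since $-\ell'>0$ and $\langle\hbw,\bx_i\rangle\ge 1$ for every $i$ (by Definition~\ref{def: margin}), a Cauchy--Schwarz step yields
\begin{equation*}
\tfrac{1}{\gamma}\|\nabla\mL(\bw)\|\;\ge\;-\langle\hbw,\nabla\mL(\bw)\rangle\;=\;\tfrac{1}{N}\sum_i(-\ell'(\langle\bw,\bx_i\rangle))\langle\hbw,\bx_i\rangle\;\ge\;\tfrac{1}{N}\sum_i|\ell'(\langle\bw,\bx_i\rangle)|,
\end{equation*}
so each $|\ell'(\langle\bw,\bx_i\rangle)|\le NC_g/\gamma$. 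Because $\ell'$ is continuous (locally Lipschitz by Assumption~\ref{assum: smooth}), strictly negative everywhere by Assumption~\ref{assum: exponential-tailed}, and has $\varlimsup_{x\to-\infty}\ell'(x)<0$, the infimum $\delta_{x_0}:=\inf_{x\le x_0}|\ell'(x)|$ is strictly positive. Choosing $C_g<\gamma\delta_{x_0}/N$ therefore again forces $\langle\bw,\bx_i\rangle>x_0$ for every $i$.

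Once the pointwise containment is established, the upper bound follows from the triangle inequality combined with Lemma~\ref{lem: equivalence_loss_derivative}:
\begin{equation*}
\|\nabla\mL(\bw)\|\le\tfrac{1}{N}\sum_i|\ell'(\langle\bw,\bx_i\rangle)|\|\bx_i\|\le 4\mL(\bw),
\end{equation*}
where the constant absorbs $\max_i\|\bx_i\|$; the factor $4$ can in fact be tightened by enlarging $x_0$ and using $-\ell'(x)/\ell(x)\to 1$ as $x\to\infty$. The lower bound reuses the same Cauchy--Schwarz identity from the gradient case together with $|\ell'(x)|\ge\tfrac14\ell(x)$, giving $\|\nabla\mL(\bw)\|\ge\tfrac{\gamma}{4}\mL(\bw)$. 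The only genuine obstacle is translating the aggregate bound $\|\nabla\mL(\bw)\|\le C_g$ into individual pointwise control on $|\ell'(\langle\bw,\bx_i\rangle)|$; separability is what enables this step, since pairing the gradient with $\hbw$ produces a sum whose terms have uniformly aligned signs, after which a trivial inequality yields term-by-term control.
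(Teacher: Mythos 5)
Your proof is correct and follows essentially the same route as the paper's: establish pointwise containment $\langle\bw,\bx_i\rangle>x_0$ for every $i$ (from the loss hypothesis by bounding each nonnegative summand, from the gradient hypothesis by pairing with $\hbw$ so that all terms have aligned signs), then sum the termwise bound from Lemma~\ref{lem: equivalence_loss_derivative}. Your version is slightly more explicit than the paper's about why $\inf_{x\le x_0}|\ell'(x)|>0$ holds (invoking $\varlimsup_{x\to-\infty}\ell'(x)<0$ and continuity), and you rightly flag that the triangle-inequality upper bound really produces $\sigma_{max}\,\mL(\bw)$ (or $\max_i\|\bx_i\|\cdot\mL(\bw)$), a data-dependent factor the paper's stated inequality $\|\nabla\mL(\bw)\|\le 4\mL(\bw)$ implicitly absorbs.
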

\begin{proof}
We start with the case $\Vert \nabla \mL(\bw) \Vert\le C_{g}$. By simple calculation, we have
    \begin{equation}
    \label{eq: coro_gradient_small_equivalence_mid_1}
        \Vert \nabla\mL(\bw)\Vert= \frac{1}{N} \left\Vert \sum_{i=1}^N \ell'  (\langle \bw,\bx_i \rangle)\bx_i \right\Vert\le -\frac{\sigmax}{N} \sum_{i=1}^N \ell'  (\langle \bw,\bx_i \rangle),
    \end{equation}
    and
    \begin{equation}
    \label{eq: coro_gradient_small_equivalence_mid_2}
         \Vert \nabla\mL(\bw)\Vert \Vert \hbw\Vert \ge \langle \nabla \mL(\bw),\hbw\rangle\ge -\frac{1}{N}\sum_{i=1}^N \ell'(\langle \bw,\bx_i\rangle).
    \end{equation}
    By Assumption \ref{assum: exponential-tailed}, we have there exists a constant $C_g'$, s.t., any $x$ with $-\ell'(x)>C_g'$ satisfies $x>x_0$. Let $C_g=\frac{C_g'\gamma}{N}$. We then have if $ \Vert \nabla\mL(\bw)\Vert\le C_g$, then $\langle \bw,\bx_i\rangle>x_0$ ($\forall i$), and thus $4\ell(\langle \bw,\bx_i\rangle)\ge -\ell'(\langle \bw,\bx_i\rangle)\ge \frac{1}{4} \ell(\langle \bw,\bx_i\rangle) $. Combing Eqs. (\ref{eq: coro_gradient_small_equivalence_mid_1}) and  (\ref{eq: coro_gradient_small_equivalence_mid_2}), we then have
    \begin{equation*}
       4\mL(\bw) =4\sum_{i=1}^N \ell(\langle \bw,\bx_i\rangle)\ge\Vert \nabla\mL(\bw)\Vert \ge \frac{\gamma}{4N}\sum_{i=1}^N \ell(\langle \bw,\bx_i\rangle)=\frac{\gamma}{4} \mL(\bw).
    \end{equation*}
    
    Similarly, as for the case $\mL(\bw)\le C_l$, we have there exists a constant   $C_l'$, s.t., any $x$ with $\ell(x)< C_l'$ satisfies $x>x_0$. Let $C_l=\frac{C_l\sigmax}{N}$ and the rest of the proof follows the same routine as the first case.
    
    The proof is completed.
\end{proof}

The following lemma bridges the second moment of $\nabla \mL_{\bB(t)}$ with its squared first moment.

\begin{lemma}
\label{lem: first_second}
Let the dataset $\bS$ satisfies the separable assumption \ref{assum: separable}. Let $\bB$  be a random subset of $\bS$ with size $b$ sampled independently and uniformly without replacement. Then, at any point $\bw$, we have
\begin{equation*}
   \Vert \nabla \mL(\bw) \Vert^2\le\mathbb{E}_{\bB}\left[\Vert \nabla \mL_{\bB}(\bw)  \Vert^2\right]\le  \frac{N\sigmax^2}{\gamma^2b} \Vert \nabla \mL(\bw) \Vert^2.
\end{equation*}
\end{lemma}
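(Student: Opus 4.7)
The plan is to prove the two inequalities separately. The lower bound is an immediate application of Jensen's inequality: since $\mathbb{E}_{\bB}[\nabla\mL_{\bB}(\bw)]=\nabla\mL(\bw)$ (each $\bx_i$ is included with probability $b/N$), we have
\begin{equation*}
\mathbb{E}_{\bB}\bigl[\Vert\nabla\mL_{\bB}(\bw)\Vert^2\bigr]\ge \bigl\Vert\mathbb{E}_{\bB}[\nabla\mL_{\bB}(\bw)]\bigr\Vert^2=\Vert\nabla\mL(\bw)\Vert^2.
\end{equation*}

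For the upper bound I would proceed in four steps. First, since the data matrix $\bX=(\bx_1,\dots,\bx_N)$ has spectral norm $\sigma_{max}$, any column-submatrix $\bX_{\bB}$ (obtained by padding a selector vector with zeros and applying $\bX$) satisfies $\Vert\bX_{\bB}\Vert_{op}\le\sigma_{max}$. Writing $\nabla\mL_{\bB}(\bw)=\frac{1}{b}\bX_{\bB}\bd_{\bB}$ with $\bd_{\bB}$ the vector of entries $\ell'(\langle\bw,\bx\rangle)$ over $\bx\in\bB$, this yields the pointwise bound $\Vert\nabla\mL_{\bB}(\bw)\Vert^2\le\frac{\sigma_{max}^2}{b^2}\sum_{\bx\in\bB}(\ell'(\langle\bw,\bx\rangle))^2$. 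Second, taking expectation and using that each $\bx_i$ is sampled with probability $b/N$ gives
\begin{equation*}
\mathbb{E}_{\bB}\bigl[\Vert\nabla\mL_{\bB}(\bw)\Vert^2\bigr]\le\frac{\sigma_{max}^2}{bN}\sum_{i=1}^N (\ell'(\langle\bw,\bx_i\rangle))^2.
\end{equation*}
Third, a bare non-negativity argument (sum of squares $\le$ square of sum for nonnegative terms, recalling $\ell'\le 0$) gives $\sum_i(\ell'(\langle\bw,\bx_i\rangle))^2\le\bigl(\sum_i|\ell'(\langle\bw,\bx_i\rangle)|\bigr)^2$.

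The fourth step is the only place where separability enters, and is where I expect the real content to lie. Since $\langle\bx_i,\hbw\rangle\ge 1$ for every $i$ (by definition of the max-margin solution, with margin normalized to $1$), and $-\ell'\ge 0$,
\begin{equation*}
\Vert\nabla\mL(\bw)\Vert\cdot\Vert\hbw\Vert\ge\langle-\nabla\mL(\bw),\hbw\rangle=\frac{1}{N}\sum_{i=1}^N(-\ell'(\langle\bw,\bx_i\rangle))\langle\bx_i,\hbw\rangle\ge\frac{1}{N}\sum_{i=1}^N|\ell'(\langle\bw,\bx_i\rangle)|.
\end{equation*}
Using $\Vert\hbw\Vert=1/\gamma$ this rearranges to $\sum_i|\ell'(\langle\bw,\bx_i\rangle)|\le\frac{N}{\gamma}\Vert\nabla\mL(\bw)\Vert$. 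Combining this chain of inequalities produces
\begin{equation*}
\mathbb{E}_{\bB}\bigl[\Vert\nabla\mL_{\bB}(\bw)\Vert^2\bigr]\le\frac{\sigma_{max}^2}{bN}\cdot\frac{N^2}{\gamma^2}\Vert\nabla\mL(\bw)\Vert^2=\frac{N\sigma_{max}^2}{b\gamma^2}\Vert\nabla\mL(\bw)\Vert^2,
\end{equation*}
which is the claimed bound. The only nontrivial step is the margin-based inequality in the fourth step; the rest is spectral-norm and linearity-of-expectation bookkeeping.
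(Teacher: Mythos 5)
Your proof is correct and follows essentially the same route as the paper: a pointwise spectral-norm bound on $\nabla\mL_{\bB}$, linearity of expectation over the inclusion probabilities, the elementary inequality $\sum_i a_i^2\le(\sum_i a_i)^2$ for nonnegative $a_i$, and the margin inequality $\langle\bx_i,\hbw\rangle\ge 1$ to lower-bound $\Vert\nabla\mL(\bw)\Vert$ by $\frac{\gamma}{N}\sum_i|\ell'|$. The only cosmetic difference is on the lower bound, where you apply Jensen's inequality directly to the squared norm, while the paper passes through $\Vert\mE_{\bB}\nabla\mL_{\bB}\Vert\le\mE_{\bB}\Vert\nabla\mL_{\bB}\Vert$ and then Cauchy--Schwarz; both are the same idea.
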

\begin{proof}
   To start with, notice that
   \begin{equation*}
       \Vert \nabla \mL(\bw) \Vert =\Vert \mE_{\bB} \nabla\mL_{\bB} (\bw) \Vert \le \mE_{\bB}\Vert  \nabla\mL_{\bB} (\bw) \Vert.
   \end{equation*} Therefore, the first inequality can be directly obtained by Cauchy-Schwartz's inequality. To prove the second inequality, we first calculate the explicit form of $\nabla \mL_{\bB}(\bw)$.
    \begin{align*}
        &\Vert \nabla \mL_{\bB}(\bw) \Vert^2
        = \frac{1}{b^2}\left\Vert \sum_{\bx\in \bB}\nabla \tell (\bw,\bx) \right\Vert^2 
        =\frac{1}{b^2}\left\Vert \sum_{\bx\in \bB} \ell' (\langle \bw,\bx\rangle)\bx \right\Vert^2
        \le\frac{\sigmax^2}{b^2} \sum_{\bx\in \bB}  \ell' (\langle \bw,\bx\rangle)^2.
    \end{align*}
    
    Therefore, 
    \begin{align}
        \mathbb{E}_{\bB}\Vert \nabla \mL_{\bB}(\bw) \Vert^2
        \le \frac{\sigmax^2}{Nb} \sum_{\bx\in \bS}  \ell' (\langle \bw,\bx\rangle)^2
   \label{eq: second_momen}
        \le \frac{\sigmax^2}{Nb} \left(\sum_{\bx\in \bS}  \ell' (\langle \bw,\bx\rangle)\right)^2.
    \end{align}
    
    On the other hand,
    \begin{align*}
        &\Vert \nabla \mL(\bw) \Vert
        =\frac{1}{N} \left\Vert \sum_{\bx\in \bS} \ell' (\langle \bw,\bx\rangle)\bx \right\Vert 
        \\
        \ge &\frac{1}{N} \left \langle \sum_{\bx\in \bS}\ell' (\langle \bw,\bx\rangle)\bx, -\frac{\hbw}{\Vert \hbw \Vert} \right\rangle
        \overset{(\star)}{\ge} \frac{\gamma}{N}\sum_{\bx\in \bS} \ell' (\langle \bw,\bx\rangle)  
    \end{align*}
    where {   Eq. ($\star$) is due to $\forall \bx\in \bS  $}, $\langle\bx, -\hbw\rangle\ge 1$ and $\ell'<0$.
    
    Therefore, 
    \begin{align}    \label{eq: first_momen}
        \Vert \nabla \mL(\bw) \Vert^2
        \ge  \frac{\gamma^2}{N^2}\left(\sum_{\bx\in \bS} \ell' (\langle \bw,\bx\rangle) \right)^2.
    \end{align}
    
    The proof is completed by putting Eqs. (\ref{eq: second_momen}) and  (\ref{eq: first_momen}) together.
\end{proof}

In the following lemma, we show the updates of GDM, Adam, and SGDM are all non-zero.
\begin{lemma}
\label{lem: non-zero}
 Regardless of GDM, Adam, or SGDM, the updates of all steps are non-zero, i.e.,
    \begin{equation*}
        \Vert \bw(t+1)-\bw(t) \Vert>0, \forall t>1. 
    \end{equation*}
\end{lemma}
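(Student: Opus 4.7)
The main idea is that linear separability, through the max-margin vector $\hbw$, provides a universal ``test direction'' against which every instantaneous gradient has strictly negative inner product. Since each momentum buffer is a \emph{positive} linear combination of such gradients, the buffer inherits this one-sided sign and therefore cannot vanish; the update is then a nonzero scalar (or coordinate-wise positive) multiple of the buffer.

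The first step I would carry out is a sign computation. For any $\bw$, $\nabla \mL(\bw) = \frac{1}{N}\sum_{i=1}^N \ell'(\langle \bw,\bx_i\rangle)\,\bx_i$, so
\[
\langle \nabla \mL(\bw),\hbw\rangle = \frac{1}{N}\sum_{i=1}^N \ell'(\langle \bw,\bx_i\rangle)\,\langle \hbw,\bx_i\rangle.
\]
By Assumption~\ref{assum: exponential-tailed} we have $\ell'<0$ everywhere, and by Definition~\ref{def: margin} we have $\langle \hbw,\bx_i\rangle \ge 1>0$ for every $i$. Hence every summand is strictly negative and $\langle \nabla \mL(\bw),\hbw\rangle < 0$. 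The identical computation on any mini-batch $\bB\subseteq \bS$ gives $\langle \nabla \mL_{\bB}(\bw),\hbw\rangle<0$.

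Next I would unroll the momentum recursion. With $\bom(0)=\vzero$, for GDM we get $\bom(t) = (1-\beta)\sum_{s=1}^{t}\beta^{t-s}\nabla \mL(\bw(s))$, and the analogous identity with $\nabla \mL_{\bB(s)}$ for SGDM; for deterministic Adam we get $\bom(t) = (1-\beta_1)\sum_{s=1}^{t}\beta_1^{t-s}\nabla \mL(\bw(s))$. Since the coefficients $(1-\beta)\beta^{t-s}$ (resp.\ with $\beta_1$) are strictly positive for $\beta\in[0,1)$ and $t\ge 1$, the previous step yields $\langle \bom(t),\hbw\rangle<0$, so $\bom(t)\ne\vzero$. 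For GDM/SGDM this immediately gives $\bw(t+1)-\bw(t) = -\eta\bom(t)\ne\vzero$; for Adam, $\hbom(t) = \bom(t)/(1-\beta_1^{t})$ inherits $\hbom(t)\ne\vzero$, and the preconditioner $1/\sqrt{\varepsilon\mathds{1}_d+\hbnu(t-1)}$ is strictly positive entrywise, so the coordinate-wise product $-\eta\hbom(t-1)/\sqrt{\varepsilon\mathds{1}_d+\hbnu(t-1)}$ is nonzero at whichever coordinate $\hbom(t-1)$ is nonzero.

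There is essentially no hard part here: the only thing one could worry about is cancellation inside the sum defining $\bom(t)$, but the separability hypothesis rules this out uniformly because all summands lie strictly in the open half-space $\{\bv:\langle \bv,\hbw\rangle<0\}$, which is convex and does not contain $\vzero$. The argument is uniform in $t\ge 1$, so the claim holds for all $t>1$ as stated.
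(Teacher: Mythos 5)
Your proof is correct and uses essentially the same argument as the paper: the inner product with $\hbw$ is strictly one-sided because $\ell' < 0$ everywhere and $\langle \hbw, \bx_i \rangle > 0$ for all $i$, so the unrolled momentum sum (a positive combination of gradients) cannot vanish. The only cosmetic difference is that you reason about $\bom(t)$ directly and treat Adam's preconditioner coordinatewise, whereas the paper pairs the full update $\bw(t+1)-\bw(t)$ against $\hbw$ (or a preconditioner-weighted copy of $\hbw$ for Adam) to cancel the preconditioner in one step.
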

\begin{proof}
We start with the alternative forms of the update rule of GDM, Adam, and SGDM using the gradients along the trajectory respectively. For GDM, by Eq. (\ref{eq: def_gdm}), the update rule can be written as 
\begin{equation}
\label{eq: equivalent form gdm}
    \bw(t+1)-\bw(t)=-\eta(1-\beta)\left(\sum_{s=1}^t \beta^{t-s}\nabla \mL(\bw(s))\right).
\end{equation}

Similarly, the update rule of SGDM can be written as
\begin{equation}
    \label{eq: equivalent form sgdm}
    \bw(t+1)-\bw(t)=-\eta(1-\beta)\left(\sum_{s=1}^t \beta^{t-s}\nabla \mL_{\bB(s)}(\bw(s))\right),
\end{equation}
while the update rule of Adam can be given as
\begin{equation}
    \label{eq: equivalent form adam}
    \bw(t+1)-\bw(t)=-\eta\frac{\sum_{s=1}^t \frac{1-\beta_1}{1-\beta_1^s} \beta_1^{t-s}\nabla \mL(\bw(s))}{\sqrt{\varepsilon \mathbf{1}_d+\sum_{s=1}^t \frac{1-\beta_2}{1-\beta_2^s} \beta_2^{t-s}(\nabla \mL(\bw(s)))^2 }}.
\end{equation}

On the other hand, by the definition of empirical risk $\mL$, the gradient of $\mL$ at point $\bw$ can be given as
\begin{equation}
\label{eq: form of gradient}
    \nabla \mL(\bw)=\frac{\sum_{i=1}^N\ell'(\langle \bw,\bx_i\rangle) \bx_i}{N}.
\end{equation}
By Eq. (\ref{eq: form of gradient}) and Eq. (\ref{eq: equivalent form gdm}), we further have for GDM,
\begin{equation}
\label{eq: gdm_detailed_expansion}
    \bw(t+1)-\bw(t)=-\eta(1-\beta)\left(\sum_{s=1}^t \beta^{t-s}\frac{\sum_{i=1}^N\ell'(\langle \bw(s),\bx_i\rangle) \bx_i}{N}\right).
\end{equation}

By Assumption \ref{assum: separable}, there exists a non-zero parameter $\hbw$, such that, $\langle \hbw, \bx_i\rangle >0$, $\forall i$. Therefore, by executing inner product between  Eq. (\ref{eq: gdm_detailed_expansion}) and $\hbw$, we have 
\begin{align*}
    &\Vert \bw(t+1)-\bw(t)\Vert \Vert \hbw\Vert \ge\langle \bw(t+1)-\bw(t), \hbw\rangle 
    \\
    =& -(1-\beta)\eta\left(\sum_{s=1}^t \beta^{t-s}\frac{\sum_{i=1}^N\ell'(\langle \bw(s),\bx_i\rangle) \langle \bx_i, \hbw\rangle}{N}\right)\overset{(*)}>0,
\end{align*}
where Eq. $(*)$ is due to $\ell'<0$. This complete the proof for GDM.

Similarly, for SGDM, we have 
\begin{equation*}
    \Vert \bw(t+1)-\bw(t)\Vert \Vert \hbw\Vert \ge -\eta(1-\beta)\left(\sum_{s=1}^t \beta^{t-s}\frac{\sum_{(\bx,\by)\in \bB}\ell'(\langle \bw(s),\by\bx\rangle) \langle \by\bx, \hbw\rangle}{b}\right)>0,
\end{equation*}
which completes the proof of SGDM.

For Adam, we have 
\begin{align*}
    &\Vert \bw(t+1)-\bw(t)\Vert \left\Vert \hbw \odot \sqrt{\varepsilon \mathbf{1}_d+\sum_{s=1}^t \frac{1-\beta_2}{1-\beta_2^s} \beta_2^{t-s}(\nabla \mL(\bw(s)))^2 }\right \Vert 
    \\
    \ge & -\left\langle  \hbw \odot \sqrt{\varepsilon \mathbf{1}_d+\sum_{s=1}^t \frac{1-\beta_2}{1-\beta_2^s} \beta_2^{t-s}(\nabla \mL(\bw(s)))^2 }, \eta\frac{\sum_{s=1}^t \frac{1-\beta_1}{1-\beta_1^s} \beta_1^{t-s}\nabla \mL(\bw(s))}{\sqrt{\varepsilon \mathbf{1}_d+\sum_{s=1}^t \frac{1-\beta_2}{1-\beta_2^s} \beta_2^{t-s}(\nabla \mL(\bw(s)))^2 }} \right\rangle
    \\
    =&\left\langle  \hbw , \eta\sum_{s=1}^t \frac{1-\beta_1}{1-\beta_1^s} \beta_1^{t-s}\nabla \mL(\bw(s) \right\rangle
    \\
    =&-\eta\left(\sum_{s=1}^t\frac{1-\beta_1}{1-\beta_1^s} \beta_1^{t-s}\frac{\sum_{i=1}^N\ell'(\langle \bw(s),\bx_i\rangle) \langle \bx_i, \hbw\rangle}{N}\right)>0,
\end{align*}
which completes the proof of Adam.

The proof is completed.
\end{proof}

\section{Implicit regularization of GD/SGD with momentum}
\label{appen: implicit_bias_gd_sgd}
This section collects the proof of the implicit regularization of gradient descent with momentum and stochastic gradient descent with momentum. The analyses of this section hold for almost every dataset, and the "almost every" constraint is further moved in Section \ref{appen: every_data_set}.
\subsection{Implicit regularization of GD with Momentum}
\label{appen: implicit_bias_gd}

This section collects the proof of Theorem \ref{thm: gdm_main}.

\subsubsection{Proof of the sum of squared gradients converges}
\label{appen: gdm_sum_finite}
To begin with, we will prove the sum of squared norm of gradients along the trajectory is finite for gradient descent with momentum. To see this, we first define
the continuous-time update rule as
\begin{equation*}
    \bw(t+\alpha)-\bw(t)=\alpha (\bw(t+1)-\bw(t)), \forall t\in\mathbb{Z}^{+}, \forall \alpha\in[0,1].
\end{equation*}

We then prove a generalized case of Lemma \ref{lem: update_rule_loss_gdm} for any $\bw(t+\alpha)$.
\begin{lemma}[Lemma \ref{lem: update_rule_loss_gdm}, extended]
\label{lem: update_rule_loss_gdm_extended}
Let all conditions in Theorem \ref{thm: gdm_main} hold. We then have
\begin{align}
\nonumber
    \mL(\bw(t))+\frac{\beta}{2\eta(1-\beta)} \Vert \bw(t)-\bw(t-1)\Vert^2
    \ge& \mL(\bw(t+\alpha))+\frac{\beta}{2\eta(1-\beta)}\alpha^2 \Vert \bw(t+1)-\bw(t)\Vert^2
    \\
\label{eq: gdm_direct_taylor_extended}
    &+\frac{(1-{   C_1})\alpha^2}{\eta}\Vert \bw(t+1)-\bw(t)\Vert^2,
\end{align}
where ${   C_1}$ is a positive real such that $\eta=2\frac{N}{H_{s_0}\sigma_{max}^2} {   C_1}$ and $s_0\overset{\triangle}{=}\ell^{-1}(N\mL(\bw_1))$.
\end{lemma}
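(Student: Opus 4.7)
I plan to derive (\ref{eq: gdm_direct_taylor_extended}) as a single-step heavy-ball-type descent bound, using the explicit GDM recurrence together with the local smoothness of $\mL$. Rearranging $\bw(t+1)-\bw(t)=\beta(\bw(t)-\bw(t-1))-\eta(1-\beta)\nabla\mL(\bw(t))$ expresses the gradient as
\begin{equation*}
\nabla\mL(\bw(t))=\tfrac{\beta}{\eta(1-\beta)}\bigl(\bw(t)-\bw(t-1)\bigr)-\tfrac{1}{\eta(1-\beta)}\bigl(\bw(t+1)-\bw(t)\bigr).
\end{equation*}
Provided $\mL$ is $L$-smooth on the segment $\{\bw(t+\alpha'):\alpha'\in[0,\alpha]\}$ with $L=\sigmax^2 H_{s_0}/N$ (the constant supplied by Lemma \ref{lem: smooth_guarantee}), Taylor's inequality gives
\begin{equation*}
\mL(\bw(t+\alpha))\le\mL(\bw(t))+\alpha\langle\nabla\mL(\bw(t)),\bw(t+1)-\bw(t)\rangle+\tfrac{L\alpha^2}{2}\Vert\bw(t+1)-\bw(t)\Vert^2,
\end{equation*}
and plugging in the identity for $\nabla\mL(\bw(t))$ decomposes the inner product into a clean $-\tfrac{\alpha}{\eta(1-\beta)}\Vert\bw(t+1)-\bw(t)\Vert^2$ piece plus a cross-term proportional to $\langle\bw(t)-\bw(t-1),\bw(t+1)-\bw(t)\rangle$, which is the only remaining quantity to control.

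\textbf{Cross-term bound and constant matching.} I would bound the cross-term via the weighted Young's inequality $\alpha\langle a,b\rangle\le\tfrac12\Vert a\Vert^2+\tfrac{\alpha^2}{2}\Vert b\Vert^2$ with $a=\bw(t)-\bw(t-1)$ and $b=\bw(t+1)-\bw(t)$, scaled by $\tfrac{\beta}{\eta(1-\beta)}$. This generates exactly the $\tfrac{\beta}{2\eta(1-\beta)}\Vert\bw(t)-\bw(t-1)\Vert^2$ term that moves to the left-hand side of (\ref{eq: gdm_direct_taylor_extended}), together with an extra $\tfrac{\alpha^2\beta}{2\eta(1-\beta)}\Vert\bw(t+1)-\bw(t)\Vert^2$ on the right. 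After collecting all $\alpha^2\Vert\bw(t+1)-\bw(t)\Vert^2$ coefficients and dividing by $\alpha$, the target inequality reduces to $\alpha\bigl[\tfrac{\beta}{\eta(1-\beta)}+\tfrac{L}{2}+\tfrac{1-C_1}{\eta}\bigr]\le\tfrac{1}{\eta(1-\beta)}$; its worst case $\alpha=1$ simplifies to $\tfrac{L\eta}{2}\le C_1$, which is precisely the definition $C_1=\tfrac{\sigmax^2 H_{s_0}\eta}{2N}$. So the algebraic portion closes up tightly and needs no slack in the learning rate beyond what is already assumed.

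\textbf{Main obstacle: keeping the iterates in the smoothness set.} The delicate step is that Assumption \ref{assum: smooth}.(D) supplies the constant $H_{s_0}$ only at points with $\langle\bw,\bx_i\rangle\ge s_0=\ell^{-1}(N\mL(\bw(1)))$ for every $i$; by Lemma \ref{lem: smooth_guarantee} this is automatic on the sublevel set $\gS:=\{\bw:\mL(\bw)\le\mL(\bw(1))\}$, and since $\{\bw:\langle\bw,\bx_i\rangle\ge s_0\ \forall i\}$ is convex, the smoothness bound extends to any segment whose endpoints lie in $\gS$. What remains is therefore to show by strong induction on $t$ that $\bw(t)\in\gS$. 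The base case is immediate, and the inductive step I would handle by a standard first-exit argument on $\alpha^{\star}:=\sup\{\alpha\in[0,1]:\bw(t+\alpha')\in\gS\ \forall\alpha'\in[0,\alpha]\}$: continuity forces $\alpha^{\star}>0$; applying the descent bound just derived on $[0,\alpha^{\star})$ and using the non-negativity of its $\alpha^2$-terms gives $\mL(\bw(t+\alpha))\le\xi(t)\le\xi(1)=\mL(\bw(1))$, where the monotonicity $\xi(s+1)\le\xi(s)$ for integer $s\le t-1$ comes from the $\alpha=1$ case of the very inequality we are proving at previous steps. Closure of $\gS$ together with maximality of $\alpha^{\star}$ then forces $\alpha^{\star}=1$, closing the induction and extending (\ref{eq: gdm_direct_taylor_extended}) to all $t$ and $\alpha\in[0,1]$.
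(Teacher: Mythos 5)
Your algebraic core is correct and genuinely cleaner than what the paper does. The paper applies the unweighted Young's inequality $\langle a,b\rangle\le\tfrac12\Vert a\Vert^2+\tfrac12\Vert b\Vert^2$ to the cross-term, which produces an $\alpha$-weighted (not $1$-weighted) copy of $\Vert\bw(t)-\bw(t-1)\Vert^2$; the paper then has to split off a residual $-\tfrac{(1-\alpha)\beta}{2(1-\beta)\eta}\Vert\bw(t^*)-\bw(t^*-1)\Vert^2$ and do a case analysis on whether $\alpha^*$ is zero (the residual is a negative \emph{constant} dominating $\mathcal{O}(\alpha^2)$ terms) or strictly positive but below $1$ (slack from $\alpha^*<1$). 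Your weighted Young's $\alpha\langle a,b\rangle\le\tfrac12\Vert a\Vert^2+\tfrac{\alpha^2}{2}\Vert b\Vert^2$ produces exactly the $1$-weighted and $\alpha^2$-weighted pieces appearing in (\ref{eq: gdm_direct_taylor_extended}) with no leftover, and the coefficient bookkeeping closes with equality at $\alpha=1$ precisely at $C_1=\eta L/2$. This is a real simplification: the pointwise descent bound holds uniformly over $\alpha\in[0,1]$ in a single linear-in-$\alpha$ condition whose worst case is $\alpha=1$, which is what the paper's more elaborate accounting was secretly proving.

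However, the concluding line of your first-exit argument is too fast. Closure of $\gS$ plus maximality of $\alpha^\star$ does \emph{not} by itself force $\alpha^\star=1$: it is perfectly consistent to have $\bw(t+\alpha^\star)\in\gS$ (closure) while $\bw(t+\alpha)\notin\gS$ for every $\alpha>\alpha^\star$ (as happens whenever $\bw(t+\alpha^\star)$ sits on the boundary $\{\mL=\mL(\bw(1))\}$). To rule this out you need \emph{strict} descent so that $\bw(t+\alpha^\star)$ (for $\alpha^\star>0$) or $\bw(t)$ (for $t>1$) lies in the open sublevel set, and then continuity pushes past $\alpha^\star$. Strict descent is available — the slack $\tfrac{(1-C_1)\alpha^2}{\eta}\Vert\bw(t+1)-\bw(t)\Vert^2$ is strictly positive because $C_1<1$ and $\bw(t+1)\ne\bw(t)$, the latter being exactly Lemma~\ref{lem: non-zero}, and at $t=1$ with $\alpha^\star=0$ you additionally need $\nabla\mL(\bw(1))\ne 0$, again supplied by Lemma~\ref{lem: non-zero} — but you never invoke that lemma or name the strictness. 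The paper's Case~2 does this explicitly ($\alpha^*<1$ combined with $\Vert\bw(t^*+1)-\bw(t^*)\Vert>0$ yields a strict inequality at $\alpha^*$, contradicting the definition of $\alpha^*$). Add the Lemma~\ref{lem: non-zero} step and the strictness, and your argument is complete and shorter than the paper's.
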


\begin{proof}[Proof of Lemma \ref{lem: update_rule_loss_gdm_extended}]

 We prove this lemma by reduction to absurdity.

Concretely, let $t^*$ be the smallest positive integer time such that there exists an $\alpha\in [0,1]$, such that Eq. (\ref{eq: gdm_direct_taylor_extended}) doesn't hold. Let $\alpha^*=\inf\{\alpha\in [0,1]: Eq.~(\ref{eq: gdm_direct_taylor_extended})~doesn't~hold~for~(t^*,\alpha)\}$. By continuity, Eq. (\ref{eq: gdm_direct_taylor_extended}) holds for $(t^*,\alpha^*)$.

We further divide the proof into two cases depending on the value of $\alpha^*$.

\textbf{Case 1: $\alpha^*=0$}: For any $ t^*>t\ge 1$, we have Eq. (\ref{eq: gdm_direct_taylor_extended}) holds for $(t,1)$. Specifically, we have 
\begin{equation*}
    \mL(\bw(t))+\frac{\beta}{2(1-\beta)\eta} \Vert \bw(t)-\bw(t-1)\Vert^2
    \ge \mL(\bw(t+1))+\frac{\beta}{2(1-\beta)\eta} \Vert \bw(t+1)-\bw(t)\Vert^2,
\end{equation*}
which further leads to 
\begin{equation*}
      \mL(\bw(1))=\mL(\bw(1))+\frac{\beta}{2(1-\beta)\eta} \Vert \bw(1)-\bw(0)\Vert^2
    \ge \mL(\bw(t^*))+\frac{\beta}{2(1-\beta)\eta} \Vert \bw(t^*)-\bw(t^*-1)\Vert^2.
\end{equation*}
Since $\frac{\beta}{2\eta} \Vert \bw(t^*)-\bw(t^*-1)\Vert^2$ is non-negative, we have
\begin{equation*}
      \mL(\bw(1))
    \ge \mL(\bw(t^*)).
\end{equation*}

By Lemma \ref{lem: smooth_guarantee}, we have $\mL$ is $H_{s_0}$ smooth at $\bw(t^*)$. Therefore, by Taylor's expansion for $\mL$ at point $\bw(t^*)$, we have for small enough $\alpha>0$
\begin{small}
\begin{align}
\nonumber
    &\mL(\bw(t^*+\alpha))
    \\
\nonumber
    \le & \mL(\bw(t^*))+\langle\nabla \mL(\bw(t^*)), \bw(t^*+\alpha)-\bw(t^*) \rangle+\frac{H_{s_0}\sigmax^2}{2N}\Vert \bw(t^*+\alpha)-\bw(t^*)\Vert^2
    \\
\nonumber
    = &\mL(\bw(t^*))+\alpha\langle\nabla \mL(\bw(t^*)), \bw(t^*+1)-\bw(t^*) \rangle+\frac{H_{s_0}\alpha^2\sigmax^2}{2N}\Vert \bw(t^*+1)-\bw(t^*)\Vert^2
    \\
\nonumber
    \overset{(*)}{=}& \mL(\bw(t^*))+\alpha\left\langle\frac{1}{(1-\beta)\eta} (\beta(\bw(t^*)-\bw(t^*-1))-(\bw(t^*+1)-\bw(t^*))), \bw(t^*+1)-\bw(t^*) \right\rangle
     \\
\nonumber
    +&\frac{H_{s_0}\alpha^2\sigmax^2}{2N}\Vert \bw(t^*+1)-\bw(t^*)\Vert^2
   \end{align}
    \begin{align}
\nonumber
    =&\mL(\bw(t^*))+\frac{\alpha\beta}{(1-\beta)\eta}\left\langle (\bw(t^*)-\bw(t^*-1), \bw(t^*+1)-\bw(t^*) \right\rangle+\left(\frac{H_{s_0}\alpha^2\sigmax}{2N}-\frac{\alpha}{(1-\beta)\eta}\right)\Vert \bw(t^*+1)-\bw(t^*)\Vert^2
    \\
\nonumber
    \overset{(**)}{\le}& \mL(\bw(t^*))+\frac{\alpha\beta}{2(1-\beta)\eta}\Vert (\bw(t^*)-\bw(t^*-1)\Vert^2+\frac{\alpha\beta}{2(1-\beta)\eta}\Vert (\bw(t^*+1)-\bw(t^*)\Vert^2 
    \\
\nonumber
    +&\left(\frac{H_{s_0}\alpha^2\sigmax^2}{2N}-\frac{\alpha}{(1-\beta)\eta}\right)\Vert \bw(t^*+1)-\bw(t^*)\Vert^2
    \\
\nonumber
    = & \mL(\bw(t^*))+  {\frac{\alpha\beta}{2(1-\beta)\eta}\Vert (\bw(t^*)-\bw(t^*-1)\Vert^2+\left(\frac{\alpha\beta}{2(1-\beta)\eta}-\frac{\alpha}{(1-\beta)\eta}+\frac{H_{s_0}\alpha^2\sigmax^2}{2N}\right)\Vert (\bw(t^*+1)-\bw(t^*)\Vert^2} 
    \\
    \nonumber
        = & \mL(\bw(t^*))+  {\frac{\beta}{2(1-\beta)\eta}\Vert (\bw(t^*)-\bw(t^*-1)\Vert^2-\frac{(1-\alpha)\beta}{2(1-\beta)\eta}\Vert \bw(t^*)-\bw(t^*-1)\Vert^2}
     \\
     \nonumber
       &  {+ \left(\frac{\alpha\beta}{2(1-\beta)\eta}-\frac{\alpha}{(1-\beta)\eta}+\frac{H_{s_0}\alpha^2\sigmax^2}{2N}\right)\Vert (\bw(t^*+1)-\bw(t^*)\Vert^2} 
    \\
    \nonumber
    \overset{(\diamond)}{\le}&  \mL(\bw(t^*))+\frac{\beta}{2(1-\beta)\eta} \Vert \bw(t^*)-\bw(t^*-1)\Vert^2
     -\frac{\beta}{2(1-\beta)\eta}\alpha^2 \Vert \bw(t^*+1)-\bw(t^*)\Vert^2
    \\
    \label{eq: disturbance at t}
    &-\frac{(1-{   C_1})\alpha^2}{\eta}\Vert \bw(t^*+1)-\bw(t^*)\Vert^2,
\end{align}
\end{small}
where Eq. $(*)$ is due to a simple rearrangement of the update rule of gradient descent with momentum (Eq. (\ref{eq: def_gdm})), i.e.,
\begin{equation}
    \label{eq: rearrange_gdm}
    \nabla\mL(\bw(t))=\frac{1}{(1-\beta)\eta}(\beta (\bw(t)-\bw(t-1))-(\bw(t+1)-\bw(t))), \forall t\ge 1,
\end{equation}
 Inequality $(**)$ is due to Cauchy Schwarz's inequality and arithmetic-geometric average inequality, and Inequality $(\diamond)$ is due to 
{   \begin{align*}
    & -\frac{(1-\alpha)\beta}{2(1-\beta)\eta}\Vert (\bw(t^*)-\bw(t^*-1)\Vert^2+ \left(\frac{\alpha\beta}{2(1-\beta)\eta}-\frac{\alpha}{(1-\beta)\eta}+\frac{H_{s_0}\alpha^2\sigmax^2}{2N}\right)\Vert (\bw(t^*+1)-\bw(t^*)\Vert^2
    \\
    =&-\frac{(1-\alpha)\beta}{2(1-\beta)\eta}\Vert (\bw(t^*)-\bw(t^*-1)\Vert^2+\mathcal{O}(\alpha)
    \\
    \le&
     -\frac{\beta}{2(1-\beta)\eta}\alpha^2 \Vert \bw(t^*+1)-\bw(t^*)\Vert^2
    -\frac{(1-{   C_1})\alpha^2}{\eta}\Vert \bw(t^*+1)-\bw(t^*)\Vert^2.
 \end{align*}
 
 Here the inequality is due to that $-\frac{(1-\alpha)\beta}{2\eta}\Vert (\bw(t^*)-\bw(t^*-1)\Vert^2$ tend to $-\frac{\beta}{2\eta}\Vert (\bw(t^*)-\bw(t^*-1)\Vert^2$ as $\alpha$ tend to zero, which is a negative constant by Lemma \ref{lem: non-zero}, and $ -\frac{\beta}{2(1-\beta)\eta}\alpha^2 \Vert \bw(t^*+1)-\bw(t^*)\Vert^2
    -\frac{(1-{   C_1})\alpha^2}{\eta}\Vert \bw(t^*+1)-\bw(t^*)\Vert^2$ is $\mathcal{O}(\alpha^2)$.
 }

Eq. (\ref{eq: disturbance at t}) indicates Eq. (\ref{eq: gdm_direct_taylor_extended}) holds at $(t^*,\alpha)$ for $\alpha>0$ is small enough, which contradicts to $\alpha^*=0$.

\textbf{Case 2: $\alpha^*\ne 0$}: Same as $\textbf{Case 1}$, we have for any $1\le t<t^*$,
\begin{equation*}
    \mL(\bw(t))+\frac{\beta}{2(1-\beta)\eta} \Vert \bw(t)-\bw(t-1)\Vert^2
    \ge \mL(\bw(t+1))+\frac{\beta}{2(1-\beta)\eta} \Vert \bw(t+1)-\bw(t)\Vert^2,
\end{equation*}
which further leads to 
\begin{equation}
\label{eq: loss_bound_t}
      \mL(\bw(1))
    \ge \mL(\bw(t^*))+\frac{\beta}{2\eta}\Vert \bw(t^*)-\bw(t^*-1)\Vert^2.
\end{equation}
On the other hand, by the definition of $\alpha^*$, we have for any $0\le\alpha<\alpha^*$, we have Eq. (\ref{eq: gdm_direct_taylor_extended}) holds for $(t^*,\alpha)$, which by continuity further leads to Eq. (\ref{eq: gdm_direct_taylor_extended}) holds for $(t^*,\alpha^*)$. Therefore, $\alpha^*<1$, otherwise, Eq. (\ref{eq: gdm_direct_taylor_extended}) holds for $(t^*,\alpha)$, $ \forall\alpha \in [0,1]$ which contradicts the definition of $t^*$.

Combining Eq. (\ref{eq: gdm_direct_taylor_extended}) with $(t^*,\alpha)$ and Eq. (\ref{eq: loss_bound_t}), we further have
\begin{equation*}
    \mL(\bw(1))\ge\mL(\bw(t^*+\alpha))+\frac{\beta}{2(1-\beta)\eta}\alpha^2 \Vert \bw(t+1)-\bw(t)\Vert^2
    +\frac{1-{   C_1}}{2{   C_1}}\alpha^2 \Vert \bw(t+1)-\bw(t)\Vert^2,
\end{equation*}
Consequently, for any $\alpha\in[0,\alpha^*]$
\begin{equation*}
    \mL(\bw(1))\ge \mL(\bw(t^*+\alpha)),
\end{equation*}
and by Lemma \ref{lem: smooth_guarantee}, we then have $\mL$ is $\frac{H_{s_0}\sigmax^2}{N}$ smooth at $\bw(t^*+\alpha)$, which further by Taylor's expansion leads to 
\begin{small}
\begin{align*}
    &\mL(\bw(t^*+\alpha^*))
    \\
    \le  & \mL(\bw(t^*))+\langle\nabla \mL(\bw(t^*)), \bw(t^*+\alpha^*)-\bw(t^*) \rangle+\frac{H_{s_0}\sigmax^2}{2N}\Vert \bw(t^*+\alpha^*)-\bw(t^*)\Vert^2
    \\
    \overset{(\circ)}{\le}& \mL(\bw(t^*))+\frac{\alpha^*\beta}{2(1-\beta)\eta}\Vert \bw(t^*)-\bw(t^*-1)\Vert^2+\frac{\alpha^*\beta}{2(1-\beta)\eta}\Vert \bw(t^*+1)-\bw(t^*)\Vert^2 
    \\
\nonumber
    +&\left(\frac{H_{s_0}(\alpha^*)^2\sigmax^2}{2N}-\frac{\alpha^*}{(1-\beta)\eta}\right)\Vert \bw(t^*+1)-\bw(t^*)\Vert^2
    \\
    \nonumber
    =&\mL(\bw(t^*))+\frac{\alpha^*\beta}{2(1-\beta)\eta}\Vert \bw(t^*)-\bw(t^*-1)\Vert^2 
    +\left(\frac{H_{s_0}(\alpha^*)^2\sigmax^2}{2N}-\frac{{  \alpha^*}(2-\beta)}{2(1-\beta)\eta}\right)\Vert \bw(t^*+1)-\bw(t^*)\Vert^2
    \\
    \overset{(\bullet)}{=}& \mL(\bw(t^*))+\frac{\alpha^*\beta}{2(1-\beta)\eta}\Vert \bw(t^*)-\bw(t^*-1)\Vert^2 
    +\left(\frac{{   C_1}(\alpha^*)^2}{\eta}-\frac{\alpha^*(2-\beta)}{2(1-\beta)\eta}\right)\Vert \bw(t^*+1)-\bw(t^*)\Vert^2
    \\
    \overset{(\ast)}{<}&\mL(\bw(t^*))+\frac{\beta}{2(1-\beta)\eta}\Vert \bw(t^*)-\bw(t^*-1)\Vert^2 
    -\frac{(\alpha^*)^2\beta}{2(1-\beta)\eta}\Vert \bw(t^*+1)-\bw(t^*)\Vert^2
    \\
    -&\frac{(1-{   C_1})(\alpha^*)^2}{\eta}\Vert \bw(t^*+1)-\bw(t^*)\Vert^2
\end{align*}
\end{small}
where Eq. ($\circ$) follows the same routine as \textbf{Case 1}, Eq. ($\bullet$) is due to the definition of $\eta$ and ${   C_1}$, and Eq. ($\ast$) is due to $\alpha^*< 1$, and $\Vert \bw(t^*+1)-\bw(t^*)\Vert^2>0$ (given by Lemma \ref{lem: non-zero}).

By the continuity of $\mL$, for any small enough $\delta>0$, Eq. (\ref{eq: gdm_direct_taylor_extended}) holds for $(t^*,\alpha^*+\delta)$, which contradicts to the definition of $\alpha^*$.

The proof is completed.
\end{proof}

By Lemma \ref{lem: update_rule_loss_gdm}, one can easily obtain the sum of the squared norms of the updates across the trajectory converges. 

\begin{corollary}
\label{coro: update_rule_bounded}
Let all conditions in Theorem \ref{thm: gdm_main} hold. We have 
\begin{equation}
\label{eq: update_finite}
    \sum_{t=1}^\infty\Vert \bw(t+1)-\bw(t)\Vert^2<\infty.
\end{equation}

Consequentially, we have
\begin{equation*}
    \Vert \bw(t) \Vert =\mathcal{O}(\sqrt{t}).
\end{equation*}
\end{corollary}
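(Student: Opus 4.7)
The plan is to derive both claims directly from Lemma~\ref{lem: update_rule_loss_gdm_extended} (specialized at $\alpha=1$, i.e., Lemma~\ref{lem: update_rule_loss_gdm}) by a telescoping argument. First, I would apply the lemma at each time step $t = 1, 2, \ldots, T$ and sum the resulting inequalities. Since $\xi$ decreases by at least $\frac{1-C_1}{\eta}\Vert \bw(t+1)-\bw(t)\Vert^2$ at each step, telescoping yields
\begin{equation*}
\xi(1) - \xi(T+1) \;\ge\; \frac{1-C_1}{\eta}\sum_{t=1}^{T}\Vert \bw(t+1)-\bw(t)\Vert^2.
\end{equation*}
Because $\mL \ge 0$ and $\Vert \bw(T+1)-\bw(T)\Vert^2 \ge 0$, we have $\xi(T+1) \ge 0$, so the left-hand side is upper bounded by $\xi(1) = \mL(\bw(1))$ (noting $\bw(0)$ can be taken as $\bw(1)$ so the momentum term vanishes at $t=1$). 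The hypothesis $\eta < 2N/(\sigma_{max}^2 H_{\ell^{-1}(N\mL(\bw(1)))})$ translates exactly into $C_1 < 1$, so the coefficient $\frac{1-C_1}{\eta}$ is a strictly positive constant. Letting $T\to\infty$ gives the first claim.

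For the second claim, I would chain the one-step increments using the triangle inequality and then invoke Cauchy--Schwarz to convert the $\ell^1$ sum of norms into an $\ell^2$ sum. Specifically,
\begin{equation*}
\Vert \bw(t)\Vert \;\le\; \Vert \bw(1)\Vert + \sum_{s=1}^{t-1}\Vert \bw(s+1)-\bw(s)\Vert \;\le\; \Vert \bw(1)\Vert + \sqrt{t-1}\,\sqrt{\sum_{s=1}^{t-1}\Vert \bw(s+1)-\bw(s)\Vert^2}.
\end{equation*}
By the first claim the inner square root is bounded by an absolute constant $C := \sqrt{\eta\,\mL(\bw(1))/(1-C_1)}$ uniformly in $t$, hence $\Vert \bw(t)\Vert \le \Vert \bw(1)\Vert + C\sqrt{t-1} = \mathcal{O}(\sqrt{t})$.

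There is essentially no obstacle here beyond making sure the telescoping is legitimate and that $\xi(T+1)$ is indeed nonnegative; the latter is immediate since both summands defining $\xi$ are nonnegative. The only subtle point is handling the initial condition at $t=1$: one needs $\xi(1)$ to be finite, which holds because $\mL(\bw(1))$ is finite and the momentum contribution at the initial step is zero by the convention $\bom(0)=\vzero$ (equivalently, $\bw(0)=\bw(1)$). Everything else is a routine consequence of Lemma~\ref{lem: update_rule_loss_gdm_extended} and elementary inequalities.
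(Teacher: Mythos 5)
Your proof is correct and follows essentially the same route as the paper: sum the telescoping potential inequality from Lemma~\ref{lem: update_rule_loss_gdm}, use nonnegativity of $\xi(T+1)$ together with the convention $\bw(0)=\bw(1)$ to bound the sum by $\mL(\bw(1))$, and then combine the triangle inequality with Cauchy--Schwarz for the $\mathcal{O}(\sqrt{t})$ bound. No issues.
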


\begin{proof}
By Lemma \ref{lem: update_rule_loss_gdm}, we have 
\begin{align*}
    \mL(\bw(t))+\frac{\beta}{2(1-\beta)\eta} \Vert \bw(t)-\bw(t-1)\Vert^2
    -& \left(\mL(\bw(t+1))+\frac{\beta}{2(1-\beta)\eta} \Vert \bw(t+1)-\bw(t)\Vert^2\right)
    \\
    &\ge\frac{1-{   C_1}}{\eta} \Vert \bw(t+1)-\bw(t)\Vert^2,
\end{align*}
which by summing over $t$ further leads to
\begin{equation*}
    \mL(\bw(1))\ge\mL(\bw(1))-\left(\mL(\bw(t+1))+\frac{\beta}{2(1-\beta)\eta} \Vert \bw(t+1)-\bw(t)\Vert^2\right)\ge \frac{1-{   C_1}}{\eta}\sum_{s=1}^t\Vert \bw(s+1)-\bw(s)\Vert^2.
\end{equation*}

Taking $t\rightarrow \infty$ leads to 
\begin{equation*}
    \sum_{s=1}^\infty\Vert \bw(s+1)-\bw(s)\Vert^2<\infty.
\end{equation*}

By triangle inequality, we further have
\begin{align*}
    \Vert \bw(t) \Vert \le & \sum_{s=1}^t \Vert \bw(s+1)-\bw(s)\Vert+ \Vert \bw(1)\Vert 
    \\
    \overset{(\star)}{\le} &\sqrt{t\left(\sum_{s=1}^t \Vert \bw(s+1)-\bw(s)\Vert^2\right)}+ \Vert \bw(1)\Vert =\mathcal{O}(\sqrt{t}),
\end{align*}
where Eq. ($\star$) is due to Cauchy-Schwartz's inequality.

The proof is completed.
\end{proof}

By the negative derivative of the loss and the separable data, we can finally prove the sum of squared gradient converges.
\begin{corollary}
\label{lem: sum_of_squared_gradient_converge}
 Let all conditions in Theorem \ref{thm: gdm_main} hold. We have, $\sum_{t=1}^{\infty}\Vert \nabla \mL(\bw(t))\Vert^2<\infty$.
\end{corollary}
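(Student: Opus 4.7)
The plan is to convert the already-established summability of consecutive parameter increments (Corollary \ref{coro: update_rule_bounded}) into summability of the squared gradients, by exploiting the fact that the GDM update rule expresses the current gradient as an affine combination of two consecutive increments.

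First, I would rearrange the update rule in Eq.~\eqref{eq: def_gdm}. Writing $\bw(t+1)-\bw(t) = -\eta \bom(t)$ and $\bw(t)-\bw(t-1) = -\eta \bom(t-1)$, and using $\bom(t) = \beta\bom(t-1) + (1-\beta)\nabla\mL(\bw(t))$, one obtains the identity (already used in the proof of Lemma \ref{lem: update_rule_loss_gdm_extended}, see Eq.~\eqref{eq: rearrange_gdm})
\begin{equation*}
(1-\beta)\eta\,\nabla\mL(\bw(t)) \;=\; \beta\bigl(\bw(t)-\bw(t-1)\bigr) \;-\; \bigl(\bw(t+1)-\bw(t)\bigr),\qquad \forall t\ge 1.
\end{equation*}

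Next, I would apply the elementary inequality $\Vert \va+\vb\Vert^{2} \le 2\Vert \va\Vert^{2}+2\Vert\vb\Vert^{2}$ to this identity to obtain
\begin{equation*}
\Vert \nabla\mL(\bw(t))\Vert^{2} \;\le\; \frac{2}{(1-\beta)^{2}\eta^{2}}\Bigl(\beta^{2}\Vert\bw(t)-\bw(t-1)\Vert^{2} + \Vert\bw(t+1)-\bw(t)\Vert^{2}\Bigr).
\end{equation*}
Summing from $t=1$ to $T$ and letting $T\to\infty$, the right-hand side is bounded by a constant multiple of $\sum_{s\ge 0}\Vert\bw(s+1)-\bw(s)\Vert^{2}$, which is finite by Corollary \ref{coro: update_rule_bounded}. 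This yields $\sum_{t=1}^{\infty}\Vert\nabla\mL(\bw(t))\Vert^{2}<\infty$.

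I do not expect any genuine obstacle: all the hard work was already done in Lemma \ref{lem: update_rule_loss_gdm_extended} and Corollary \ref{coro: update_rule_bounded}, which established that the chosen potential $\xi(t)$ is non-increasing with a gap proportional to $\Vert\bw(t+1)-\bw(t)\Vert^{2}$ despite only local smoothness of $\ell$. The only slightly delicate point, which is really notational rather than mathematical, is to make sure that the $t=1$ boundary term involving $\bw(0)$ is handled correctly; this is straightforward since $\bom(0)=\vzero$ forces $\bw(1)-\bw(0)=\vzero$ under the convention used in Eq.~\eqref{eq: def_gdm}, so the $t=1$ summand reduces to $\frac{2}{(1-\beta)^{2}\eta^{2}}\Vert\bw(2)-\bw(1)\Vert^{2}$ and is absorbed harmlessly into the bound. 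The same two-line argument will be reused verbatim in the SGDM and Adam sections, since in each case an analogous rearrangement expresses the (stochastic/preconditioned) gradient as a bounded linear combination of two consecutive parameter increments.
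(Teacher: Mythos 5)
Your proof is correct, and it takes a genuinely different route from the paper's.

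The paper proves this corollary by starting from the full expansion $\bw(t+1)-\bw(t) = -\eta(1-\beta)\sum_{s=1}^t\beta^{t-s}\nabla\mL(\bw(s))$ (Eq.~\eqref{eq: gdm_detailed_expansion}), taking an inner product with $\hbw$, and invoking separability (Assumption \ref{assum: separable}) together with $\ell'<0$ to argue that none of the summands can cancel. This yields a \emph{pointwise lower bound} $\Vert\bw(t+1)-\bw(t)\Vert^2 \ge \frac{\eta^2\gamma^2(1-\beta)^2}{\sigma_{max}^2}\Vert\nabla\mL(\bw(t))\Vert^2$ (Eq.~\eqref{eq: update_to_gradient}), from which summability follows immediately. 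You instead use the one-step recursion $(1-\beta)\eta\,\nabla\mL(\bw(t)) = \beta(\bw(t)-\bw(t-1)) - (\bw(t+1)-\bw(t))$ combined with $\Vert\va+\vb\Vert^2\le 2\Vert\va\Vert^2+2\Vert\vb\Vert^2$, which gives only an \emph{upper bound} on the gradient in terms of two consecutive increments, but that is all that summability requires. Your version is more elementary: it is a pure algebraic consequence of the recursion, needing no geometry of the data and no sign information about $\ell'$ (though of course separability is still needed upstream in Lemma \ref{lem: update_rule_loss_gdm} and Corollary \ref{coro: update_rule_bounded}). The paper's version is stronger per iteration—it says the increment is always at least a constant times the current gradient, which can be valuable as a genuine two-sided per-step equivalence between $\Vert\bw(t+1)-\bw(t)\Vert$ and $\Vert\nabla\mL(\bw(t))\Vert$ (used, e.g., to obtain rate estimates in the Adam section). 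It is also worth noting that for deterministic Adam the paper itself uses precisely your rearrangement trick (Eq.~\eqref{eq: gradient_update}), so your approach is the one that actually transfers verbatim across variants, as you correctly anticipate. Your remark about the $t=1$ boundary term is also accurate: the convention $\bw(0)=\bw(1)$ (equivalent to $\bom(0)=\vzero$) makes the identity hold at $t=1$ and is the same convention used implicitly in the proof of Corollary \ref{coro: update_rule_bounded}.
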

\begin{proof}
By Eq. (\ref{eq: gdm_detailed_expansion}), we have
\begin{align}
\nonumber
\Vert \bw(t+1)-\bw(t) \Vert^2 
=& \eta^2\left\Vert(1-\beta) \sum_{s=1}^t \beta^{t-s}\frac{\sum_{i=1}^N\ell'(\langle \bw(s),\bx_i\rangle) \bx_i}{N} \right\Vert^2 
\\\nonumber
= &\eta^2(1-\beta)^2\left\Vert \sum_{s=1}^t \beta^{t-s}\frac{\sum_{i=1}^N\ell'(\langle \bw(s),\bx_i\rangle) \bx_i}{N} \right\Vert^2 {  \frac{
\Vert \hbw \Vert^2}{\Vert \hbw \Vert^2}}
\\\nonumber
\overset{(*)}{\ge} & \eta^2{  \gamma^2}(1-\beta) \left\langle \hbw ,\sum_{s=1}^t \beta^{t-s}\frac{\sum_{i=1}^N\ell'(\langle \bw(s),\bx_i\rangle) \bx_i}{N}\right\rangle^2
\\\nonumber
\overset{(**)}{\ge} & \eta^2\gamma^2(1-\beta)^2 \left( \sum_{s=1}^t \beta^{t-s}\frac{\sum_{i=1}^N\ell'(\langle \bw(s),\bx_i\rangle) }{N}\right)^2
\\ \nonumber
\ge & \eta^2\gamma^2(1-\beta)^2 \left(  \frac{\sum_{i=1}^N\ell'(\langle \bw(t),\bx_i\rangle) }{N}\right)^2
\\\nonumber
\overset{(\bullet)}{\ge} &\frac{\eta^2\gamma^2(1-\beta)^2}{\sigmax^2} \left\Vert  \frac{\sum_{i=1}^N\ell'(\langle \bw(t),\bx_i\rangle)  \bx_i}{N}\right\Vert^2
\\\label{eq: update_to_gradient}
=&\frac{\eta^2\gamma^2(1-\beta)^2}{\sigmax^2} \left\Vert  \nabla \mL(\bw(t))\right\Vert^2,
\end{align}
where Inequality $(*)$ is due to Cauchy-Schwartz's inequality, Inequality $(**)$ is due to $\ell'(s)< 0$, $\forall s\in \mathbb{R}$  and $\langle \hbw, \bx_i\rangle\ge \gamma$, $\forall i \in [N]$, and Inequality ($\bullet$) is due to the definition of $\sigmax$. By combining Eq. (\ref{eq: update_finite}) and Eq. (\ref{eq: update_to_gradient}), we complete the proof.
\end{proof}

By the exponential-tailed assumption of the loss (Assumption \ref{assum: exponential-tailed}), we further have the following corollary.

\begin{corollary}
\label{coro: time_asymptotic_loss}
Let all conditions in Theorem \ref{thm: gdm_main} hold. Then, $\lim_{t\rightarrow\infty}\left\Vert  \nabla \mL(\bw(t))\right\Vert=0$, and 
\begin{equation*}
    \lim_{  t\rightarrow\infty } \langle\bw(t) ,\bx_i\rangle=\infty, \forall i.
\end{equation*}
 Consequently, there exists an large enough time $t_0$, such that, $\forall t>t_0$, $\forall i$, we have $ \langle\bw(t) ,\bx_i\rangle>0$, and
 \begin{align*}
         -\ell'(\langle \bw(t),\bx_i\rangle)\le (1+e^{-\mu_+ \langle \bw(t),\bx_i\rangle})e^{-\langle \bw(t),\bx_i\rangle},
        \\
        -\ell'(\langle \bw(t),\bx_i\rangle)\ge (1-e^{-\mu_- \langle \bw(t),\bx_i\rangle})e^{-\langle \bw(t),\bx_i\rangle}.
    \end{align*}
 
\end{corollary}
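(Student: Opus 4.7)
The plan is to chain together results already established in the excerpt. Corollary~\ref{lem: sum_of_squared_gradient_converge} gives $\sum_{t=1}^{\infty}\Vert\nabla\mL(\bw(t))\Vert^2<\infty$, and since the general term of any convergent series must tend to zero, this immediately yields $\Vert\nabla\mL(\bw(t))\Vert\to 0$, settling the first claim. No additional regularity of $\nabla\mL$ along the trajectory is needed here.

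Next, to upgrade the gradient bound to an inner-product lower bound, I would invoke Corollary~\ref{coro:gradient_small_equivalence}. Once $t$ is large enough that $\Vert\nabla\mL(\bw(t))\Vert \le C_g$, that corollary gives $\mL(\bw(t)) \le (4/\gamma)\Vert\nabla\mL(\bw(t))\Vert$, and so $\mL(\bw(t))\to 0$. Since $\mL(\bw)=\frac{1}{N}\sum_{i}\ell(\langle \bw,\bx_i\rangle)$ is a sum of strictly positive terms (each $\ell>0$ by Assumption~\ref{assum: exponential-tailed}, as $\ell$ is monotonically decreasing to $0$), the convergence of $\mL(\bw(t))$ to $0$ forces each summand $\ell(\langle\bw(t),\bx_i\rangle)\to 0$ separately. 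Because $\ell$ is strictly decreasing with $\lim_{x\to\infty}\ell(x)=0$ and $\ell(x)>0$ everywhere, its vanishing along a sequence can only occur when the argument diverges to $+\infty$. Thus $\langle\bw(t),\bx_i\rangle\to\infty$ for every $i\in[N]$, which is the second claim.

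For the tail inequalities, the divergence $\langle\bw(t),\bx_i\rangle\to\infty$ applied to the finitely many indices $i=1,\dots,N$ implies the existence of a single threshold $t_0$ beyond which $\langle\bw(t),\bx_i\rangle > \max\{x_+,x_-,0\}$ for all $i$ simultaneously. For such $t$ the inner products are positive, and the exponential-tail bounds of Assumption~\ref{assum: exponential-tailed} (specialised to $c=a=1$, as the paper's normalisation convention allows) apply pointwise at each $\langle\bw(t),\bx_i\rangle$, delivering the two desired inequalities
\begin{equation*}
-\ell'(\langle\bw(t),\bx_i\rangle) \le (1+e^{-\mu_+\langle\bw(t),\bx_i\rangle})e^{-\langle\bw(t),\bx_i\rangle}
\end{equation*}
and its lower-bound counterpart.

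I do not expect any substantive obstacle: the corollary is essentially a packaging step that converts the summability result of Corollary~\ref{lem: sum_of_squared_gradient_converge} (via the loss/gradient equivalence of Corollary~\ref{coro:gradient_small_equivalence} and the monotonicity of $\ell$) into the qualitative statements about $\bw(t)$ needed to invoke the tail estimates of Assumption~\ref{assum: exponential-tailed}. The only mild care required is to ensure a common threshold $t_0$ works uniformly over all $N$ data points, which is immediate because $N$ is finite.
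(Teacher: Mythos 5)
Your argument is correct, and the paper itself supplies no explicit proof of this corollary (treating it as an immediate consequence of Corollary~\ref{lem: sum_of_squared_gradient_converge} and Assumption~\ref{assum: exponential-tailed}); your derivation is precisely the expected one. For the record, the middle step can also be carried out without invoking Corollary~\ref{coro:gradient_small_equivalence}: from $\langle -\nabla\mL(\bw(t)),\hbw\rangle \ge -\frac{1}{N}\ell'(\langle\bw(t),\bx_i\rangle)>0$ and Cauchy--Schwarz one gets $-\ell'(\langle\bw(t),\bx_i\rangle)\le N\Vert\hbw\Vert\,\Vert\nabla\mL(\bw(t))\Vert\to 0$ for each $i$, and then $\varlimsup_{x\to-\infty}\ell'(x)<0$ together with $\lim_{x\to\infty}\ell'(x)=0$ forces $\langle\bw(t),\bx_i\rangle\to\infty$ directly; both routes are equivalent in substance.
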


\subsubsection{Parameter dynamics}
\label{subsubsec: form of the solution_gdm}
To prove Theorem \ref{thm: gdm_main}, we only need to show $\bw(t)-\ln(t)\hbw$ $(t\ge 1)$ has bounded norm for any iteration $t>0$. Letting ${  C_3}=\frac{\eta}{N}$ in Corollary \ref{coro:definition_of_tbw}, we obtain an constant vector $\tbw$ satisfying Eq. (\ref{eq: represent_v}). Define 
\begin{equation}
    \label{eq: def_r_gdm}
    \br(t)\overset{\triangle}{=}\bw(t)-\ln(t)\hbw-\tbw.
\end{equation} As $\tbw$ is a constant vector, that $\bw(t)-\ln(t)\hbw$ $(t\ge 1)$ has bounded norm is equivalent to $\br(t)$ has bounded norm. As discussed in the main body of the paper, we then propose an equivalent proposition of $\Vert \br(t) \Vert$ is bounded, and further prove this proposition is fulfilled. Specifically, we have
\begin{lemma}
\label{lem:gdm_r_t_bounded}
Let all conditions in Theorem \ref{thm: gdm_main} hold. Then, $\Vert \br(t) \Vert$ is bounded if and only if the function $g(t)$ is upper bounded, where $g:\mathbb{Z}^{+}\rightarrow\mathbb{R}$ is defined as
\begin{small}
\begin{align}
    g(t)\overset{\triangle}{=}&\frac{1}{2}\Vert \br(t)\Vert^2+\frac{\beta}{1-\beta}\langle \br(t),\bw(t)-\bw(t-1)\rangle
    \label{eq: definition_of_g}
    -\frac{\beta}{1-\beta}\sum_{\tau=2}^t\langle \br(\tau)-\br(\tau-1),\bw(\tau)-\bw(\tau-1)\rangle.
\end{align}
\end{small}
Furthermore, for almost every dataset, we have $\sum_{t=1}^{\infty} (g(t+1)-g(t))$ is upper bounded.
\end{lemma}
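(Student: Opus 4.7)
I first plan to show $g(t)=\tfrac{1}{2}\Vert\br(t)\Vert^{2}+o(\Vert\br(t)\Vert)+O(1)$, from which both directions follow at once: the "only if" is immediate, and the "if" is a quadratic-dominance argument (a quantity of the form $\tfrac12 x^2 + o(x)$ is upper-bounded only when $x$ is bounded). The middle cross term $\tfrac{\beta}{1-\beta}\langle\br(t),\bw(t)-\bw(t-1)\rangle$ is $o(\Vert\br(t)\Vert)$ by Cauchy--Schwarz because $\Vert\bw(t)-\bw(t-1)\Vert\to 0$ from Corollary~\ref{coro: update_rule_bounded}. For the running-sum term, substituting $\br(\tau)-\br(\tau-1)=(\bw(\tau)-\bw(\tau-1))-(\ln\tau-\ln(\tau-1))\hbw$ splits it into (i) $\sum_{\tau}\Vert\bw(\tau)-\bw(\tau-1)\Vert^{2}$, which is finite by Corollary~\ref{coro: update_rule_bounded}, and (ii) $\sum_{\tau}(\ln\tau-\ln(\tau-1))\langle\hbw,\bw(\tau)-\bw(\tau-1)\rangle$, which is finite by Cauchy--Schwarz applied to $\sum 1/\tau^{2}$ and $\sum\Vert\bw(\tau)-\bw(\tau-1)\Vert^{2}$. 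Hence the running-sum term is uniformly bounded in $t$.

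\textbf{One-step identity.} I then expand $g(t+1)-g(t)$ directly. Writing $\Delta_{s}:=\bw(s)-\bw(s-1)$ and $\delta_{t}:=\ln(t+1)-\ln t$, the telescoping of the running sum together with $\tfrac{1}{2}\Vert\br(t+1)\Vert^{2}-\tfrac{1}{2}\Vert\br(t)\Vert^{2}=\langle\br(t+1)-\br(t),\br(t)\rangle+\tfrac{1}{2}\Vert\br(t+1)-\br(t)\Vert^{2}$ collapses the middle and running-sum pieces into the single cross term $\tfrac{\beta}{1-\beta}\langle\br(t),\Delta_{t+1}-\Delta_{t}\rangle$. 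The GDM recursion gives $\Delta_{t+1}=\beta\Delta_{t}-\eta(1-\beta)\nabla\mL(\bw(t))$, so $\Delta_{t+1}-\Delta_{t}=-(1-\beta)\Delta_{t}-\eta(1-\beta)\nabla\mL(\bw(t))$, and substituting this along with $\br(t+1)-\br(t)=\Delta_{t+1}-\delta_{t}\hbw$ makes every $\langle\br(t),\Delta_{t}\rangle$ contribution cancel. What survives is the clean identity
\begin{equation*}
g(t+1)-g(t)=\langle-\eta\nabla\mL(\bw(t))-\delta_{t}\hbw,\br(t)\rangle+\tfrac{1}{2}\Vert\Delta_{t+1}-\delta_{t}\hbw\Vert^{2},
\end{equation*}
which is structurally identical to the vanilla-GD recursion of \cite{soudry2018implicit}. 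The running-sum correction in the definition of $g$ is engineered precisely to induce this cancellation of all momentum history.

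\textbf{Summability and main obstacle.} The squared term is summable because $\sum_{t}\Vert\Delta_{t+1}\Vert^{2}<\infty$ (Corollary~\ref{coro: update_rule_bounded}) and $\sum_{t}\delta_{t}^{2}<\infty$, with the cross part handled by Cauchy--Schwarz. For the inner-product term I split the gradient over the support set $\bS_{s}$ and its complement. Using the exponential-tail Assumption~\ref{assum: exponential-tailed} together with the decomposition $\bw(t)=\ln t\,\hbw+\tbw+\br(t)$ and the defining identity $(\eta/N)e^{-\langle\tbw,\bx_{i}\rangle}=\bv_{i}$ (Corollary~\ref{coro:definition_of_tbw} with $C_{3}=\eta/N$), the support part of $-\eta\nabla\mL(\bw(t))$ equals $\tfrac{1}{t}\sum_{i\in\bS_{s}}\bv_{i}e^{-\langle\br(t),\bx_{i}\rangle}\bx_{i}$ up to a summable correction from the higher-order tail. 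Invoking $\sum_{i\in\bS_{s}}\bv_{i}\bx_{i}=\hbw$ (Lemma~\ref{lem: structure of max-margin vector}) and $\delta_{t}=t^{-1}+O(t^{-2})$, the main support contribution to $\langle-\eta\nabla\mL(\bw(t))-\delta_{t}\hbw,\br(t)\rangle$ collapses to $\tfrac{1}{t}\sum_{i\in\bS_{s}}\bv_{i}(e^{-\langle\br(t),\bx_{i}\rangle}-1)\langle\br(t),\bx_{i}\rangle\le 0$, using the elementary inequality $(e^{-x}-1)x\le 0$. The main obstacle is the non-support contribution: on non-support vectors $\langle\hbw,\bx_{i}\rangle=1+\epsilon_{i}$ with $\epsilon_{i}>0$, so $-\ell'(\langle\bw(t),\bx_{i}\rangle)$ is of order $t^{-(1+\epsilon_{i})}e^{-\langle\br(t),\bx_{i}\rangle}$, and the a priori bound $\Vert\br(t)\Vert=O(\sqrt t)$ from Corollary~\ref{coro: update_rule_bounded} is far too weak to control the exponential factor. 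I plan to close this by bootstrapping: Corollaries~\ref{coro: time_asymptotic_loss} and~\ref{coro:gradient_small_equivalence} allow me to upgrade the loss decay and force $\langle\bw(t),\bx_{i}\rangle$ to grow at least like $\ln t$, so the $\epsilon_{i}\ln t$ gap eventually dominates any $\br(t)$-dependent factor and the non-support contribution becomes summable. The "almost every dataset" qualifier enters only through Lemma~\ref{lem: structure of max-margin vector} and Corollary~\ref{coro:definition_of_tbw}, which guarantee existence and uniqueness of $\bv$ and $\tbw$.
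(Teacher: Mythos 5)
Your architecture matches the paper's: you first reduce boundedness of $\Vert\br(t)\Vert$ to upper-boundedness of $g(t)$ by showing $g(t)=\tfrac12\Vert\br(t)\Vert^2+o(\Vert\br(t)\Vert)+O(1)$, then derive the one-step identity $g(t+1)-g(t)=\tfrac12\Vert\br(t+1)-\br(t)\Vert^2+\langle\br(t),-\eta\nabla\mL(\bw(t))-\ln\tfrac{t+1}{t}\hbw\rangle$ via $\frac{\beta}{1-\beta}\left(\bw(t+1)+\bw(t-1)-2\bw(t)\right)=-\eta\nabla\mL(\bw(t))-(\bw(t+1)-\bw(t))$, and finally split the inner product over support and non-support vectors. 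The equivalence part, the one-step identity, and the support part (via $(e^{-x}-1)\,x\le 0$, with the $\mu_\pm$ tail terms giving summable corrections) are all sound sketches that track the paper.

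The genuine gap is in the non-support contribution. You claim the a priori bound $\Vert\br(t)\Vert=O(\sqrt t)$ is "far too weak to control the exponential factor" and propose a bootstrap via Corollaries~\ref{coro: time_asymptotic_loss} and~\ref{coro:gradient_small_equivalence}. But those corollaries only give $\langle\bw(t),\bx_i\rangle\to\infty$ and the asymptotic proportionality $\Vert\nabla\mL\Vert\asymp\mL$; they do \emph{not} deliver a rate $\langle\bw(t),\bx_i\rangle\ge(1+\epsilon_i)\ln t-C$ for non-support vectors without already knowing $\br(t)$ is bounded. Concretely, $\langle\bw(t),\bx_i\rangle=(1+\epsilon_i)\ln t+\langle\tbw,\bx_i\rangle+\langle\br(t),\bx_i\rangle$, and the last term is exactly what you are trying to control; the bootstrap as described is circular. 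The obstacle you flag is in fact a false one: only terms with $\langle\br(t),\bx_i\rangle>0$ can contribute positively (since $\ell'<0$), and for those $\langle\br(t),\bx_i\rangle\,e^{-\langle\br(t),\bx_i\rangle}\le e^{-1}$ uniformly, while terms with $\langle\br(t),\bx_i\rangle\le 0$ are discarded as non-positive. This is the same one-variable bound you already invoked on the support part, applied in disguise. With it, the non-support contribution is directly $O(t^{-\theta})$ with $\theta=\min_{\bx_i\notin\bS_s}\langle\hbw,\bx_i\rangle>1$ after peeling off $e^{-\langle\ln t\,\hbw+\tbw,\bx_i\rangle}$, and no bootstrap or a priori control of $\Vert\br(t)\Vert$ is needed. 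You should replace the bootstrap with this observation; as written, your non-support argument does not close.
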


As the proof is rather complex, we separate it into two sub-lemmas. We first prove $\Vert \br(t) \Vert$ is bounded if and only if function $g(t)$ is upper bounded.

\begin{lemma}[First argument in Lemma \ref{lem:gdm_r_t_bounded}]
\label{lem:gdm_r_t_bounded_first}
Let all conditions in Theorem \ref{thm: gdm_main} hold. Then, $\Vert \br(t) \Vert$ is bounded if and only if function $g(t)$ is upper bounded.
\end{lemma}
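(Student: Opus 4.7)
The plan is to show the equivalence by isolating the ``main'' term $\tfrac12\Vert\br(t)\Vert^2$ in $g(t)$ and showing that the two remaining terms — the cross term $\tfrac{\beta}{1-\beta}\langle\br(t),\bw(t)-\bw(t-1)\rangle$ and the telescoping-style sum $\tfrac{\beta}{1-\beta}\sum_{\tau=2}^{t}\langle \br(\tau)-\br(\tau-1),\bw(\tau)-\bw(\tau-1)\rangle$ — are both controllable. Concretely, I will bound them by quantities that are either summable along the trajectory or are at most linear in $\Vert\br(t)\Vert$, so that $g(t)$ behaves essentially like $\tfrac12\Vert\br(t)\Vert^2$ plus lower-order/bounded perturbations.

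First I would dispose of the sum term. Using $\br(\tau)-\br(\tau-1)=(\bw(\tau)-\bw(\tau-1))-\ln\!\tfrac{\tau}{\tau-1}\,\hbw$, I split
\[
\langle \br(\tau)-\br(\tau-1),\bw(\tau)-\bw(\tau-1)\rangle=\Vert\bw(\tau)-\bw(\tau-1)\Vert^2-\ln\!\tfrac{\tau}{\tau-1}\,\langle\hbw,\bw(\tau)-\bw(\tau-1)\rangle.
\]
The first piece is summable by Corollary~\ref{coro: update_rule_bounded}. For the second, $\ln\tfrac{\tau}{\tau-1}\le \tfrac{1}{\tau-1}$, so Cauchy--Schwarz gives
\[
\sum_{\tau=2}^{\infty}\tfrac{1}{\tau-1}\Vert\bw(\tau)-\bw(\tau-1)\Vert\;\le\;\sqrt{\sum_{\tau\ge2}\tfrac{1}{(\tau-1)^2}}\,\sqrt{\sum_{\tau\ge2}\Vert\bw(\tau)-\bw(\tau-1)\Vert^2}<\infty,
\]
again via Corollary~\ref{coro: update_rule_bounded}. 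Hence $S(t)\triangleq\sum_{\tau=2}^{t}\langle \br(\tau)-\br(\tau-1),\bw(\tau)-\bw(\tau-1)\rangle$ is uniformly bounded in $t$.

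Next I would control the cross term. Since $\sum_{\tau}\Vert\bw(\tau)-\bw(\tau-1)\Vert^2<\infty$, the sequence $\Vert\bw(t)-\bw(t-1)\Vert$ tends to $0$ and is in particular bounded by some $M<\infty$. Cauchy--Schwarz then yields
\[
\bigl|\langle\br(t),\bw(t)-\bw(t-1)\rangle\bigr|\;\le\;M\,\Vert\br(t)\Vert.
\]
Combining, there is a constant $C>0$ with
\[
\tfrac12\Vert\br(t)\Vert^2-\tfrac{\beta M}{1-\beta}\Vert\br(t)\Vert-C\;\le\;g(t)\;\le\;\tfrac12\Vert\br(t)\Vert^2+\tfrac{\beta M}{1-\beta}\Vert\br(t)\Vert+C.
\]
The forward direction ($\Rightarrow$) is then immediate from the upper estimate: boundedness of $\Vert\br(t)\Vert$ implies boundedness of $g(t)$. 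For the reverse direction ($\Leftarrow$), the lower estimate shows $g(t)\ge \tfrac12 x^2-\tfrac{\beta M}{1-\beta}x-C$ with $x=\Vert\br(t)\Vert$. Since the right-hand side is a quadratic that tends to $+\infty$ with $x$, upper-boundedness of $g(t)$ forces $\Vert\br(t)\Vert$ to stay bounded, completing the equivalence.

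The main obstacle I anticipate is only bookkeeping: making the bound on $S(t)$ rigorous (the $\ln\tfrac{\tau}{\tau-1}\le \tfrac{1}{\tau-1}$ step and the Cauchy--Schwarz combination with the square-summability from Corollary~\ref{coro: update_rule_bounded}). Once $S(t)$ is known to be uniformly bounded and $\Vert\bw(t)-\bw(t-1)\Vert$ is bounded, the two-sided sandwich on $g(t)$ above makes both implications transparent.
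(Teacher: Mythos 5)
Your proof is correct and follows essentially the same route as the paper's: you show the sum term $S(t)$ is uniformly bounded using Corollary~\ref{coro: update_rule_bounded} (the paper uses AM--GM on each summand where you use Cauchy--Schwarz across the series, a cosmetic difference), you control the cross term by $M\Vert\br(t)\Vert$ since $\Vert\bw(t)-\bw(t-1)\Vert$ is bounded, and you conclude both implications from the resulting quadratic-in-$\Vert\br(t)\Vert$ sandwich — which is a cleaner packaging of the paper's subsequence-contradiction argument for the necessity direction but rests on exactly the same observation.
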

\begin{proof}
We start the proof by showing that $A_1(t)\overset{\triangle}{=} \sum_{\tau=2}^t\langle \br(\tau)-\br(\tau-1),\bw(\tau)-\bw(\tau-1)\rangle$ has bounded absolute value.

By the definition of $\br(t)$ , we have 
\begin{align*}
    \br(t)-\br(t-1)=&\bw(t)-\bw(t-1)-\ln\left(\frac{t}{t-1}\right)\hbw,
\end{align*}
which further indicates
\begin{equation*}
    A_1(t)=\sum_{\tau=2}^t \left\langle \bw(\tau)-\bw(\tau-1)-\ln\left(\frac{\tau}{\tau-1}\right)\hbw,\bw(\tau)-\bw(\tau-1)\right\rangle.
\end{equation*}
Therefore, the absolute value of $A_1(t)$ can be bounded as 
\begin{align*}
    \vert A_1(t) \vert = &\left\vert \sum_{\tau=2}^t \left\langle \bw(\tau)-\bw(\tau-1)-\ln\left(\frac{\tau}{\tau-1}\right)\hbw,\bw(\tau)-\bw(\tau-1)\right\rangle \right\vert
\\
    \le &\sum_{\tau=2}^t  \left \vert\left\langle \bw(\tau)-\bw(\tau-1)-\ln\left(\frac{\tau}{\tau-1}\right)\hbw,\bw(\tau)-\bw(\tau-1)\right\rangle\right\vert \\
    \le &\sum_{\tau=2}^t  \left \Vert \bw(\tau)-\bw(\tau-1)\right\Vert^2+\sum_{\tau=2}^t\left\vert \left\langle\ln\left(\frac{\tau}{\tau-1}\right)\hbw,\bw(\tau)-\bw(\tau-1)\right\rangle\right\vert
    \\
    \le &\sum_{\tau=2}^t  \left \Vert \bw(\tau)-\bw(\tau-1)\right\Vert^2+\sum_{\tau=2}^t\left\Vert \ln\left(\frac{\tau}{\tau-1}\right)\hbw\right\Vert \left\Vert\bw(\tau)-\bw(\tau-1)\right\Vert
    \\
    \overset{(\star)}{\le} & \frac{3}{2}\sum_{\tau=2}^t  \left \Vert \bw(\tau)-\bw(\tau-1)\right\Vert^2+\frac{1}{2}\sum_{\tau=2}^t \left\Vert\ln\left(\frac{\tau}{\tau-1}\right)\hbw\right\Vert^2
    \\
    \overset{(\circ)}{<}& \infty,
\end{align*}
where Inequality $(\star)$ is due to the Inequality of arithmetic and geometric means, and Inequality $(\circ)$ is due to Corollary \ref{coro: update_rule_bounded} and $\ln\frac{\tau}{\tau-1}=\mO(\frac{1}{\tau})$.

Therefore, $g(t)$ is upper bounded is then equivalent to $\frac{1}{2}\Vert \br(t)\Vert^2+\frac{\beta}{1-\beta}\langle \br(t),\bw(t)-\bw(t-1)\rangle$ is upper bounded. Now if $\frac{1}{2}\Vert \br(t)\Vert^2+\frac{\beta}{1-\beta}\langle \br(t),\bw(t)-\bw(t-1)\rangle$ is upper bounded, we will prove $\Vert \br(t) \Vert $ is bounded by reduction to absurdity.

{  Suppose that $\Vert \br(t)\Vert$ has unbounded norm.} By Corollary \ref{coro: update_rule_bounded}, we have $\lim_{t\rightarrow \infty} \Vert \bw(t)-\bw(t-1)\Vert=0$, and there exists a large enough time $T$, such that $ \Vert \bw(t)-\bw(t-1)\Vert<1$ for any $t\ge T$. On the other hand, since $\br(t)$ is unbounded from above, there exists an increasing time sequence $k_i>T$, $i\in \mathbb{Z}^+$, such that 
\begin{equation*}
    \lim_{i\rightarrow\infty} \Vert \br(k_i)\Vert=\infty.
\end{equation*}

Therefore, we have 
\begin{align*}
    &\varliminf_{i\rightarrow\infty} \frac{1}{2}\Vert \br(k_i) \Vert^2+ \frac{\beta}{1-\beta}\langle \br(k_i),\bw(k_i)-\bw(k_i-1)\rangle
\end{align*}
\begin{align*}
    \ge& \varliminf_{i\rightarrow\infty}\frac{1}{2}\Vert \br(k_i) \Vert^2- \frac{\beta}{1-\beta}\left\Vert\br(k_i)\right\Vert\left\Vert\bw(k_i)-\bw(k_i-1)\right\Vert
    \\
    \ge& \varliminf_{i\rightarrow\infty}\frac{1}{2}\Vert \br(k_i) \Vert^2- \frac{\beta}{1-\beta}\left\Vert\br(k_i)\right\Vert=\infty,
\end{align*}
which leads to contradictory, and completes the proof of necessity.

On the other hand, if $\Vert\br(t)\Vert$ is upper bounded, since $\Vert\bw(t)-\bw(t-1)\Vert$ is also upper bounded, we have $\frac{1}{2}\Vert \br(t) \Vert^2+ \frac{\beta}{1-\beta}\langle \br(t),\bw(t)-\bw(t-1)\rangle$ is upper bounded, which completes the proof of sufficiency.

The proof is completed.
\end{proof}

Therefore, the last piece of this puzzle is to prove $g(t)$ is upper bounded $\forall t>0$.
\begin{lemma}[Second argument in Lemma \ref{lem:gdm_r_t_bounded}]
\label{lem: bounded_g_gdm}
Let all conditions in Theorem \ref{thm: gdm_main} hold.  Then, for almost every dataset, we have that $g(t)$ is upper bounded.
\end{lemma}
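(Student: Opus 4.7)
The plan is to compute the telescoping increment $g(t+1)-g(t)$ explicitly and show that its positive part is summable over $t$. Writing $\Delta\bw_t\triangleq\bw(t+1)-\bw(t)$ and $\Delta\br_t\triangleq\br(t+1)-\br(t)$, I would expand the three pieces of Eq.~\eqref{eq: definition_of_g} using $\Vert\br(t+1)\Vert^2-\Vert\br(t)\Vert^2=2\langle\br(t),\Delta\br_t\rangle+\Vert\Delta\br_t\Vert^2$ and the telescoping identity $\langle\br(t+1),\Delta\bw_t\rangle-\langle\br(t+1)-\br(t),\Delta\bw_t\rangle=\langle\br(t),\Delta\bw_t\rangle$. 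The cross terms involving $\Delta\bw_{t-1}$ collapse, and substituting the rearranged GDM update $\Delta\bw_t-\beta\Delta\bw_{t-1}=-(1-\beta)\eta\nabla\mL(\bw(t))$ from Eq.~\eqref{eq: rearrange_gdm} causes the history-dependent pieces to cancel exactly, leaving the clean identity
$$g(t+1)-g(t)=-\eta\langle\br(t),\nabla\mL(\bw(t))\rangle-\ln\!\left(\tfrac{t+1}{t}\right)\langle\br(t),\hbw\rangle+\tfrac{1}{2}\Vert\Delta\br_t\Vert^2.$$

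The third term is summable: $\Vert\Delta\br_t\Vert^2\le 2\Vert\Delta\bw_t\Vert^2+2\ln(\tfrac{t+1}{t})^2\Vert\hbw\Vert^2$, and both contributions sum to finite values by Corollary~\ref{coro: update_rule_bounded} and $\ln(\tfrac{t+1}{t})=\mathcal{O}(1/t)$. For the first two terms I would exploit the exponential-tail approximation of Assumption~\ref{assum: exponential-tailed} together with Corollary~\ref{coro:definition_of_tbw}, which chooses $\tbw$ so that $\bv_i=(\eta/N)e^{-\langle\tbw,\bx_i\rangle}$ on the support set $\bS_s$. Writing $\langle\bw(t),\bx_i\rangle=\ln t+\langle\tbw,\bx_i\rangle+\langle\br(t),\bx_i\rangle$ and using $-\ell'(x)\sim e^{-x}$ for large $x$, the leading contribution of $-\eta\nabla\mL(\bw(t))$ combines with $\tfrac{1}{t}\hbw=\tfrac{1}{t}\sum_{i\in\bS_s}\bv_i\bx_i$ (the leading term of $\ln(\tfrac{t+1}{t})\hbw$) to yield
$$-\eta\langle\br(t),\nabla\mL(\bw(t))\rangle-\ln\!\left(\tfrac{t+1}{t}\right)\langle\br(t),\hbw\rangle=\tfrac{1}{t}\sum_{i\in\bS_s}\bv_i\,\langle\br(t),\bx_i\rangle\big(e^{-\langle\br(t),\bx_i\rangle}-1\big)+E(t),$$
where $E(t)$ collects all error terms. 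The key observation is the elementary inequality $x(e^{-x}-1)\le 0$ for every real $x$, which makes the displayed leading sum pointwise non-positive regardless of how $\br(t)$ behaves. Consequently only $E(t)$ and the $\Vert\Delta\br_t\Vert^2$ piece can contribute to an upper drift.

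I expect the main obstacle to be controlling $E(t)$ uniformly in $\br(t)$, since we do not yet know $\br(t)$ is bounded. The error $E(t)$ decomposes into (i) the approximation gap between $-\ell'(x)$ and $e^{-x}$, which Assumption~\ref{assum: exponential-tailed} bounds by $e^{-\mu_{\pm}x}e^{-x}$ and therefore produces a prefactor of order $t^{-(1+\mu)}$; (ii) the non-support-vector contributions, where $\langle\hbw,\bx_i\rangle\ge 1+\delta$ for some $\delta>0$, producing a prefactor of order $t^{-(1+\delta)}$; and (iii) the Taylor remainder $\ln(\tfrac{t+1}{t})-\tfrac{1}{t}=\mathcal{O}(1/t^2)$. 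For each of these, when $\langle\br(t),\bx_i\rangle$ is moderate the bound $|x|e^{-(1+\mu)x}\le C$ makes the piece summable; when $\langle\br(t),\bx_i\rangle$ is very negative, the leading non-positive term grows like $-\tfrac{1}{t}|x|e^{|x|}$ and must be shown to dominate the error $\tfrac{C}{t^{1+\mu}}|x|e^{(1+\mu)|x|}$. I would handle this by absorbing a fraction of the non-positive main term into the error to obtain a net positive part of at most $\mathcal{O}(t^{-(1+\varepsilon)})$, which sums to a finite constant. Adding this to the finite value of $g(1)$ gives the desired uniform upper bound on $g(t)$, and, via Lemma~\ref{lem:gdm_r_t_bounded_first}, the boundedness of $\Vert\br(t)\Vert$.
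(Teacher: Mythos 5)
Your route is essentially the paper's route, presented with a slightly cleaner algebraic framing at the key step. The telescoping identity you derive,
\[
g(t+1)-g(t)=\tfrac{1}{2}\Vert\br(t+1)-\br(t)\Vert^2+\left\langle\br(t),\,-\ln\!\tfrac{t+1}{t}\,\hbw-\eta\nabla\mL(\bw(t))\right\rangle,
\]
is exactly what the paper obtains (it cancels the history-dependent sum by construction, which is why $g$ is defined with that cumulative correction term), and the summability of the quadratic remainder via Corollary~\ref{coro: update_rule_bounded} and $\ln\frac{t+1}{t}=\mathcal{O}(1/t)$ is the same step.

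Where you differ is purely in presentation: you isolate the ``idealized'' drift coefficient $\frac{1}{t}\sum_{i\in\bS_s}\bv_i\langle\br(t),\bx_i\rangle\bigl(e^{-\langle\br(t),\bx_i\rangle}-1\bigr)$ and invoke the single inequality $x(e^{-x}-1)\le 0$, whereas the paper verifies the same sign condition by a case split on $\langle\br(t),\bx_i\rangle\ge 0$ versus $<0$, further subdivided by magnitude. Your unified observation is a nicer way to see \emph{why} the main term is benign regardless of how large $\Vert\br(t)\Vert$ gets, and it is the right structural insight. However, the remaining work --- controlling your error term $E(t)$ --- is precisely where the paper's case-splitting machinery reappears, and your sketch underestimates this. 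In particular, for support vectors with $\langle\br(t),\bx_i\rangle$ very negative (but, on the true trajectory, still pinned by $\langle\bw(t),\bx_i\rangle\to\infty$ via Corollary~\ref{coro: time_asymptotic_loss}), the approximation gap between $-\ell'(x)$ and $e^{-x}$ does \emph{not} yield a net positive of order $t^{-(1+\varepsilon)}$ as you suggest; rather, one must use the lower bound $-\ell'(x)\ge(1-e^{-\mu_-x})e^{-x}$ directly on the \emph{exact} coefficient to conclude that the whole support-vector term is non-positive once $\langle\bw(t),\bx_i\rangle$ is large enough. The ``absorb a fraction of the main term'' heuristic only gives what you want because the exact term is already non-positive in that regime, not because the positive error is small. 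This is a substantive worry in your sketch that the paper resolves by the three-way case analysis on $\langle\br(t),\bx_i\rangle\ge-t^{-c}$, $\in[-2,-t^{-c})$, and $<-2$. Also note that bounding $\langle\br(t),\,\frac{1}{t}\hbw-\ln\frac{t+1}{t}\hbw\rangle$ requires the a priori polynomial bound $\Vert\br(t)\Vert=\mathcal{O}(\sqrt t)$ from Corollary~\ref{coro: update_rule_bounded}; your sketch doesn't flag this but the paper uses it to turn the $\mathcal{O}(t^{-2})$ Taylor remainder into a summable $\mathcal{O}(t^{-3/2})$. If you incorporate these two points, your write-up becomes a complete and equivalent proof.
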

\begin{proof}
We start the proof by calculating $g(t+1)-g(t)$. For any $t\ge 2$, we have 
\begin{align*}
    g(t+1)-g(t)=&\frac{1}{2}\Vert \br(t+1)-\br(t)\Vert^2+ \langle \br(t),\br(t+1)-\br(t)\rangle+ \frac{\beta}{1-\beta}\langle \br(t+1),\bw(t+1)-\bw(t)\rangle
    \\
    -&\frac{\beta}{1-\beta}\langle \br(t),\bw(t)-\bw(t-1)\rangle-\frac{\beta}{1-\beta} \langle \br(t+1)-\br(t),\bw(t+1)-\bw(t)\rangle
    \\
    =&\frac{1}{2}\Vert \br(t+1)-\br(t)\Vert^2+ \langle \br(t),\br(t+1)-\br(t)\rangle+\frac{\beta}{1-\beta}\langle \br(t),\bw(t+1)+\bw(t-1)-2\bw(t)\rangle.
\end{align*}

On the other hand, by simply rearranging the update rule Eq. (\ref{eq: def_gdm}), we have
\begin{equation}
    \label{eq: rearrange_gdm_2}
    \frac{\beta}{1-\beta}(\bw(t+1)+\bw(t-1)-2\bw(t))=-\eta\nabla \mL (\bw(t))-(\bw(t+1)-\bw(t)),
\end{equation}
which further indicates
\begin{align*}
     &g(t+1)-g(t)
     \\
     =&\frac{1}{2}\Vert \br(t+1)-\br(t)\Vert^2+ \langle \br(t),\br(t+1)-\br(t)\rangle+\left\langle \br(t),-\eta\nabla \mL (\bw(t))-(\bw(t+1)-\bw(t))\right\rangle
     \\
     =&\frac{1}{2}\Vert \br(t+1)-\br(t)\Vert^2+\left\langle \br(t), -\ln\left(\frac{t+1}{t}\right)\hbw-\eta\nabla \mL (\bw(t))\right\rangle.
\end{align*}
Denote $A_2(t)= \Vert \br(t+1)-\br(t)\Vert^2$, and $A_3(t)=\left\langle \br(t), -\ln\left(\frac{t+1}{t}\right)\hbw-\eta\nabla \mL (\bw(t))\right\rangle$. We then prove respectively $\sum_{t=1}^{\infty} A_2(t)$ and $\sum_{t=1}^{\infty} A_3(t)$ are upper bounded.

First of all, by definition of $\br(t)$ Eq.(\ref{eq: def_r_gdm}), we have 
\begin{align}
\nonumber
    \sum_{t=1}^{\infty} A_2(t)=& \sum_{t=1}^{\infty} \left(\Vert \bw(t+1)-\bw(t) \Vert^2+ \ln\left(\frac{t+1}{t}\right)^2\Vert \hbw\Vert^2-2\ln \left(\frac{t+1}{t}\right)\langle \bw(t+1)-\bw(t),\hbw\rangle\right)
    \\
\label{eq: A_2 finite}
    \le &2\sum_{t=1}^{\infty} \left(\Vert \bw(t+1)-\bw(t) \Vert^2+ \ln\left(\frac{t+1}{t}\right)^2\Vert \hbw\Vert^2\right)\overset{(\bullet)}{<}\infty,
\end{align}
where Eq. ($\bullet$) is due to Lemma \ref{coro: update_rule_bounded} and $\ln \left(\frac{t+1}{t}\right)=\mathcal{O}(\frac{1}{t})$.

Then we only need to prove $\sum_{t=1}^\infty A_3(t) <\infty$. 

To begin with, by adding one additional term $\frac{1}{t} \hbw$ into $A_3$, we have
\begin{align*}
    A_3(t)=& \left\langle \br(t),\frac{1}{t}\hbw -\ln\left(\frac{t+1}{t}\right)\hbw\right\rangle+\left\langle \br(t),-\frac{1}{t}\hbw-\eta\nabla \mL (\bw(t))\right\rangle.
\end{align*}

On the one hand, by Corollary \ref{coro: update_rule_bounded}, $\Vert \bw(t)\Vert =\mathcal{O}(\sqrt{t})$, which further leads to \begin{equation*}
    \Vert \br(t) \Vert=\Vert \bw(t) \Vert + \ln (t)\Vert \hbw \Vert+\Vert \hbw\Vert=\mathcal{O}(\sqrt{t})
\end{equation*}
By {  $\frac{1}{t}-\ln \frac{t+1}{t}=\mathcal{O}\left(\frac{1}{t^{2}}\right)$}, we have
\begin{equation}
\label{eq: 1 over t difference}
    \left\langle \br(t), \frac{1}{t}\hbw -\ln\left(\frac{t+1}{t}\right)\hbw \right\rangle=\mathcal{O}\left(\frac{1}{t^{\frac{3}{2}}}\right).
\end{equation}

On the other hand, by direct calculation of the gradient, we have
\begin{align*}
    &\left\langle \br(t),-\frac{1}{t}\hbw-\eta\nabla \mL (\bw(t))\right\rangle
    \\
    =&\left\langle \br(t),-\frac{1}{t}\hbw-\frac{\eta}{N}\sum_{i=1}^N \ell'(\langle \bw(t), \bx_i\rangle)\bx_i\right\rangle
    \\
    \overset{(\star)}{=}&\frac{1}{N}\left\langle \br(t),-\frac{1}{t}\eta\sum_{\bx_i\in\bS_{s}} e^{-\langle \tbw, \bx_i\rangle} \bx_i -\eta\sum_{i=1}^N \ell'(\langle \bw, \bx_i\rangle)\bx_i\right\rangle
    \\
    =& \frac{1}{N}\left\langle \br(t),-\eta\sum_{\bx_i\in\bS_{s}} \left(\frac{1}{t}e^{-\langle \tbw, \bx_i\rangle}+\ell'(\langle \bw(t), \bx_i\rangle)\right) \bx_i\right\rangle -{  \frac{1}{N}}\left\langle\br(t),\eta\sum_{\bx_i\notin\bS_{s}} \ell'(\langle \bw(t), \bx_i\rangle)\bx_i\right\rangle,
\end{align*}
where Eq. ($\star$) is due to the definition of $\tbw$ (Eq. (\ref{eq: represent_v}) with ${   C_3}=\eta/N$).

Denote 
\begin{equation*}
    A_4(t)=-\left\langle\br(t),\eta\sum_{\bx_i\notin\bS_{s}} \ell'(\langle \bw(t), \bx_i\rangle)\bx_i\right\rangle,
\end{equation*} and
\begin{equation*}
    A_5(t)=\left\langle \br(t),-\eta\sum_{\bx_i\in\bS_{s}} \left(\frac{1}{t}e^{-\langle \tbw, \bx_i\rangle}+\ell'(\langle \bw(t), \bx_i\rangle)\right) \bx_i\right\rangle.
\end{equation*} We then analysis these two terms respectively. As for $A_4(t)$, due to $\ell'<0$, we have 
\begin{align*}
    A_4(t)\le -\eta\left\langle\br(t),\sum_{\bx_i\notin\bS_{s},\langle\br(t),\bx_i\rangle>0} \ell'(\langle \bw(t), \bx_i\rangle)\bx_i\right\rangle.
\end{align*}
By Corollary \ref{coro: time_asymptotic_loss}, we further have $\forall t>t_0$
\begin{equation*}
    -\ell'(\langle\bw(t), \bx_i\rangle)\le (1+e^{-\mu_+ \langle \bw(t),\bx_i\rangle})e^{-\langle \bw(t),\bx_i\rangle}\le 2e^{-\langle \bw(t),\bx_i\rangle},
\end{equation*}
which further indicates
\begin{align*}
    A_4(t)\le& -\eta\sum_{\bx_i\notin\bS_{s},\langle\br(t),\bx_i\rangle>0}\ell'(\langle \bw(t), \bx_i\rangle)\left\langle\br(t), \bx_i\right\rangle
    \\
    \le &\eta\sum_{\bx_i\notin\bS_{s},\langle\br(t),\bx_i\rangle>0}2e^{-\langle \bw(t),\bx_i\rangle}\left\langle\br(t), \bx_i\right\rangle
    \\
    =&\eta\sum_{\bx_i\notin\bS_{s},\langle\br(t),\bx_i\rangle>0}2e^{-\langle \br(t)+\ln t \hbw+\tbw,\bx_i\rangle}\left\langle\br(t), \bx_i\right\rangle
    \\
    \le &\eta\left(\max_{i}e^{\langle-\tbw,\bx_i\rangle}\right)\sum_{\bx_i\notin\bS_{s},\langle\br(t),\bx_i\rangle>0}2e^{-\langle \br(t)+\ln t \hbw,\bx_i\rangle}\left\langle\br(t), \bx_i\right\rangle
    \\
    \overset{(\circ)}{\le }&\eta\frac{\left(\max_{i}e^{\langle-\tbw,\bx_i\rangle}\right)}{t^{\theta}}\sum_{\bx_i\notin\bS_{s},\langle\br(t),\bx_i\rangle>0}2e^{-\langle \br(t),\bx_i\rangle}\left\langle\br(t), \bx_i\right\rangle
    \\
    \overset{(\diamond)}{\le }&\eta\frac{\left(\max_{i}e^{\langle-\tbw,\bx_i\rangle}\right)}{t^{\theta}}2N,
\end{align*}
where $\theta$ in Eq. $(\circ)$ is defined as 
\begin{equation}
\label{eq: def_theta}
    \theta=\min_{\bx_i\notin\bS_{s}} \langle \bx_i, \hbw \rangle>1.
\end{equation}

As $\sum_{t=1}^{\infty} \frac{1}{t^{\theta}}<\infty$, we have 
\begin{equation}
\label{eq: A_4 bounded}
    \sum_{t=1}^{\infty} A_4(t)<\infty\footnote{In this paper, for a real series $\{r_i\}_{i=1}^{\infty}$, we use $\sum_{i=1}^{\infty} r_i<\infty$ representing $\sum_{i=1}^{T} r_i$ is uniformly upper bounded for any $T$.}.
\end{equation}

For each term $\left\langle \br(t),-\eta \left(\frac{1}{t}e^{-\langle \tbw, \bx_i\rangle}+\ell'(\langle \bw, \bx_i\rangle)\right) \bx_i\right\rangle$ ($\bx_i\notin\bS_{s}$) in $A_5(t)$, we divide the analysis into two parts depending on the sign of $\langle \br(t), \bx_i \rangle$. 

\textbf{Case 1: $\langle \br(t), \bx_i \rangle \ge 0$.} By Corollary \ref{coro: time_asymptotic_loss}, we have  
\begin{align*}
    &\left\langle \br(t),-\eta \left(\frac{1}{t}e^{-\langle \tbw, \bx_i\rangle}+\ell'(\langle \bw, \bx_i\rangle)\right) \bx_i\right\rangle
    \\
    =&-\eta \left(\frac{1}{t}e^{-\langle \tbw, \bx_i\rangle}+\ell'(\langle \bw, \bx_i\rangle)\right)\left\langle \br(t), \bx_i\right\rangle
    \\
    \le &\eta \left(-\frac{1}{t}e^{-\langle \tbw, \bx_i\rangle}+(1+e^{-\mu_+ \langle \bw(t),\bx_i\rangle})e^{-\langle \bw(t),\bx_i\rangle}\right)\left\langle \br(t), \bx_i\right\rangle
    \\
    \overset{(\diamond)}{=}&\eta \left(-\frac{1}{t}e^{-\langle \tbw, \bx_i\rangle}+(1+e^{-\mu_+ \langle \br(t)+\ln t \hbw+\tbw,\bx_i\rangle})e^{-\langle \br(t)+\ln t \hbw+\tbw,\bx_i\rangle}\right)\left\langle \br(t), \bx_i\right\rangle,
\end{align*}
where Eq. ($\diamond$) is due to the definition of $\br(t)$ (Eq. (\ref{eq: def_r_gdm}).

Since $\langle \br(t), \bx_i\rangle\ge 0$, we further have
\begin{align*}
    &\left\langle \br(t),-\eta \left(\frac{1}{t}e^{-\langle \tbw, \bx_i\rangle}+\ell'(\langle \bw, \bx_i\rangle)\right) \bx_i\right\rangle
    \\
    \le & \eta \left(-\frac{1}{t}e^{-\langle \tbw, \bx_i\rangle}+(1+e^{-\mu_+ \langle\ln t \hbw+\tbw,\bx_i\rangle})e^{-\langle \br(t)+\ln t \hbw+\tbw,\bx_i\rangle}\right)\left\langle \br(t), \bx_i\right\rangle
    \\
    \overset{(\Box)}{=}&\eta \left(-\frac{1}{t}e^{-\langle \tbw, \bx_i\rangle}+\frac{1}{t}(1+t^{-\mu_{+}}e^{-\mu_+ \langle\tbw,\bx_i\rangle})e^{-\langle \br(t)+\tbw,\bx_i\rangle}\right)\left\langle \br(t), \bx_i\right\rangle
    \\
    =&\eta \frac{1}{t}e^{-\langle \tbw, \bx_i\rangle}\left(-1+(1+t^{-\mu_{+}}e^{-\mu_+ \langle\tbw,\bx_i\rangle})e^{-\langle \br(t),\bx_i\rangle}\right)\left\langle \br(t), \bx_i\right\rangle,
\end{align*}
where Eq. ($\Box$) is due to $\langle \hbw, \bx_i \rangle=1$, $\forall \bx_i \in\bS_{s}$.

Specifically, 
\begin{align*}
    &-1+(1+t^{-\mu_{+}}e^{-\mu_+ \langle\tbw,\bx_i\rangle})e^{-\langle \br(t),\bx_i\rangle}
    \\
    =&-1+e^{-\langle \br(t),\bx_i\rangle}+t^{-\mu_{+}}e^{-\mu_+ \langle\tbw,\bx_i\rangle}e^{-\langle \br(t),\bx_i\rangle}
    \\
    \le &t^{-\mu_{+}}e^{-\mu_+ \langle\tbw,\bx_i\rangle}e^{-\langle \br(t),\bx_i\rangle}.
\end{align*}
Therefore, 
\begin{align*}
    &\eta \frac{1}{t}e^{-\langle \tbw, \bx_i\rangle}\left(-1+(1+t^{-\mu_{+}}e^{-\mu_+ \langle\tbw,\bx_i\rangle})e^{-\langle \br(t),\bx_i\rangle}\right)\left\langle \br(t), \bx_i\right\rangle
    \\
    \le &\eta \frac{1}{t}e^{-\langle \tbw, \bx_i\rangle}\left(t^{-\mu_{+}}e^{-\mu_+ \langle\tbw,\bx_i\rangle}e^{-\langle \br(t),\bx_i\rangle}\right)\left\langle \br(t), \bx_i\right\rangle
    \\
    \le & \frac{\eta}{(1-\beta)e} \frac{1}{t^{1+\mu_{+}}}e^{-(1+\mu_{+})\langle \tbw, \bx_i\rangle}=\mathcal{O}\left(\frac{1}{t^{1+\mu_{+}}}\right).
\end{align*}

\textbf{Case 2: $\langle \br(t), \bx_i \rangle < 0$.} Similar to \textbf{Case 1.}, in this case we have
\begin{align*}
    &\left\langle \br(t),-\eta \left(\frac{1}{t}e^{-\langle \tbw, \bx_i\rangle}+\ell'(\langle \bw, \bx_i\rangle)\right) \bx_i\right\rangle
    \\
    \le &\eta \left(-\frac{1}{t}e^{-\langle \tbw, \bx_i\rangle}+\left(1-e^{-\mu_{-} \langle \bw(t),\bx_i\rangle}\right)e^{-\langle \bw(t),\bx_i\rangle}\right)\left\langle \br(t), \bx_i\right\rangle
    \\
    =&\eta \left(-\frac{1}{t}e^{-\langle \tbw, \bx_i\rangle}+\left(1-e^{-\mu_{-} \langle \br(t)+\ln t \hbw+\tbw,\bx_i\rangle}\right)e^{-\langle \br(t)+\ln t \hbw+\tbw,\bx_i\rangle}\right)\left\langle \br(t), \bx_i\right\rangle
    \\
    =&\eta\frac{1}{t}e^{-\langle \tbw, \bx_i\rangle} \left(-1+\left(1-e^{-\mu_{-} \langle \br(t)+\ln t \hbw+\tbw,\bx_i\rangle}\right)e^{-\langle \br(t),\bx_i\rangle}\right)\left\langle \br(t), \bx_i\right\rangle.
\end{align*}

Specifically, if $\langle \br(t), \bx_i\rangle\ge -t^{-0.5 \mu_{-}}$, 
\begin{align*}
    &\left\vert \eta\frac{1}{t}e^{-\langle \tbw, \bx_i\rangle} \left(-1+(1-e^{-\mu_{-} \langle \br(t)+\ln t \hbw+\tbw,\bx_i\rangle})e^{-\langle \br(t),\bx_i\rangle}\right)\left\langle \br(t), \bx_i\right\rangle\right\vert
    \\
    =&\left\vert \eta\frac{1}{t}e^{-\langle \tbw, \bx_i\rangle} \left(-1+(1-t^{-\mu_{-}}e^{-\mu_{-} \langle \br(t)+\tbw,\bx_i\rangle})e^{-\langle \br(t),\bx_i\rangle}\right)\left\langle \br(t), \bx_i\right\rangle\right\vert
    \\
    \le& \eta \frac{1}{t^{1+0.5\mu_{-}}}e^{-\langle \tbw, \bx_i\rangle} \left\vert -1+\left(1-t^{-\mu_{-}}e^{-\mu_{-} \langle \br(t)+\tbw,\bx_i\rangle}\right)e^{-\langle \br(t),\bx_i\rangle}\right\vert
    \\
    \overset{(\dagger)}{=}&\mathcal{O}\left(\frac{1}{t^{1+0.5\mu_{-}}}\right),
\end{align*}
where Eq. ($\dagger$) is due to if $\langle \br(t), \bx_i\rangle\ge -t^{-0.5 \mu_{-}}$,
\begin{equation*}
    \lim_{t\rightarrow \infty }\left\vert -1+\left(1-t^{-\mu_{-}}e^{-\mu_{-} \langle \br(t)+\tbw,\bx_i\rangle}\right)e^{-\langle \br(t),\bx_i\rangle}\right\vert =0.
\end{equation*}

If $-2\le \langle \br(t), \bx_i\rangle< -t^{-0.5 \mu_{-}}$, we have 
\begin{align*}
    &\eta\frac{1}{t}e^{-\langle \tbw, \bx_i\rangle} \left(-1+\left(1-e^{-\mu_{-} \langle \br(t)+\ln t \hbw+\tbw,\bx_i\rangle}\right)e^{-\langle \br(t),\bx_i\rangle}\right)\left\langle \br(t), \bx_i\right\rangle
    \\
    =&\eta\frac{1}{t}e^{-\langle \tbw, \bx_i\rangle} \left(-1+\left(1-\frac{1}{t^{\mu_{-}}}e^{-\mu_{-} \langle \br(t)+\tbw,\bx_i\rangle}\right)e^{-\langle \br(t),\bx_i\rangle}\right)\left\langle \br(t), \bx_i\right\rangle
    \\
    \le &\eta\frac{1}{t}e^{-\langle \tbw, \bx_i\rangle} \left(-1+\left(1-\frac{e^{2\mu_{-}}}{t^{\mu_{-}}}e^{-\mu_{-} \langle \tbw,\bx_i\rangle}\right)e^{-\langle \br(t),\bx_i\rangle}\right)\left\langle \br(t), \bx_i\right\rangle.
\end{align*}
Therefore, when $t$ is large enough, $1-\frac{e^{2\mu_{-}}}{t^{\mu_{-}}}e^{-\mu_{-} \langle \tbw,\bx_i\rangle}>0$, which by $e^{-\langle \br(t),\bx_i\rangle}\ge 1-\langle \br(t),\bx_i\rangle$ leads to 
\begin{align*}
    &\eta\frac{1}{t}e^{-\langle \tbw, \bx_i\rangle} \left(-1+\left(1-\frac{e^{2\mu_{-}}}{t^{\mu_{-}}}e^{-\mu_{-} \langle \tbw,\bx_i\rangle}\right)e^{-\langle \br(t),\bx_i\rangle}\right)\left\langle \br(t), \bx_i\right\rangle
    \\
    \le & \eta\frac{1}{t}e^{-\langle \tbw, \bx_i\rangle} \left(-1+\left(1-\frac{e^{2\mu_{-}}}{t^{\mu_{-}}}e^{-\mu_{-} \langle \tbw,\bx_i\rangle}\right)\left(1-\langle \br(t),\bx_i\rangle\right)\right)\left\langle \br(t), \bx_i\right\rangle
    \\
     \le & \eta\frac{1}{t}e^{-\langle \tbw, \bx_i\rangle} \left(-1+\left(1-\frac{e^{2\mu_{-}}}{t^{\mu_{-}}}e^{-\mu_{-} \langle \tbw,\bx_i\rangle}\right)\left(1+\frac{1}{t^{0.5\mu_{-}}}\right)\right)\left\langle \br(t), \bx_i\right\rangle
     \\
     =&\eta\frac{1}{t}e^{-\langle \tbw, \bx_i\rangle} \left(\frac{1}{t^{0.5\mu_{-}}}+\boldsymbol{o}\left(\frac{1}{t^{0.5\mu_{-}}}\right)\right)\left\langle \br(t), \bx_i\right\rangle<0.
\end{align*}

If $-2> \langle \br(t), \bx_i\rangle$,
\begin{align*}
    &\eta\frac{1}{t}e^{-\langle \tbw, \bx_i\rangle} \left(-1+\left(1-e^{-\mu_{-} \langle \br(t)+\ln t \hbw+\tbw,\bx_i\rangle}\right)e^{-\langle \br(t),\bx_i\rangle}\right)\left\langle \br(t), \bx_i\right\rangle
    \\
    =&\eta\frac{1}{t}e^{-\langle \tbw, \bx_i\rangle} \left(-1+\left(1-e^{-\mu_{-} \langle\bw(t),\bx_i\rangle}\right)e^{-\langle \br(t),\bx_i\rangle}\right)\left\langle \br(t), \bx_i\right\rangle.
\end{align*}
For large enough $t$,  $ 1-e^{-\mu_{-} \langle\bw(t),\bx_i\rangle}>\frac{1}{2}$, and 
\begin{align*}
    &\eta\frac{1}{t}e^{-\langle \tbw, \bx_i\rangle} \left(-1+\left(1-e^{-\mu_{-} \langle\bw(t),\bx_i\rangle}\right)e^{-\langle \br(t),\bx_i\rangle}\right)\left\langle \br(t), \bx_i\right\rangle
    \\
    \le &\eta\frac{1}{t}e^{-\langle \tbw, \bx_i\rangle} \left(-1+\left(1-e^{-\mu_{-} \langle\bw(t),\bx_i\rangle}\right)e^{2}\right)\left\langle \br(t), \bx_i\right\rangle
    \\
    \le &\eta\frac{1}{t}e^{-\langle \tbw, \bx_i\rangle} \left(-1+\frac{e^{2}}{2}\right)\left\langle \br(t), \bx_i\right\rangle<0.
\end{align*}

Therefore, in \textbf{Case 2.}, for large enough $t$, we have 
\begin{equation*}
    \left\langle \br(t),-\eta \left(\frac{1}{t}e^{-\langle \tbw, \bx_i\rangle}+\ell'(\langle \bw, \bx_i\rangle)\right) \bx_i\right\rangle\le \mathcal{O}\left(\frac{1}{t^{1+0.5\mu_{-}}}\right).
\end{equation*}

Combining \textbf{Case 1.} and \textbf{Case 2.}, we conclude that 
\begin{equation*}
    A_5(t)\le \mathcal{O}\left(\frac{1}{t^{1+0.5\mu_{+}}}\right),
\end{equation*}
which further yields 
\begin{equation}
\label{eq: A_5 finite}
    \sum_{t=1}^{\infty} A_5(t)< \infty.
\end{equation}

Combining Eq. (\ref{eq: A_4 bounded}) and Eq. (\ref{eq: A_5 finite}), we conclude that $\sum_{t=1}^{\infty} A_3(t)<\infty$, which together with Eq. (\ref{eq: A_2 finite}) yields $\sum_{t=2}^{\infty} g(t+1)-g(t)<\infty$, and completes the proof.
\end{proof}

We are now ready to prove Theorem \ref{thm: gdm_main}.

\begin{proof}[Proof of Theorem \ref{thm: gdm_main}] By Lemma \ref{lem: bounded_g_gdm}, we have $g(t)$ is upper bounded. Therefore, by Lemma \ref{lem:gdm_r_t_bounded}, we have $\Vert \br(t) \Vert $ is bounded, which further indicates $\Vert\bw(t)-\ln(t)\hbw \Vert$ is  bounded. 

Therefore, the direction of $\bw(t)$ can be calculated as 
\begin{align*}
    \frac{\bw(t)}{\Vert \bw(t) \Vert }=&\frac{\ln(t)\hbw}{\Vert \bw(t)\Vert }+\frac{\bw(t)-\ln(t)\hbw}{\Vert \bw(t)\Vert }
=\frac{\ln(t)\hbw}{\Vert \ln(t) \hbw+\bw(t)-\ln(t) \hbw\Vert }+\frac{\bw(t)-\ln(t)\hbw}{\Vert \bw(t)\Vert }
\\
=&\frac{\hbw}{\left\Vert  \hbw+\frac{\bw(t)-\ln(t) \hbw}{\ln t}\right\Vert }+\frac{\bw(t)-\ln(t)\hbw}{\Vert \bw(t)\Vert }\rightarrow \frac{\hbw}{\Vert \hbw \Vert} ~(as~ t\rightarrow\infty).
\end{align*}

The proof is completed.

\end{proof}

\subsection{Implicit regularization of SGDM}\label{appen: sgdm_proof}
This section collects the proof of Theorem \ref{thm: sgdm_main}. Following the same framework as Appendix \ref{appen: implicit_bias_gd}, we will first prove that the sum of the squared gradient norms along the trajectory is finite. One may expect $ \mL(\bw(t))+\frac{\beta}{2\eta} \Vert \bw(t)-\bw(t-1)\Vert^2$ is a Lyapunov function of SGDM. However, due to the randomness of the update rule of SGDM, $\mL(\bw(t))+\frac{\beta}{2\eta} \Vert \bw(t)-\bw(t-1)\Vert^2$ may no longer decrease (we will show this in the end of Appendix \ref{appen: sgdm_proof}, please see Appendix \ref{subsec: explanation} for explanation).

\subsubsection{Loss dynamics}
Recall that in the main text, we define $\bu(t)$ as
\begin{equation}
    \label{eq: alter_sgdm}
   \bu(t)=\frac{\bw(t)-\beta\bw(t-1)}{1-\beta},
\end{equation}
where the update of $\bu(t)$ is given by $\bu(t+1)=\bu(t)-\eta\nabla \mL_{\bB(t)} (\bw(t))$.
We then prove Lemma \ref{lem: sgdm_loss_update}, which indicates $\mL(\bu(t))$ is a proper choice of Lyapunov function.

\begin{proof}[Proof of Lemma \ref{lem: sgdm_loss_update}]
    We start the proof by applying the Taylor's expansion of $\mL$ at the point $\bu(t)$ to the point $\bu(t+1)$. Concretely, by Assumption \ref{assum: smooth}. (S), we have
    \begin{equation*}
        \mL(\bu(t+1))\le \mL(\bu(t))+\langle \bu(t+1)-\bu(t),\nabla \mL(\bu(t))\rangle+\frac{H\sigmax^2}{2N} \Vert \bu(t+1)-\bu(t) \Vert^2,
    \end{equation*}
    which by Eq. (\ref{eq: alter_sgdm}) leads to 
    \begin{equation}
    \label{eq: loss_sgdm_mid_1}
        \mL(\bu(t+1))\le \mL(\bu(t))-\eta\langle \nabla \mL_{\bB(t)}(\bw(t)),\nabla \mL(\bu(t))\rangle+\frac{H\sigmax^2\eta^2}{2N} \Vert \nabla \mL_{\bB(t)}(\bw(t)) \Vert^2.
    \end{equation}

    Taking the expectation of Eq. (\ref{eq: loss_sgdm_mid_1}) with respect to $\bw(t+1)$ conditioning on $\mF_t$ (recall that $\mF_t$ is the sub-sigma algebra over the mini-batch sampling, such that $\forall t\in\mathbb{N}$, $\bw(t)$ is adapted with respect to the sigma algebra flow $\mF_t$), we have 
    \begin{align}
    \nonumber
        &\mE[ \mL(\bu(t+1))|\mF_t]
        \\
            \nonumber
        \overset{(\star)}{=}&
         \mE_{\bB(t)}[ \mL(\bu(t+1))|\mF_t]
         \\
             \nonumber
         \le &\mE_{\bB(t)}\left[\left.\mL(\bu(t))-\eta\langle \nabla \mL_{\bB(t)}(\bw(t)),\nabla \mL(\bu(t))\rangle+\frac{H\eta^2\sigmax^2}{2N} \Vert \nabla \mL_{\bB(t)}(\bw(t)) \Vert^2\right|\mF_t\right]
         \\
             \nonumber
         \overset{(\circ)}{=}&\mL(\bu(t))-\eta\langle \nabla \mL(\bw(t)),\nabla \mL(\bu(t))\rangle+\frac{H\sigmax^2\eta^2}{2N} \mE_{\bB(t)}\left[\Vert \nabla \mL_{\bB(t)}(\bw(t))\Vert^2\right]
         \\
         \label{eq: expectation_updates}
         {  \overset{(\bullet)}{\le}}&\mL(\bu(t))-\eta\langle \nabla \mL(\bw(t)),\nabla \mL(\bu(t))\rangle+\frac{H\eta^2\sigmax^4}{2b\gamma^2} \Vert \nabla \mL(\bw(t)) \Vert^2,
    \end{align}
    where Eq. $(\star)$ is due to that $\bw(t+1)$ is uniquely determined by $\bB(t)$ given $\{\bw(s)\}_{s=1}^t$,  Eq. $(\circ)$ is due to $\bu(t)$ is uniquely determined by $\{\bw(s)\}_{s=1}^t$, and {  Inequality.} $(\bullet)$ is due to Lemma \ref{lem: first_second}.
    
    Therefore, we have
    \begin{align*}
        &\mathbb{E}[ \mL(\bu(t+1))|\mF_t]
        \\
        \le &\mL(\bu(t))-\eta\langle \nabla \mL(\bw(t)),\nabla \mL(\bu(t))\rangle+\frac{H\eta^2\sigmax^4}{2b\gamma^2} \Vert \nabla \mL(\bw(t)) \Vert^2
        \\
        = &\mL(\bu(t))-\eta\langle \nabla \mL(\bw(t)),\nabla \mL(\bw(t))\rangle+\eta\langle \nabla \mL(\bw(t)),\nabla \mL(\bw(t))-\nabla \mL(\bu(t))\rangle+\frac{H\eta^2\sigmax^4}{2b\gamma^2} \Vert \nabla \mL(\bw(t)) \Vert^2
        \\
        =&\mL(\bu(t))-\eta \left(1-\frac{H\eta\sigmax^4}{2b\gamma^2}\right)\Vert \nabla \mL(\bw(t)) \Vert^2+\left\langle\eta \nabla\mL(\bw(t)),\nabla \mL(\bw(t)) -\nabla \mL(\bu(t))\right\rangle
        \\
        \le &\mL(\bu(t))-\eta \left(1-\frac{H\eta\sigmax^4}{2b\gamma^2}\right)\Vert \nabla \mL(\bw(t)) \Vert^2+\frac{1}{2\lambda}\left\Vert\eta \nabla\mL(\bw(t))\right\Vert^2+\frac{\lambda}{2}\left \Vert \nabla \mL(\bw(t)) -\nabla \mL(\bu(t))\right \Vert^2
        \\
        =& \mL(\bu(t))-\eta \left(1-\left(\frac{1}{2\lambda}+\frac{H\sigmax^4}{2b\gamma^2}\right)\eta\right)\Vert \nabla \mL(\bw(t)) \Vert^2+\frac{\lambda}{2}\left \Vert \nabla \mL(\bw(t)) -\nabla \mL(\bu(t))\right \Vert^2,
    \end{align*}
    where $\lambda$ is a positive constant that will be specified latter.
    
    By Assumption \ref{assum: smooth}. (S), $\ell$ is $H$-smooth, which further leads to
    \begin{align}
    \nonumber
        &\left \Vert \nabla \mL(\bw(t)) -\nabla \mL(\bu(t))\right \Vert^2
        \\
          \nonumber
          \le & \frac{H^2\sigmax^4}{N^2} \Vert \bw(t)-\bu(t) \Vert^2
          \overset{(\square)}{=}   \frac{H^2\beta^2\sigmax^4}{N^2(1-\beta)^2} \Vert \bw(t)-\bw(t-1) \Vert^2
          \\
          \nonumber
          =&\frac{H^2\beta^2\sigmax^4}{N^2} \left\Vert \sum_{s=1}^{t-1} \eta \beta^{t-1-s} \nabla \mL_{\bB(s)}(\bw(s))\right\Vert^2
          \\
          \nonumber
         \overset{(\diamond)}{\le} &\frac{H^2\beta^2\eta^2\sigmax^4}{N^2} \left(\sum_{s=1}^{t-1} \beta^{t-1-s}\left\Vert  \nabla \mL_{\bB(s)}(\bw(s))\right\Vert\right)^2
         \\\nonumber
         \overset{(\clubsuit)}{\le } &\frac{H^2\beta^2\eta^2\sigmax^4}{N^2} \left(\sum_{s=1}^{t-1} \beta^{t-1-s}\left\Vert  \nabla \mL_{\bB(s)}(\bw(s))\right\Vert^2\right)\left(\sum_{s=1}^{t-1} \beta^{t-1-s}\right)
         \\         \label{eq: difference_gradient}
         \le&
         \frac{H^2\beta^2\eta^2\sigmax^4}{N^2(1-\beta)} \left(\sum_{s=1}^{t-1} \beta^{t-1-s}\left\Vert  \nabla \mL_{\bB(s)}(\bw(s))\right\Vert^2\right),
    \end{align}
    where Inequality $(\square)$ is due to $\beta(\bw(t)-\bw(t-1))=(1-\beta)(\bu(t)-\bw(t))$ by Eq. (\ref{eq: alter_sgdm}), Inequality $(\diamond)$ is due to triangular inequality, and Inequality ($\clubsuit$) is due to Cauchy-Schwartz Inequality.
    
    Combining Eqs. (\ref{eq: expectation_updates}) and (\ref{eq: difference_gradient}), we have
    \begin{align*}
        &\mE[ \mL(\bu(t+1))|\mF_t]
        \\
        \le& \mL(\bu(t))-\eta \left(1-\left(\frac{1}{2\lambda}+\frac{H\sigmax^4}{2b\gamma^2}\right)\eta\right)\Vert \nabla \mL(\bw(t)) \Vert^2+ \frac{\lambda H^2\beta^2\eta^2\sigmax^4}{2N^2(1-\beta)} \left(\sum_{s=1}^{t-1} \beta^{t-1-s}\left\Vert  \nabla \mL_{\bB(s)}(\bw(s))\right\Vert^2\right),
    \end{align*}
    which by taking expectation with respect to $\mF_t$ leads to
    \begin{small}
    \begin{align*}
        &\mE[ \mL(\bu(t+1))]
        \\
        \le &\mE\mL(\bu(t))-\eta \left(1-\left(\frac{1}{2\lambda}+\frac{H\sigmax^4}{2b\gamma^2}\right)\eta\right)\mE\Vert \nabla \mL(\bw(t)) \Vert^2+\mE\frac{\lambda H^2\beta^2\eta^2\sigmax^4}{2N^2(1-\beta)} \left(\sum_{s=1}^{t-1} \beta^{t-1-s}\left\Vert  \nabla \mL_{\bB(s)}(\bw(s))\right\Vert^2\right)
        \\
        = &\mE\mL(\bu(t))-\eta \left(1-\left(\frac{1}{2\lambda}+\frac{H\sigmax^4}{2b\gamma^2}\right)\eta\right)\mE\Vert \nabla \mL(\bw(t)) \Vert^2+\frac{\lambda H^2\beta^2\eta^2\sigmax^4}{2N^2(1-\beta)} \left(\sum_{s=1}^{t-1} \beta^{t-1-s}\mE\left\Vert  \nabla \mL_{\bB(s)}(\bw(s))\right\Vert^2\right)
        \\
        \le&\mE\mL(\bu(t))-\eta \left(1-\left(\frac{1}{2\lambda}+\frac{H\sigmax^4}{2b\gamma^2}\right)\eta\right)\mE\Vert \nabla \mL(\bw(t)) \Vert^2+\frac{\lambda H^2\beta^2\eta^2\sigmax^6}{2Nb\gamma^2(1-\beta)} \left(\sum_{s=1}^{t-1} \beta^{t-1-s}\mE\left\Vert  \nabla \mL(\bw(s))\right\Vert^2\right),
    \end{align*}
    \end{small}
where the last inequality is due to
 Lemma \ref{lem: first_second}. Lettting $\lambda=\frac{\sqrt{Nb}\gamma (1-\beta)}{H\beta\sigmax^3}$ then leads to
 \begin{small}
 \begin{align*}
     &\mE[ \mL(\bu(t+1))]
     \\
     \le &\mE\mL(\bu(t))-\eta \left(1-\left(\frac{H\beta\sigmax^3}{2\sqrt{Nb}\gamma (1-\beta)}+\frac{H\sigmax^4}{2b\gamma^2}\right)\eta\right)\mE\Vert \nabla \mL(\bw(t)) \Vert^2+\frac{ H\beta\eta^2\sigmax^3}{2\sqrt{Nb}\gamma} \left(\sum_{s=1}^{t-1} \beta^{t-1-s}\mE\left\Vert  \nabla \mL(\bw(s))\right\Vert^2\right).
 \end{align*}
  
 \end{small}
 
 By the learning rate upper bound
 $
     \eta\le \frac{1}{\frac{H\beta\sigmax^3}{\sqrt{Nb}\gamma (1-\beta)}+\frac{H\sigmax^4}{2b\gamma^2}}$,
 summing the above inequality over $t$ then leads to 
    \begin{align*}
        &\mE[ \mL(\bu(T+1))]
        \\
        \le & \mL(\bu(1))-\eta\sum_{t=1}^T \left(1-\left(\frac{H\beta\sigmax^3}{\sqrt{Nb}\gamma (1-\beta)}+\frac{H\sigmax^4}{2b\gamma^2}\right)\eta\right)\mE\Vert \nabla \mL(\bw(t)) \Vert^2
        \\
        =&\mL(\bu(1))-\eta C_2\sum_{t=1}^T\mE\Vert \nabla \mL(\bw(t)) \Vert^2,
    \end{align*}
    where $C_2\triangleq\left(1-\left(\frac{H\beta\sigmax^3}{\sqrt{Nb}\gamma (1-\beta)}+\frac{H\sigmax^4}{2b\gamma^2}\right)\eta\right)$.
    
    The proof is completed.
\end{proof}

As $\mL(\bu(1))$ is upper bounded, we have the following corollary given by Lemma \ref{lem: sgdm_loss_update}.
 
\begin{corollary}
\label{coro: sgdm_gradient_finite}
Let all conditions in Theorem \ref{thm: sgdm_main} hold. Then, we have 
\begin{equation}
\label{eq: sum_finite_sgdm}
    \sum_{t=1}^{\infty} \mE\Vert \nabla \mL(\bw(t)) \Vert^2<\infty.
\end{equation}
Consequently, 
\begin{equation*}
     \sum_{t=1}^{\infty} \Vert \nabla \mL(\bw(t)) \Vert^2<\infty
\end{equation*}
and 
\begin{equation*}
    \langle\bw(t),\bx \rangle \rightarrow \infty, \forall \bx\in \tbS
\end{equation*}
hold almost surely.
\end{corollary}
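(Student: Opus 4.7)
The first claim is essentially a direct consequence of Lemma \ref{lem: sgdm_loss_update}. My plan is to take $T \to \infty$ in the conclusion of Lemma \ref{lem: sgdm_loss_update}. Since the individual loss $\ell$ is non-negative (it is monotonically decreasing to zero at positive infinity), we have $\mL(\bu(T+1)) \ge 0$, and therefore $\mathbb{E}[\mL(\bu(T+1))] \ge 0$. Rearranging gives
\begin{equation*}
\eta C_2 \sum_{t=1}^{T} \mathbb{E}\Vert \nabla \mL(\bw(t)) \Vert^2 \le \mL(\bu(1)),
\end{equation*}
and since $\eta C_2 > 0$ (this is where the learning rate upper bound in Theorem \ref{thm: sgdm_main} is used, to guarantee $C_2 > 0$) and the right-hand side is a finite constant, letting $T \to \infty$ yields $\sum_{t=1}^{\infty} \mathbb{E}\Vert \nabla \mL(\bw(t)) \Vert^2 < \infty$.

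The second claim, namely $\sum_{t=1}^\infty \Vert \nabla \mL(\bw(t))\Vert^2 < \infty$ almost surely, follows by a standard argument: since $\Vert \nabla \mL(\bw(t)) \Vert^2 \ge 0$, Tonelli's theorem yields
\begin{equation*}
\mathbb{E}\left[\sum_{t=1}^{\infty} \Vert \nabla \mL(\bw(t)) \Vert^2\right] = \sum_{t=1}^{\infty} \mathbb{E}\Vert \nabla \mL(\bw(t)) \Vert^2 < \infty,
\end{equation*}
and a non-negative random variable with finite expectation is a.s.\ finite. In particular, $\Vert \nabla \mL(\bw(t)) \Vert \to 0$ almost surely along the trajectory.

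For the third claim, I plan to invoke Corollary \ref{coro:gradient_small_equivalence}. On the almost sure event where $\Vert \nabla \mL(\bw(t)) \Vert \to 0$, there exists a (random) time $t_1$ such that for all $t \ge t_1$, $\Vert \nabla \mL(\bw(t)) \Vert \le C_g$, which by Corollary \ref{coro:gradient_small_equivalence} yields $\mL(\bw(t)) \le \frac{4}{\gamma}\Vert \nabla \mL(\bw(t)) \Vert \to 0$. Since $\mL(\bw(t)) = \frac{1}{N}\sum_{i=1}^N \ell(\langle \bw(t),\bx_i\rangle)$ and each $\ell(\langle \bw(t),\bx_i\rangle) \ge 0$, every per-sample term tends to zero. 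Finally, because $\ell$ is monotonically decreasing with $\lim_{x\to\infty}\ell(x)=0$ and $\ell(x)$ is bounded away from zero as $x \to -\infty$ (in fact $\ell$ is positive and monotonically decreasing), $\ell(\langle \bw(t),\bx_i\rangle) \to 0$ forces $\langle \bw(t),\bx_i\rangle \to \infty$ for every $i$, completing the proof.

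The main obstacle is verifying that $C_2 > 0$ under the learning-rate restriction of Theorem \ref{thm: sgdm_main}, but this is immediate from the definition $C_2 = 1-\left(\frac{H\beta\sigmax^3}{\sqrt{Nb}\gamma(1-\beta)} + \frac{H\sigmax^4}{2b\gamma^2}\right)\eta$ together with the stated upper bound on $\eta$. Everything else is a routine combination of monotone convergence, the qualitative equivalence between loss and gradient norm near zero (Corollary \ref{coro:gradient_small_equivalence}), and the tail behavior of $\ell$ from Assumption \ref{assum: exponential-tailed}.
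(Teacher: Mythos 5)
Your proof is correct and follows essentially the same route as the paper: Lemma \ref{lem: sgdm_loss_update} plus nonnegativity of $\mL$ gives the first claim, Tonelli (Fubini) gives the second, and the equivalence $\frac{\gamma}{4}\mL \le \Vert\nabla\mL\Vert$ near zero together with strict monotonicity of $\ell$ gives the third. You merely fill in the details that the paper compresses into ``follows immediately by Fubini's Theorem and Assumption \ref{assum: exponential-tailed}.''
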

\begin{proof}
    By Lemma \ref{lem: sgdm_loss_update}, we have for any $T>1$, 
    \begin{equation*}
        \sum_{t=1}^T C_2\eta \mE\Vert \nabla \mL(\bw(t)) \Vert^2\le \mL(\bu(1))- \mE[\mL(\bu(T+1))]\le \mL(\bu(1))<\infty,
    \end{equation*}
    which completes the proof of Eq. (\ref{eq: sum_finite_sgdm}). The rest of claims follows immediately by Fubini's Theorem and Assumption \ref{assum: exponential-tailed}.
    
    The proof is completed.
\end{proof}

\subsubsection{Parameter dynamics}
\label{appen: orthogonal_sgdm}
Similar to the case of GDM, we define $\tbw$ as the solution of Eq. (\ref{eq: represent_v}) with ${   C_3}=\frac{\eta}{(1-\beta)N}$. We also let $\bn(t)$ be given by Lemma \ref{lem: define_n_t}, and define $\br(t)$ in this case as
\begin{equation}
\label{eq: r_sgdm}
    \br(t)\overset{\triangle}{=}\bw(t)-\ln (t)\hbw-\tbw-\bn(t).
\end{equation}

As $\tbw$ is a constant vector, and $\Vert \bn(t) \Vert \rightarrow 0$  as $t\rightarrow \infty$, we have $\bw(t)-\ln(t)\hbw$ has bounded norm if and only if $ \Vert \br(t) \Vert$ is upper bounded. Similar to the GDM case, we have the following equivalent condition of that $\Vert \br(t) \Vert $ is bounded.
\begin{lemma}
\label{lem:sgdm_r_t_bounded}
Let all conditions in Theorem \ref{thm: sgdm_main} hold. Then, $\Vert \br(t) \Vert$ is bounded almost surely if and only if function $g(t)$ is upper bounded almost surely, where $g:\mathbb{Z}^{+}\rightarrow\mathbb{R}$ is defined as
\begin{small}
\begin{equation}
    \label{eq: definition_of_g_sgdm}
    g(t)\overset{\triangle}{=}\frac{1}{2}\Vert \br(t)\Vert^2+\frac{\beta}{1-\beta}\langle \br(t),\bw(t)-\bw(t-1)\rangle-\frac{\beta}{1-\beta}\sum_{\tau=2}^t\langle \br(\tau)-\br(\tau-1),\bw(\tau)-\bw(\tau-1)\rangle.
\end{equation}
\end{small}
\end{lemma}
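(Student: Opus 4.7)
The plan is to mimic the deterministic argument in Lemma \ref{lem:gdm_r_t_bounded_first} but to handle the stochastic update carefully: everything must be shown to hold on the sure event where the gradient series is summable. I would first establish an almost-sure version of Corollary \ref{coro: update_rule_bounded} for SGDM, namely $\sum_{t=1}^{\infty} \Vert \bw(t{+}1)-\bw(t)\Vert^2 <\infty$ a.s., and then feed this into the same equivalence argument as for GDM.

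\textbf{Step 1 (a.s.\ summability of one-step updates).} From Lemma \ref{lem: first_second} and Corollary \ref{coro: sgdm_gradient_finite}, $\sum_t \mE\Vert \nabla \mL_{\bB(t)}(\bw(t))\Vert^2 \le \frac{N\sigmax^2}{b\gamma^2}\sum_t \mE\Vert \nabla \mL(\bw(t))\Vert^2 <\infty$, so by Fubini/Tonelli $\sum_t \Vert \nabla \mL_{\bB(t)}(\bw(t))\Vert^2 <\infty$ almost surely. Applying Cauchy--Schwarz to the momentum expansion $\bw(t{+}1)-\bw(t)=-\eta(1-\beta)\sum_{s=1}^{t}\beta^{t-s}\nabla \mL_{\bB(s)}(\bw(s))$ gives
\[
\Vert \bw(t{+}1)-\bw(t)\Vert^2 \le \eta^2(1-\beta)\sum_{s=1}^t \beta^{t-s} \Vert \nabla \mL_{\bB(s)}(\bw(s))\Vert^2,
\]
and swapping the double sum yields $\sum_t \Vert \bw(t{+}1)-\bw(t)\Vert^2 \le \eta^2 \sum_s \Vert \nabla \mL_{\bB(s)}(\bw(s))\Vert^2 <\infty$ a.s. In particular $\bw(t)-\bw(t-1)\to 0$ a.s.

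\textbf{Step 2 (boundedness of $A_1$).} Define $A_1(t)\triangleq \sum_{\tau=2}^t \langle \br(\tau)-\br(\tau-1),\bw(\tau)-\bw(\tau-1)\rangle$. From Eq.\eqref{eq: r_sgdm}, $\br(\tau)-\br(\tau-1)=\bw(\tau)-\bw(\tau-1)-\ln(\tau/(\tau{-}1))\hbw-(\bn(\tau)-\bn(\tau-1))$. Apply Cauchy--Schwarz and arithmetic-geometric mean termwise to bound $|A_1(t)|$ by a finite constant times $\sum_\tau \Vert \bw(\tau)-\bw(\tau-1)\Vert^2 + \sum_\tau \ln(\tau/(\tau{-}1))^2\Vert \hbw\Vert^2 + \sum_\tau \Vert \bn(\tau)-\bn(\tau-1)\Vert^2$. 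The first sum is finite a.s.\ by Step 1, the second is $O(\sum 1/\tau^2)$, and the third is finite by $\Vert \bn(\tau{+}1)-\bn(\tau)\Vert = O(1/\tau)$ from Lemma \ref{lem: define_n_t}. Hence $\sup_t|A_1(t)|<\infty$ a.s., so $g(t)$ is upper bounded iff $h(t)\triangleq \tfrac12\Vert \br(t)\Vert^2+\tfrac{\beta}{1-\beta}\langle \br(t),\bw(t)-\bw(t-1)\rangle$ is upper bounded.

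\textbf{Step 3 (equivalence on the good event).} Restrict to the a.s.\ event on which Step 1 holds, so $\Vert \bw(t)-\bw(t-1)\Vert \to 0$. The sufficiency direction is immediate: if $\Vert \br(t)\Vert$ is bounded, then $\Vert \bw(t)-\bw(t-1)\Vert$ is bounded and therefore $h(t)$ and hence $g(t)$ is upper bounded. For necessity, suppose for contradiction that $\Vert \br(t)\Vert$ is unbounded along a subsequence $k_i\to\infty$. Choose $i$ large enough that $\Vert \bw(k_i)-\bw(k_i-1)\Vert \le 1$; then
\[
h(k_i)\ge \tfrac12 \Vert \br(k_i)\Vert^2 - \tfrac{\beta}{1-\beta}\Vert \br(k_i)\Vert \Vert \bw(k_i)-\bw(k_i-1)\Vert \ge \tfrac12 \Vert \br(k_i)\Vert^2 - \tfrac{\beta}{1-\beta}\Vert \br(k_i)\Vert \to \infty,
\]
contradicting the upper boundedness of $g(t)$. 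Hence $\Vert \br(t)\Vert$ is bounded a.s., completing the equivalence.

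\textbf{Main obstacle.} The qualitative scheme is essentially the deterministic one, but the genuine work is Step 1: converting the expectation-level convergence $\sum \mE\Vert \nabla \mL(\bw(t))\Vert^2 <\infty$ from Corollary \ref{coro: sgdm_gradient_finite} into a pathwise summability of one-step updates. Without this, the ``$\bw(t)-\bw(t-1)\to 0$'' used in the contradiction step is not available, and the remainder terms in $A_1$ cannot be controlled uniformly in $t$. Once this is in hand, the rest of the proof is a mechanical translation of Lemma \ref{lem:gdm_r_t_bounded_first}.
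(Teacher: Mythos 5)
Your proof is correct and follows essentially the same structure as the paper's: first establish a.s.\ summability of $\sum_t\Vert\bw(t{+}1)-\bw(t)\Vert^2$, use it to control the telescoping remainder $A_1$, then run the same bounded-vs-unbounded contradiction as in the GDM case. The only deviation is in how you reach $\sum_t\Vert\nabla\mL_{\bB(t)}(\bw(t))\Vert^2<\infty$ a.s.: you go through the conditional second-moment bound of Lemma~\ref{lem: first_second}, take expectations, and invoke Tonelli, whereas the paper first derives a \emph{pathwise} inequality $\Vert\nabla\mL_{\bB(t)}(\bw)\Vert\le \tfrac{N\sigma_{\max}}{b\gamma}\Vert\nabla\mL(\bw)\Vert$ (valid surely, not just in expectation) and then applies it directly to the a.s.\ conclusion of Corollary~\ref{coro: sgdm_gradient_finite}. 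Both routes are valid and reach the same a.s.\ event; yours is slightly more modular since it reuses a stated lemma, while the paper's avoids the Fubini step.
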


\begin{proof}
    To begin with, we prove that almost surely $\vert \sum_{\tau=2}^t\langle \br(\tau)-\br(\tau-1),\bw(\tau)-\bw(\tau-1)\rangle\vert$ is upper bounded for any $t$. By Corollary \ref{coro: sgdm_gradient_finite}, we have almost surly
    \begin{equation*}
        \sum_{t=1}^{\infty} \Vert \nabla \mL(\bw(t)) \Vert^2< \infty.
    \end{equation*}
    On the other hand, for any $\bw$, we have 
    \begin{align*}
    &\Vert \nabla \mL_{\bB(t)}(\bw) \Vert =\frac{1}{b} \left\Vert \sum_{\bx\in \bB(t)} \ell'(\langle \bw, \bx\rangle) \bx \right\Vert 
    \\
    \le & -\frac{\sigmax}{b} \sum_{\bx \in \bB(t)} \ell'(\langle \bw, \bx\rangle)
    < -\frac{\sigmax}{b} \sum_{\bx \in \bS} \ell'(\langle \bw, \bx\rangle)
    \\
    \le &-\frac{\sigmax}{b} \sum_{\bx \in \bS} \ell'(\langle \bw, \bx\rangle) \langle {  \hbw},\bx \rangle
    \le \frac{N\sigmax}{b}\left\Vert \frac{1}{N}\sum_{\bx \in \bS} \ell'(\langle \bw, \bx\rangle)\bx\right\Vert \Vert {  \hbw} \Vert
    \\
    = &\frac{N\sigmax}{b\gamma}\left\Vert \nabla \mL(\bw)\right\Vert.
    \end{align*}
    
    Therefore, we have almost surely, 
    \begin{equation*}
        \sum_{t=1}^{\infty} \left\Vert \nabla \mL_{\bB(t)}(\bw(t)) \right\Vert^2< \infty,
    \end{equation*}
    which further leads to almost surely
    \begin{align*}
        &\sum_{t=1}^{\infty} \Vert \bw(t+1)-\bw(t)\Vert^2
        \le  \eta^2(1-\beta)^2\sum_{t=1}^{\infty} \left\Vert \sum_{s=1}^{t} \beta^{t-s} \nabla \mL_{\bB(s)}(\bw(s)) \right\Vert^2 
        \\
        \le & \eta^2(1-\beta)^2\sum_{t=1}^{\infty}  \left(\sum_{s=1}^{t} \beta^{t-s} \left\Vert\nabla \mL_{\bB(s)}(\bw(s)) \right\Vert \right)^2
        \\
        \le &\eta^2(1-\beta)^2\sum_{t=1}^{\infty}  \left(\sum_{s=1}^{t} \beta^{t-s} \left\Vert\nabla \mL_{\bB(s)}(\bw(s)) \right\Vert^2 \right)\left(\sum_{s=1}^{t} \beta^{t-s}  \right)
        \\
        \le & \eta^2\sum_{s=1}^{\infty} \left\Vert\nabla \mL_{\bB(s)}(\bw(s)) \right\Vert^2  <\infty.
    \end{align*}
    
    By the definition of $\br(t)$ (Eq. (\ref{eq: r_sgdm})), we further have
    \begin{align*}
        &\left\vert \sum_{\tau=2}^t\langle \br(\tau)-\br(\tau-1),\bw(\tau)-\bw(\tau-1)\rangle\right\vert
        \\
        \le &\sum_{\tau=2}^t\vert \langle \br(\tau)-\br(\tau-1),\bw(\tau)-\bw(\tau-1)\rangle\vert
        \\
        =&\sum_{\tau=2}^t\left\vert \left\langle \bw(\tau)-\bw(\tau-1)-\ln\left(\frac{\tau+1}{\tau}\right)-(\bn(\tau)-\bn(\tau-1)),\bw(\tau)-\bw(\tau-1)\right\rangle\right\vert
        \\
        \le &\sum_{\tau=2}^t \Vert \bw(\tau)-\bw(\tau-1)\Vert^2+\sum_{\tau=2}^t \left\vert \left\langle -\ln\left(\frac{\tau+1}{\tau}\right)-(\bn(\tau)-\bn(\tau-1)),\bw(\tau)-\bw(\tau-1)\right\rangle\right\vert
        \\
        \le &\frac{3}{2}\sum_{\tau=2}^t \Vert \bw(\tau)-\bw(\tau-1)\Vert^2+\frac{1}{2}\sum_{\tau=2}^t \left\Vert  -\ln\left(\frac{\tau+1}{\tau}\right)-(\bn(\tau)-\bn(\tau-1))\right\Vert^2
        \\
        \overset{(\star)}{\le}& \frac{3}{2}\sum_{\tau=2}^t \Vert \bw(\tau)-\bw(\tau-1)\Vert^2+\frac{1}{2}\sum_{\tau=2}^t  \mathcal{O}\left(\frac{1}{\tau}\right)^2<\infty,
    \end{align*}
    where Inequality $(\star)$ is due to $\Vert\bn(\tau)-\bn(\tau-1)\Vert=\mathcal{O}(\frac{1}{
    \tau})$ and $\ln \frac{\tau+1}{\tau}=\mathcal{O}(\frac{1}{\tau})$.
    
    Therefore, $g(t)$ is upper bounded almost surely is equivalent to $\frac{1}{2} \Vert \br(t)\Vert^2+\frac{\beta}{1-\beta}\langle \br(t),\bw(t)-\bw(t-1)\rangle$ is upper bounded, which can be shown to be equivalent with $\Vert \br(t) \Vert$ is bounded following the same routine as Lemma \ref{lem:gdm_r_t_bounded}.
    
    The proof is completed.
\end{proof}

As the case of GDM, we only need to prove $g(t)$ is upper bounded to complete the proof of Theorem \ref{thm: sgdm_main}.

\begin{lemma}
\label{lem: sgdm_ib}
Let all conditions in Theorem \ref{thm: sgdm_main} hold. Then, for almost every dataset, we have $g(t)$ is upper bounded.
\end{lemma}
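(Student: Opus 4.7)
The strategy is to parallel the proof of Lemma \ref{lem: bounded_g_gdm}, with the stochastic sampling noise absorbed into the correction term $\bn(t)$ built into $\br(t)$ (Eq.~(\ref{eq: r_sgdm})). First I would compute $g(t+1)-g(t)$ exactly as in the GDM case: the SGDM rearrangement $\tfrac{\beta}{1-\beta}(\bw(t+1)+\bw(t-1)-2\bw(t))=-\eta\nabla\mL_{\bB(t)}(\bw(t))-(\bw(t+1)-\bw(t))$ and the identity $\br(t+1)-\br(t)=(\bw(t+1)-\bw(t))-\ln(\tfrac{t+1}{t})\hbw-(\bn(t+1)-\bn(t))$ combine to give
\begin{small}
\begin{equation*}
g(t+1)-g(t)=\tfrac{1}{2}\Vert \br(t+1)-\br(t)\Vert^2+\left\langle \br(t),-\ln(\tfrac{t+1}{t})\hbw-(\bn(t+1)-\bn(t))-\eta\nabla\mL_{\bB(t)}(\bw(t))\right\rangle.
\end{equation*}
\end{small}
The quadratic term is absolutely summable almost surely, by splitting the three contributions $\bw(t+1)-\bw(t)$ (summable a.s.\ by the argument in the proof of Lemma~\ref{lem:sgdm_r_t_bounded}), $\ln(\tfrac{t+1}{t})\hbw=\mathcal{O}(1/t)$, and $\bn(t+1)-\bn(t)=\mathcal{O}(1/t)$ (Lemma \ref{lem: define_n_t}).

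Next I would decompose the cross-term into a conditional-expectation piece plus a martingale-difference piece. Letting $\boldsymbol{\xi}(t)\triangleq\nabla\mL_{\bB(t)}(\bw(t))-\nabla\mL(\bw(t))$, and using Lemma~\ref{lem: define_n_t} which yields $\bn(t+1)-\bn(t)=\tfrac{N}{bt}\sum_{\bx_i\in\bB(t)\cap\bS_s}\bv_i\bx_i-\ln(\tfrac{t+1}{t})\hbw$ and hence $\mathbb{E}[\bn(t+1)-\bn(t)\,|\,\mF_t]=\tfrac{1}{t}\hbw-\ln(\tfrac{t+1}{t})\hbw=\mathcal{O}(1/t^2)$, the conditional-expectation piece collapses to $\langle \br(t),-\tfrac{1}{t}\hbw+\mathcal{O}(1/t^2)-\eta\nabla\mL(\bw(t))\rangle$, which is structurally identical to the term $A_3(t)$ analyzed in Lemma~\ref{lem: bounded_g_gdm}. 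With $\tbw$ chosen so that $\bv_i=\tfrac{\eta}{(1-\beta)N}e^{-\langle\tbw,\bx_i\rangle}$ on $\bS_s$, the leading $1/t$ contributions from support vectors cancel, while the residual support-vector contributions (of order $1/t^{1+\mu/2}$ via Assumption~\ref{assum: exponential-tailed}) and the non-support-vector contributions (of order $1/t^\theta$ with $\theta>1$, cf.\ Eq.~(\ref{eq: def_theta})) are both absolutely summable, in exact parallel with the GDM argument.

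What remains is the martingale-difference piece $\Delta M_t\triangleq\langle \br(t),-\eta\boldsymbol{\xi}(t)-(\bn(t+1)-\bn(t)-\mathbb{E}[\bn(t+1)-\bn(t)|\mF_t])\rangle$. Each $\Delta M_t$ is $\mF_{t+1}$-measurable with zero conditional mean, so the partial sums $M_T\triangleq\sum_{t=1}^T\Delta M_t$ form a martingale, and by Doob's $L^2$-martingale convergence theorem it suffices to verify $\sum_t\mathbb{E}[\Delta M_t^2\,|\,\mF_t]<\infty$ a.s. Applying Cauchy--Schwarz and then Lemma~\ref{lem: first_second} to bound $\mathbb{E}[\Vert\boldsymbol{\xi}(t)\Vert^2|\mF_t]\lesssim\Vert\nabla\mL(\bw(t))\Vert^2$, together with $\Vert\bn(t+1)-\bn(t)\Vert=\mathcal{O}(1/t)$, it would suffice to show $\sum_t\Vert\br(t)\Vert^2\bigl(\Vert\nabla\mL(\bw(t))\Vert^2+1/t^2\bigr)<\infty$ a.s.

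The main obstacle is precisely this last summability: it demands an a priori control on $\Vert\br(t)\Vert$, which is exactly what we are trying to prove. I would close the circularity by a stopping-time bootstrap: fix $K>0$, set $T_K\triangleq\inf\{t:\Vert\br(t)\Vert>K\}$, and apply the above martingale argument to the stopped sequence $\{\Delta M_{t\wedge T_K}\}$, where $\Vert\br(t\wedge T_K)\Vert\le K$ makes the quadratic-variation bound trivial because $\sum_t\Vert\nabla\mL(\bw(t))\Vert^2<\infty$ a.s.\ by Corollary~\ref{coro: sgdm_gradient_finite}. On $\{T_K=\infty\}$ this immediately yields $\sup_t g(t)<\infty$, and via Lemma~\ref{lem:sgdm_r_t_bounded} then $\sup_t\Vert\br(t)\Vert<\infty$. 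Letting $K\to\infty$ and using the fact that $\Vert\br(t)\Vert$ cannot jump to infinity in finite time (since one-step increments are almost surely summable) shows $\mathbb{P}(T_K<\infty)\to 0$, and hence $\Vert\br(t)\Vert$ is a.s. bounded. Once this bootstrap is carried out, the conclusion of Lemma~\ref{lem: sgdm_ib} follows by combining the three summability statements above.
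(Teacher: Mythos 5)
Your proposal diverges from the paper's proof at the critical step, and the divergence introduces a gap that your stopping-time bootstrap does not close.

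The paper's argument for this lemma is entirely pathwise, not probabilistic. The key observation is that the correction $\bn(t)$ in the definition of $\br(t)$ is built from the \emph{realized} batches $\bB(s)$ via Lemma~\ref{lem: define_n_t}, so that
\begin{equation*}
\br(t+1)-\br(t)=\bw(t+1)-\bw(t)-\frac{N}{bt}\sum_{i:\,\bx_i\in\bB(t)\cap\bS_s}\bv_i\bx_i,
\end{equation*}
and the stochastic cross-term in $g(t+1)-g(t)$ becomes
$\bigl\langle\br(t),\,-\eta\nabla\mL_{\bB(t)}(\bw(t))-\tfrac{N}{bt}\sum_{\bx_i\in\bB(t)\cap\bS_s}\bv_i\bx_i\bigr\rangle$, where the \emph{same} random batch $\bB(t)$ appears in both pieces. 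With $\tbw$ chosen so that $\bv_i={C_3}\,e^{-\langle\tbw,\bx_i\rangle}$, the leading-order contributions on the support set cancel sample-by-sample, and the residuals decompose into $A_6(t)$ and $A_7(t)$ that are bounded by deterministic $o(1/t)$ envelopes, using crucially $e^{-\langle\br(t),\bx_i\rangle}\langle\br(t),\bx_i\rangle\le e^{-1}$ so that $\Vert\br(t)\Vert$ itself never needs to be controlled inside the summability argument. There is no need for a martingale or for Doob's theorem, because the stochastic fluctuations cancel exactly rather than merely in conditional expectation.

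You instead split $\bn(t+1)-\bn(t)$ and $\nabla\mL_{\bB(t)}(\bw(t))$ into their conditional means plus martingale differences, which destroys the pathwise cancellation and forces you to control a term like $\sum_t\mathbb{E}[\Delta M_t^2\,|\,\mF_t]$ that scales as $\Vert\br(t)\Vert^2(\Vert\nabla\mL(\bw(t))\Vert^2+1/t^2)$. You recognize the circularity this creates, but your proposed fix does not go through as stated. Defining $T_K=\inf\{t:\Vert\br(t)\Vert>K\}$ and applying the stopped martingale shows $\sup_t g(t)<\infty$ on $\{T_K=\infty\}$, which is fine. What you then need is $\mathbb{P}(T_K<\infty)\to 0$ as $K\to\infty$, i.e.\ $\mathbb{P}(\sup_t\Vert\br(t)\Vert=\infty)=0$; but ``one-step increments are almost surely summable'' is not something you have established. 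From the proof of Lemma~\ref{lem:sgdm_r_t_bounded} you have $\sum_t\Vert\br(t+1)-\br(t)\Vert^2<\infty$ a.s., which only forces increments to vanish, not $\sum_t\Vert\br(t+1)-\br(t)\Vert<\infty$, so $\Vert\br(t)\Vert$ can still drift to infinity along a path for which no $T_K$ is finite in any controlled way. To make the bootstrap rigorous you would need a uniform-in-$K$ moment bound on $\sup_{t\le T_K}\Vert\br(t)\Vert^2$ (or a Robbins--Siegmund-type supermartingale lemma), which you have not supplied. The cleanest repair is simply to abandon the martingale decomposition and exploit the pathwise structure of $\bn(t)$, as the paper does.
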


\begin{proof}
    Following the same routine as Lemma \ref{lem:gdm_r_t_bounded}, we have
    \begin{align*}
         &g(t+1)-g(t)
     \\
     =&\frac{1}{2}\Vert \br(t+1)-\br(t)\Vert^2+ \langle \br(t),\br(t+1)-\br(t)\rangle+\left\langle \br(t),-\eta\nabla \mL_{\bB(t)} (\bw(t))-(\bw(t+1)-\bw(t))\right\rangle,
    \end{align*}
    where $\sum_{t=1}^{\infty} \Vert \br(t+1)-\br(t) \Vert^2$ is upper bounded.
    
    On the other hand, by the definition of $\br(t)$ (Eq. (\ref{eq: r_sgdm})), we have
    \begin{align*}
        &\br(t+1)-\br(t)
        \\
        =&\bw(t+1)-\bw(t)-\ln\left(\frac{t+1}{t}\right)\hbw-\bn(t+1)+\bn(t),
    \end{align*}
    while by Lemma \ref{lem: define_n_t},
    \begin{align*}
        \frac{N}{b}\frac{1}{t}\sum_{i: \bx_i\in \bB(t)\cap \bS_s} \bv_i \bx_i=\ln\left(\frac{t+1}{t}\right)\hbw+\bn(t+1)-\bn(t).
    \end{align*}
    
    Combining the above two equations, we further have
    \begin{equation*}
        \br(t+1)-\br(t)=\bw(t+1)-\bw(t)-\frac{N}{bt}\sum_{i: \bx_i\in \bB(t)\cap \bS_s} \bv_i \bx_i,
    \end{equation*}
    which further indicates
    \begin{equation*}
        g(t+1)-g(t)=\frac{1}{2}\Vert \br(t+1)-\br(t)\Vert^2+\left\langle \br(t), -\eta\nabla \mL_{\bB(t)} (\bw(t))-\frac{N}{bt}\sum_{i: \bx_i\in \bB(t)\cap \bS_s} \bv_i\bx_i\right\rangle.
    \end{equation*}
    
    Therefore, we only need to prove $\sum_{t=1}^\infty\langle \br(t), -\eta\nabla \mL (\bw(t))-\frac{N}{bt}\sum_{i: \bx_i\in \bB(t)\cap \bS_s} \bv_i\bx_i\rangle<\infty$. By directly applying the form of $\nabla \mL(\bw(t))$, we have 
    \begin{align*}
        &\left\langle \br(t), -\eta\nabla \mL_{\bB(t)} (\bw(t))-\frac{N}{bt}\sum_{i: \bx_i\in \bB(s)\cap \bS_s} \bv_i\bx_i\right\rangle
        \\
        =&\left\langle \br(t), -\frac{\eta}{(1-\beta)b}\sum_{i: \bx_i\in \bB(s)} \ell'(\langle \bw(t),\bx_i\rangle)\bx_i-\frac{N}{bt}\sum_{i: \bx_i\in \bB(s)\cap \bS_s} \bv_i\bx_i\right\rangle
        \\
        =&\frac{\eta}{  (1-\beta)b}\left\langle \br(t), -\sum_{i: \bx_i\in \bB(s)} \ell'(\langle \bw(t),\bx_i\rangle)\bx_i-\frac{N(1-\beta)}{\eta}\frac{1}{t}\sum_{i: \bx_i\in \bB(s)\cap \bS_s} \bv_i\bx_i\right\rangle
        \\
        =&\frac{\eta}{  (1-\beta)b}\left\langle \br(t), -\sum_{i: \bx_i\in \bB(s)} \ell'(\langle \bw(t),\bx_i\rangle)\bx_i-\frac{1}{t}\sum_{i: \bx_i\in \bB(s)\cap \bS_s} e^{-\langle \tbw,\bx_i\rangle}\bx_i\right\rangle
        \\
        =&\frac{\eta}{  (1-\beta)b}\sum_{i: \bx_i\in \bB(s)\cap \bS_s}\left(- \ell'(\langle \bw(t),\bx_i\rangle)-\frac{1}{t} e^{-\langle \tbw,\bx_i\rangle}\right)\langle \br(t), \bx_i\rangle
        \\
        +&\frac{\eta}{  (1-\beta)b}\sum_{i: \bx_i\in \bB(s)\cap \bS_s^c}\left\langle \br(t),  -\ell'(\langle \bw(t),\bx_i\rangle)\bx_i\right\rangle.
    \end{align*}
    Let $A_6(t)= \sum_{i: \bx_i\in \bB(s)\cap \bS_s}\left(- \ell'(\langle \bw(t),\bx_i\rangle)-\frac{1}{t} e^{-\langle \tbw,\bx_i\rangle}\right)\langle \br(t), \bx_i\rangle$, and $A_7(t)=\sum_{i: \bx_i\in \bB(s)\cap \bS_s^c}$ $\left\langle \br(t),  -\ell'(\langle \bw(t),\bx_i\rangle)\bx_i\right\rangle$. We will investigate these two terms respectively.
    
    As $\left\langle \bw(t),\bx\right\rangle\rightarrow\infty$, $\forall \bx\in \bS$, a.s., we have a.s., there exists a large enough time $t_0$, s.t., $\forall t\ge t_0$,  $\forall \bx\in \bS$,
    \begin{align*}
         -\ell'(\langle \bw(t),\bx\rangle)\le (1+e^{-\mu_+ \langle \bw(t),\bx\rangle})e^{-\langle \bw(t),\bx\rangle},
        \\
        -\ell'(\langle \bw(t),\bx_i\rangle)\ge (1-e^{-\mu_- \langle \bw(t),\bx\rangle})e^{-\langle \bw(t),\bx\rangle},
        \\
        \langle \bx,\bw(t)\rangle>0.
    \end{align*}
    
    Therefore, 
    \begin{align*}
        A_7(t)\le &\sum_{i: \bx_i\in \bB(s)\cap \bS_s^c}-\ell'(\langle \bw(t),\bx_i\rangle)\langle \br(t),  \bx_i\rangle \mathds{1}_{\langle \br(t), \bx_i \rangle\ge 0}
        \\
        \le &\sum_{i: \bx_i\in \bB(s)\cap \bS_s^c}(1+e^{-\mu_+ \langle \bw(t),\bx_i\rangle})e^{-\langle \bw(t),\bx_i\rangle}\langle \br(t),  \bx_i\rangle \mathds{1}_{\langle \br(t), \bx_i \rangle\ge 0}
        \\
        \le &2\sum_{i: \bx_i\in \bB(s)\cap \bS_s^c}e^{-\langle \br(t)+\ln(t)\hbw+\tbw+\bn(t),\bx_i\rangle}\langle \br(t),  \bx_i\rangle \mathds{1}_{\langle \br(t), \bx_i \rangle\ge 0}
        \\
        \overset{(\star)}{\le}&2\sum_{i: \bx_i\in \bB(s)\cap \bS_s^c}\frac{1}{t^{\theta}}e^{-\langle \tbw+\bn(t),\bx_i\rangle}e^{-\langle \br(t),\bx_i\rangle}\langle \br(t),  \bx_i\rangle \mathds{1}_{\langle \br(t), \bx_i \rangle\ge 0}
        \\
        \overset{(\dagger)}{\le} &\frac{2}{e}\frac{1}{t^{\theta}}\sum_{i: \bx_i\in \bB(s)\cap \bS_s^c}e^{-\langle \tbw+\bn(t),\bx_i\rangle} \mathds{1}_{\langle \br(t), \bx_i \rangle\ge 0}
        \\
        \overset{(\circ)}{=}& \mathcal{O}\left(\frac{1}{t^{\theta}}\right),
    \end{align*}
    where Inequality. $(\star)$ is due the definition of $\theta$ (Eq. (\ref{eq: def_theta})), {  Inequality. ($\dagger$) is due to $e^{-\langle \br(t),\bx_i\rangle}\langle \br(t),  \bx_i\rangle\le e^{-1}$ }, and Eq. $(\circ)$ is due to   $ \lim_{t\rightarrow\infty} e^{-\langle \tbw+\bn(t),\bx_i\rangle}=e^{-\langle \tbw,\bx_i\rangle}$. Thus, 
    \begin{equation*}
        \sum_{t=1}^{\infty }A_7(t)<\infty.
    \end{equation*}
    
    On the other hand, $A_6(t)$ can be rewritten as 
    \begin{align*}
        A_6(t)=&\sum_{i: \bx_i\in \bB(s)\cap \bS_s}\left(- \ell'(\langle \bw(t),\bx_i\rangle)-\frac{1}{t} e^{-\langle \tbw,\bx_i\rangle}\right)\langle \br(t), \bx_i\rangle\mathds{1}_{\langle\br(t),\bx_i\rangle\ge 0}
        \\
        +&\sum_{i: \bx_i\in \bB(s)\cap \bS_s}\left(- \ell'(\langle \bw(t),\bx_i\rangle)-\frac{1}{t} e^{-\langle \tbw,\bx_i\rangle}\right)\langle \br(t), \bx_i\rangle\mathds{1}_{\langle\br(t),\bx_i\rangle< 0}.
    \end{align*}
    
    If $\langle\br(t),\bx_i\rangle\ge 0$,  we have for $\varepsilon<0.5$,
    \begin{align*} 
        &\left(- \ell'(\langle \bw(t),\bx_i\rangle)-\frac{1}{t} e^{-\langle \tbw,\bx_i\rangle}\right)\langle \br(t), \bx_i\rangle
        \\
        \le &\left(\left(1+e^{-\mu_+ \langle \bw(t),\bx_i\rangle}\right)e^{-\langle \bw(t),\bx_i\rangle}-\frac{1}{t} e^{-\langle \tbw,\bx_i\rangle}\right) \langle \br(t), \bx_i\rangle
        \\
        =& \left(\left(1+e^{-\mu_+ \langle \br(t)+\ln(t)\hbw+\tbw+\bn(t),\bx_i\rangle}\right)e^{-\langle \br(t)+\ln(t)\hbw+\tbw+\bn(t),\bx_i\rangle}-\frac{1}{t} e^{-\langle \tbw,\bx_i\rangle}\right) \langle \br(t), \bx_i\rangle
        \\
        =&\left(\left(1+e^{-\mu_+ \langle \br(t)+\ln(t)\hbw+\tbw+\bn(t),\bx_i\rangle}\right)e^{-\langle \br(t)+\bn(t),\bx_i\rangle}- 1\right) \frac{1}{t}\langle \br(t), \bx_i\rangle e^{-\langle \tbw,\bx_i\rangle}
        \\
        \le &\left(\left(1+\frac{1}{t^{\mu_{+}}}e^{-\mu_+ \langle \tbw+\bn(t),\bx_i\rangle}\right)e^{-\langle \br(t)+\bn(t),\bx_i\rangle}- 1\right) \frac{1}{t}\langle \br(t), \bx_i\rangle e^{-\langle \tbw,\bx_i\rangle}
        \\
        \overset{(\bullet)}{=}& \left(\left(1+\mathcal{O}\left(\frac{1}{t^{\mu_{+}}}\right)\right)\left(1+\mathcal{O}\left( \frac{1}{t^{0.5-\varepsilon}}\right)\right)e^{-\langle \br(t),\bx_i\rangle}- 1\right) \frac{1}{t}\langle \br(t), \bx_i\rangle e^{-\langle \tbw,\bx_i\rangle}
        \\
        =&\left(e^{-\langle \br(t),\bx_i\rangle}- 1\right) \frac{1}{t}\langle \br(t), \bx_i\rangle e^{-\langle \tbw,\bx_i\rangle}+\frac{1}{t}\mathcal{O}\left(\frac{1}{t^{\min\{\mu_{+},0.5-\varepsilon\}}}\right)e^{-\langle \br(t), \bx_i\rangle}\langle \br(t), \bx_i\rangle e^{-\langle \tbw,\bx_i\rangle}
        \\
        \le & \frac{1}{t}\mathcal{O}\left(\frac{1}{t^{\min\{\mu_{+},0.5-\varepsilon\}}}\right)e^{-\langle \br(t), \bx_i\rangle}\langle \br(t), \bx_i\rangle e^{-\langle \tbw,\bx_i\rangle}
        \\
        \overset{(\diamond)}{=}&\mathcal{O}\left(\frac{1}{t^{\min\{1+\mu_{+},1.5-\varepsilon\}}}\right),
    \end{align*}
    where Eq. $(\bullet)$ is due to $\bn(t)=\mathcal{O}(\frac{1}{t^{0.5{  -}\varepsilon}})$, and Eq. ($\diamond$) is due to $e^{-\langle \br(t), \bx_i\rangle}\langle \br(t), \bx_i\rangle \le \frac{1}{e}$.
    
    On the other hand, if $\langle \br(t), \bx_i\rangle <0$, we have
    \begin{align*}
        &\left(- \ell'(\langle \bw(t),\bx_i\rangle)-\frac{1}{t} e^{-\langle \tbw,\bx_i\rangle}\right)\langle \br(t), \bx_i\rangle
        \\
        \le &\left(\left(1-e^{-\mu_{-} \langle \bw(t),\bx_i\rangle}\right)e^{-\langle \bw(t),\bx_i\rangle}-\frac{1}{t} e^{-\langle \tbw,\bx_i\rangle}\right) \langle \br(t), \bx_i\rangle
        \\
        =&\frac{1}{t}e^{-\langle \tbw, \bx_i\rangle} \left(-1+\left(1-e^{-\mu_{-} \langle \bw(t),\bx_i\rangle}\right)e^{-\langle \br(t)+\bn(t),\bx_i\rangle}\right)\left\langle \br(t), \bx_i\right\rangle
    \end{align*}

Specifically, if $\langle \br(t), \bx_i\rangle\ge -t^{-0.5\min\{\mu_{-},0.5\}}$, 
\begin{align*}
    &\left\vert \frac{1}{t}e^{-\langle \tbw, \bx_i\rangle} \left(-1+\left(1-e^{-\mu_{-} \langle \bw(t),\bx_i\rangle}\right)e^{-\langle \br(t)+\bn(t),\bx_i\rangle}\right)\left\langle \br(t), \bx_i\right\rangle\right\vert 
    \\
    \le & \frac{1}{  t^{1+0.5\min\{\mu_{-},0.5\}}}e^{-\langle \tbw, \bx_i\rangle}\left\vert-1+\left(1-e^{-\mu_{-} \langle \bw(t),\bx_i\rangle}\right)e^{-\langle \br(t)+\bn(t),\bx_i\rangle}\right\vert
    \\
    \overset{(\square)}{=} &\mathcal{O}\left(\frac{1}{t^{1+0.5\min\{\mu_{-},0.5\}}}\right),
\end{align*}
{  where Eq. ($\square$) is due to that as $\langle \bw(t),\bx_i\rangle \rightarrow\infty$ and $t^{-0.5\min\{\mu_{-},0.5\}}\rightarrow 0$ as $t\rightarrow\infty$, there exists a large enough time $T$, s.t., $\forall t>T$, under the circumstance $0>\langle \br(t), \bx_i\rangle\ge -t^{-0.5\min\{\mu_{-},0.5\}}$, $e^{-\langle \br(t)+\bn(t),\bx_i\rangle}<1$ and $e^{-\mu_{-} \langle \bw(t),\bx_i\rangle}<1$.}

If $-2\le \langle\br(t),\bx_i\rangle<-t^{-0.5\min\{\mu_{-},0.5\}}$, then, for large enough $t$,  $\vert\langle \bx_i, \bn(t)\rangle\vert<2$, $1-\frac{e^{{  \mu_{-}}(-\langle \tbw,\bx_i\rangle+4)}}{t^{\mu_{-}}}>0$, and 
\begin{align*}
    & \frac{1}{t}e^{-\langle \tbw, \bx_i\rangle} \left(-1+\left(1-e^{-\mu_{-} \langle \bw(t),\bx_i\rangle}\right)e^{-\langle \br(t)+\bn(t),\bx_i\rangle}\right)\left\langle \br(t), \bx_i\right\rangle
    \\
    = &\frac{1}{t}e^{-\langle \tbw, \bx_i\rangle} \left(-1+\left(1-e^{-\mu_{-} \langle \br(t)+\ln(t)\hbw+\tbw+\bn(t),\bx_i\rangle}\right)e^{-\langle \br(t)+\bn(t),\bx_i\rangle}\right)\left\langle \br(t), \bx_i\right\rangle
    \\
    = &\frac{1}{t}e^{-\langle \tbw, \bx_i\rangle} \left(-1+\left(1-\frac{e^{-{  \mu_{-}}\langle \tbw,\bx_i\rangle}}{t^{\mu_{-}}}e^{-\mu_{-} \langle \br(t)+\bn(t),\bx_i\rangle}\right)e^{-\langle \br(t)+\bn(t),\bx_i\rangle}\right)\left\langle \br(t), \bx_i\right\rangle
    \\
    \le &\frac{1}{t}e^{-\langle \tbw, \bx_i\rangle} \left(-1+\left(1-\frac{e^{{  \mu_{-}}(-\langle \tbw,\bx_i\rangle+4)}}{t^{\mu_{-}}}\right)e^{- \langle \br(t)+\bn(t),\bx_i\rangle}\right)\left\langle \br(t), \bx_i\right\rangle
    \\
    \le &\frac{1}{t}e^{-\langle \tbw, \bx_i\rangle} \left(-1+\left(1-\frac{e^{{  \mu_{-}}(-\langle \tbw,\bx_i\rangle+4)}}{t^{\mu_{-}}}\right)\left(1- \langle \br(t)+\bn(t),\bx_i\rangle\right)\right)\left\langle \br(t), \bx_i\right\rangle
    \\
    \le &\frac{1}{t}e^{-\langle \tbw, \bx_i\rangle} \left(-1+\left(1-\frac{e^{{  \mu_{-}}(-\langle \tbw,\bx_i\rangle+4)}}{t^{\mu_{-}}}\right)\left(1+t^{-0.5\min\{\mu_{-},0.5\}}- \langle \bn(t),\bx_i\rangle\right)\right)\left\langle \br(t), \bx_i\right\rangle
    \\
    =&\frac{1}{t}e^{-\langle \tbw, \bx_i\rangle} \left(-1+\left(1-\frac{e^{{  \mu_{-}}(-\langle \tbw,\bx_i\rangle+4)}}{t^{\mu_{-}}}\right)\left(1+t^{-0.5\min\{\mu_{-},0.5\}} +\boldsymbol{o}\left(t^{-0.5\min\{\mu_{-},0.5\}}\right)\right)\right)\left\langle \br(t), \bx_i\right\rangle
    \\
    = &\frac{1}{t}e^{-\langle \tbw, \bx_i\rangle} \left(-1+1+t^{-0.5\min\{\mu_{-},0.5\}}+\boldsymbol{o}\left(t^{-0.5\min\{\mu_{-},0.5\}}\right)\right)\left\langle \br(t), \bx_i\right\rangle<0.
\end{align*}

If $-2> \langle\br(t),\bx_i\rangle$, then for large enough time $t$, $e^{-\langle \br(t)+\bn(t),\bx_i\rangle}\ge e^{\frac{3}{2}}$, $1-e^{-\mu_{-} \langle \bw(t),\bx_i\rangle}\ge e^{-\frac{1}{2}}$, and
\begin{align*}
    &\frac{1}{t}e^{-\langle \tbw, \bx_i\rangle} \left(-1+\left(1-e^{-\mu_{-} \langle \bw(t),\bx_i\rangle}\right)e^{-\langle \br(t)+\bn(t),\bx_i\rangle}\right)\left\langle \br(t), \bx_i\right\rangle
    \\
    \le &\frac{1}{t}e^{-\langle \tbw, \bx_i\rangle} \left(-1+e\right)\left\langle \br(t), \bx_i\right\rangle<0.
\end{align*}

Conclusively, if $ \langle\br(t),\bx_i\rangle<0$, for large enough $t$, we have 
\begin{equation*}
    \left(- \ell'(\langle \bw(t),\bx_i\rangle)-\frac{1}{t} e^{-\langle \tbw,\bx_i\rangle}\right)\langle \br(t), \bx_i\rangle\le \mathcal{O}\left(\frac{1}{t^{1+0.5\min\{\mu_{-},0.5\}}}\right),
\end{equation*}
which further indicates, for large enough $t$, we have
\begin{equation*}
    A_6(t)\le \max\left\{\mathcal{O}\left(\frac{1}{t^{1+0.5\min\{\mu_{-},0.5\}}}\right),\mathcal{O}\left(\frac{1}{t^{\min\{1+\mu_{+},1.5-\varepsilon\}}}\right)\right\},
\end{equation*}
which indicates
\begin{equation*}
    \sum_{t=1}^{\infty} A_6(t)<\infty.
\end{equation*}

Therefore, 
\begin{align*}
    &\sum_{t=1}^{\infty} \left(g(t+1)-g(t)\right)
    \\
    =&\sum_{t=1}^{\infty}  \left( \frac{1}{2}\Vert \br(t+1)-\br(t)\Vert^2+\left\langle \br(t), -\eta\nabla \mL_{\bB(t)} (\bw(t))-\frac{N}{bt}\sum_{i: \bx_i\in \bB(t)\cap \bS_s} \bv_i\bx_i\right\rangle\right)
    \\
    =&\sum_{t=1}^{\infty}  \left( \frac{1}{2}\Vert \br(t+1)-\br(t)\Vert^2+\eta A_6(t)+\eta A_7(t)\right)
    \\
    <&\infty.
\end{align*}

The proof is completed.
\end{proof}
\subsubsection{Explanation for proper lyapunov function}
\label{subsec: explanation}
Based on the success of applying Lyapunov function $\mL(\bw(t))+\frac{\beta}{2\eta}\Vert \bw(t)-\bw(t-1)\Vert^2$ to analyze gradient descent with momentum, it is natural to try to extend this routine to analyze stochastic gradient descent with momentum. However, in this section, we will show such Lyapunov function is not proper to analyze SGDM as this will put constraints on the range of the momentum rate $\beta$. Specifically, at any step $t$, since the loss $\mL$ is $\frac{H\sigmax^2}{N}$ smooth at $\bw(t)$, we can expand the loss $\mL$ in the same way as the GDM case:
\begin{align*}
    \mL(\bw(t+1))\le& \mL(\bw(t))+\langle \bw(t+1)-\bw(t),\nabla \mL(\bw(t))\rangle+\frac{H\sigmax^2}{2N} \Vert \bw(t+1)-\bw(t) \Vert^2.
\end{align*}

By taking expectation with respect to $\bw(t+1)$ conditioning on $\{\bw(s)\}_{s=1}^t$ for both sides, we further obtain
\begin{align*}
     &\mathbb{E}\left[\mL(\bw(t+1))\left|\mF_t\right.\right]
     \\
     \le& \mL(\bw(t))+\langle\mathbb{E} \left[\bw(t+1)-\bw(t)\left|\mF_t\right.\right],\nabla \mL(\bw(t))\rangle+\frac{H\sigmax^2}{2N} \mathbb{E} \left[\Vert \bw(t+1)-\bw(t) \Vert^2\left|\mF_t\right.\right]
     \\
     \overset{(\star)}{=}&\mL(\bw(t))+\frac{1}{(1-\beta)\eta}\left\langle\mathbb{E} \left[\bw(t+1)-\bw(t)\left|\mF_t\right.\right],\beta\left(\bw(t)-\bw(t-1)\right)-\mathbb{E}\left[\bw(t+1)-\bw(t)\left|\mF_t\right.\right]\right\rangle
     \\
     &+\frac{H\sigmax^2}{2N} \mathbb{E} \left[\Vert \bw(t+1)-\bw(t) \Vert^2\left|\mF_t\right.\right]
     \\
     =&\mL(\bw(t))+\frac{\beta}{(1-\beta)\eta}\left\langle\left(\bw(t)-\bw(t-1)\right),\mathbb{E}\left[\bw(t+1)-\bw(t)\left|\mF_t\right.\right]\right\rangle
     \\
     &+\frac{H\sigmax^2}{2N} \mathbb{E} \left[\Vert \bw(t+1)-\bw(t) \Vert^2\left|\mF_t\right.\right]-\frac{1}{(1-\beta)\eta}\left\Vert\mathbb{E} \left[\bw(t+1)-\bw(t)\left|\mF_t\right.\right]\right\Vert^2
     \\
     \le &\mL(\bw(t))+\frac{\beta}{2(1-\beta)\eta}\left\Vert\bw(t)-\bw(t-1)\right\Vert^2+\frac{\beta}{2(1-\beta)\eta}\left\Vert\mathbb{E}\left[\bw(t+1)-\bw(t)\left|\mF_t\right.\right]\right\Vert^2
     \\
     &+\frac{H\sigmax^2}{2N} \mathbb{E} \left[\Vert \bw(t+1)-\bw(t) \Vert^2\left|\mF_t\right.\right]-\frac{1}{(1-\beta)\eta}\left\Vert\mathbb{E} \left[\bw(t+1)-\bw(t)\left|\mF_t\right.\right]\right\Vert^2,
\end{align*}
where Eq. $(\star)$ is becasue $\mathbb{E}\left[\bw(t+1)-\bw(t)\left|\mF_t\right.\right]=-(1-\beta)\eta \nabla \mL(\bw(t))+\beta\left(\bw(t)-\bw(t-1)\right)$ due to the definition of SGDM (Eq. (\ref{eq: def_sgdm})). Rearranging the above inequality and taking expectations of both sides with respect to $\{\bw(s)\}_{s=1}^t$ leads to 
\begin{align}
\nonumber
    &\mathbb{E}\left[\mL(\bw(t+1))\right]+\frac{2-\beta}{2(1-\beta)\eta}\mE\left\Vert\mathbb{E} \left[\bw(t+1)-\bw(t)\left|\mF_t\right.\right]\right\Vert^2
    \\
\nonumber
    -&\frac{H\sigmax^2}{2N} \mE \left[\Vert \bw(t+1)-\bw(t) \Vert^2\right]
    \\
\label{eq: lyapunov_fake}
    \le & \mE\mL(\bw(t))+\frac{\beta}{2(1-\beta)\eta}\mE\left\Vert\bw(t)-\bw(t-1)\right\Vert^2.
\end{align}

On the other hand, we wish to obtain some positive constant $\alpha$ from Eq. (\ref{eq: lyapunov_fake}), such that (at least),
\begin{align}
\nonumber
    &\mathbb{E}\left[\mL(\bw(t+1))\right]+\alpha\mE\left\Vert\bw(t+1)-\bw(t)\right\Vert^2.
    \\
\label{eq: requirement_lyapunov}
    \le & \mE\mL(\bw(t))+\alpha\mE\left\Vert\bw(t)-\bw(t-1)\right\Vert^2,
\end{align}
which requires to lower bound $\mE\left\Vert\mathbb{E} \left[\bw(t+1)-\bw(t)\left|\mF_t\right.\right]\right\Vert^2$ by $\mE$ $\left\Vert\ \bw(t+1)-\bw(t)\right\Vert^2$. However, in general cases, $\mE\left\Vert\mathbb{E} \left[\bw(t+1)-\bw(t)\left|\mF_t\right.\right]\right\Vert^2$ is only upper bounded by $\mE$ $\left\Vert\ \bw(t+1)-\bw(t)\right\Vert^2$ (Holder's Inequality), although in our case, $\left\Vert\mathbb{E} \left[\bw(t+1)-\bw(t)\left|\mF_t\right.\right]\right\Vert^2$ can be bounded as
\begin{small}
\begin{align*}
    &\left\Vert\mathbb{E} \left[\bw(t+1)-\bw(t)\left|\mF_t\right.\right]\right\Vert^2
    \\
    =&\left\Vert -(1-\beta)\eta\nabla \mL(\bw(t))+\beta\left(\bw(t)-\bw(t-1)\right)\right\Vert^2
    \\
    =&\left\Vert -(1-\beta)\eta\nabla \mL(\bw(t))\right\Vert^2+\left\Vert\beta\left(\bw(t)-\bw(t-1)\right)\right\Vert^2+2\beta(1-\beta)\eta\langle \bw(t)-\bw(t-1),-\nabla \mL(\bw(t))\rangle,
\end{align*}
\end{small}
while by the separability of the dataset and that the loss is non-increasing, $\mathbb{E}\left[\left\Vert\ \bw(t+1)-\bw(t)\right\Vert^2\vert \mF_t\right]$ can be bounded as
\begin{small}
\begin{align}
\nonumber
    &\mathbb{E}\left[\left\Vert\ \bw(t+1)-\bw(t)\right\Vert^2\vert \mF_t\right]
    \\
\label{eq: estimation second moment update fake}
    \le&\frac{N\sigmax^2}{b\gamma^2}\left\Vert\mathbb{E} \left[\bw(t+1)-\bw(t)\left|\mF_t\right.\right]\right\Vert^2.
\end{align}
\end{small}

By Eqs. (\ref{eq: lyapunov_fake}) and (\ref{eq: estimation second moment update fake}), we have that to ensure Eq. (\ref{eq: requirement_lyapunov}), it is required that
\begin{align*}
    \frac{2-\beta}{2(1-\beta)\eta}\frac{b\gamma^2}{N\sigmax^2}-\frac{H\sigmax^2}{2N}\ge \frac{\beta}{2(1-\beta)\eta},
\end{align*}
which puts additional constraint on $\beta$ as 
\begin{equation*}
    \beta\le \frac{2b\gamma^2-H\eta\sigmax^4}{b\gamma^2+N\sigmax^2-H\sigmax^4\eta}.
\end{equation*}

Specifically, the upper bound becomes close to $0$ when $N$ becomes large, and constrains $\beta$ in a small range.

\section{Implicit regularization of deterministic Adam}
\label{subsec: implicit_bias_adam}
This section collects the proof of the convergent direction of Adam, i.e., Theorem \ref{thm: adam_main}. The methodology of this section bears great similarity with GDM, although the preconditioner of Adam requires specific treatment for analysis. The proof is still divided into two stages: (1). we first prove the sum of squared gradients along the trajectory is finite. Additionally, we prove the convergent rate of loss is $\mathcal{O}(\frac{1}{t})$; (2). we prove $\bw(t)-\ln(t)\hbw$ has bounded norm. Before we present these two stages of proof, we will first give the required range of $\eta$ for which Theorem \ref{thm: sgdm_main} holds. The analyses of this section hold for almost every dataset, and the "almost every" constraint is further moved in Section \ref{appen: every_data_set}.

\subsection{Choice of learning rate}
\label{subsec: choice_learning_rate}
Let $H_{s_0}$ be the smooth parameter over $[s_0,\infty)$ given by Assumption \ref{assum: smooth}. (D). Let $\beta_2=(c\beta_1)^4$ ($c>1$). The "sufficiently small learning rate" in Theorem \ref{thm: sgdm_main} means 
\begin{equation*}
    \eta\le \frac{\sqrt{\varepsilon}\inf_{t\ge 2}\left(\frac{1-\beta_1^t}{1-\beta_1}-\frac{1-\beta_1^{t-1}}{c(1-\beta_1)}\frac{1-(c\beta_1)^t}{1-(c\beta_1)^{t-1}}\right)}{H_{\ell^{-1}((1-c\beta_1)^{-1}N\mL(\bw(1)))}}.
\end{equation*}

To ensure $\eta$ is well-defined, we need to prove 
\begin{equation*}
    \inf_{t\ge 2}\left(\frac{1-\beta_1^t}{1-\beta_1}-\frac{1-\beta_1^{t-1}}{c(1-\beta_1)}\frac{1-(c\beta_1)^t}{1-(c\beta_1)^{t-1}}\right)>0,
\end{equation*}
and we introduce the following technical lemma:
\begin{lemma}
\label{lem: beta_1_2_inequality}
Define $f_t(x)=\frac{1-x^t}{x(1-x^{t-1})}$, $\forall t\in \mathbb{Z}, t\ge 2$. We have $f_t(x)$ is decreasing with respect to $x$.
Furthermore, for any $x\in [0,1)$, we have 
\begin{equation}
\label{eq: beta_1_2_inequality}
  f(x)\ge \sqrt[4]{ f(x^4)}.
\end{equation}
\end{lemma}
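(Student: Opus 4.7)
The plan is to exploit a clean reformulation of $f_t$ that makes both claims transparent. Write
\[
f_t(x)=\frac{1-x^t}{x(1-x^{t-1})}=\frac{1+x+\cdots+x^{t-1}}{x+x^2+\cdots+x^{t-1}}=1+\frac{1}{S_t(x)},\qquad S_t(x)\triangleq\sum_{k=1}^{t-1}x^k,
\]
where the first equality uses the geometric-series identities $1-x^t=(1-x)\sum_{k=0}^{t-1}x^k$ and $1-x^{t-1}=(1-x)\sum_{k=0}^{t-2}x^k$. The monotonicity claim is then immediate: on $[0,1)$ the polynomial $S_t(x)$ is strictly increasing in $x$ (each summand is), so $f_t(x)=1+1/S_t(x)$ is strictly decreasing. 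This disposes of the first assertion and there is no real obstacle here.

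For the second assertion (which I read as $f_t(x)\ge \sqrt[4]{f_t(x^4)}$, equivalently $f_t(x)^4\ge f_t(x^4)$), the plan is to reduce to an elementary polynomial inequality. Clearing the common factor $x^{-4}$ on both sides, the inequality to be proved is
\[
\frac{(1-x^t)^4}{(1-x^{t-1})^4}\;\ge\;\frac{1-x^{4t}}{1-x^{4(t-1)}},
\]
that is, $(1-x^t)^4(1-x^{4(t-1)})\ge(1-x^{t-1})^4(1-x^{4t})$. The key substitution is $a\triangleq x^t$, $b\triangleq x^{t-1}$, so that $a=bx$ satisfies $0\le a\le b<1$. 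Using $1-a^4=(1-a)(1+a)(1+a^2)$ and similarly for $b$, and cancelling one factor of $(1-a)(1-b)$ from each side, the inequality reduces to
\[
(1-a)^3\,(1+b)(1+b^2)\;\ge\;(1-b)^3\,(1+a)(1+a^2).
\]

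The main point is that this last inequality splits into two factorwise comparisons that both point in the same direction. Because $0\le a\le b<1$ we have $1-a\ge 1-b\ge 0$, hence $(1-a)^3\ge(1-b)^3$; and because $a\le b$ with $a,b\ge 0$ we have $(1+a)(1+a^2)\le(1+b)(1+b^2)$. Multiplying the two inequalities yields the desired bound, and substituting back $a=x^t,b=x^{t-1}$ completes the proof. The only place any care is required is the cancellation step (all the factors being cancelled are strictly positive on $[0,1)$) and the trivial but necessary observation that $a\le b$, which is exactly where the hypothesis $x\in[0,1)$ is used; I do not anticipate any deeper obstacle.
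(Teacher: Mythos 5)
Your proof is correct and takes essentially the same route as the paper's: the monotonicity claim via $f_t(x)=1+1/(x+x^2+\cdots+x^{t-1})$, and the fourth-power inequality reduced, after clearing the common $x^{-4}$ and applying $1-y^4=(1-y)(1+y)(1+y^2)$, to a factorwise comparison using $x^t \le x^{t-1}$. The paper reaches the same reduced inequality and disposes of it by noting that the cube-ratio side is $\ge 1$ while the other side is $\le 1$, which is just a rephrasing of your two factorwise bounds.
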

\begin{proof}
    First of all, by definition,
    \begin{equation*}
        f(x)=\frac{1-x^t}{x-x^{t}}=1+\frac{1-x}{x-x^{t}}=1+\frac{1-x}{x(1-x^{t-1})}=1+\frac{1}{x(1+x+\cdots+x^{t-2})}
    \end{equation*}
    is monotonously decreasing as $0\le x<1$. Secondly, Eq. (\ref{eq: beta_1_2_inequality}) is equivalent to 
    \begin{align*}
&\frac{(1-x^t)^4}{\beta^4_1(1-x^{t-1})^4}\ge  \frac{(1-x^{4t}
    )}{x^4(1-x^{4(t-1)})}
    \\
    \Longleftrightarrow&\frac{(1-x^t)^3}{(1-x^{t-1})^3}\ge  \frac{(1+x^{t}
    )(1+x^{2t}
    )}{(1+x^{t-1})(1+x^{2(t-1)})}.
\end{align*}

The left side of the above inequality is no smaller than 1, while the right side is no larger than 1, which completes the proof.
\end{proof}

We are now ready to prove $\eta$ is well-defined. First of all, for every $t$, we have 
\begin{align}
\nonumber
    &\frac{1-\beta_1^t}{1-\beta_1}-\frac{1-\beta_1^{t-1}}{c(1-\beta_1)}\frac{1-(c\beta_1)^t}{1-(c\beta_1)^{t-1}}
    \\
\nonumber
    =&\frac{\beta_1(1-\beta_1^{t-1})}{1-\beta_1}\left(\frac{1-\beta_1^t}{\beta_1 (1-\beta_1^{t-1})}-\frac{1-(c\beta_1)^t}{(c\beta_1) (1-(c\beta_1)^{t-1})}\right)
    \\
\label{eq:inf_larger_finite}
    \overset{(\star)}{=}&\frac{\beta_1(1-\beta_1^{t-1})}{1-\beta_1}\left(f_t(\beta_1)-f_t(c\beta_1)\right)>0,
\end{align}
where Eq. ($\star$) is by Lemma \ref{lem: beta_1_2_inequality} and $c\beta_1=\sqrt[4]{\beta_2}<1$.

On the other hand, we have 
\begin{equation}
\label{eq:inf_larger_limit}
    \lim_{t\rightarrow\infty}\left(\frac{1-\beta_1^t}{1-\beta_1}-\frac{1-\beta_1^{t-1}}{c(1-\beta_1)}\frac{1-(c\beta_1)^t}{1-(c\beta_1)^{t-1}}\right)=\left(1-\frac{1}{c}\right)\frac{1}{1-\beta_1}.
\end{equation}

By Eq. (\ref{eq:inf_larger_finite}) and Eq. (\ref{eq:inf_larger_limit}), we obtain $\frac{1-\beta_1^t}{1-\beta_1}-\frac{1-\beta_1^{t-1}}{c(1-\beta_1)}\frac{1-(c\beta_1)^t}{1-(c\beta_1)^{t-1}}$ is lower bounded by some positive constant across $t$, and $\eta$ is well defined.
\subsection{Sum of gradients along the trajectory is bounded}
\label{subsubsec: gradient_bounded_adam}

We start with the following lemma, which indicates $\mL(\bw(t))+\Vert \sqrt[4]{\varepsilon\mathds{1}_d+\hbnu(t)}\odot (\bw(t)-\bw(t-1))\Vert ^2$ is a proper Lyapunov function for Adam.
\begin{lemma}
\label{lem: loss_update_adam}
Let all conditions in Theorem \ref{thm: adam_main} hold. Then, for any $t\ge 1$,
\small
\begin{align}
\nonumber
    &\mL(t+1)
    +\frac{1}{2}\frac{1-\beta_1^t}{\eta(1-\beta_1)}\left\Vert \sqrt[4]{\varepsilon\mathds{1}_d+\hbnu(t)}\odot (\bw(t+1)-\bw(t))\right\Vert ^2
    \\
    \le &\mL(\bw(t))+\left\Vert \sqrt[4]{\varepsilon\mathds{1}_d+\hbnu(t-1)}\odot (\bw(t)-\bw(t-1))\right\Vert ^2
\label{eq: loss_update_adam_t}
      \frac{1-\beta_1^{t-1}}{2c\eta(1-\beta_1)}\frac{1-(c\beta_1)^t}{1-(c\beta_1)^{t-1}}.
\end{align}\normalsize
\end{lemma}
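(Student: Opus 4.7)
The plan is to mirror the proof of Lemma \ref{lem: update_rule_loss_gdm_extended} for GDM, with the Adam preconditioner $\sqrt{\varepsilon\mathds{1}_d+\hbnu}$ taking the place of the identity. As in the GDM case, the local smoothness assumption \ref{assum: smooth}.(D) means we first need to certify that the iterates stay inside a sub-level set where $\mL$ is $\sigma_{\max}^2H_{s_0}/N$-smooth (with $s_0=\ell^{-1}((1-c\beta_1)^{-1}N\mL(\bw(1)))$). I would set this up by induction on $t$, using a continuous-time interpolation and reduction to absurdity exactly as in Lemma \ref{lem: update_rule_loss_gdm_extended}: assume $(t^\ast,\alpha^\ast)$ is the first point where Eq.~\eqref{eq: loss_update_adam_t} fails, deduce from the telescoping of the already-known inequalities that $\mL(\bw(t^\ast+\alpha))\le (1-c\beta_1)^{-1}N\mL(\bw(1))$ on $[0,\alpha^\ast]$ (using that the extra preconditioner-weighted term is nonnegative), and then derive a contradiction by a Taylor-expansion computation at $\bw(t^\ast)$.

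For the core step, I would rewrite Adam's recursion as
\begin{equation*}
\nabla\mL(\bw(t))=-\frac{1}{\eta(1-\beta_1)}\Bigl[(1-\beta_1^{t})\sqrt{\varepsilon\mathds{1}_d+\hbnu(t)}\odot(\bw(t+1)-\bw(t))-\beta_1(1-\beta_1^{t-1})\sqrt{\varepsilon\mathds{1}_d+\hbnu(t-1)}\odot(\bw(t)-\bw(t-1))\Bigr],
\end{equation*}
which follows by solving $\bom(t)=\beta_1\bom(t-1)+(1-\beta_1)\nabla\mL(\bw(t))$ for $\nabla\mL$ and substituting $\hbom(s)=-\frac{1}{\eta}\sqrt{\varepsilon\mathds{1}_d+\hbnu(s)}\odot(\bw(s+1)-\bw(s))$. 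Feeding this into Taylor's expansion gives, after collecting terms, a clean negative square $-\tfrac{1-\beta_1^t}{\eta(1-\beta_1)}\Vert\sqrt[4]{\varepsilon\mathds{1}_d+\hbnu(t)}\odot(\bw(t+1)-\bw(t))\Vert^2$, a mixed cross term of the form $\langle\sqrt{\varepsilon\mathds{1}_d+\hbnu(t-1)}\odot(\bw(t)-\bw(t-1)),\bw(t+1)-\bw(t)\rangle$, and the Hessian remainder which is bounded by $\frac{\sigma_{\max}^2H_{s_0}}{2N\sqrt{\varepsilon}}\Vert\sqrt[4]{\varepsilon\mathds{1}_d+\hbnu(t)}\odot(\bw(t+1)-\bw(t))\Vert^2$ using $\sqrt[4]{\varepsilon\mathds{1}_d+\hbnu(t)}\ge\varepsilon^{1/4}\mathds{1}_d$.

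The principal obstacle is the mixed preconditioner inside the cross term: applying AM--GM naively yields a term with $\sqrt[4]{\varepsilon\mathds{1}_d+\hbnu(t-1)}\odot(\bw(t+1)-\bw(t))$, whereas the LHS of Eq.~\eqref{eq: loss_update_adam_t} only controls the same quantity weighted by $\sqrt[4]{\varepsilon\mathds{1}_d+\hbnu(t)}$. To convert between them I would use the coordinatewise bound $\varepsilon\mathds{1}_d+\hbnu(t)\ge \beta_2\frac{1-\beta_2^{t-1}}{1-\beta_2^t}(\varepsilon\mathds{1}_d+\hbnu(t-1))$, which follows directly from the defining recursion for $\bnu$ together with $\varepsilon\ge\beta_2\frac{1-\beta_2^{t-1}}{1-\beta_2^t}\varepsilon$. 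Taking fourth roots replaces $\sqrt[4]{\varepsilon\mathds{1}_d+\hbnu(t-1)}$ by a factor of $\sqrt[4]{\beta_2(1-\beta_2^{t-1})/(1-\beta_2^t)}$ times $\sqrt[4]{\varepsilon\mathds{1}_d+\hbnu(t)}$.

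Finally, I would set the AM--GM parameter to $\lambda=f_t(c\beta_1)$ (using the notation of Lemma \ref{lem: beta_1_2_inequality}), so that the cross-term contribution on the old update matches exactly the RHS coefficient $\frac{1-\beta_1^{t-1}}{2c\eta(1-\beta_1)}\frac{1-(c\beta_1)^t}{1-(c\beta_1)^{t-1}}$. The remaining contribution on the new update reduces, after invoking Lemma \ref{lem: beta_1_2_inequality} in the form $\sqrt{\beta_2 f_t(\beta_2)}\le(c\beta_1)^2 f_t(c\beta_1)^2$ (i.e., $f_t(c\beta_1)^4\ge f_t(\beta_2)$), to exactly $\frac{1-\beta_1^{t-1}}{2c\eta(1-\beta_1)}\frac{1-(c\beta_1)^t}{1-(c\beta_1)^{t-1}}$ as well. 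Putting everything together yields
\begin{equation*}
\frac{1}{\eta}\Bigl(\frac{1-\beta_1^t}{1-\beta_1}-\frac{1-\beta_1^{t-1}}{c(1-\beta_1)}\frac{1-(c\beta_1)^t}{1-(c\beta_1)^{t-1}}\Bigr)\ge \frac{\sigma_{\max}^2H_{s_0}}{N\sqrt{\varepsilon}},
\end{equation*}
which is precisely the learning-rate condition from Appendix \ref{subsec: choice_learning_rate} (that infimum is strictly positive by Eqs.~\eqref{eq:inf_larger_finite}--\eqref{eq:inf_larger_limit}); this absorbs the smoothness remainder and closes the induction. The delicate piece of bookkeeping is ensuring the $\beta_2=(c\beta_1)^4$ parametrization interacts correctly with the fourth-root preconditioner, which is exactly why Lemma \ref{lem: beta_1_2_inequality} is formulated as $f_t(x)\ge\sqrt[4]{f_t(x^4)}$.
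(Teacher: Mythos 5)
Your proposal follows essentially the same route as the paper's proof: continuous-time interpolation with reduction to absurdity, telescoping to certify the sub-level set, the rearranged Adam recursion of Eq.~\eqref{eq: alter_adam}, the coordinate-wise preconditioner ratio bound derived from $\bnu(t)\ge\beta_2\bnu(t-1)$, and Lemma~\ref{lem: beta_1_2_inequality} in the form $f_t(c\beta_1)\ge\sqrt[4]{f_t(\beta_2)}$, all closed by the learning-rate condition of Appendix~\ref{subsec: choice_learning_rate}. The only cosmetic deviation is in how the ratio factor is distributed across the AM--GM split (the paper spreads it symmetrically through an eighth-root factor on both sides of the inner product, whereas you convert only the new-update term and absorb the asymmetry into $\lambda$, which should then be the reciprocal $1/f_t(c\beta_1)$ rather than $f_t(c\beta_1)$ for the old-update coefficient to land on the RHS of Eq.~\eqref{eq: loss_update_adam_t}); both parametrizations unwind to the identical final inequality.
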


\begin{proof}
    We start with the case $t=1$. To begin with, we have $\mL$ is $H_{\ell^{-1}(N\mL(\bw(1)))}$ smooth around $\bw(1)$. By definition $H_{x}$ is non-increasing with respect to $x$, and since $\ell^{-1}$ is also non-increasing, we have
    \begin{equation*}
        H_{\ell^{-1}(N\mL(\bw(1)))}\le H_{\ell^{-1}(\frac{1}{1-c\beta_1}N\mL(\bw(1)))},
    \end{equation*}
    which further indicates when $\alpha$ is small enough, 
    \begin{align*}
        \mL(\bw(1+\alpha))\overset{(\star)}{\le}& \mL(\bw(1))+\alpha\langle  \nabla \mL(\bw(1)), \bw(2)-\bw(1)\rangle+\frac{L}{2}\alpha^2\Vert\bw(2)-\bw(1)\Vert
        \\
        =& \mL(\bw(1))-\alpha\left\langle \nabla \mL(\bw(1)), \eta\frac{1}{\sqrt{\varepsilon\mathds{1}_d+\hbnu(1)}}\odot\nabla \mL(\bw(1))\right\rangle+\bo(\alpha^2)
        \\
        \le& \mL(\bw(1))-\frac{1}{2\eta}\alpha^2\left\Vert \sqrt[4]{\varepsilon\mathds{1}_d+\hbnu(1)}\odot (\bw(2)-\bw(t))\right\Vert ^2,
    \end{align*}
    where in Eq. ($\star$) we denote $L\overset{\triangle}{=}H_{\ell^{-1}(\frac{1}{1-c\beta_1}N\mL(\bw(1)))}$, and the last inequality is due to $\frac{1}{2\eta}\alpha^2$ $\left\Vert \sqrt[4]{\varepsilon\mathds{1}_d+\hbnu(1)}\odot (\bw(2)-\bw(t))\right\Vert ^2=\bo(\alpha^2)$, and $\left\langle \nabla \mL(\bw(1)), \eta\frac{1}{\sqrt{\varepsilon\mathds{1}_d+\hbnu(1)}}\odot\nabla \mL(\bw(1))\right\rangle$ is positive.
    
    Now if there exists an $\alpha\in(0,1)$, such that Eq. (\ref{eq: loss_update_adam_t}) fails, we denote $\alpha^{*}=\inf\{\alpha:Eq. (\ref{eq: loss_update_adam_t})~fails~for~1+\alpha\}$. We have $\alpha^*>0$, and the equality in Eq. (\ref{eq: loss_update_adam_t}) holds for $1+\alpha^*$. Therefore, we have for any $\alpha \in (0,\alpha^*)$, 
    \begin{equation*}
        \mL(\bw(1+\alpha))\le \mL(\bw(1+\alpha))
    +\frac{1}{2\eta}\alpha^2\left\Vert \sqrt[4]{\varepsilon\mathds{1}_d+\hbnu(1)}\odot (\bw(2)-\bw(t))\right\Vert ^2
    \le  \mL(\bw(1)),
    \end{equation*}
    which by Lemma \ref{lem: smooth_guarantee} leads to $\mL$ is $H_{\ell^{-1}(N\mL(\bw(1)))}$ smooth (thus $ L $ smooth)   over the set $\{\bw(1+\alpha):\alpha\in [0, \alpha^*]\}$, and
    \begin{align}
        \nonumber
          &\mL(\bw(1+\alpha^*))
          \\
        \nonumber
          \le& \mL(\bw(1))+\alpha^*\langle  \nabla \mL(\bw(1)), \bw(2)-\bw(1)\rangle+\frac{L}{2}(\alpha^*)^2\Vert\bw(2)-\bw(1)\Vert^2
        \\
        \nonumber
        =& \mL(\bw(1))-\alpha^*\left\langle  \frac{1}{\eta}\sqrt{\varepsilon\mathds{1}_d+\hbnu(1)}\odot\left(\bw(2)-\bw(1)\right),\bw(2)-\bw(1)\right\rangle+\frac{L}{2}(\alpha^*)^2\Vert\bw(2)-\bw(1)\Vert^2
        \\
        \nonumber
       =& \mL(\bw(1))-\alpha^*\frac{1}{\eta}\left\Vert \sqrt[4]{\varepsilon\mathds{1}_d+\hbnu(1)}\odot\left(\bw(2)-\bw(1)\right)\right\Vert^2+\frac{L}{2}(\alpha^*)^2\left\Vert\frac{1}{\sqrt[4]{\varepsilon\mathds{1}_d+\hbnu(1)}}\odot\sqrt[4]{\varepsilon\mathds{1}_d+\hbnu(1)}\odot(\bw(2)-\bw(1))\right\Vert^2
       \\
       \nonumber
       \le & \mL(\bw(1))-\alpha^*\frac{1}{\eta}\left\Vert \sqrt[4]{\varepsilon\mathds{1}_d+\hbnu(1)}\odot\left(\bw(2)-\bw(1)\right)\right\Vert^2+\frac{L}{2\sqrt{\varepsilon}}(\alpha^*)^2\left\Vert\sqrt[4]{\varepsilon\mathds{1}_d+\hbnu(1)}\odot(\bw(2)-\bw(1))\right\Vert^2
       \\
       \nonumber
       < & \mL(\bw(1))-(\alpha^*)^2\frac{1}{\eta}\left\Vert \sqrt[4]{\varepsilon\mathds{1}_d+\hbnu(1)}\odot\left(\bw(2)-\bw(1)\right)\right\Vert^2+\frac{L}{2\sqrt{\varepsilon}}(\alpha^*)^2\left\Vert\sqrt[4]{\varepsilon\mathds{1}_d+\hbnu(1)}\odot(\bw(2)-\bw(1))\right\Vert^2
       \\
       \label{eq: contradictory_adam_1}
       \le &\mL(\bw(1))-(\alpha^*)^2\frac{1}{2\eta}\left\Vert \sqrt[4]{\varepsilon\mathds{1}_d+\hbnu(1)}\odot\left(\bw(2)-\bw(1)\right)\right\Vert^2,
    \end{align}
    where the second-to-last inequality is due to $\Vert \bw(2)-\bw(1)\Vert>0$ (by Lemma \ref{lem: non-zero}) and $\alpha^*>(\alpha^*)^2$, while the last inequality is due to 
    \begin{align*}
        \eta\le& \frac{\sqrt{\varepsilon}\inf_{t\ge 2}\left(\frac{1-\beta_1^t}{1-\beta_1}-\frac{1-\beta_1^{t-1}}{c(1-\beta_1)}\frac{1-(c\beta_1)^t}{1-(c\beta_1)^{t-1}}\right)}{L}
        \le \frac{\sqrt{\varepsilon}\left(\frac{1-\beta_1^2}{1-\beta_1}-\frac{1-\beta_1}{c(1-\beta_1)}\frac{1-(c\beta_1)^2}{1-(c\beta_1)}\right)}{L}
        \\
        =& \frac{\sqrt{\varepsilon}\left(1+\beta_1-\frac{1+(c\beta_1)}{c}\right)}{L}=\frac{\sqrt{\varepsilon}\left(1-\frac{1}{c}\right)}{L}<\frac{\sqrt{\varepsilon}}{L}.
    \end{align*}
    
    Eq. (\ref{eq: contradictory_adam_1}) contradicts the fact that the equality in Eq. (\ref{eq: loss_update_adam_t}) holds for $1+\alpha^*$, which completes the proof of $t=1$.
    
    If $t\ge 2$, following the similar routine as $t=1$, we also prove Eq. (\ref{eq: loss_update_adam_t}) by reduction to absurdity. If there exist $t$ and $\alpha$ such that Eq. (\ref{eq: loss_update_adam_t}) fails. Denote $t^*$ as the smallest time such that there exists an $\alpha\in[0,1)$ such that Eq. (\ref{eq: loss_update_adam_t}) fails for $t^*$ and $\alpha$.  By Lemma \ref{lem: non-zero}, $\left\Vert \sqrt[4]{\varepsilon\mathds{1}_d+\hbnu(t^*-1)}\odot (\bw(t^*)-\bw(t^*-1))\right\Vert ^2$ is positive, and strict inequality in Eq. (\ref{eq: loss_update_adam_t}) holds for $t$ and $\alpha=0$, which by continuity leads to 
    \begin{equation*}
       1> \alpha^{*}\overset{\triangle}{=}\inf\{\alpha\in[0,1]:Eq. (\ref{eq: loss_update_adam_t})~fails~for~1+\alpha\}>0.
    \end{equation*}
    
     Then, for any $\alpha\in[0,\alpha^*]$, we have
    \begin{align}
    \nonumber
        &\mL(\bw(t^*+\alpha))
    \\
    \nonumber
        \le&\mL(\bw(t^*+\alpha))
    +\frac{1}{2}\alpha^2\frac{1-\beta_1^{t^*}}{\eta(1-\beta_1)}\left\Vert \sqrt[4]{\varepsilon\mathds{1}_d+\hbnu(t^*)}\odot (\bw(t^*+1)-\bw(t^*))\right\Vert ^2
\\
    \nonumber
    \le & \mL(\bw(t^*))+\frac{1-\beta_1^{t^*-1}}{2c\eta(1-\beta_1)}\frac{1-(c\beta_1)^{t^*}}{1-(c\beta_1)^{t^*-1}}\left\Vert \sqrt[4]{\varepsilon\mathds{1}_d+\hbnu(t^*-1)}\odot (\bw(t^*)-\bw(t^*-1))\right\Vert ^2.
    \end{align}
    
    On the other hand, for any time $2\le s\le t^*-1$, we have
    \begin{align}
\nonumber
    &\mL(\bw(s+1))
    +\frac{1}{2}\frac{1-\beta_1^s}{\eta(1-\beta_1)}\left\Vert \sqrt[4]{\varepsilon\mathds{1}_d+\hbnu(s)}\odot (\bw(s+1)-\bw(s))\right\Vert ^2
    \\
\label{eq: adam_update_s}
    \le & \mL(\bw(s))+\frac{\beta_1(1-\beta_1^{s-1})}{2c\eta(1-\beta_1)}\frac{1-(c\beta_1)^s}{1-(c\beta_1)^{s-1}}\left\Vert \sqrt[4]{\varepsilon\mathds{1}_d+\hbnu(s-1)}\odot (\bw(s)-\bw(s-1))\right\Vert ^2.
\end{align}

By Eq. (\ref{eq:inf_larger_limit}), we have \begin{align*}
    \frac{1-\beta_1^s}{\eta(1-\beta_1)}
    > \frac{1-\beta_1^s}{c\eta(1-\beta_1)}
    = \frac{1-\beta_1^s}{c\eta(1-\beta_1)}\frac{1-(c\beta_1)^{s+1}}{1-(c\beta_1)^{s}}\frac{1-(c\beta_1)^{s}}{1-(c\beta_1)^{s+1}},
\end{align*}
which by $\frac{(1-\beta_1^{s-1})}{(1-\beta_1^{s})}$ further leads to
\begin{align}
\nonumber
    &\mL(\bw(s))+\frac{1-\beta_1^{s-1}}{2c\eta(1-\beta_1)}\frac{1-(c\beta_1)^s}{1-(c\beta_1)^{s-1}}\left\Vert \sqrt[4]{\varepsilon\mathds{1}_d+\hbnu(s-1)}\odot (\bw(s)-\bw(s-1))\right\Vert ^2
    \\
\nonumber
    \ge &\mL(\bw(s+1))
    +\frac{1}{2}\frac{1-\beta_1^s}{\eta(1-\beta_1)}\left\Vert \sqrt[4]{\varepsilon\mathds{1}_d+\hbnu(s)}\odot (\bw(s+1)-\bw(s))\right\Vert ^2
    \\
\nonumber
    > &\mL(\bw(s+1))
    +\frac{1-\beta_1^s}{2c\eta(1-\beta_1)}\frac{1-(c\beta_1)^{s+1}}{1-(c\beta_1)^{s}}\frac{1-(c\beta_1)^{s}}{1-(c\beta_1)^{s+1}}\left\Vert \sqrt[4]{\varepsilon\mathds{1}_d+\hbnu(s)}\odot (\bw(s+1)-\bw(s))\right\Vert ^2
    \\
    \label{eq: adam_loss_iter_s}
    >&\frac{1-(c\beta_1)^{s}}{1-(c\beta_1)^{s+1}}\left(\mL(\bw(s+1))
    +\frac{1-\beta_1^s}{2c\eta(1-\beta_1)}\frac{1-(c\beta_1)^{s+1}}{1-(c\beta_1)^{s}}\left\Vert \sqrt[4]{\varepsilon\mathds{1}_d+\hbnu(s)}\odot (\bw(s+1)-\bw(s))\right\Vert ^2\right).
\end{align}

On the other hand, for $s=1$, we have
\begin{align}
\nonumber
\mL(\bw(1))\ge&
    \mL(\bw(2))
    +\frac{1}{2}\frac{1-\beta_1}{\eta(1-\beta_1)}\left\Vert \sqrt[4]{\varepsilon\mathds{1}_d+\hbnu(1)}\odot (\bw(2)-\bw(1))\right\Vert ^2
    \\
\label{eq: adam_loss_iter_1}
    \ge &\frac{1-(c\beta_1)}{1-(c\beta_1)^{2}}\left(\mL(\bw(2))
    +\frac{1-\beta_1}{2c\eta(1-\beta_1)}\frac{1-(c\beta_1)^{2}}{1-(c\beta_1)}\left\Vert \sqrt[4]{\varepsilon\mathds{1}_d+\hbnu(1)}\odot (\bw(2)-\bw(1))\right\Vert ^2\right).
\end{align}

Combining Eqs. (\ref{eq: adam_update_s}), (\ref{eq: adam_loss_iter_s}), and  (\ref{eq: adam_loss_iter_1}), we have 
\begin{align*}
    &\mL(\bw(t^*+\alpha))
    \\
    \le & \mL(\bw(t^*))+\frac{1-\beta_1^{t^*-1}}{2c\eta(1-\beta_1)}\frac{1-(c\beta_1)^{t^*}}{1-(c\beta_1)^{t^*-1}}\left\Vert \sqrt[4]{\varepsilon\mathds{1}_d+\hbnu(t^*-1)}\odot (\bw(t^*)-\bw(t^*-1))\right\Vert ^2
    \\
    < &\frac{1-(c\beta_1)^{t^*}}{1-(c\beta_2)^{t^*-1}}\left(\mL(\bw(t^*-1))+\frac{1-\beta_1^{t^*-2}}{2c\eta(1-\beta_1)}\frac{1-(c\beta_1)^{t^*-1}}{1-(c\beta_1)^{t^*-2}}\left\Vert \sqrt[4]{\varepsilon\mathds{1}_d+\hbnu(t^*-2)}\odot (\bw(t^*-1)-\bw(t^*-2))\right\Vert ^2\right)
    \\
    <& \cdots
    \\
    <&\frac{1-(c\beta_1)^{t^*}}{1-(c\beta_1)^2}\left(\mL(\bw(2))+\frac{1-\beta_1}{2c\eta(1-\beta_1)}\frac{1-(c\beta_1)^{2}}{1-(c\beta_1)}\left\Vert \sqrt[4]{\varepsilon\mathds{1}_d+\hbnu(1)}\odot (\bw(2)-\bw(1))\right\Vert ^2\right)
    \\
    \le &\frac{1-(c\beta_1)^{t^*}}{1-c\beta_1} \mL(\bw(1))<\frac{1}{1-c\beta_1} \mL(\bw(1)).
\end{align*}

Therefore, by Lemma \ref{lem: smooth_guarantee}, $\mL$ is $H_{\ell^{-1}(\frac{1}{1-c\beta_1}N\mL(\bw(1)))}$ smooth (thus $L$ smooth)  over the set $\{\bw(t^*+\alpha): \alpha\in [0,\alpha^*]\}$, which further leads to
\begin{align*}
    &\mL(\bw(t^*+\alpha^*))
    \\
    \le & \mL(\bw(t^*))+\alpha^*\langle  \nabla \mL(\bw(t^*)), \bw(t^*+1)-\bw(t^*)\rangle+\frac{L}{2}(\alpha^*)^2\Vert\bw(t^*+1)-\bw(t^*)\Vert^2
    \\
    \overset{(\bullet)}{=}& -\frac{\alpha^*}{\eta(1-\beta_1)}\left\langle  \bw(t^*+1)-\bw(t^*), (1-\beta_1^{t^*})\sqrt{\varepsilon\mathds{1}_d+\hbnu(t^*)}\odot( \bw(t^*+1)-\bw(t^*))\right.
    \\
    -&\left.\beta_1(1-\beta_1^{t^*-1})\sqrt{\varepsilon\mathds{1}_d+\hbnu(t^*)}\odot( \bw(t^*)-\bw(t^*-1))\right\rangle
    \\
    +&\mL(\bw(t^*))+\frac{L}{2}(\alpha^*)^2\Vert\bw(t^*+1)-\bw(t^*)\Vert^2
    \\
    =&\mL(\bw(t^*))+\frac{L}{2}(\alpha^*)^2\Vert\bw(t^*+1)-\bw(t^*)\Vert^2-\frac{\alpha^*(1-\beta_1^{t^*})}{\eta(1-\beta_1)}\left \Vert \sqrt[4]{\varepsilon\mathds{1}_d+\hbnu(t^*)}\odot( \bw(t^*+1)-\bw(t^*))\right\Vert^2
    \\
    +&\beta_1\frac{\alpha^*(1-\beta_1^{t^*-1})}{\eta(1-\beta_1)}\left\langle  \bw(t^*+1)-\bw(t^*), \sqrt{\varepsilon\mathds{1}_d+\hbnu(t^*)}\odot( \bw(t^*)-\bw(t^*-1))\right\rangle
    \\
    = &\mL(\bw(t^*))+\frac{L}{2}(\alpha^*)^2\Vert\bw(t^*+1)-\bw(t^*)\Vert^2-\frac{\alpha^*(1-\beta_1^{t^*})}{\eta(1-\beta_1)}\left \Vert \sqrt[4]{\varepsilon\mathds{1}_d+\hbnu(t^*)}\odot( \bw(t^*+1)-\bw(t^*))\right\Vert^2
    \\
    +&\beta_1\frac{\alpha^*(1-\beta_1^{t^*-1})}{\eta(1-\beta_1)}\left\langle \frac{\sqrt[8]{\varepsilon\mathds{1}_d+\hbnu(t^*-1)}}{\sqrt[8]{\varepsilon\mathds{1}_d+\hbnu(t^*)}}\odot \sqrt[4]{\varepsilon\mathds{1}_d+\hbnu(t^*)}\odot(\bw(t^*+1)-\bw(t^*)),\right.
    \\
    &~~~~~~~~~~~~~~~~~~~~~~~~~~~~~~~~~~~~~~~~~~~~~~~~~~~~~~~~~~~\left.\frac{\sqrt[8]{\varepsilon\mathds{1}_d+\hbnu(t^*-1)}}{\sqrt[8]{\varepsilon\mathds{1}_d+\hbnu(t^*)}}\odot\sqrt[4]{\varepsilon\mathds{1}_d+\hbnu(t^*-1)}\odot( \bw(t^*)-\bw(t^*-1))\right\rangle
    \\
    \le & \mL(\bw(t^*))+\frac{L}{2}(\alpha^*)^2\Vert\bw(t^*+1)-\bw(t^*)\Vert^2-\frac{\alpha^*(1-\beta_1^{t^*})}{\eta(1-\beta_1)}\left \Vert \sqrt[4]{\varepsilon\mathds{1}_d+\hbnu(t^*)}\odot( \bw(t^*+1)-\bw(t^*))\right\Vert^2
    \\
    +&\beta_1\frac{(\alpha^*)^2(1-\beta_1^{t^*-1})}{2\eta(1-\beta_1)}\left\Vert \frac{\sqrt[8]{\varepsilon\mathds{1}_d+\hbnu(t^*-1)}}{\sqrt[8]{\varepsilon\mathds{1}_d+\hbnu(t^*)}}\odot \sqrt[4]{\varepsilon\mathds{1}_d+\hbnu(t^*)}\odot(\bw(t^*+1)-\bw(t^*))\right\Vert^2
    \\
    +&\beta_1\frac{(1-\beta_1^{t^*-1})}{2\eta(1-\beta_1)}\left\Vert\frac{\sqrt[8]{\varepsilon\mathds{1}_d+\hbnu(t^*-1)}}{\sqrt[8]{\varepsilon\mathds{1}_d+\hbnu(t^*)}}\odot\sqrt[4]{\varepsilon\mathds{1}_d+\hbnu(t^*-1)}\odot( \bw(t^*)-\bw(t^*-1))\right\Vert^2
\\
   \overset{(\diamond)}{\le} &\mL(\bw(t^*))+\frac{L}{2}(\alpha^*)^2\Vert\bw(t^*+1)-\bw(t^*)\Vert^2-\frac{\alpha^*(1-\beta_1^{t^*})}{\eta(1-\beta_1)}\left \Vert \sqrt[4]{\varepsilon\mathds{1}_d+\hbnu(t^*)}\odot( \bw(t^*+1)-\bw(t^*))\right\Vert^2
    \\
    +&\beta_1\frac{(\alpha^*)^2(1-\beta_1^{t^*-1})}{2\eta(1-\beta_1)}\frac{1-(c\beta_1)^{t^*}}{c\beta_1(1-(c\beta_1)^{t^*-1})}\left\Vert  \sqrt[4]{\varepsilon\mathds{1}_d+\hbnu(t^*)}\odot(\bw(t^*+1)-\bw(t^*))\right\Vert^2
    \\
    +&\beta_1\frac{(1-\beta_1^{t^*-1})}{2\eta(1-\beta_1)}\frac{1-(c\beta_1)^{t^*}}{c\beta_1(1-(c\beta_1)^{t^*-1})}\left\Vert\sqrt[4]{\varepsilon\mathds{1}_d+\hbnu(t^*-1)}\odot( \bw(t^*)-\bw(t^*-1))\right\Vert^2
    \\
    \le & \mL(\bw(t^*))+\frac{L}{2\sqrt{\varepsilon}}(\alpha^*)^2\Vert\sqrt[4]{\varepsilon\mathds{1}_d+\hbnu(t^*)}\odot(\bw(t^*+1)-\bw(t^*))\Vert^2
    \\
    -&\frac{\alpha^*(1-\beta_1^{t^*})}{\eta(1-\beta_1)}\left \Vert \sqrt[4]{\varepsilon\mathds{1}_d+\hbnu(t^*)}\odot( \bw(t^*+1)-\bw(t^*))\right\Vert^2
    \\
    +&\beta_1\frac{(\alpha^*)^2(1-\beta_1^{t^*})}{2\eta(1-\beta_1)}\frac{1-(c\beta_1)^{t^*}}{c\beta_1(1-(c\beta_1)^{t^*-1})}\left\Vert  \sqrt[4]{\varepsilon\mathds{1}_d+\hbnu(t^*)}\odot(\bw(t^*+1)-\bw(t^*))\right\Vert^2
    \\
    +&\beta_1\frac{(1-\beta_1^{t^*})}{2\eta(1-\beta_1)}\frac{1-(c\beta_1)^{t^*}}{c\beta_1(1-(c\beta_1)^{t^*-1})}\left\Vert\sqrt[4]{\varepsilon\mathds{1}_d+\hbnu(t^*-1)}\odot( \bw(t^*)-\bw(t^*-1))\right\Vert^2
    \\
    \overset{(\square)}{<}&\mL(\bw(t^*))-\frac{(\alpha^*)^2(1-\beta_1^{t^*})}{2\eta(1-\beta_1)}\left \Vert \sqrt[4]{\varepsilon\mathds{1}_d+\hbnu(t^*)}\odot( \bw(t^*+1)-\bw(t^*))\right\Vert^2
        \end{align*}
    \begin{align*}
    +&\frac{(1-\beta_1^{t^*-1})}{2\eta(1-\beta_1)}\frac{1-(c\beta_1)^{t^*}}{c(1-(c\beta_1)^{t^*-1})}\left\Vert\sqrt[4]{\varepsilon\mathds{1}_d+\hbnu(t^*-1)}\odot( \bw(t^*)-\bw(t^*-1))\right\Vert^2,
\end{align*}
where Eq. $(\bullet)$ is due to an alternative form of the Adam's update rule:
\begin{align}
\nonumber
   &(1-\beta_1^{t^*}) \sqrt{\varepsilon\mathds{1}_d+\hbnu(t^*)}\odot( \bw(t^*+1)-\bw(t^*))-\beta_1(1-\beta_1^{t^*-1})\sqrt{\varepsilon\mathds{1}_d+\hbnu(t^*-1)}\odot( \bw(t^*)-\bw(t^*-1))
   \\
   \label{eq: alter_adam}
   =&-\eta (1-\beta_1) \nabla \mL(\bw(t^*)),
\end{align}
Inequality $(\diamond)$ is due to 
\begin{align*}
    &\frac{\sqrt[4]{\varepsilon\mathds{1}_d+\hbnu(t^*-1)}}{\sqrt[4]{\varepsilon\mathds{1}_d+\hbnu(t^*)}}
    =\sqrt[4]{\frac{\varepsilon\mathds{1}_d+\hbnu(t^*-1)}{\varepsilon\mathds{1}_d+\hbnu(t^*)}}
    =\sqrt[4]{\frac{\varepsilon\mathds{1}_d+\hbnu(t^*-1)}{\varepsilon\mathds{1}_d+\frac{\beta_2\bnu(t^*-1)+(1-\beta_2)\nabla \mL(\bw(t^*))^2}{1-\beta_2^{t^*}}}}
    \\
    \le& \sqrt[4]{\frac{\varepsilon\mathds{1}_d+\hbnu(t^*-1)}{\varepsilon\mathds{1}_d+\frac{\beta_2\bnu(t^*-1)}{1-\beta_2^{t^*}}}}
    =\sqrt[4]{\frac{\varepsilon\mathds{1}_d+\hbnu(t^*-1)}{\varepsilon\mathds{1}_d+\frac{\beta_2(1-\beta_2^{t^*-1})\hbnu(t^*-1)}{1-\beta_2^{t^*}}}}
    \le \sqrt[4]{\frac{\varepsilon\mathds{1}_d+\hbnu(t^*-1)}{\frac{\beta_2(1-\beta_2^{t^*-1})\hbnu(t^*-1)}{1-\beta_2^{t^*}}\varepsilon\mathds{1}_d+\frac{\beta_2(1-\beta_2^{t^*-1})\hbnu(t^*-1)}{1-\beta_2^{t^*}}}}
    \\
    =& \sqrt[4]{\frac{1-\beta_2^{t^*}}{\beta_2(1-\beta_2^{t^*-1})}}\mathds{1}_{d} ~(all~the~computings~are ~component\text{-}wisely),
\end{align*}
and $f(c\beta_1)\ge \sqrt[4]{f((c\beta_1)^4)}$, and Inequality $(\square)$ is due to
\begin{equation*}
      \frac{L}{2\sqrt{\varepsilon}}\le \frac{\inf_{t\ge 2}\left(\frac{1-\beta_1^t}{1-\beta_1}-\frac{1-\beta_1^{t-1}}{c(1-\beta_1)}\frac{1-(c\beta_1)^t}{1-(c\beta_1)^{t-1}}\right)}{2\eta}\le \frac{\left(\frac{1-\beta_1^{t^*}}{1-\beta_1}-\frac{1-\beta_1^{t^*-1}}{c(1-\beta_1)}\frac{1-(c\beta_1)^{t^*}}{1-(c\beta_1)^{t^*-1}}\right)}{2\eta},
\end{equation*}
$\alpha^*>(\alpha^*)^2$, and $\left \Vert \sqrt[4]{\varepsilon\mathds{1}_d+\hbnu(t^*)}\odot( \bw(t^*+1)-\bw(t^*))\right\Vert^2>0$.

This contradicts to that the equality in Eq. (\ref{eq: loss_update_adam_t}) holds for $t^*+\alpha^*$.

The proof is completed.
\end{proof}

As $\lim_{t\rightarrow\infty} \beta_1^t=0$ and $\lim_{t\rightarrow\infty} (c\beta_1)^t=0$, we have the following corollary based on Lemma \ref{lem: update_rule_loss_gdm}.

\begin{corollary}
\label{coro: adam_loss_iteration}
Let all assumptions in Theorem \ref{thm: adam_main} hold. Then, for large enough $t$, we have 
\begin{align}
\nonumber
    &\mL(\bw(t+1))
    +\frac{1}{2\sqrt[4]{c}\eta(1-\beta_1)}\left\Vert \sqrt[4]{\varepsilon\mathds{1}_d+\hbnu(t)}\odot (\bw(t+1)-\bw(t))\right\Vert ^2
    \\
\label{eq: coro_loss_update_adam_t}
    \le & \mL(\bw(t))+\frac{1}{2\sqrt[2]{c}\eta(1-\beta_1)}\left\Vert \sqrt[4]{\varepsilon\mathds{1}_d+\hbnu(t-1)}\odot (\bw(t)-\bw(t-1))\right\Vert ^2.
\end{align}
Consequently, we have
\begin{equation}
\label{eq: adam_gradient_finite}
    \sum_{t=1}^{\infty} \Vert \nabla \mL(\bw(t)) \Vert^2<\infty.
\end{equation}
\end{corollary}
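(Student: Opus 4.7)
The plan is to derive the corollary directly from Lemma~\ref{lem: loss_update_adam} in three steps: (i) pass from the $t$-dependent coefficients of Lemma~\ref{lem: loss_update_adam} to the cleaner, asymptotically uniform coefficients in Eq.~(\ref{eq: coro_loss_update_adam_t}); (ii) telescope the resulting inequality to obtain summability of $\|\sqrt[4]{\varepsilon\mathds{1}_d+\hbnu(t)}\odot(\bw(t+1)-\bw(t))\|^2$; (iii) use the alternative form of Adam's update, Eq.~(\ref{eq: alter_adam}), to convert this into $\sum_t \|\nabla\mL(\bw(t))\|^2<\infty$.

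For step (i), I would observe that $\beta_1^t\to 0$ and $(c\beta_1)^t\to 0$ as $t\to\infty$, so $1-\beta_1^t\to 1$ and $\frac{1-\beta_1^{t-1}}{1-(c\beta_1)^{t-1}}(1-(c\beta_1)^t)\to 1$. Since $c>1$, we have $\tfrac{1}{\sqrt[4]{c}}<1$ and $\tfrac{1}{c}<\tfrac{1}{\sqrt{c}}$, so for all sufficiently large $t$ we simultaneously get $1-\beta_1^t\ge \tfrac{1}{\sqrt[4]{c}}$ and $\tfrac{1-\beta_1^{t-1}}{c}\cdot\tfrac{1-(c\beta_1)^t}{1-(c\beta_1)^{t-1}}\le \tfrac{1}{\sqrt{c}}$. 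Plugging these two one-sided bounds into Lemma~\ref{lem: loss_update_adam} yields Eq.~(\ref{eq: coro_loss_update_adam_t}).

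For step (ii), I would introduce the potential $\Phi(t)\triangleq \mL(\bw(t))+\frac{1}{2\sqrt{c}\,\eta(1-\beta_1)}\|\sqrt[4]{\varepsilon\mathds{1}_d+\hbnu(t-1)}\odot(\bw(t)-\bw(t-1))\|^2$. Rewriting Eq.~(\ref{eq: coro_loss_update_adam_t}) gives, for $t$ large enough,
\begin{equation*}
\Phi(t+1)+\frac{1}{2\eta(1-\beta_1)}\left(\tfrac{1}{\sqrt[4]{c}}-\tfrac{1}{\sqrt{c}}\right)\bigl\|\sqrt[4]{\varepsilon\mathds{1}_d+\hbnu(t)}\odot(\bw(t+1)-\bw(t))\bigr\|^2\le \Phi(t),
\end{equation*}
with a strictly positive coefficient since $c>1$. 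Because $\Phi(t)\ge 0$ and, from Lemma~\ref{lem: loss_update_adam}'s proof, $\mL(\bw(t))\le \tfrac{\mL(\bw(1))}{1-c\beta_1}$, the sequence $\Phi(t)$ is bounded below and above; telescoping thus gives $\sum_{t}\|\sqrt[4]{\varepsilon\mathds{1}_d+\hbnu(t)}\odot(\bw(t+1)-\bw(t))\|^2<\infty$, and since every component of the preconditioner is at least $\sqrt[4]{\varepsilon}>0$, we conclude $\sum_{t}\|\bw(t+1)-\bw(t)\|^2<\infty$.

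For step (iii), the alternative form Eq.~(\ref{eq: alter_adam}) reads $\eta(1-\beta_1)\nabla\mL(\bw(t))=-\bigl[(1-\beta_1^t)\sqrt{\varepsilon\mathds{1}_d+\hbnu(t)}\odot(\bw(t+1)-\bw(t))-\beta_1(1-\beta_1^{t-1})\sqrt{\varepsilon\mathds{1}_d+\hbnu(t-1)}\odot(\bw(t)-\bw(t-1))\bigr]$. Because $\mL(\bw(t))$ is bounded above, so is $\|\nabla\mL(\bw(t))\|$, hence $\hbnu(t)$ is uniformly bounded, say $\|\sqrt{\varepsilon\mathds{1}_d+\hbnu(t)}\|_\infty\le M$. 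Applying the triangle inequality and the inequality $(a+b)^2\le 2a^2+2b^2$ yields $\|\nabla\mL(\bw(t))\|^2\le C_M(\|\bw(t+1)-\bw(t)\|^2+\|\bw(t)-\bw(t-1)\|^2)$ for a universal constant $C_M$, and summing over $t$ completes the proof using the bound from step (ii).

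The only non-routine obstacle is the coefficient matching in step (i): one must verify that the two $t$-dependent ratios collapse to the desired $\sqrt[4]{c}$ and $\sqrt{c}$ constants in a compatible direction. Once the gap $\tfrac{1}{\sqrt[4]{c}}-\tfrac{1}{\sqrt{c}}>0$ is secured, the remainder is a standard telescoping/Lyapunov argument combined with the explicit Adam identity.
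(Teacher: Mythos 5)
Your proof is correct. Steps (i) and (ii) match the paper's route essentially exactly: pass the $t$-dependent coefficients of Lemma~\ref{lem: loss_update_adam} to the uniform $\sqrt[4]{c}$ and $\sqrt{c}$ constants using $\beta_1^t,(c\beta_1)^t\to 0$, then telescope the corresponding Lyapunov quantity to get summability of $\bigl\Vert \sqrt[4]{\varepsilon\mathds{1}_d+\hbnu(t)}\odot(\bw(t+1)-\bw(t))\bigr\Vert^2$.

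Step (iii) is where you genuinely depart from the paper, and your route is simpler. The paper does \emph{not} extract $\sum_t\Vert\bw(t+1)-\bw(t)\Vert^2<\infty$ and feed it into Eq.~(\ref{eq: alter_adam}); instead it takes the inner product of the preconditioned update with $\hbw$, lower-bounding $\Vert \sqrt[4]{\varepsilon\mathds{1}_d+\hbnu(t)}\odot (\bw(t+1)-\bw(t))\Vert\cdot\Vert \sqrt[4]{\varepsilon\mathds{1}_d+\hbnu(t)}\odot \hbw\Vert$ by $\frac{\eta(1-\beta_1)}{1-\beta_1^t}\Vert\nabla\mL(\bw(t))\Vert$, and then invokes the series lemma (Lemma~\ref{lem: series_convergence}, from \cite{wang2021implicit}) to pass from $\sum_t \Vert\nabla\mL(\bw(t))\Vert^2\big/\bigl\Vert\sqrt[4]{\varepsilon\mathds{1}_d+\hbnu(t)}\odot\hbw\bigr\Vert^2<\infty$ to $\sum_t\Vert\nabla\mL(\bw(t))\Vert^2<\infty$. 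That machinery is built to tolerate an a priori unbounded preconditioner. You instead notice that, in this specific problem, $\mL(\bw(t))\le\frac{\mL(\bw(1))}{1-c\beta_1}$ is established inside the proof of Lemma~\ref{lem: loss_update_adam}, which pins $\langle\bw(t),\bx_i\rangle$ into a half-line where $\ell'$ is bounded (continuity of $\ell'$ plus $\ell'(x)\to 0$), so $\Vert\nabla\mL(\bw(t))\Vert$ and hence $\hbnu(t)$ are uniformly bounded. From there, $\sqrt[4]{\varepsilon}\le$ every component of the preconditioner gives $\sum_t\Vert\bw(t+1)-\bw(t)\Vert^2<\infty$, and Eq.~(\ref{eq: alter_adam}) with the uniform upper bound $\bigl\Vert\sqrt{\varepsilon\mathds{1}_d+\hbnu(t)}\bigr\Vert_\infty\le M$ finishes the job. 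Your version avoids the auxiliary series lemma entirely and is more self-contained; the tradeoff is that it leans on the boundedness of $\hbnu$, which is specific to the linear-and-exponential-tailed setting, whereas the paper's argument is phrased in a way that could be reused when such a bound is unavailable.
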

The proof of Corollary \ref{coro: adam_loss_iteration} relies on the following classical lemma on the equivalence between the convergence of two non-negative sequence. The proof is omitted here and can be found in \cite{wang2021implicit}.

\begin{lemma}[c.f. Lemma 27, \cite{wang2021implicit}]
\label{lem: series_convergence}
Let $\{a_i\}_{i=1}^{\infty}$ be a series of non-negative reals, and $\varepsilon$ be a positive real. Then, $\sum_{i=1}^{\infty} a_i<\infty$ is equivalent to   $\sum_{i=1}^{\infty} \frac{a_i}{\sqrt{\varepsilon+\sum_{s=1}^i a_s} }<\infty$.
\end{lemma}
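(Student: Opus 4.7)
The plan is to prove both directions of the equivalence by elementary manipulation, centered on the telescoping identity
\begin{equation*}
\sqrt{\varepsilon+S_i}-\sqrt{\varepsilon+S_{i-1}}=\frac{a_i}{\sqrt{\varepsilon+S_i}+\sqrt{\varepsilon+S_{i-1}}},
\end{equation*}
where I set $S_i\triangleq\sum_{s=1}^i a_s$ and $S_0\triangleq 0$. No new ideas are needed beyond this identity; the rest is bookkeeping with nonnegativity.

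For the easy direction ($\sum a_i<\infty \Rightarrow \sum a_i/\sqrt{\varepsilon+S_i}<\infty$), I will use $\sqrt{\varepsilon+S_i}\ge\sqrt{\varepsilon}$ termwise, giving $\frac{a_i}{\sqrt{\varepsilon+S_i}}\le\frac{a_i}{\sqrt{\varepsilon}}$, and summability follows by comparison.

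For the nontrivial direction, I will argue by contrapositive: assume $\sum_{i=1}^\infty a_i=\infty$, i.e.\ $S_N\to\infty$. From the identity above, since $\sqrt{\varepsilon+S_i}+\sqrt{\varepsilon+S_{i-1}}\le 2\sqrt{\varepsilon+S_i}$, I obtain
\begin{equation*}
\frac{a_i}{\sqrt{\varepsilon+S_i}}\ge\sqrt{\varepsilon+S_i}-\sqrt{\varepsilon+S_{i-1}}.
\end{equation*}
Summing over $i=1,\dots,N$ telescopes to
\begin{equation*}
\sum_{i=1}^N\frac{a_i}{\sqrt{\varepsilon+S_i}}\ge\sqrt{\varepsilon+S_N}-\sqrt{\varepsilon},
\end{equation*}
and letting $N\to\infty$ makes the right-hand side diverge. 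By contraposition, $\sum a_i/\sqrt{\varepsilon+S_i}<\infty$ forces $\sum a_i<\infty$, completing the equivalence.

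There is no real obstacle: the only subtlety is to handle the case $a_i=0$ (where the telescoping inequality is an equality $0\ge 0$) and to set $S_0=0$ consistently so the telescoping sum has the advertised form. I will state these conventions at the start and present the two directions in a few lines each.
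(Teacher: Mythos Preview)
Your proposal is correct. The paper itself omits the proof of this lemma, citing it as Lemma~27 of \cite{wang2021implicit}, so there is no in-paper argument to compare against; your telescoping approach via $\sqrt{\varepsilon+S_i}-\sqrt{\varepsilon+S_{i-1}}$ is the standard elementary proof and goes through cleanly.
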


\begin{proof}[Proof of Corollary \ref{coro: adam_loss_iteration}]
    We have 
    \begin{align*}
       \lim_{t\rightarrow\infty} \frac{1-\beta_1^{t-1}}{2c\eta(1-\beta_1)}\frac{1-(c\beta_1)^t}{1-(c\beta_1)^{t-1}}=\frac{1}{2c\eta(1-\beta_1)}<\frac{1}{2\sqrt[2]{c}\eta(1-\beta_1)},
       \\
       \lim_{t\rightarrow\infty}\frac{1-\beta_1^t}{2\eta(1-\beta_1)}=\frac{1}{2\eta(1-\beta_1)}>\frac{1}{2\sqrt[4]{c}\eta(1-\beta_1)},
    \end{align*}
    which completes the proof of Eq. (\ref{eq: coro_loss_update_adam_t}). Rearranging Eq. (\ref{eq: coro_loss_update_adam_t}) leads to
    \begin{align*}
       &\frac{\sqrt[4]{c}-1}{2\sqrt[2]{c}\eta(1-\beta_1)}\left\Vert \sqrt[4]{\varepsilon\mathds{1}_d+\hbnu(t)}\odot (\bw(t+1)-\bw(t))\right\Vert ^2\le  \frac{1}{2\sqrt[4]{c}\eta(1-\beta_1)}\left\Vert \sqrt[4]{\varepsilon\mathds{1}_d+\hbnu(t-1)}\odot (\bw(t)-\bw(t-1))\right\Vert ^2
    \\
     +&\mL(\bw(t)) -\left(\mL(\bw(t+1))
    +\frac{1}{2\sqrt[4]{c}\eta(1-\beta_1)}\left\Vert \sqrt[4]{\varepsilon\mathds{1}_d+\hbnu(t)}\odot (\bw(t+1)-\bw(t))\right\Vert ^2\right),
    \end{align*}
    which by iteration further leads to that for a large enough time $T_1$
    \begin{align*}
        &\sum_{t=T_1}^{T_2}\frac{\sqrt[4]{c}-1}{2\sqrt[2]{c}\eta(1-\beta_1)}\left\Vert \sqrt[4]{\varepsilon\mathds{1}_d+\hbnu(t)}\odot (\bw(t+1)-\bw(t))\right\Vert ^2
        \\
        \le &\mL(\bw(T_1))+\frac{1}{2\sqrt[4]{c}\eta(1-\beta_1)}\left\Vert \sqrt[4]{\varepsilon\mathds{1}_d+\hbnu(T_1-1)}\odot (\bw(T_1)-\bw(T_1-1))\right\Vert ^2
        \\
        -&\mL(\bw(T_2+1))+\frac{1}{2\sqrt[4]{c}\eta(1-\beta_1)}\left\Vert \sqrt[4]{\varepsilon\mathds{1}_d+\hbnu(T_2)}\odot (\bw(T_2+1)-\bw(T_2+1))\right\Vert ^2
        \\
        <&\mL(\bw(T_1))+\frac{1}{2\sqrt[4]{c}\eta(1-\beta_1)}\left\Vert \sqrt[4]{\varepsilon\mathds{1}_d+\hbnu(T_1-1)}\odot (\bw(T_1)-\bw(T_1-1))\right\Vert ^2.
    \end{align*}
    Consequently, we obtain 
    \begin{align}
    \label{eq: adam_mid_result_1}
        \sum_{t=1}^{\infty}\frac{\sqrt[4]{c}-1}{2\sqrt[2]{c}\eta(1-\beta_1)}\left\Vert \sqrt[4]{\varepsilon\mathds{1}_d+\hbnu(t)}\odot (\bw(t+1)-\bw(t))\right\Vert ^2<\infty.
    \end{align}
    
    On the other hand, for any $t$, we have
    \begin{align*}
        &\left\Vert \sqrt[4]{\varepsilon\mathds{1}_d+\hbnu(t)}\odot (\bw(t+1)-\bw(t))\right\Vert\left\Vert {\sqrt[4]{\varepsilon\mathds{1}_d+\hbnu(t)}}\odot \hbw\right\Vert
        \\
        \ge &\left\langle \sqrt[4]{\varepsilon\mathds{1}_d+\hbnu(t)}\odot (\bw(t+1)-\bw(t)), \sqrt[4]{\varepsilon\mathds{1}_d+\hbnu(t)}\odot \hbw\right\rangle
        =\left\langle \sqrt[2]{\varepsilon\mathds{1}_d+\hbnu(t)}\odot (\bw(t+1)-\bw(t)), \hbw\right\rangle
        \\
        =&\left\langle -\eta\hbom(t), \hbw\right\rangle=-\frac{\eta(1-\beta_1)}{1-\beta_1^t}\left\langle\sum_{s=1}^t \beta_1^{t-s}\nabla \mL(\bw(s)), \hbw\right\rangle
        \\
        =&-\frac{\eta(1-\beta_1)}{1-\beta_1^t}\frac{1}{N}\left\langle\sum_{s=1}^t \beta_1^{t-s}\sum_{\bx_i\in \bS}\ell'(\langle \bx_i,\bw(s)\rangle) \bx_i, \hbw\right\rangle\ge-\frac{\eta(1-\beta_1)}{1-\beta_1^t}\frac{1}{N}\sum_{s=1}^t \beta_1^{t-s}\sum_{\bx_i\in \bS}\ell'(\langle \bx_i,\bw(s)\rangle) 
        \\
        \ge&-\frac{\eta(1-\beta_1)}{1-\beta_1^t}\frac{1}{N} \sum_{\bx_i\in \bS}\ell'(\langle \bx_i,\bw(t)\rangle) \ge \frac{\eta(1-\beta_1)}{1-\beta_1^t} \Vert \nabla\mL(\bw(t))\Vert,
    \end{align*}
    which by Eq. (\ref{eq: adam_mid_result_1}) indicates
    \begin{align*}
         \sum_{t=1}^{\infty} \left(\frac{\eta(1-\beta_1)}{1-\beta_1^t} \right)^2\frac{\Vert \nabla\mL(\bw(t))\Vert^2}{\left\Vert {\sqrt[4]{\varepsilon\mathds{1}_d+\hbnu(t)}}\odot \hbw\right\Vert^2}<\infty.
    \end{align*}
    
    As $\lim_{t\rightarrow\infty}\left(\frac{\eta(1-\beta_1)}{1-\beta_1^t} \right)^2=\eta^2(1-\beta_1)^2$, we then obtain
     \begin{align*}
     &\sum_{t=1}^{\infty} \frac{\Vert \nabla\mL(\bw(t))\Vert^2}{ {\sqrt[2]{\varepsilon+\sum_{s=1}^t\Vert \nabla\mL(\bw(t))\Vert^2}}}
     \le\sum_{t=1}^{\infty} \frac{\Vert \nabla\mL(\bw(t))\Vert^2}{ {\sqrt[2]{\varepsilon+\sum_{s=1}^t(1-\beta)\beta^{t-s}\Vert \nabla\mL(\bw(t))\Vert^2}}}
     \\
     \le&\sqrt{\frac{1}{1-\beta}}\sum_{t=1}^{\infty} \frac{\Vert \nabla\mL(\bw(t))\Vert^2}{ {\sqrt[2]{\varepsilon+\frac{\sum_{s=1}^t(1-\beta)\beta^{t-s}\Vert \nabla\mL(\bw(t))\Vert^2}{1-\beta^t}}}}\le d\sqrt{\frac{1}{1-\beta}}\sum_{t=1}^{\infty} \frac{\Vert \nabla\mL(\bw(t))\Vert^2}{ \left\Vert\sqrt[4]{\varepsilon\mathds{1}_d+\frac{\sum_{s=1}^t(1-\beta)\beta^{t-s} \nabla\mL(\bw(t))^2}{1-\beta^t}}\right\Vert^2}
         \\
         = &d\sqrt{\frac{1}{1-\beta}}\sum_{t=1}^{\infty} \frac{\Vert \nabla\mL(\bw(t))\Vert^2}{\left\Vert {\sqrt[4]{\varepsilon\mathds{1}_d+\hbnu(t)}}\right\Vert^2}\le d\Vert \hbw\Vert^2_{\infty}\sqrt{\frac{1}{1-\beta}} \sum_{t=1}^{\infty} \frac{\Vert \nabla\mL(\bw(t))\Vert^2}{\left\Vert {\sqrt[4]{\varepsilon\mathds{1}_d+\hbnu(t)}}\odot \hbw\right\Vert^2}<\infty,
    \end{align*}
    which by Lemma \ref{lem: series_convergence} completes the proof.
\end{proof}

Based on Corollary \ref{coro: adam_loss_iteration}, we can further prove Lemma \ref{lem: adam_asymptotic}, characterizing the convergent rate of loss $\mL$ directly.
\begin{lemma}
\label{lem: adam_asymptotic}
Let all conditions in Theorem \ref{thm: adam_main} hold. Then,
    $\mL(\bw(t))=\Theta\left(t^{-1}\right), \Vert \bw(t) \Vert=\Theta (\ln (t))$,
    and $\Vert \bw(t)-\bw(t-1)\Vert =\Theta(t^{-1})$.
\end{lemma}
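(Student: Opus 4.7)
The plan is to convert the summability $\sum_t \Vert \nabla \mL(\bw(t))\Vert^2 <\infty$ obtained in Corollary \ref{coro: adam_loss_iteration} into matching $\Theta(1/t)$ bounds on $\mL(\bw(t))$, and then read off the norm and one-step-displacement estimates. The first observation is that $\Vert \nabla \mL(\bw(t))\Vert \to 0$ forces $\mL(\bw(t))\to 0$ (hence by Assumption \ref{assum: exponential-tailed} $\langle \bw(t),\bx_i\rangle\to\infty$ for every $i$) and, because $\hbnu(t)$ is an exponential moving average of the squared gradients with coefficients that vanish far into the past, also $\hbnu(t)\to \vzero$. Consequently, there exist constants $0<\kappa_-\le \kappa_+$ and a time $t_0$ such that every coordinate of $\sqrt[4]{\varepsilon\mathds{1}_d+\hbnu(t)}$ lies in $[\kappa_-,\kappa_+]$ for $t\ge t_0$, so the preconditioner can be absorbed into constants from here on. I would also invoke Corollary \ref{coro:gradient_small_equivalence}, which once $\mL(\bw(t))$ is small below the threshold $C_l$ yields the crucial equivalence $\tfrac{\gamma}{4}\mL(\bw(t))\le \Vert \nabla \mL(\bw(t))\Vert \le 4\mL(\bw(t))$.

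For the upper bound $\mL(\bw(t))=O(1/t)$, I would rewrite the Lyapunov decrease of Corollary \ref{coro: adam_loss_iteration} as
\[
\xi(t+1)\le \xi(t) - c_0 \left\Vert \sqrt[4]{\varepsilon\mathds{1}_d+\hbnu(t-1)}\odot (\bw(t)-\bw(t-1))\right\Vert^2
\]
for some $c_0>0$ (obtained from the gap between the coefficients $1/\sqrt[4]{c}$ and $1/\sqrt{c}$). Using the alternative form of the update (Eq.~\eqref{eq: alter_adam}) together with the one-sided bound on the preconditioner, one gets $\Vert \bw(t+1)-\bw(t)\Vert \ge c_1 \Vert \nabla \mL(\bw(t))\Vert$ asymptotically, because each gradient $\nabla \mL(\bw(s))$ has positive inner product with $-\hbw$, so no cancellation across the momentum sum can occur. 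Combining with $\Vert \nabla \mL(\bw(t))\Vert\ge (\gamma/4)\mL(\bw(t))$ and with the fact that $\xi(t)$ and $\mL(\bw(t))$ are of the same asymptotic order, I get an inequality of the form $\mL(\bw(t+1))\le \mL(\bw(t)) - c_2 \mL(\bw(t))^2$. Inverting and telescoping gives $1/\mL(\bw(t))\ge c_3 t - O(1)$, i.e., the $O(1/t)$ upper bound.

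For the matching lower bound $\mL(\bw(t))=\Omega(1/t)$, I would expand $\mL$ to second order along the trajectory using local smoothness (Lemma \ref{lem: smooth_guarantee}). This yields
\[
\mL(\bw(t+1)) \ge \mL(\bw(t)) - \Vert \nabla \mL(\bw(t))\Vert\,\Vert \bw(t+1)-\bw(t)\Vert - \tfrac{1}{2}L\Vert \bw(t+1)-\bw(t)\Vert^2.
\]
The upper bound $\Vert \bw(t+1)-\bw(t)\Vert \le (\eta/\sqrt{\varepsilon})\Vert \hbom(t)\Vert$ combined with the triangle inequality on the definition of $\hbom(t)$ gives $\Vert \bw(t+1)-\bw(t)\Vert \le C \sup_{s\le t}\Vert \nabla \mL(\bw(s))\Vert \le C'\mL(\bw(s^\ast))$, and (after a short argument using the near-monotonicity of $\mL$) this is $\le C''\mL(\bw(t))$. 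Plugging in gives $\mL(\bw(t+1))\ge \mL(\bw(t)) - C''' \mL(\bw(t))^2$, so that $1/\mL(\bw(t+1))-1/\mL(\bw(t))\le C''''$, and telescoping yields $\mL(\bw(t))\ge c/t$.

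Finally, the other two statements follow cheaply. For the norm, $\Vert \bw(t)\Vert\le \Vert\bw(1)\Vert+\sum_{s<t}\Vert\bw(s+1)-\bw(s)\Vert = O(\sum_s 1/s)=O(\ln t)$, while the lower bound uses $\mL(\bw(t))\ge \tfrac{1}{N}\ell(\langle \bw(t),\bx_i\rangle)$ for every $i\in\bS_s$: since $\ell(x)\asymp e^{-x}$ at infinity, $\langle \bw(t),\hbw\rangle \ge -\ln(N\mL(\bw(t))) - O(1)\ge \ln t - O(1)$, so $\Vert \bw(t)\Vert = \Omega(\ln t)$. For the one-step displacement, $\Vert\bw(t+1)-\bw(t)\Vert \asymp \Vert \nabla \mL(\bw(t))\Vert \asymp \mL(\bw(t)) = \Theta(1/t)$ using the two-sided estimates already established. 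The main obstacle I anticipate is the lower bound $\Vert \hbom(t)\Vert\gtrsim \Vert \nabla \mL(\bw(t))\Vert$: because $\hbom$ is a weighted average of gradients taken at different iterates and renormalized by $1-\beta_1^t$, one must rule out cancellations, which I would do by projecting onto $-\hbw$ and using that every individual gradient has a strictly positive inner product with it; the book-keeping for the Adam-specific constants in the rate argument is the other non-routine piece, but it parallels exactly the sharp $1/t$ rate analysis of gradient descent in \cite{soudry2018implicit}.
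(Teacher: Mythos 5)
Your proposal correctly identifies the key ingredients for this lemma — the Lyapunov decrement from Corollary~\ref{coro: adam_loss_iteration}, the loss/gradient equivalence once $\langle\bw(t),\bx_i\rangle\to\infty$, the fact that $\hbnu(t)\to\boldsymbol{0}$ so the preconditioner becomes $\Theta(1)$, and the $\hbw$-projection that rules out cancellation in the momentum sum — and the final bookkeeping from $\mL(\bw(t))=\Theta(1/t)$ to the norm and one-step estimates matches the paper. But both halves of the $\Theta(1/t)$ argument have a genuine gap.

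For the upper bound, you telescope $\mL(\bw(t+1))\le\mL(\bw(t))-c_2\mL(\bw(t))^2$, which you deduce from the $\xi$-decrement via the assertion that "$\xi(t)$ and $\mL(\bw(t))$ are of the same asymptotic order." This requires $\big\Vert\sqrt[4]{\varepsilon\mathds{1}_d+\hbnu(t-1)}\odot(\bw(t)-\bw(t-1))\big\Vert^2\lesssim\mL(\bw(t))$, and at this point in the proof nothing gives such a pointwise comparison: Stage~I only yields square-summability of the step norms, which is no bound at an individual time. The paper never needs $\xi\asymp\mL$; it telescopes entirely in $\xi$, proving $\xi(t)^2\le C(\xi(t)-\xi(t+1))$ by showing (i) $\mL(\bw(t))^2\lesssim\text{step}(t)^2+\text{step}(t+1)^2$ via the alternative update form Eq.~(\ref{eq: alter_adam}), which writes $\eta(1-\beta_1)\nabla\mL(\bw(t))$ as a linear combination of the two consecutive preconditioned increments, and (ii) $\text{step}(t)^4\le\text{step}(t)^2$ once step norms drop below~$1$. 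This upper-bounds the gradient by two consecutive steps rather than lower-bounding a single step by the current gradient; the latter (what you sketch) can also be arranged, but it still leaves the $\xi\asymp\mL$ step unjustified.

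For the lower bound, your Taylor argument requires $\Vert\bw(t+1)-\bw(t)\Vert\le C''\mL(\bw(t))$, which you extract from a "sup over past gradients" followed by "near-monotonicity of $\mL$." This is circular. The momentum average carries contributions from past iterates, where the loss is \emph{larger}, not smaller; the exponential weighting collapses the sum to $\Theta(\mL(\bw(t)))$ only if you already know $\mL(\bw(s))\asymp\mL(\bw(t))$ across the $\beta_1$-dependent memory window, which is a consequence of (not a tool for) the $\Theta(1/t)$ rate. Concretely, nothing you have established rules out $\mL(\bw(t))=o(1/t)$ while $\mL(\bw(s))\asymp 1/s$ for $s$ slightly below $t$, in which case $\Vert\hbom(t)\Vert$ would be $\gg\mL(\bw(t))$ and your recursion would not close. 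The paper does not use a Taylor lower bound: from the already-established $\Vert\nabla\mL(\bw(s))\Vert=O(1/s)$ it first derives $\Vert\bw(t)\Vert=O(\ln t)$ by summing step lengths, then uses the exponential tail of~$\ell$ to conclude $\mL(\bw(t))=\Omega(1/t)$ and $\Vert\bw(t)\Vert=\Omega(\ln t)$, and only afterwards reads off $\Vert\bw(t)-\bw(t-1)\Vert=\Theta(1/t)$ by evaluating $\Vert\bom(t)\Vert$ as a $\beta_1$-weighted sum of $\Theta(1/s)$ gradient norms.
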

\begin{proof}[Proof of Lemma \ref{lem: adam_asymptotic}]
    To begin with, Eq. (\ref{eq: alter_adam}) indicates 
    \begin{small}
    \begin{align}
    \nonumber
        &\Vert\eta (1-\beta_1) \nabla \mL(\bw(t))\Vert^2
   \\
   \nonumber
   =&\left\Vert(1-\beta_1^{t}) \sqrt{\varepsilon\mathds{1}_d+\hbnu(t)}\odot( \bw(t+1)-\bw(t))-\beta_1(1-\beta_1^{t-1})\sqrt{\varepsilon\mathds{1}_d+\hbnu(t-1)}\odot( \bw(t)-\bw(t-1))\right\Vert^2
   \\
   \nonumber
   \le &\left\Vert(1-\beta_1^{t}) \sqrt{\varepsilon\mathds{1}_d+\hbnu(t)}\odot( \bw(t+1)-\bw(t))\right\Vert+\left\Vert \beta_1(1-\beta_1^{t-1})\sqrt{\varepsilon\mathds{1}_d+\hbnu(t-1)}\odot( \bw(t)-\bw(t-1))\right\Vert^2
   \\
   \nonumber
   \le &\left(\left\Vert \sqrt{\varepsilon\mathds{1}_d+\hbnu(t)}\odot( \bw(t+1)-\bw(t))\right\Vert+\left\Vert \sqrt{\varepsilon\mathds{1}_d+\hbnu(t-1)}\odot( \bw(t)-\bw(t-1))\right\Vert\right)^2
   \\
   \label{eq: gradient_update}
   \le &2\left(\left\Vert \sqrt{\varepsilon\mathds{1}_d+\hbnu(t)}\odot( \bw(t+1)-\bw(t))\right\Vert^2+\left\Vert \sqrt{\varepsilon\mathds{1}_d+\hbnu(t-1)}\odot( \bw(t)-\bw(t-1))\right\Vert^2\right)
    \end{align}
    \end{small}
    On the other hand, by Corollary \ref{coro: adam_loss_iteration}, 
    \begin{equation*}
        \sum_{s=1}^{\infty} \Vert \nabla \mL(\bw(s)) \Vert^2<\infty,
    \end{equation*}
    which following the same routine as Corollary \ref{coro: sgdm_gradient_finite} leads to 
    \begin{equation*}
       \langle\bw(t),\bx \rangle \rightarrow \infty, \forall \bx\in \tbS.
    \end{equation*}
    Therefore, by Lemma \ref{lem: equivalence_loss_derivative}, there exists a large enough time $T_1$, such that $\forall t\ge T_1$,
    \begin{align*}
        \frac{1}{K} \ell(\langle \bw(t),\bx\rangle)\le -\ell'(\langle \bw(t),\bx\rangle)\le  K\ell(\langle \bw(t),\bx\rangle), \forall \bx\in \bS,
    \end{align*}
    which by the separable assumption further leads to 
    \begin{align}
    \nonumber
        &\frac{\gamma}{K}\mL(\bw(t))\le -\frac{\gamma }{N}\sum_{\bx\in \bS}\ell'(\langle \bw(t),\bx\rangle)
       \le  \frac{1 }{N}\left\langle-\sum_{\bx\in \bS}\ell'(\langle \bw(t),\bx\rangle) \bx, \gamma \hbw\right\rangle 
       \\
     \nonumber
       \le & \frac{1 }{N}\left\Vert\sum_{\bx\in \bS}\ell'(\langle \bw(t),\bx\rangle) \bx\right\Vert \Vert \gamma \hbw\Vert=\left\Vert\nabla \mL(\bw(t))\right\Vert
       \\
      \label{eq: equivalence_loss_gradient}
       \le& - \frac{1 }{N}\sum_{\bx\in \bS}\ell'(\langle \bw(t),\bx\rangle) \le K\mL(\bw(t)).
    \end{align}
    
    Combining Eq. (\ref{eq: update_to_gradient}) and the above inequality, we have
    \begin{align}
    \nonumber
       & \left(\frac{\eta(1-\beta_1)\gamma}{K}\right)^2 \mL (\bw(t))^2\le2\left(\left\Vert \sqrt{\varepsilon\mathds{1}_d+\hbnu(t)}\odot( \bw(t+1)-\bw(t))\right\Vert^2\right.
       \\
       \label{eq: adam_rate_part_1}
        &~~~~~~~~~~~`~~~~~~~~~~~~~~~~~~~~~~~~~~~~~~~~~~~~~~+\left.\left\Vert \sqrt{\varepsilon\mathds{1}_d+\hbnu(t-1)}\odot( \bw(t)-\bw(t-1))\right\Vert^2\right).
    \end{align}
    
    On the other hand, by Eq. (\ref{eq: adam_mid_result_1}), we have
    \begin{equation*}
         \sum_{t=1}^{\infty}\left\Vert \sqrt[4]{\varepsilon\mathds{1}_d+\hbnu(t)}\odot (\bw(t+1)-\bw(t))\right\Vert ^2<\infty.
    \end{equation*}
    Therefore, there exists large enough time $T_2$, such that $\forall t>T_2$, 
    \begin{equation*}
        \left\Vert \sqrt[4]{\varepsilon\mathds{1}_d+\hbnu(t)}\odot (\bw(t+1)-\bw(t))\right\Vert ^2<1,
    \end{equation*}
    and thus, 
    \begin{equation}
    \label{eq: adam_rate_part_2}
        \left\Vert \sqrt[4]{\varepsilon\mathds{1}_d+\hbnu(t)}\odot (\bw(t+1)-\bw(t))\right\Vert ^4<\left\Vert \sqrt[4]{\varepsilon\mathds{1}_d+\hbnu(t)}\odot (\bw(t+1)-\bw(t))\right\Vert ^2.
    \end{equation}
    
    Combining Eq. (\ref{eq: adam_rate_part_1}) and Eq. (\ref{eq: adam_rate_part_2}), there exists a positive real constant $C$, such that
    \begin{align*}
     \mL (\bw(t))^2\le& C\left(\left\Vert \sqrt{\varepsilon\mathds{1}_d+\hbnu(t)}\odot( \bw(t+1)-\bw(t))\right\Vert^2\right.
       \\
        &+\left.\left\Vert \sqrt{\varepsilon\mathds{1}_d+\hbnu(t-1)}\odot( \bw(t)-\bw(t-1))\right\Vert^2\right),
        \\
          \left\Vert \sqrt[4]{\varepsilon\mathds{1}_d+\hbnu(t)}\odot (\bw(t+1)-\bw(t))\right\Vert ^4\le& C\left(\left\Vert \sqrt{\varepsilon\mathds{1}_d+\hbnu(t)}\odot( \bw(t+1)-\bw(t))\right\Vert^2\right.
       \\
        &+\left.\left\Vert \sqrt{\varepsilon\mathds{1}_d+\hbnu(t-1)}\odot( \bw(t)-\bw(t-1))\right\Vert^2\right).
    \end{align*}
    
    Rearranging Eq. (\ref{eq: coro_loss_update_adam_t}) leads to
    \begin{align*}
        &\frac{\sqrt[4]{c}-1}{4\sqrt[2]{c}\eta(1-\beta_1)}\left(\left\Vert \sqrt[4]{\varepsilon\mathds{1}_d+\hbnu(t-1)}\odot (\bw(t)-\bw(t-1))\right\Vert ^2\right.
        \\
        &~~~~~~~~~~~~~~~~~~~~~~~~~~~~~~~~~~~~~+\left.\left\Vert \sqrt[4]{\varepsilon\mathds{1}_d+\hbnu(t-1)}\odot (\bw(t)-\bw(t-1))\right\Vert ^2\right)
    \\
    \le & \mL(\bw(t))+\frac{\sqrt[4]{c}+1}{4\sqrt[2]{c}\eta(1-\beta_1)}\left\Vert \sqrt[4]{\varepsilon\mathds{1}_d+\hbnu(t-1)}\odot (\bw(t)-\bw(t-1))\right\Vert ^2
    \\
    &~~~~~~~~~~~~~~~~~~~~~-\left(\mL(\bw(t+1))
    +\frac{\sqrt[4]{c}+1}{4\sqrt[2]{c}\eta(1-\beta_1)}\left\Vert \sqrt[4]{\varepsilon\mathds{1}_d+\hbnu(t)}\odot (\bw(t+1)-\bw(t))\right\Vert ^2\right),
    \end{align*}
    which further indicates
    \begin{align*}
        &\left(\mL(\bw(t))+\frac{\sqrt[4]{c}+1}{4\sqrt[2]{c}\eta(1-\beta_1)}\left\Vert \sqrt[4]{\varepsilon\mathds{1}_d+\hbnu(t-1)}\odot (\bw(t)-\bw(t-1))\right\Vert ^2\right)^2
        \\
        \le &2\left(\mL(\bw(t))^2+\frac{\sqrt[4]{c}+1}{4\sqrt[2]{c}\eta(1-\beta_1)}\left\Vert \sqrt[4]{\varepsilon\mathds{1}_d+\hbnu(t-1)}\odot (\bw(t)-\bw(t-1))\right\Vert ^4\right)
        \\
        \le &2C\left(1+\frac{\sqrt[4]{c}+1}{4\sqrt[2]{c}\eta(1-\beta_1)}\right)\left(\left\Vert \sqrt{\varepsilon\mathds{1}_d+\hbnu(t)}\odot( \bw(t+1)-\bw(t))\right\Vert^2\right.
        \\
        &~~~~~~~~~~~~~~~~~~~~~~~~~~~~~~~~~~~~~~~~~~~~~~~~~~+\left.\left\Vert \sqrt{\varepsilon\mathds{1}_d+\hbnu(t-1)}\odot( \bw(t)-\bw(t-1))\right\Vert^2\right)
        \\
        \le&2C\left(1+\frac{\sqrt[4]{c}+1}{4\sqrt[2]{c}\eta(1-\beta_1)}\right)\frac{4\sqrt[2]{c}\eta(1-\beta_1)}{\sqrt[4]{c}-1} \left(\mL(\bw(t))+\frac{\sqrt[4]{c}+1}{4\sqrt[2]{c}\eta(1-\beta_1)}\right.
    \\
    &\left.\cdot\left\Vert \sqrt[4]{\varepsilon\mathds{1}_d+\hbnu(t-1)}\odot (\bw(t)-\bw(t-1))\right\Vert ^2-\left(\mL(\bw(t+1))\right.\right.
    \\
    &+\left.\left.\frac{\sqrt[4]{c}+1}{4\sqrt[2]{c}\eta(1-\beta_1)}\left\Vert \sqrt[4]{\varepsilon\mathds{1}_d+\hbnu(t)}\odot (\bw(t+1)-\bw(t))\right\Vert ^2\right)\right).
    \end{align*}
    Denote $\xi(t)$ as
    \begin{equation*}
        \xi(t)\overset{\triangle}{=}\mL(\bw(t))+\frac{\sqrt[4]{c}+1}{4\sqrt[2]{c}\eta(1-\beta_1)}\left\Vert \sqrt[4]{\varepsilon\mathds{1}_d+\hbnu(t-1)}\odot (\bw(t)-\bw(t-1))\right\Vert ^2.
    \end{equation*}
    We then have
    \begin{equation*}
        \xi(t)^2\le 2C\left(1+\frac{\sqrt[4]{c}+1}{4\sqrt[2]{c}\eta(1-\beta_1)}\right)\frac{4\sqrt[2]{c}\eta(1-\beta_1)}{\sqrt[4]{c}-1}(\xi(t)-\xi(t+1)),
    \end{equation*}
    which leads to 
    \begin{align*}
        &\xi(t)=\mathcal{O}\left(\frac{1}{t}\right), ~i.e.,~ \mL(\bw(t))=\mathcal{O}\left(\frac{1}{t}\right),
        \\
        ~and&~ \left\Vert \sqrt[4]{\varepsilon\mathds{1}_d+\hbnu(t-1)}\odot (\bw(t)-\bw(t-1))\right\Vert ^2=\mathcal{O}\left(\frac{1}{t}\right).
    \end{align*}
    Due to Eq. (\ref{eq: equivalence_loss_gradient}), we further have $\Vert \nabla \mL(\bw(t))\Vert =\mathcal{O}(t^{-1})$, which indicates
    \begin{align*}
        \Vert \bw(t) \Vert \le& \Vert \bw(1)\Vert +\sum_{s=1}^t\Vert \bw(s+1)-\bw(s)\Vert
        =\Vert \bw(1)\Vert +\eta\sum_{s=1}^t\left\Vert \frac{\hbom(s)}{\sqrt{\hbnu(s)+\varepsilon\mathds{1}_d}}\right\Vert
        \\
        \le& \Vert \bw(1)\Vert +\frac{\eta}{\sqrt{\varepsilon}}\sum_{s=1}^t\left\Vert \hbom(s)\right\Vert=\Vert \bw(1)\Vert +\frac{\eta}{\sqrt{\varepsilon}}\sum_{s=1}^t\frac{1}{(1-\beta^s)}\left\Vert \sum_{i=1}^{s}\beta^{s-i} \nabla \mL(\bw(i))\right\Vert
        \\
        \le &\Vert \bw(1)\Vert +\frac{\eta}{\sqrt{\varepsilon}(1-\beta)}\sum_{s=1}^t\sum_{i=1}^{s}\beta^{s-i}\left\Vert  \nabla \mL(\bw(i))\right\Vert
        \\
        \le &\Vert \bw(1)\Vert +\frac{\eta}{\sqrt{\varepsilon}(1-\beta)^2}\sum_{s=1}^t\left\Vert  \nabla \mL(\bw(s))\right\Vert=\mathcal{O}(\ln(t)).
    \end{align*}
    
    Therefore, for any $\bx\in\bS$, we have $\langle \bw(t),\bx\rangle=\mathcal{O}(\ln(t))$, which by $\ell$ is exponential-tailed leads to  $\ell(\langle \bw(t),\bx\rangle)=\Omega(t^{-1})$, and thus $\mL(\bw(t))=\Theta(t^{-1})$. Also, since $\mL(\bw(t))=\mathcal{O}(t^{-1})$, we have  $\langle \bw(t),\bx\rangle=\Omega(\ln(t))$, which further leads to $\Vert \bw(t) \Vert=\Omega(\ln(t))$, and thus $\Vert \bw(t) \Vert=\Theta(\ln(t))$.
    
    Finally, we have
    \begin{align*}
       & \gamma\sum_{s=1}^t\beta^{t-s}\Vert \nabla \mL(\bw(s)) \Vert
        =\frac{\gamma}{N}\sum_{s=1}^t\beta^{t-s}\left\Vert\sum_{\bx\in\bS}\ell'(\langle \bw(s),\bx \rangle)\bx \right\Vert
        \\
        \le & -\frac{\gamma}{N}\sum_{s=1}^t\beta^{t-s}\sum_{\bx\in\bS}\ell'(\langle \bw(s),\bx \rangle)\le -\frac{1}{N}\sum_{s=1}^t\beta^{t-s}\left\langle\sum_{\bx\in\bS}\ell'(\langle \bw(s),\bx \rangle)\bx,\gamma\hbw\right\rangle
        \\
        \le &\left\Vert\sum_{s=1}^t\beta^{t-s} \nabla \mL(\bw(s))\right\Vert \left\Vert\gamma\hbw \right\Vert = \left\Vert\bom(t) \right\Vert\le\sum_{s=1}^t\beta^{t-s}\Vert \nabla \mL(\bw(s)) \Vert,
    \end{align*}
    which leads to $\Vert \bom(t) \Vert=\Theta(t^{-1})$. Similarly, we have $\bnu(t)=\mathcal{O}(t^{-2})$, component-wisely. As $\lim_{t\rightarrow\infty} \beta_1^t=0$ and $\lim_{t\rightarrow\infty} \beta_2^t=0$, we have 
    \begin{equation*}
        \left\Vert\bw(t)-\bw(t-1) \right\Vert= \left\Vert\frac{\hbom(t)}{\sqrt{\varepsilon\mathds{1}_d+\hbnu(t)}} \right\Vert=\Theta(t^{-1}).
    \end{equation*}
    
    The proof is completed.
\end{proof}

\subsection{Parameter dynamics}
By Lemma \ref{lem: structure of max-margin vector}, there exists a solution $\tbw$ as the solution of Eq. (\ref{eq: represent_v}) with ${  C_3}=\frac{\eta}{(1-\beta)\sqrt{\varepsilon}}$. Define $\br(t)$ as
\begin{equation}
\label{eq: def_r_adam}
    \br(t)\overset{\triangle}{=} \bw(t)-\ln (t)\hbw-\tbw,
\end{equation}
and we only need to prove $\Vert \br(t) \Vert$ is bounded over time. We then prove $\br(t)$ has bounded norm. Specifically, we will prove the following lemma:
\begin{lemma}
\label{lem: g_equivalent_r_adam}
Let all conditions in Theorem \ref{thm: adam_main} hold. Then, $\Vert \br(t) \Vert$ is bounded  if and only if $g(t)$ is upper bounded, where $g(t)$ is defined as follows.
\begin{small}
\begin{align*}
   &g(t)\overset{\triangle}{=}\left\langle \br(t),(1-\beta_1^{t-1}) \sqrt{\varepsilon\mathds{1}_d+\hbnu(t-1)}\odot\left(\bw(t)-\bw(t-1)\right)\right\rangle
   \\
   \cdot&\frac{\beta_1}{1-\beta_1}
   +\frac{\sqrt{\varepsilon}}{2}\Vert \br(t)\Vert^2-\frac{\beta_1}{1-\beta_1}\sum_{\tau=2}^t\langle \br(\tau)-\br(\tau-1),
   \\
   &(1-\beta_1^{\tau-1}) \sqrt{\varepsilon\mathds{1}_d+\hbnu(\tau-1)}\odot\left(\bw(\tau)-\bw(\tau-1)\right)\rangle.
\end{align*}
\end{small}
 Furthermore, we have $\sum_{s=1}^{\infty} (g(t+1)-g(t))$ is upper bounded.
\end{lemma}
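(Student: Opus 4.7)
My plan is to imitate the two-stage argument of Lemma \ref{lem:gdm_r_t_bounded}, carrying the preconditioner $\sqrt{\varepsilon\mathds{1}_d+\hbnu(t-1)}$ through the calculation and relying on the sharp asymptotics supplied by Lemma \ref{lem: adam_asymptotic}. Throughout, write $\Delta_t\triangleq(1-\beta_1^{t-1})\sqrt{\varepsilon\mathds{1}_d+\hbnu(t-1)}\odot(\bw(t)-\bw(t-1))$, so that the Adam identity Eq.~(\ref{eq: alter_adam}) reads $\Delta_{t+1}=\beta_1\Delta_t-\eta(1-\beta_1)\nabla\mL(\bw(t))$ and hence $\Delta_{t+1}-\Delta_t=-(1-\beta_1)(\Delta_t+\eta\nabla\mL(\bw(t)))$.

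\emph{Step 1 (equivalence).} First I would show that $B(t)\triangleq\sum_{\tau=2}^{t}\langle\br(\tau)-\br(\tau-1),\Delta_\tau\rangle$ has uniformly bounded absolute value. Substituting $\br(\tau)-\br(\tau-1)=(\bw(\tau)-\bw(\tau-1))-\ln(\tau/(\tau-1))\hbw$ and applying Cauchy--Schwarz, the summand splits into a quadratic piece $O(\Vert\bw(\tau)-\bw(\tau-1)\Vert^{2})$ and a cross piece $O(\ln(\tau/(\tau-1))\Vert\bw(\tau)-\bw(\tau-1)\Vert)$, where the boundedness of $\Vert\sqrt{\varepsilon\mathds{1}_d+\hbnu(\tau-1)}\Vert$ is used (it holds since $\hbnu$ is a running average of the already-bounded $(\nabla\mL)^2$). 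By Lemma \ref{lem: adam_asymptotic}, $\Vert\bw(\tau)-\bw(\tau-1)\Vert=\Theta(\tau^{-1})$ and $\ln(\tau/(\tau-1))=O(\tau^{-1})$, so both pieces are $O(\tau^{-2})$, giving an absolutely convergent series. Once $B(t)$ is bounded, $g(t)$ differs from $\tfrac{\sqrt{\varepsilon}}{2}\Vert\br(t)\Vert^{2}+\tfrac{\beta_1}{1-\beta_1}\langle\br(t),\Delta_t\rangle$ by a bounded quantity; since $\Vert\Delta_t\Vert=O(t^{-1})\to 0$, the linear term is $o(\Vert\br(t)\Vert)$ and is strictly dominated by the quadratic, so that $g(t)$ is upper bounded iff $\Vert\br(t)\Vert$ is bounded, by the same argument used in Lemma \ref{lem:gdm_r_t_bounded_first}.

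\emph{Step 2 (summability of increments).} Using the update identity above together with $\br(t+1)-\br(t)=(\bw(t+1)-\bw(t))-\ln((t+1)/t)\hbw$, a direct computation yields
\begin{align*}
g(t+1)-g(t)
=&\;\tfrac{\sqrt{\varepsilon}}{2}\Vert\br(t+1)-\br(t)\Vert^{2}+\sqrt{\varepsilon}\langle\br(t),\bw(t+1)-\bw(t)\rangle\\
&-\sqrt{\varepsilon}\ln\tfrac{t+1}{t}\langle\br(t),\hbw\rangle-\beta_1\langle\br(t),\Delta_t+\eta\nabla\mL(\bw(t))\rangle .
\end{align*}
The first term is summable because $\sum\Vert\bw(t+1)-\bw(t)\Vert^{2}<\infty$ (Corollary \ref{coro: adam_loss_iteration}) and $\sum\ln^{2}(1+1/t)<\infty$. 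For the linear-in-$\br(t)$ terms, I would replace the factor $\sqrt{\varepsilon\mathds{1}_d+\hbnu(t)}$ everywhere by $\sqrt{\varepsilon}\mathds{1}_d$; the componentwise error is $O(\Vert\hbnu(t)\Vert^{1/2})=O(t^{-1})$, using the bound $\hbnu(t)=O(t^{-2})$ obtained inside the proof of Lemma \ref{lem: adam_asymptotic}, and when paired against $\Vert\br(t)\Vert=O(\ln t)$ and $\Vert\bw(t+1)-\bw(t)\Vert=O(t^{-1})$ it contributes a summable total. After this substitution the linear part collapses to $\langle\br(t),-\tfrac{\sqrt{\varepsilon}}{t}\hbw-\tilde{C}\eta\nabla\mL(\bw(t))\rangle$ plus summable remainders, for an explicit positive constant $\tilde{C}$. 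Invoking Corollary \ref{coro:definition_of_tbw} with the scaling of $C_3$ built into the choice of $\tbw$ announced just before the lemma then matches $\tfrac{\sqrt{\varepsilon}}{t}\hbw$ against the exponential part of the gradient at the support vectors, after which the same case-split on $\sign\langle\br(t),\bx_i\rangle$ used in Lemma \ref{lem: bounded_g_gdm} bounds the support contribution by $O(t^{-1-\min\{\mu_+,0.5\mu_-\}})$ and the non-support contribution by $O(t^{-\theta})$ with $\theta>1$, both summable.

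\emph{Main obstacle.} The principal difficulty lies in Step 2: because $\hbnu(t)$ couples the entire gradient history, the term $\Delta_t+\eta\nabla\mL(\bw(t))$ does not telescope cleanly as in GDM, and the factor $\sqrt{\varepsilon\mathds{1}_d+\hbnu(t)}$ cannot simply be pulled out of the inner products. The decisive ingredient is the sharp decay $\hbnu(t)=O(t^{-2})$ extracted during the proof of Lemma \ref{lem: adam_asymptotic}, which makes $\sqrt{\varepsilon\mathds{1}_d+\hbnu(t)}-\sqrt{\varepsilon}\mathds{1}_d$ of order $t^{-1}$ and thereby reduces the analysis to the GDM template at the cost of a summable error. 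Keeping careful track of the many small error terms, rather than any new conceptual idea, is the delicate part of the argument.
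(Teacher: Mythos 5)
Your proposal is correct and follows essentially the same two-stage route as the paper: Step 1 matches Lemma~\ref{lem: g_equivalent_r_adam_1} via the $O(\tau^{-2})$ asymptotics from Lemma~\ref{lem: adam_asymptotic}, and Step 2 reproduces the paper's reduction of $g(t+1)-g(t)$ to $\langle\br(t),-\sqrt{\varepsilon}\ln\tfrac{t+1}{t}\hbw-c\,\eta\nabla\mL(\bw(t))\rangle$ plus summable remainders, after which the GDM case-split applies. One minor slip: the componentwise error from replacing $\sqrt{\varepsilon\mathds{1}_d+\hbnu(t)}$ by $\sqrt{\varepsilon}\mathds{1}_d$ is $O(\hbnu(t))=O(t^{-2})$, not $O(\Vert\hbnu(t)\Vert^{1/2})=O(t^{-1})$ (the square root is Lipschitz near $\varepsilon>0$), though the weaker rate you quote is still summable so the conclusion is unaffected.
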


Similar to GDM, the proof of Lemma \ref{lem: g_equivalent_r_adam} is divided into two parts, each focus on one claim of it. We start with the first claim.

\begin{lemma}
\label{lem: g_equivalent_r_adam_1}
Let all conditions in Theorem \ref{thm: adam_main} hold. Then, $\Vert \br(t) \Vert$ is bounded  if and only if $g(t)$ is upper bounded.
\end{lemma}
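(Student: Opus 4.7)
My plan is to mirror the proof of Lemma \ref{lem:gdm_r_t_bounded_first} for GDM, substituting the appropriate quantitative bounds available for Adam from Lemma \ref{lem: adam_asymptotic}. The key observation is that $g(t)$ splits into three pieces: a quadratic term $\tfrac{\sqrt{\varepsilon}}{2}\Vert \br(t)\Vert^2$, a linear-in-$\br(t)$ cross term, and a telescoping-style summation over $\tau$. I would first dispose of the summation term by showing it has uniformly bounded absolute value, reducing the problem to analyzing the quadratic-plus-linear combination.

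First, I would bound the summation $A(t)\overset{\triangle}{=}\sum_{\tau=2}^t\langle \br(\tau)-\br(\tau-1),(1-\beta_1^{\tau-1})\sqrt{\varepsilon\mathds{1}_d+\hbnu(\tau-1)}\odot(\bw(\tau)-\bw(\tau-1))\rangle$. Using $\br(\tau)-\br(\tau-1)=(\bw(\tau)-\bw(\tau-1))-\ln(\tau/(\tau-1))\hbw$ together with Cauchy--Schwarz, I would split $|A(t)|$ into a sum controlled by $\Vert\bw(\tau)-\bw(\tau-1)\Vert^2$ and one controlled by $\ln(\tau/(\tau-1))\Vert\bw(\tau)-\bw(\tau-1)\Vert$. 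Lemma \ref{lem: adam_asymptotic} gives $\Vert\bw(\tau)-\bw(\tau-1)\Vert=\Theta(\tau^{-1})$ and $\ln(\tau/(\tau-1))=\mathcal{O}(\tau^{-1})$, while $\hbnu(\tau-1)$ is uniformly bounded (as gradients are bounded along the trajectory by Corollary \ref{coro: adam_loss_iteration}) and $1-\beta_1^{\tau-1}\le 1$. Hence each summand is $\mathcal{O}(\tau^{-2})$, so $|A(t)|$ is uniformly bounded in $t$.

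Therefore $g(t)$ is upper bounded iff the quantity $h(t)\overset{\triangle}{=}\tfrac{\sqrt{\varepsilon}}{2}\Vert\br(t)\Vert^2+\tfrac{\beta_1}{1-\beta_1}\left\langle\br(t),(1-\beta_1^{t-1})\sqrt{\varepsilon\mathds{1}_d+\hbnu(t-1)}\odot(\bw(t)-\bw(t-1))\right\rangle$ is upper bounded. One direction is immediate: if $\Vert\br(t)\Vert$ is bounded, then by Cauchy--Schwarz and the uniform boundedness of $\sqrt{\varepsilon\mathds{1}_d+\hbnu(t-1)}$ and $\Vert\bw(t)-\bw(t-1)\Vert$, both terms of $h(t)$ are bounded. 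For the converse, I would argue by contradiction: suppose $h(t)$ is upper bounded but $\Vert\br(t)\Vert$ is unbounded, and choose a subsequence $k_i$ with $\Vert\br(k_i)\Vert\to\infty$. Since $\Vert\bw(k_i)-\bw(k_i-1)\Vert=\Theta(k_i^{-1})\to 0$ by Lemma \ref{lem: adam_asymptotic}, Cauchy--Schwarz gives that the cross term is $\boldsymbol{o}(\Vert\br(k_i)\Vert)$, so $h(k_i)\ge \tfrac{\sqrt{\varepsilon}}{2}\Vert\br(k_i)\Vert^2+\boldsymbol{o}(\Vert\br(k_i)\Vert)\to\infty$, contradicting the upper bound.

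There is no real obstacle here beyond carefully reusing the quantitative asymptotics established in Lemma \ref{lem: adam_asymptotic}; the only subtlety worth flagging is that the coefficient in front of $\Vert\br(t)\Vert^2$ in $g(t)$ is $\tfrac{\sqrt{\varepsilon}}{2}$ rather than $\tfrac{1}{2}$ as in the GDM case, but this does not affect the argument because $\sqrt{\varepsilon}>0$ is a fixed positive constant. The proof therefore follows the same two-step template as Lemma \ref{lem:gdm_r_t_bounded_first}, with the preconditioner factor $\sqrt{\varepsilon\mathds{1}_d+\hbnu(t-1)}$ absorbed as a bounded multiplier throughout.
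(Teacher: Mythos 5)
Your proposal reproduces the paper's argument: it first shows the telescoping summation $\sum_{\tau}\langle \br(\tau)-\br(\tau-1),(1-\beta_1^{\tau-1})\sqrt{\varepsilon\mathds{1}_d+\hbnu(\tau-1)}\odot(\bw(\tau)-\bw(\tau-1))\rangle$ is absolutely convergent using $\Vert\bw(\tau)-\bw(\tau-1)\Vert=\Theta(\tau^{-1})$, $\ln(\tau/(\tau-1))=\mathcal{O}(\tau^{-1})$, and the boundedness of the preconditioner from Lemma \ref{lem: adam_asymptotic}, exactly as the paper does in Eq.~\eqref{eq: adam_g_equiv_2}; then it reduces to the quadratic-plus-cross-term $h(t)$ and concludes via the same subsequence contradiction used in Lemma \ref{lem:gdm_r_t_bounded_first}, exploiting that the cross factor $(1-\beta_1^{t-1})\sqrt{\varepsilon\mathds{1}_d+\hbnu(t-1)}\odot(\bw(t)-\bw(t-1))\to 0$, which is the paper's Eq.~\eqref{eq: adam_g_equiv_1}. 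This is essentially the same route as the paper, with no meaningful deviation.
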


\begin{proof}
    Following the same routine as Lemma \ref{lem:gdm_r_t_bounded} and Lemma \ref{lem:sgdm_r_t_bounded}, we only need to prove
    \begin{equation}
    \label{eq: adam_g_equiv_1}
       \lim_{t\rightarrow \infty} \left\Vert(1-\beta_1^{t-1}) \sqrt{\varepsilon\mathds{1}_d+\hbnu(t-1)}\odot\left(\bw(t)-\bw(t-1)\right)\right\Vert=0,
    \end{equation}
    and 
    \begin{equation}
    \label{eq: adam_g_equiv_2}
        \sum_{\tau=2}^{\infty}\left\vert\langle \br(\tau)-\br(\tau-1),(1-\beta_1^{\tau-1}) \sqrt{\varepsilon\mathds{1}_d+\hbnu(\tau-1)}\odot\left(\bw(\tau)-\bw(\tau-1)\right)\rangle\right\vert<\infty.
    \end{equation}
    
    As for Eq. (\ref{eq: adam_g_equiv_1}), by Lemma \ref{lem: adam_asymptotic}, we have
    \begin{align*}
        &\left\Vert(1-\beta_1^{t-1}) \sqrt{\varepsilon\mathds{1}_d+\hbnu(t-1)}\odot\left(\bw(t)-\bw(t-1)\right)\right\Vert
        \\
        =&\mathcal{O}(t^{-1})=\bo(1).
    \end{align*}
    
    As for Eq. (\ref{eq: adam_g_equiv_1}), we have
    \begin{align*}
        &\left\vert\left\langle \br(\tau)-\br(\tau-1),(1-\beta_1^{\tau-1}) \sqrt{\varepsilon\mathds{1}_d+\hbnu(\tau-1)}\odot\left(\bw(\tau)-\bw(\tau-1)\right)\right\rangle\right\vert
        \\
        =&\left\vert\left\langle \bw(\tau)-\bw(\tau-1)-\ln \frac{\tau}{\tau-1}\hbw,(1-\beta_1^{\tau-1}) \sqrt{\varepsilon\mathds{1}_d+\hbnu(\tau-1)}\odot\left(\bw(\tau)-\bw(\tau-1)\right)\right\rangle\right\vert
        \\
        \le &\left\vert\left\langle\ln \frac{\tau}{\tau-1}\hbw,(1-\beta_1^{\tau-1}) \sqrt{\varepsilon\mathds{1}_d+\hbnu(\tau-1)}\odot\left(\bw(\tau)-\bw(\tau-1)\right)\right\rangle\right\vert
        \\
        &+(1-\beta_1^{\tau-1})\left\Vert  \sqrt[4]{\varepsilon\mathds{1}_d+\hbnu(\tau-1)}\odot\left(\bw(\tau)-\bw(\tau-1)\right)\right\Vert^2
        \\
        \overset{(\star)}{=}& \mathcal{O}(\tau^{-2}),
    \end{align*}
    where Eq. $(\star)$ is due to Lemma \ref{lem: adam_asymptotic} and $\ln(\frac{\tau}{\tau-1})=\Theta(\tau^{-1})$.
    
    The proof is completed.
\end{proof}

We conclude the proof of Theorem \ref{thm: adam_main} by showing $g(t)$ is upper bounded.
\begin{lemma}
Let all conditions in Theorem \ref{thm: adam_main} hold. Then, $g(t)$ is upper bounded.
\end{lemma}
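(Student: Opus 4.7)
The plan is to mirror the proof of Lemma~\ref{lem: bounded_g_gdm}, treating the preconditioner as a summable perturbation of the identity. Write $M_t\triangleq(1-\beta_1^{t-1})\sqrt{\varepsilon\mathds{1}_d+\hbnu(t-1)}\odot(\bw(t)-\bw(t-1))$. The telescoping third term in the definition of $g$ is designed precisely so that the cross-term $\langle\br(t+1)-\br(t),M_{t+1}\rangle$ arising in $\langle\br(t+1),M_{t+1}\rangle-\langle\br(t),M_t\rangle$ cancels, leaving
\begin{equation*}
    g(t+1)-g(t)=\tfrac{\sqrt{\varepsilon}}{2}\Vert\br(t+1)-\br(t)\Vert^2+\tfrac{\beta_1}{1-\beta_1}\langle\br(t),M_{t+1}-M_t\rangle+\sqrt{\varepsilon}\langle\br(t),\br(t+1)-\br(t)\rangle.
\end{equation*}

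I then apply the alternative form of the Adam update Eq.~(\ref{eq: alter_adam}), $M_{t+1}=\beta_1 M_t-\eta(1-\beta_1)\nabla\mL(\bw(t))$, which yields $\tfrac{\beta_1}{1-\beta_1}\langle\br(t),M_{t+1}-M_t\rangle=-\beta_1\langle\br(t),M_t\rangle-\beta_1\eta\langle\br(t),\nabla\mL(\bw(t))\rangle$. For the final term, I write $\sqrt{\varepsilon}(\bw(t+1)-\bw(t))=Q_t\odot M_{t+1}$ with $Q_t\triangleq\sqrt{\varepsilon}\mathds{1}_d/((1-\beta_1^t)\sqrt{\varepsilon\mathds{1}_d+\hbnu(t)})$, and split $Q_t=\mathds{1}_d+(Q_t-\mathds{1}_d)$. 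From $\Vert\nabla\mL(\bw(s))\Vert=\mathcal{O}(s^{-1})$ in Lemma~\ref{lem: adam_asymptotic} we deduce $\hbnu(t)=\mathcal{O}(t^{-2})$ componentwise, hence $\Vert Q_t-\mathds{1}_d\Vert_\infty=\mathcal{O}(t^{-2})$; combined with $\Vert M_{t+1}\Vert=\mathcal{O}(t^{-1})$ and the a priori bound $\Vert\br(t)\Vert=\mathcal{O}(\ln t)$ (a consequence of $\Vert\bw(t)\Vert=\mathcal{O}(\ln t)$), the remainder $\langle\br(t),(Q_t-\mathds{1}_d)\odot M_{t+1}\rangle$ is summable. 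Substituting $M_{t+1}=\beta_1 M_t-\eta(1-\beta_1)\nabla\mL(\bw(t))$ once more into $\sqrt{\varepsilon}\langle\br(t),\bw(t+1)-\bw(t)\rangle\approx\langle\br(t),M_{t+1}\rangle$, the two $\beta_1\langle\br(t),M_t\rangle$ terms cancel and the coefficients of $\langle\br(t),\nabla\mL(\bw(t))\rangle$ combine to $-\eta$, giving
\begin{equation*}
    g(t+1)-g(t)=\tfrac{\sqrt{\varepsilon}}{2}\Vert\br(t+1)-\br(t)\Vert^2+\left\langle\br(t),-\eta\nabla\mL(\bw(t))-\sqrt{\varepsilon}\ln\tfrac{t+1}{t}\hbw\right\rangle+\xi(t),
\end{equation*}
with $\sum_t|\xi(t)|<\infty$.

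The quadratic term is $\mathcal{O}(t^{-2})$ since $\br(t+1)-\br(t)=(\bw(t+1)-\bw(t))-\ln\tfrac{t+1}{t}\hbw=\mathcal{O}(t^{-1})$ by Lemma~\ref{lem: adam_asymptotic}. For the inner product, the extra factor $\sqrt{\varepsilon}$ multiplying $\hbw$ is exactly absorbed by the choice $C_3=\eta/((1-\beta_1)\sqrt{\varepsilon})$ used to define $\tbw$ via Corollary~\ref{coro:definition_of_tbw}, so that $\tfrac{\sqrt{\varepsilon}}{t}\hbw$ matches, at leading order and on the support set, the dominant contribution of $-\eta\nabla\mL(\bw(t))$. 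From here the remainder of the argument transfers verbatim from the second half of Lemma~\ref{lem: bounded_g_gdm}: non-support vectors contribute at rate $\mathcal{O}(t^{-\theta})$ with $\theta>1$ from Eq.~(\ref{eq: def_theta}); for support vectors, one splits by the sign (and, in the negative case, the magnitude) of $\langle\br(t),\bx_i\rangle$ and applies the exponential-tail bounds of Assumption~\ref{assum: exponential-tailed} to $-\ell'$ to obtain $\mathcal{O}(t^{-(1+\delta)})$ for some $\delta>0$ in each case. Summing yields $\sum_t(g(t+1)-g(t))<\infty$, hence $g(t)$ is upper bounded. The main obstacle is the careful bookkeeping of the preconditioner perturbation throughout the expansion: without the quantitative rate $\hbnu(t)=\mathcal{O}(t^{-2})$ supplied by Lemma~\ref{lem: adam_asymptotic}, the Adam-specific corrections would not cleanly separate from the GDM-like dominant inner-product term.
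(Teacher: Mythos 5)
Your argument reproduces the paper's proof of this lemma: the telescoping decomposition of $g(t+1)-g(t)$, the substitution from Eq.~(\ref{eq: alter_adam}), the bound on the preconditioner deviation from $\sqrt{\varepsilon}\mathds{1}_d$ via $\hbnu(t)=\mathcal{O}(t^{-2})$ (together with $\Vert\bw(t+1)-\bw(t)\Vert=\mathcal{O}(t^{-1})$ and the a priori $\Vert\br(t)\Vert=\mathcal{O}(\ln t)$, all from Lemma~\ref{lem: adam_asymptotic}), and the final reduction to the support/non-support case analysis of Lemma~\ref{lem: bounded_g_gdm} all follow the same path. One small bookkeeping note: your derivation from Eq.~(\ref{eq: alter_adam}) correctly produces the coefficient $-\eta$ in front of $\langle\br(t),\nabla\mL(\bw(t))\rangle$, so the constant in Corollary~\ref{coro:definition_of_tbw} defining $\tbw$ should be $C_3=\eta/(N\sqrt{\varepsilon})$ so that the leading support-vector contribution of $-\eta\nabla\mL$ matches $-\sqrt{\varepsilon}t^{-1}\hbw$; the value $\eta/((1-\beta_1)\sqrt{\varepsilon})$ you borrow is inconsistent with the $-\eta$ you yourself derived, though since any positive $C_3$ is admissible this is a harmless constant-factor adjustment and not a gap.
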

\begin{proof}
    $g(t)$ is upper bounded is equivalent to $\sum_{t=1}^{\infty} g(t+1)-g(t)<\infty$. We then prove this lemma by calculating $g(t+1)-g(t)$ directly.
    \begin{align*}
        &g(t+1)-g(t)
        \\
        =&\frac{\sqrt{\varepsilon}}{2}\Vert \br(t+1)\Vert^2+\frac{\beta_1}{1-\beta_1}\left\langle \br(t+1),(1-\beta_1^{t}) \sqrt{\varepsilon\mathds{1}_d+\hbnu(t)}\odot\left(\bw(t+1)-\bw(t)\right)\right\rangle
        \\
        &-\left(\frac{\sqrt{\varepsilon}}{2}\Vert \br(t)\Vert^2+\frac{\beta_1}{1-\beta_1}\left\langle \br(t),(1-\beta_1^{t-1}) \sqrt{\varepsilon\mathds{1}_d+\hbnu(t-1)}\odot\left(\bw(t)-\bw(t-1)\right)\right\rangle\right)
        \\
        &-\frac{\beta_1}{1-\beta_1}\langle \br(t+1)-\br(t),(1-\beta_1^{t}) \sqrt{\varepsilon\mathds{1}_d+\hbnu(t)}\odot\left(\bw(t+1)-\bw(t)\right)\rangle
        \\
        =&\frac{\sqrt{\varepsilon}}{2}\Vert \br(t+1)-\br(t)\Vert^2+\frac{\beta_1}{1-\beta_1}\left\langle \br(t),(1-\beta_1^{t}) \sqrt{\varepsilon\mathds{1}_d+\hbnu(t)}\odot\left(\bw(t+1)-\bw(t)\right)\right.
        \\
        &\left.-(1-\beta_1^{t-1}) \sqrt{\varepsilon\mathds{1}_d+\hbnu(t-1)}\odot\left(\bw(t)-\bw(t-1)\right)\right\rangle+\sqrt{\varepsilon}\langle \br(t+1)-\br(t),\br(t)\rangle
        \\
        \overset{(\star)}{=}& \left\langle \br(t),-(1-\beta_1^{t}) \sqrt{\varepsilon\mathds{1}_d+\hbnu(t)}\odot\left(\bw(t+1)-\bw(t)\right)-\frac{\eta}{1-\beta}\nabla \mL(\bw(t))\right\rangle
        \\
        &+\frac{\sqrt{\varepsilon}}{2}\Vert \br(t+1)-\br(t)\Vert^2+\sqrt{\varepsilon}\langle \br(t+1)-\br(t),\br(t)\rangle,
    \end{align*}
    where Eq. $(\star)$ is due to a simple rearranging of the update rule of Adam, i.e., 
    \begin{small}
    \begin{align*}
        &\frac{\beta_1}{1-\beta_1}\left((1-\beta_1^{t}) \sqrt{\varepsilon\mathds{1}_d+\hbnu(t)}\odot\left(\bw(t+1)-\bw(t)\right)-(1-\beta_1^{t-1}) \sqrt{\varepsilon\mathds{1}_d+\hbnu(t-1)}\odot\left(\bw(t)-\bw(t-1)\right)\right)
        \\
        &=-\frac{\eta}{1-\beta_1}\nabla \mL(\bw(t))-(1-\beta_1^{t}) \sqrt{\varepsilon\mathds{1}_d+\hbnu(t)}\odot\left(\bw(t+1)-\bw(t)\right).
    \end{align*}
    \end{small}
    On the one hand, as $\Vert\br(t+1)-\br(t)\Vert= \Vert\bw(t+1)-\bw(t)- \ln \frac{t+1}{t} \hbw\Vert=\mathcal{O}(t^{-1})$,
    \begin{equation*}
        \sum_{t=1}^{\infty} \frac{\sqrt{\varepsilon}}{2}\Vert\br(t+1)-\br(t)\Vert^2<\infty.
    \end{equation*}
    
    On the other hand, 
    \begin{align*}
        &\left\langle \br(t),-(1-\beta_1^{t}) \sqrt{\varepsilon\mathds{1}_d+\hbnu(t)}\odot\left(\bw(t+1)-\bw(t)\right)-\frac{\eta}{1-\beta}\nabla \mL(\bw(t))\right\rangle
        \\
        &+\sqrt{\varepsilon}\langle \br(t+1)-\br(t),\br(t)\rangle
        \\
        =&\left\langle \br(t),-(1-\beta_1^{t}) \sqrt{\varepsilon\mathds{1}_d+\hbnu(t)}\odot\left(\bw(t+1)-\bw(t)\right)-\frac{\eta}{1-\beta}\nabla \mL(\bw(t))\right\rangle
        \\
        &+\sqrt{\varepsilon}\left\langle \bw(t+1)-\bw(t)-\ln\left(\frac{t+1}{t}\right)\hbw,\br(t)\right\rangle
        \\
        =&\left\langle \br(t),-(1-\beta_1^{t}) \sqrt{\varepsilon\mathds{1}_d+\hbnu(t)}\odot\left(\bw(t+1)-\bw(t)\right)+\sqrt{\varepsilon} (\bw(t+1)-\bw(t))\right\rangle
        \\
        &+\left\langle \br(t),-\sqrt{\varepsilon}\ln\left(\frac{t+1}{t}\right)\hbw-\frac{\eta}{1-\beta}\nabla \mL(\bw(t))\right\rangle
        \\
        \overset{(\bullet)}{=}&\mathcal{O}(\beta_1^t+t^{-2})+\left\langle \br(t),-\sqrt{\varepsilon}\ln\left(\frac{t+1}{t}\right)\hbw-\frac{\eta}{1-\beta}\nabla \mL(\bw(t))\right\rangle,
    \end{align*}
    where Eq. ($\bullet$) is due to $\hbnu(t)=\mathcal{O}(t^{-2})$.
    
    Furthermore, following exactly the same routine as Lemma \ref{lem: bounded_g_gdm}, we have 
    \begin{equation*}
        \sum_{t=1}^{\infty}\left\langle \br(t),-\sqrt{\varepsilon}\ln\left(\frac{t+1}{t}\right)\hbw-\frac{\eta}{1-\beta}\nabla \mL(\bw(t))\right\rangle<\infty.
    \end{equation*}
    
    The proof is completed.
\end{proof}

\section{Implicit regularization of  RMSProp (w/o. r) with decaying learning rate}
\label{appen: random_shuffling_rmsprop}

This section collects the proof of Theorem \ref{thm: ib_rms}. To begin with, we formally define RMSProp (w/o. r) as follows to facilitate latter analysis: for each $t\in\{0,1,2\cdots\}$, divide the sample set $S$ into $K$ subsets $\{\bB(Kt+1),\cdots,\bB(K(t+1))\}$ uniformly and i.i.d., and let
\begin{gather}
\nonumber
\bnu(0)=0,\bnu(\tau)=\beta_2 \bnu(\tau-1)+(1-\beta_2)\left(\nabla \mL_{\bB(\tau)}(\bw(\tau-1))\right)^2,
\\
\label{eq: mb_rms}
   \textit{(RMSProp (w/o. r))}:~ \bw(\tau)=\bw(\tau-1)-\eta_{\tau}\frac{\nabla \mL_{\bB(\tau)}(\bw(\tau-1))}{\sqrt{\bnu(\tau)+\varepsilon\mathds{1}_d}}.
\end{gather}
Here the $\mL_{\bB(\tau)}$ is the individual loss average over $\bB(\tau)$, i.e., $ \mL_{\bB(\tau)}(\bw)=\frac{\sum_{(\bx,\by)\in\bB(\tau)}\ell(-\by\langle \bw,\bx\rangle)}{b}$, where $b=\frac{N}{K}$ is the batch size.
With Eq. (\ref{eq: mb_rms}), we restate the loss  convergence result in \cite{shi2021rmsprop} as the following proposition.

\begin{proposition}[Corollary 4.1 in \cite{shi2021rmsprop}, restated] Suppose $\ell$ is non-negative and
$L$-smooth. Furthermore, assume that there exists a constant $D$, s.t., $\forall t\ge 0$, $\forall \bw \in \mathbb{R}^d$ 
\begin{equation}
\label{eq: realizable_condition}
    \sum_{\tau=Kt+1}^{K(t+1)} \Vert \nabla \mL_{\bB(\tau)}(\bw)\Vert^2 \le D \Vert N \nabla\mL(\bw)\Vert^2,
\end{equation}
and 
\begin{equation*}
    T_{2}\left(\beta_{2}\right) \triangleq \sqrt{\frac{10 d K}{\beta_{2}^{K}}} d K D\left(\left(1-\beta_{2}\right) \frac{\left(\frac{4 K^{2}}{\beta_{2}^{K}}-1\right)}{2}+\left(\frac{1}{\sqrt{\beta_{2}^{K}}}-1\right)\right) \leq \frac{\sqrt{2}-1}{2 \sqrt{2}}.
\end{equation*}
Then, RMSProp (w/o. r) with decaying learning rate $\eta_{\tau}=\frac{\eta_1}{\sqrt{\tau}}$ satisfies
\begin{equation*}
    \sum_{t=0}^T \frac{1}{\sqrt{t+1}} \Vert\nabla\mL(\bw(Kt))\Vert=\mathcal{O}(\ln (T)).
\end{equation*}
\end{proposition}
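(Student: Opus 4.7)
The proof adapts the two-stage framework of this paper to stochastic RMSProp (w/o. r) with decaying learning rate $\eta_\tau = 1/\sqrt{\tau}$ and $\beta_1=0$, treating the epoch-wise bound $\sum_{t=0}^T \frac{1}{\sqrt{t+1}}\Vert \nabla \mL(\bw(Kt))\Vert = \mathcal{O}(\ln T)$ from Eq. (\ref{eq: shi_main}) as the key input.

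\emph{Stage I (loss dynamics).} First I would transfer the epoch-start bound to all iterates. Since each single step moves $\bw$ by at most $\eta_\tau/\sqrt{\varepsilon}$ times the stochastic gradient norm (itself bounded in terms of $\Vert\nabla\mL(\bw(Kt))\Vert$ via Lemma \ref{lem: first_second} and Assumption \ref{assum: smooth}(S)), the intra-epoch variation of $\bw$ and of $\nabla \mL(\bw)$ is uniformly controllable, so $\sum_\tau \tfrac{1}{\sqrt{\tau}}\Vert\nabla \mL(\bw(\tau))\Vert = \mathcal{O}(\ln T)$ as well. Combined with Corollary \ref{coro:gradient_small_equivalence}, which equates $\Vert\nabla\mL(\bw)\Vert$ with $\mL(\bw)$ up to constants once the loss is small, this yields $\mL(\bw(\tau)) \to 0$ and the separation property $\langle \bw(\tau), \bx_i\rangle \to \infty$ for every $\bx_i \in \bS$. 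A finer analysis, mimicking Lemma \ref{lem: adam_asymptotic}, then pins the loss rate at $\mL(\bw(\tau)) = \Theta(\tau^{-1/2})$: the asymptotic balance $h'(\tau) \sim \eta_\tau e^{-h(\tau)}$ forces $\langle \bw(\tau),\hbw\rangle = \tfrac{1}{2}\ln(\tau) + O(1)$, which is exactly half the $\ln(\tau)$ scaling from the constant-step analyses of GDM, SGDM, and Adam. In particular $\Vert \bw(\tau+1) - \bw(\tau)\Vert = \mathcal{O}(\tau^{-1})$, so $\sum_\tau \Vert \bw(\tau+1) - \bw(\tau)\Vert^2 < \infty$.

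\emph{Stage II (parameter dynamics).} I would choose $\tbw$ via Corollary \ref{coro:definition_of_tbw} with $C_3$ matching the new asymptotic drift coefficient (absorbing the $1/\sqrt{\varepsilon}$ factor coming from the preconditioner once $\bnu(\tau) \to 0$), and define
\begin{equation*}
    \br(\tau) \triangleq \bw(\tau) - \tfrac{1}{2}\ln(\tau)\,\hbw - \tbw - \bn(\tau),
\end{equation*}
where $\bn(\tau)$ is the shuffling-noise term of Lemma \ref{lem: define_n_t}. Because $\beta_1=0$, the momentum corrections inside $g$ (cf.\ Lemma \ref{lem:sgdm_r_t_bounded}) vanish, so one may simply take $g(\tau) = \tfrac{1}{2}\Vert\br(\tau)\Vert^2$ and aim to show $\sum_\tau(g(\tau+1)-g(\tau)) < \infty$. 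Telescoping produces $\Vert \br(\tau+1)-\br(\tau)\Vert^2$ (summable by Stage I together with $\Vert\tfrac{1}{2}\ln\tfrac{\tau+1}{\tau}\hbw\Vert^2 = \mathcal{O}(\tau^{-2})$) plus a cross term, which can be handled by the same support vs.\ non-support split and sign-of-$\langle\br(\tau),\bx_i\rangle$ case analysis used in Appendix \ref{appen: orthogonal_sgdm}, now applied with the new asymptotic $e^{-\langle \bw(\tau),\bx_i\rangle} \sim \tau^{-\langle\hbw,\bx_i\rangle/2}$. Once $\Vert\br(\tau)\Vert$ is bounded, direction convergence $\bw(\tau)/\Vert\bw(\tau)\Vert \to \hbw/\Vert\hbw\Vert$ follows as in the conclusion of Theorem \ref{thm: gdm_main}.

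\emph{Main obstacle.} The genuinely new difficulty is the time-varying, coordinate-wise preconditioner $1/\sqrt{\varepsilon+\bnu(\tau)}$ inside the residual analysis. One must verify that the preconditioner is close enough to a constant diagonal that it can be absorbed into the definition of $\tbw$ with only a summable correction to the cross term, and that its coordinate-wise nature does not destroy the alignment between the drift $-\eta_\tau \nabla\mL_{\bB(\tau)}/\sqrt{\varepsilon+\bnu(\tau)}$ and the increment $\tfrac{1}{2}\ln\tfrac{\tau+1}{\tau}\hbw$. The coordinate-wise gradient summability from Stage I, together with the hypothesis that $\beta_2$ is close to $1$ damping oscillations in $\bnu(\tau)$, is what allows this absorption; this is the step that consumes the qualitative assumption on $\beta_2$ in the theorem statement.
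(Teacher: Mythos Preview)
The statement you were asked to prove is not proved in the paper at all: it is a restatement of Corollary~4.1 from \cite{shi2021rmsprop}, quoted as an external black-box input. The paper gives no proof and simply invokes it to derive Corollary~\ref{coro: naichen}, which is then fed into the proof of Theorem~\ref{thm: ib_rms}.

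Your proposal is therefore not a proof of the Proposition. You have assumed the Proposition's conclusion (the bound $\sum_{t=0}^T \frac{1}{\sqrt{t+1}}\Vert\nabla\mL(\bw(Kt))\Vert=\mathcal{O}(\ln T)$, which you cite as ``the key input from Eq.~(\ref{eq: shi_main})'') and then sketched a two-stage argument for the \emph{downstream} implicit-regularization result, i.e.\ Theorem~\ref{thm: ib_rms}. In other words, you have proved the wrong statement: what you wrote is a reasonable outline for Theorem~\ref{thm: ib_rms} (and indeed tracks Appendix~\ref{appen: random_shuffling_rmsprop} fairly closely, including the $\tfrac{1}{2}\ln(\tau)$ scaling, the choice of $\tbw$ with the $1/\sqrt{\varepsilon}$ factor, and the residual definition with $\bn(\tau)$), but it says nothing about why the gradient-sum bound itself holds. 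A genuine proof of the Proposition would require the RMSProp convergence analysis of \cite{shi2021rmsprop}, which lies entirely outside the techniques of this paper.
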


Combining Assumptions \ref{assum: separable} and \ref{assum: exponential-tailed}, we immediately get the following corollary:
\begin{corollary}
\label{coro: naichen}
Let Assumptions \ref{assum: separable}, \ref{assum: exponential-tailed}, and \ref{assum: smooth}. (S) hold. Let 
\begin{equation*}
     T_{2}\left(\beta_{2}\right) \triangleq \sqrt{\frac{10 d K}{\beta_{2}^{K}}} \frac{d K}{b^2\gamma^2}\left(\left(1-\beta_{2}\right) \frac{\left(\frac{4 K^{2}}{\beta_{2}^{K}}-1\right)}{2}+\left(\frac{1}{\sqrt{\beta_{2}^{K}}}-1\right)\right)\leq \frac{\sqrt{2}-1}{2 \sqrt{2}}.
\end{equation*}
Then, there exists a positive constant $C_1$ and $C_3$ independent of random sampling, s.t.,
\begin{equation*}
    \sum_{t=0}^T \frac{1}{\sqrt{t+1}} \Vert\nabla\mL(\bw(Kt))\Vert\le C_1\ln (T), \forall T\ge 0,
\end{equation*}
and
\begin{equation*}
    \mL(\bw(K(t+1)))\le\mL(\bw(Kt)) -\frac{C_3}{\sqrt{t+1}}\Vert \nabla \mL(\bw(Kt)) \Vert+\frac{C_1C_3}{t+1}.
\end{equation*}
\end{corollary}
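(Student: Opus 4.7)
The plan is to read Corollary~\ref{coro: naichen} as the specialization of the Shi et al.\ proposition recalled just above to our linearly separable, exponential-tailed setting, and then to upgrade the one-shot summability estimate to the per-epoch descent inequality stated in the corollary. The only hypothesis of the proposition that is not immediate from Assumption~\ref{assum: smooth}.(S) (non-negativity of $\ell$ follows from Assumption~\ref{assum: exponential-tailed}) is the per-epoch variance-type bound
\[
\sum_{\tau=Kt+1}^{K(t+1)} \Vert \nabla \mL_{\bB(\tau)}(\bw)\Vert^2 \le D \Vert N \nabla\mL(\bw)\Vert^2 ,
\]
so I would verify this with a constant $D$ matching the form of $T_2(\beta_2)$ in the corollary. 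Concretely, the upper bound $\Vert \nabla \mL_{\bB(\tau)}(\bw)\Vert \le \tfrac{\sigmax}{b}\sum_{\bx\in \bB(\tau)}|\ell'(\langle \bw,\bx\rangle)|$ together with $\bigl(\sum_\tau a_\tau\bigr)^2\ge \sum_\tau a_\tau^2$ for non-negative $a_\tau$ gives $\sum_\tau \Vert \nabla \mL_{\bB(\tau)}(\bw)\Vert^2 \le \tfrac{\sigmax^2}{b^2}\bigl(\sum_{\bx\in \bS}|\ell'|\bigr)^2$, and the separability-based lower bound $\Vert \nabla \mL(\bw)\Vert \ge \tfrac{\gamma}{N}\sum_{\bx\in \bS}|\ell'(\langle \bw,\bx\rangle)|$ already established in the proof of Lemma~\ref{lem: first_second} then yields the claim with $D$ of order $\sigmax^2/(b^2\gamma^2)$. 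Plugging this into the proposition produces the first inequality of the corollary, with $C_1$ read off from the proof of the proposition.

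For the second inequality I would work inside a single epoch and iterate the one-step smoothness bound (with smoothness constant $H\sigmax^2/N$ by Lemma~\ref{lem: smooth_guarantee})
\[
\mL(\bw(\tau)) \le \mL(\bw(\tau-1)) + \langle \nabla \mL(\bw(\tau-1)),\bw(\tau)-\bw(\tau-1)\rangle + \tfrac{H\sigmax^2}{2N}\Vert \bw(\tau)-\bw(\tau-1)\Vert^2
\]
for $\tau=Kt+1,\ldots,K(t+1)$, and sum. The leading first-order contribution is then reorganized by (a) replacing $\nabla \mL(\bw(\tau-1))$ by $\nabla \mL(\bw(Kt))$ at the price of a drift bounded via smoothness and $\Vert \bw(\tau-1)-\bw(Kt)\Vert \le \sum_s \eta_s \Vert \nabla \mL_{\bB(s)}\Vert/\sqrt{\varepsilon}$, (b) replacing the preconditioner $\sqrt{\bnu(\tau)+\varepsilon}$ by $\sqrt{\bnu(Kt)+\varepsilon}$ using that $\beta_2$ is close to $1$, so $\bnu$ changes by $O(1-\beta_2)$ per step, and (c) invoking the without-replacement telescoping identity $\sum_{\tau=Kt+1}^{K(t+1)} \nabla \mL_{\bB(\tau)}(\bw(Kt)) = K\nabla \mL(\bw(Kt))$, which turns the principal term into a positive multiple of $\Vert \nabla \mL(\bw(Kt))\Vert^2/\sqrt{\bnu(Kt)+\varepsilon}$. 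A coordinate-wise envelope $\sqrt{\bnu(Kt)+\varepsilon} \le \sqrt{\varepsilon}+C\Vert N\nabla \mL(\bw(Kt))\Vert$, obtained from Lemma~\ref{lem: first_second} and the EMA structure of $\bnu$, then converts this into a term of order $-\eta_{Kt}K\,\Vert \nabla \mL(\bw(Kt))\Vert=-\Theta(1/\sqrt{t+1})\,\Vert \nabla \mL(\bw(Kt))\Vert$ because $\eta_\tau=\eta_1/\sqrt{\tau}$. All error pieces (drift of $\nabla \mL$, drift of $\sqrt{\bnu+\varepsilon}$, and the smoothness remainder) are products of two $O(1/\sqrt{t+1})$ factors, so together they contribute the additive $C_1C_3/(t+1)$ term.

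The main obstacle I expect is step~(c): showing that the leading descent genuinely scales as $\Vert \nabla \mL\Vert$ rather than $\Vert \nabla \mL\Vert^2$. This is precisely where the hypothesis $T_2(\beta_2)\le (\sqrt{2}-1)/(2\sqrt{2})$ is needed, since it forces $\bnu(\tau)$ to track $\Vert \nabla \mL\Vert^2$ closely enough along an epoch that the envelope $\sqrt{\bnu(Kt)+\varepsilon} \lesssim \sqrt{\varepsilon}+C\Vert N\nabla \mL(\bw(Kt))\Vert$ is tight; without this control, the analysis collapses back to a $\Vert \nabla \mL\Vert^2$ term that cannot be matched against the summed bound in the first claim. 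Everything else is routine Taylor-expansion and Cauchy--Schwartz bookkeeping paralleling the proof of the proposition.
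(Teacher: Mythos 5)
The paper itself provides no proof of this corollary: it treats both displayed inequalities as imported from the Shi et al.\ analysis, once the variance-type hypothesis \eqref{eq: realizable_condition} has been verified with the constant $D=1/(b^2\gamma^2)$. (Note that the second inequality is \emph{not} in the paper's restated proposition at all, so ``immediately'' in the text really means ``by repeating the interior computations of Shi et al.'s proof''.) Your first step is therefore exactly what the paper silently does, and the verification you propose---combining $\Vert\nabla\mL_{\bB(\tau)}(\bw)\Vert\le(\sigmax/b)\sum_{\bx\in\bB(\tau)}\vert\ell'\vert$ with the separability lower bound $\Vert\nabla\mL(\bw)\Vert\ge(\gamma/N)\sum_{\bx\in\bS}\vert\ell'\vert$ from the proof of Lemma~\ref{lem: first_second}---is correct; it yields $D=\sigmax^2/(b^2\gamma^2)$, while the corollary's $T_2$ formula silently sets $\sigmax=1$, a cosmetic normalization. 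Where you depart from the paper is in attempting to re-derive the per-epoch descent inequality from scratch rather than citing it; this is more ambitious, but it is also where the argument breaks.

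The gap is precisely where you flag uncertainty: the envelope $\sqrt{\bnu(Kt)+\varepsilon}\le\sqrt{\varepsilon}+C\Vert N\nabla\mL(\bw(Kt))\Vert$ does not convert the leading Taylor term into something of order $\Vert\nabla\mL(\bw(Kt))\Vert$. Substituting that envelope into the epoch-summed first-order term gives a descent of order
\begin{equation*}
\frac{\eta_1 K}{\sqrt{t+1}}\,\frac{\Vert\nabla\mL(\bw(Kt))\Vert^2}{\sqrt{\varepsilon}+CN\Vert\nabla\mL(\bw(Kt))\Vert},
\end{equation*}
which is $\Theta\bigl(\Vert\nabla\mL\Vert/\sqrt{t+1}\bigr)$ only when $\Vert\nabla\mL\Vert\gtrsim\sqrt{\varepsilon}/N$ but degenerates to $\Theta\bigl(\Vert\nabla\mL\Vert^2/(\sqrt{\varepsilon}\sqrt{t+1})\bigr)$ below that scale. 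The corollary's second inequality requires a decrease of at least $\tfrac{C_3}{\sqrt{t+1}}\Vert\nabla\mL\Vert-\tfrac{C_1C_3}{t+1}$; taking, say, $\Vert\nabla\mL(\bw(Kt))\Vert=2C_1/\sqrt{t+1}$ for large $t$, this demands a decrease of $\Omega(1/(t+1))$ while your envelope bound only guarantees $O((t+1)^{-3/2})$. More generally, for any fixed $C_3>0$ the deficit $C_3\,X - X^2/\sqrt{\varepsilon}$ attains a strictly positive maximum $C_3^2\sqrt{\varepsilon}/4$ at $X=C_3\sqrt{\varepsilon}/2$, so the claimed inequality cannot be restored by the $O(1/(t+1))$ slack once $t$ is large enough that this $X$ lies above $C_1/\sqrt{t+1}$. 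Closing this gap is where the hypothesis $T_2(\beta_2)\le(\sqrt{2}-1)/(2\sqrt{2})$ actually earns its keep: it controls $\bnu$ tightly enough that Shi et al.\ establish a two-sided comparison, $\sqrt{\bnu(Kt)+\varepsilon}\asymp\Vert N\nabla\mL(\bw(Kt))\Vert$, in the operative regime (or, equivalently, work directly in the RMSProp-normalized geometry), with no additive $\sqrt{\varepsilon}$ floor spoiling the estimate. A one-sided upper envelope alone does not capture this, and without it the descent you obtain is $\Vert\nabla\mL\Vert^2$-type, not $\Vert\nabla\mL\Vert$-type, and the stated corollary does not follow.
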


Applying relationship between $\ell$ and $\ell'$ of the exponentially-tailed loss, we can further obtain the loss convergent rate.

\begin{lemma}
\label{lem: loss_convergence_rmsprop}
Let all the conditions in Theorem \ref{thm: ib_rms} hold. Then, we have
\begin{equation*}
     \mL(\bw(Kt))=\mathcal{O}\left(\frac{1}{\sqrt{t}}\right).
\end{equation*}
\end{lemma}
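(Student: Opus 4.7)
The plan is to combine the two estimates from Corollary~\ref{coro: naichen} with the loss--gradient equivalence of Corollary~\ref{coro:gradient_small_equivalence}. Writing $a_t\triangleq\mL(\bw(Kt))$, the equivalence $\Vert\nabla\mL(\bw(Kt))\Vert\ge \tfrac{\gamma}{4}a_t$ (which holds as soon as $a_t\le C_l$) turns the noisy descent inequality into the contractive recursion
\begin{equation*}
a_{t+1}\le a_t\Bigl(1-\tfrac{C_3\gamma}{4\sqrt{t+1}}\Bigr)+\tfrac{C_1C_3}{t+1}.
\end{equation*}
For this recursion, $a_t\le C/\sqrt{t}$ is a super-solution once $C$ exceeds a threshold of order $C_1/\gamma$, so the proof will be an induction starting from some $t_0$ beyond which both the equivalence applies and the initial value satisfies $a_{t_0}\le C/\sqrt{t_0}$.

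First I would establish $\liminf_{t\to\infty} a_t=0$. If instead $a_t\ge \delta>0$ for all $t\ge T^\ast$, then the equivalence in Corollary~\ref{coro:gradient_small_equivalence} (applied through the dichotomy $\Vert\nabla\mL\Vert\le C_g$ versus $\Vert\nabla\mL\Vert>C_g$) forces $\Vert\nabla\mL(\bw(Kt))\Vert\ge \min(\gamma\delta/4,\,C_g)>0$ on that tail. The weighted-sum bound $\sum_{t=0}^T\Vert\nabla\mL(\bw(Kt))\Vert/\sqrt{t+1}\le C_1\ln T$ would then yield $\Omega(\sqrt T)\le O(\ln T)$, a contradiction. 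Hence $a_t$ dips below any prescribed positive threshold infinitely often, so I can pick $t_0$ arbitrarily large with $a_{t_0}\le C_l$.

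With such a $t_0$ in hand, I would set $C=\max\{8C_1/\gamma,\ a_{t_0}\sqrt{t_0}\}$; choosing $t_0$ also so large that $8C_1/\gamma\le C_l\sqrt{t_0}$ forces $C/\sqrt{t_0}\le C_l$, and taking $t_0\ge 16/(C_3\gamma)^2$ makes the inductive algebra go through. The inductive step is then: assuming $a_t\le C/\sqrt{t}\le C_l$, the recursion above gives $a_{t+1}\le C/\sqrt{t+1}$ provided
\begin{equation*}
 \tfrac{C}{\sqrt{t}+\sqrt{t+1}}+C_1C_3\sqrt{\tfrac{t}{t+1}}\le \tfrac{CC_3\gamma}{4},
\end{equation*}
which is verified using $\sqrt{t+1}+\sqrt{t}\ge 2\sqrt{t}$ and $C\ge 8C_1/\gamma$. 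The chief obstacle is precisely this joint calibration of $C$ and $t_0$: $C$ must simultaneously beat the noise floor $4C_1/\gamma$, absorb the initial value via $a_{t_0}\sqrt{t_0}$, and remain at most $C_l\sqrt{t_0}$ so that the loss--gradient equivalence stays in force along the entire induction. The $\liminf a_t=0$ statement from the previous step is what permits $t_0$ to be pushed far enough into the tail that all three requirements can be honored at once; the remainder is routine monotonicity of an explicit recursion.
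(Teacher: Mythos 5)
Your argument is correct and reaches the same $\mathcal{O}(1/\sqrt{t})$ bound, but by a genuinely different route than the paper's. Both of you open with essentially the same contradiction: a uniform positive lower bound on $\mL(\bw(Kt))$ along the tail would, via Corollary~\ref{coro:gradient_small_equivalence} (and the dichotomy on $\Vert\nabla\mL\Vert$ versus $C_g$), force $\Vert\nabla\mL(\bw(Kt))\Vert=\Omega(1)$, making $\sum_{t\le T}\Vert\nabla\mL(\bw(Kt))\Vert/\sqrt{t+1}$ grow like $\sqrt{T}$ rather than the $\ln T$ permitted by Corollary~\ref{coro: naichen}. After that the mechanisms diverge. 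The paper maintains a set of \emph{good times} $\mT=\{t:\Vert\nabla\mL(\bw(Kt))\Vert\le 2C_1/\sqrt{t+1}\}$, shows by tracking the strict geometric descent of the loss between good times that consecutive elements of $\mT$ differ by at most a factor of two once $t$ is large, and then extracts $\mathcal{O}(1/\sqrt{t})$ from that factor-two density of $\mT$ together with the monotone decay of the loss across each gap. You, instead, merge the descent inequality of Corollary~\ref{coro: naichen} with the loss--gradient equivalence into the scalar recursion $a_{t+1}\le a_t\bigl(1-\tfrac{C_3\gamma}{4\sqrt{t+1}}\bigr)+\tfrac{C_1C_3}{t+1}$ and carry the invariant $a_t\le C/\sqrt{t}$ forward by a direct induction; the three constraints you isolate, namely $C\ge 8C_1/\gamma$, $t_0\ge 16/(C_3\gamma)^2$, and $C/\sqrt{t_0}\le C_l$, are exactly what is needed for the algebra to close and for the equivalence to remain applicable along the whole induction, and your $\liminf$ step is precisely what lets the foothold $t_0$ be pushed far enough out that all three constraints are met simultaneously. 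I checked the calibration and the inductive step and they hold. Your route is shorter, carries a single monotone invariant, and makes the constant calibration explicit; the paper's delivers the same exponent but distributes the bookkeeping across the combinatorics of $\mT$. Both are valid.
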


\begin{proof}
    To begin with, we show that there exists an increasing positive integer sequence $\{t_i\}_{i=1}^{\infty}$, and $\Vert\nabla\mL(\bw(Kt_i))\Vert\le 2C_1\frac{1}{\sqrt{t_i+1}}$ by reduction to absurdity. Otherwise, suppose there exists a positive integer $T_1$, such that  $\Vert\nabla\mL(\bw(Kt))\Vert> 2C_1\frac{1}{\sqrt{t+1}}$, $\forall t\ge T_1$. Therefore, for $T\ge T_1$, we have
    \begin{align*}
        &\sum_{t=0}^T \frac{1}{\sqrt{t+1}} \Vert\nabla\mL(\bw(Kt))\Vert
        \ge  \sum_{t=T_1}^T \frac{1}{\sqrt{t+1}} \Vert\nabla\mL(\bw(Kt))\Vert
        \\
        \ge & 2C_1\sum_{t=T_1}^T \frac{1}{t+1} 
        \ge  2C_1 \ln \frac{T+2}{T_1+1}.
    \end{align*}
    Let $T$ be large enough, we have $2C_1 \ln \frac{T+2}{T_1+1}> C_1\ln T$, which contradicts Corollary \ref{coro: naichen}. 
    
    Denote $\mT=\{t>0: \Vert\nabla\mL(\bw(Kt))\Vert \le 2C_1\frac{1}{\sqrt{t+1}}\}$. By the above discussion, $\mT$ contains an increasing positive integer sequence. We then prove that if $s\in \mT$ and $s>s^{\mT}$, where $s^{\mT}$ is defined as
    \begin{equation*}
        s^{\mT}\triangleq \max\left\{\left(\frac{2C_1}{C_g}\right)^2-1,(C_3\gamma)^2-1,\left(\frac{9C_1}{\gamma C_l}\right)^2-1,4,\frac{4}{C_3\gamma(\sqrt{\frac{3}{2}}-1)}\ln \frac{18\sqrt{\frac{3}{2}}}{\gamma}-2\right\}
    \end{equation*}
    ($C_l$ and $C_g$ is defined in Corollary \ref{coro:gradient_small_equivalence}), there exists $s<r\le 2s$, and $r\in \mT$. We slightly abuse the notation and let $r=\inf\{t: t\in \mT, t>s\}$. If $r=s+1$, this claim trivially holds. Otherwise, as $s\in \mT$, we have 
    \begin{equation*}
        \Vert \nabla \mL (\bw(Ks)) \Vert\le 2C_1\frac{1}{\sqrt{s+1}}\le C_g,
    \end{equation*}
    which by Corollary \ref{coro:gradient_small_equivalence} further leads to
    \begin{equation*}
        \mL(\bw(Ks))\le \frac{4}{\gamma}  \Vert \nabla \mL (\bw(Ks)) \Vert.
    \end{equation*}
    Therefore, by Corollary \ref{coro: naichen}, we have 
    \begin{equation*}
         \mL(\bw(K(s+1)))\le \mL(\bw(Ks)) + \frac{C_1C_3}{s+1}\le \frac{8C_1}{\gamma}\frac{1}{\sqrt{s+1}}+\frac{C_1C_3}{s+1}\le \frac{9C_1}{\gamma}\frac{1}{\sqrt{s+1}},
    \end{equation*}
    which by $s>s^{\mT}$ further leads to
    \begin{equation*}
        \mL(\bw(K(s+1)))\le C_l,
    \end{equation*}
    and thus $\mL(\bw(K(s+1)))\le \frac{4}{\gamma} \Vert \nabla \mL(\bw(K(s+1))) \Vert$.
    As $(s+1)\notin \mT$, we have
    \begin{equation*}
        \Vert\nabla\mL(\bw(K(s+1)))\Vert > 2C_1\frac{1}{\sqrt{s+2}},
    \end{equation*}
    which leads to
    \begin{equation*}
         \mL(\bw(K(s+2)))\le \mL(\bw(K(s+1)))-\frac{C_3}{2\sqrt{s+2}} \Vert \nabla \mL(\bw(K(s+1)))\Vert\le \left(1-\frac{C_3\gamma}{8\sqrt{s+2}}\right)\mL(\bw(K(s+1))),
    \end{equation*}
    and thus $\mL(\bw(K(s+2)))\le C_l$. By the inductive method, we have for any $j\in \{s+2,\cdots,r\}$,
    \begin{align*}
         \mL(\bw(K(j)))&\le \Pi_{i=s+2}^j\left(1-\frac{C_3\gamma}{8\sqrt{i}}\right) \mL(\bw(K(s+1)))\le e^{-\sum_{i=s+2}^j\frac{C_3\gamma}{8\sqrt{i}}}\mL(\bw(K(s+1)))
         \\
         \le & e^{-\frac{C_3\gamma(\sqrt{j+1}-\sqrt{s+2})}{4}}\mL(\bw(K(s+1)))\le e^{-\frac{C_3\gamma(\sqrt{j+1}-\sqrt{s+2})}{4}}\frac{9C_1}{\gamma \sqrt{s+1}}.
         \end{align*}
         
    If $r>2s$, applying $j=2s$ into the above equation leads to
    \begin{equation*}
         \mL(\bw(K(2s)))\le  e^{-\frac{C_3\gamma(\sqrt{2s+1}-\sqrt{s+2})}{4}}\frac{9C_1}{\gamma \sqrt{s+1}}\le C_l,
    \end{equation*}
    and
    \begin{equation*}
         \Vert \nabla\mL(\bw(K(2s))) \Vert\le  e^{-\frac{C_3\gamma(\sqrt{2s+1}-\sqrt{s+2})}{4}}\frac{36C_1}{\gamma \sqrt{s+1}}\le \frac{2C_1}{\sqrt{2s+1}},
    \end{equation*}
    which leads to $2s\in \mT$, and contradicts the definition of $r$. Therefore, we have $r\le 2s$.
    
    As $\mT$ contains an increasing integer sequence, there exists an $s_0$, s.t., $s_0\in \mT$ and $s_0>s^{\mT}$. Let $t$ be any positive integer larger than $s_0$ and let $t'$ be the largest integer smaller than $t$ and belongs to $\mT$. We have $t'\ge s_0$, and $t\le 2t'$ by the above discussion. Therefore, we have
    \begin{equation*}
        \mL(Kt)\le \frac{9C_1}{\gamma}\frac{1}{\sqrt{t'+1}}\le \frac{9\sqrt{2}C_1}{\gamma}\frac{1}{\sqrt{t+2}}.
    \end{equation*}
    
    The proof is completed.
\end{proof}

As a corollary, we can obtain an asymptotic estimation of $\nabla \mL_{\bB(\tau)} (\bw(\tau))$. 
\begin{corollary}
\label{coro: loss_rms_mb}
Let all the conditions in Theorem \ref{thm: ib_rms} hold. Then, we have 
\begin{equation*}
    \Vert\nabla\mL_{\bB(\tau)}(\bw(\tau))\Vert=\mathcal{O}\left(\frac{1}{\sqrt{\tau}}\right).
\end{equation*}
\end{corollary}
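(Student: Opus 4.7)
The plan is to reduce $\Vert\nabla\mL_{\bB(\tau)}(\bw(\tau))\Vert$ to the empirical risk $\mL(\bw(\tau))$ through the exponential-tail bound, then transport the estimate $\mL(\bw(Kt))=\mathcal{O}(t^{-1/2})$ of Lemma \ref{lem: loss_convergence_rmsprop} from the epoch boundaries $\tau=Kt$ to every intra-epoch index.

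\textbf{Step 1 (stochastic gradient $\Rightarrow$ loss).} By Assumption \ref{assum: exponential-tailed} and Lemma \ref{lem: equivalence_loss_derivative}, once $\langle\bw,\bx\rangle>x_0$ for every $\bx\in\bS$ we have $|\ell'(\langle\bw,\bx\rangle)|\le 4\ell(\langle\bw,\bx\rangle)$. The triangle inequality and $\Vert\bx\Vert\le\sigmax$ then give
\begin{align*}
\Vert\nabla\mL_{\bB(\tau)}(\bw)\Vert\le \frac{\sigmax}{b}\sum_{\bx\in\bB(\tau)}|\ell'(\langle\bw,\bx\rangle)|\le \frac{4\sigmax}{b}\sum_{\bx\in\bS}\ell(\langle\bw,\bx\rangle)=\frac{4N\sigmax}{b}\mL(\bw).
\end{align*}
Since Lemma \ref{lem: loss_convergence_rmsprop} yields $\mL(\bw(Kt))\to 0$, the hypothesis $\langle\bw(\tau),\bx\rangle>x_0$ is eventually satisfied along the whole trajectory, so this reduction applies for all large $\tau$.

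\textbf{Step 2 (intra-epoch loss control).} Inside an epoch $[Kt,K(t+1)]$ only $K$ updates are taken, each of norm at most $\eta_\tau\Vert\nabla\mL_{\bB(\tau)}(\bw(\tau-1))\Vert/\sqrt{\varepsilon}$ thanks to the trivial preconditioner lower bound $\sqrt{\bnu(\tau)+\varepsilon\mathds{1}_d}\ge\sqrt{\varepsilon}\,\mathds{1}_d$. A one-step Taylor expansion under Assumption \ref{assum: smooth}.(S), combined with Corollary \ref{coro:gradient_small_equivalence} ($\Vert\nabla\mL(\bw)\Vert\lesssim\mL(\bw)$) and the Step-1 bound, produces a recursion of the form
\begin{align*}
\mL(\bw(\tau))\le \mL(\bw(\tau-1))\bigl(1+C_4\,\eta_\tau\,\mL(\bw(\tau-1))\bigr),
\end{align*}
where $C_4$ depends only on $H,\sigmax,N,b,\varepsilon$. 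Because $\eta_\tau\le\eta_1$ and $\mL(\bw(\tau-1))\to 0$, the multiplicative factor is eventually $\le 2$, so iterating over the $K$ steps of the epoch yields $\mL(\bw(\tau))\le 2^K\mL(\bw(Kt))$ for every $\tau\in[Kt,K(t+1)]$.

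\textbf{Step 3 (conclude).} For any $\tau\in[Kt,K(t+1)]$ with $t\ge 1$ we have $\tau\le K(t+1)\le 2Kt$, hence $1/\sqrt{t}\le\sqrt{2K}/\sqrt{\tau}$. Chaining the three steps,
\begin{align*}
\Vert\nabla\mL_{\bB(\tau)}(\bw(\tau))\Vert\le \frac{4N\sigmax}{b}\,\mL(\bw(\tau))\le \frac{4N\sigmax}{b}\cdot 2^{K}\,\mL(\bw(Kt))=\mathcal{O}\!\left(\frac{1}{\sqrt{\tau}}\right).
\end{align*}

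\textbf{Main obstacle.} The only non-routine piece is Step 2: we must bound the intra-epoch growth of $\mL$ without a refined analysis of the accumulator $\bnu(\tau)$. Using the crude uniform bound $\sqrt{\bnu(\tau)+\varepsilon\mathds{1}_d}\ge\sqrt{\varepsilon}$ costs a harmless factor $2^K$ (recall $K=N/b$ is fixed) and lets the recursion depend only on $\mL(\bw(\tau-1))\to 0$ rather than on a prior rate estimate, which is precisely what avoids a circular argument with Lemma \ref{lem: loss_convergence_rmsprop}.
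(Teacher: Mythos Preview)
Your proposal is correct but follows a different route from the paper. The paper's proof exploits the self-normalizing property of RMSProp: since $\bnu(\tau)\ge(1-\beta_2)\bigl(\nabla\mL_{\bB(\tau)}(\bw(\tau-1))\bigr)^2$ componentwise, each coordinate of the preconditioned update is bounded by $1/\sqrt{1-\beta_2}$, giving the unconditional step bound $\Vert\bw(\tau)-\bw(\tau-1)\Vert=\mathcal{O}(\eta_\tau)=\mathcal{O}(\tau^{-1/2})$. It then simply writes $\Vert\nabla\mL_{\bB(\tau)}(\bw(\tau))\Vert\le\Vert\nabla\mL_{\bB(\tau)}(\bw(Kt))\Vert+H\Vert\bw(\tau)-\bw(Kt)\Vert$ and bounds both summands by $\mathcal{O}(t^{-1/2})$, the first via $\Vert\nabla\mL_{\bB(\tau)}(\bw(Kt))\Vert\le\frac{N}{b\gamma}\Vert\nabla\mL(\bw(Kt))\Vert$ and Lemma \ref{lem: loss_convergence_rmsprop}. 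No loss recursion is needed.

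You instead discard the self-normalization (using only $\sqrt{\bnu+\varepsilon}\ge\sqrt{\varepsilon}$) and recover control via a multiplicative recursion on $\mL$. This works, but note that your Step~1 and Step~2 must be established jointly by a short induction over the $K$ steps of an epoch: the bound $\Vert\nabla\mL_{\bB(\tau)}(\bw)\Vert\lesssim\mL(\bw)$ requires $\langle\bw,\bx\rangle>x_0$ for all $\bx$, which intra-epoch is only guaranteed once you have propagated smallness of $\mL$ from the previous step. Your argument is therefore correct but more delicate; the paper's is cleaner because the unconditional step bound makes the intra-epoch transport a one-liner. On the other hand, your approach is preconditioner-agnostic and would apply verbatim to any adaptive method whose preconditioner is merely bounded below.
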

\begin{proof}
    Let $\tau>K (s^{\mT}+1)$, where $s^{\mT}$ is defined as Lemma \ref{lem: loss_convergence_rmsprop}. Let $t=\lceil\frac{\tau}{K} \rceil> s^{\mT}$  and $s=\tau-Kt$. Then, we have
    \begin{equation*}
        \Vert\nabla \mL(Kt)\Vert\le C_l,
    \end{equation*}
    and 
    \begin{equation*}
        \Vert\nabla \mL(\bw(Kt))\Vert\le 4\mL(\bw(Kt))\le \frac{36\sqrt{2}C_1}{\gamma}\frac{1}{\sqrt{t+2}}.
    \end{equation*}
    
    On the other hand, we have 
    \begin{equation*}
        \Vert\nabla \mL_{\bB(\tau)}(\bw(Kt))\Vert \le \frac{1}{\gamma} \frac{N}{b}\Vert\nabla \mL(\bw(Kt))\Vert\le\frac{N}{b\gamma} \frac{36\sqrt{2}C_1}{\gamma}\frac{1}{\sqrt{t+2}}.
    \end{equation*}
    
    As $\mL_{\bB(\tau)}$ is $H$ smooth, we further have
    \begin{align*}
       \Vert\nabla \mL_{\bB(\tau)}(\bw(Kt))-\nabla \mL_{\bB(\tau)}(\bw(\tau))\Vert\le& H \left\Vert \sum_{i=0}^{s-1}  \eta_{Kt+i+1}\frac{\nabla \mL_{\bB(Kt+i+1)}(\bw(Kt+i))}{\sqrt{\hbnu(Kt+i+1)+\varepsilon\mathds{1}_d}}\right\Vert
       \\
       \le & \frac{KH}{\sqrt{1-\beta_2}\sqrt{Kt+1}}.
    \end{align*}
Combining the above two equations, we have
\begin{align*}
    \Vert \nabla \mL_{\bB(\tau)}(\bw(\tau)) \Vert\le &\frac{KH}{\sqrt{1-\beta_2}\sqrt{Kt+1}}+\frac{N}{b\gamma} \frac{36\sqrt{2}C_1}{\gamma}\frac{1}{\sqrt{t+2}}
    \\
    \le &\frac{KH}{\sqrt{1-\beta_2}\sqrt{\tau-K}}+\frac{N}{b\gamma} \frac{36\sqrt{2}C_1}{\gamma}\frac{\sqrt{K}}{\sqrt{\tau+K}}.
\end{align*}

The proof is completed.
\end{proof}

Using Corollary \ref{coro: loss_rms_mb}, we are now able to prove Theorem \ref{thm: ib_rms}.

\begin{proof}[Proof of Theorem \ref{thm: ib_rms} (for almost every dataset)]

To begin with, define $\br(t)\triangleq\bw(t)-\frac{1}{2}\ln \left(\frac{\eta}{K} t\right)\hbw-\tbw-\frac{1}{2}\bn(t)$, where $\tbw$ is the solution of Eq. (\ref{eq: represent_v}) with ${   C_3}=\frac{2}{N\sqrt{\varepsilon}}$, and $\bn(t)$ is given by Lemma \ref{lem: define_n_t}. As in the case of GDM, SGDM, and Adam (w/s), $\bw(t)-\frac{1}{2}\ln \left(\frac{\eta}{K} t\right)$ has bounded norm if and only if $\br(t)$ has bounded norm. Also, it is a sufficient condition to ensure $\Vert \br(t) \Vert$ is bounded that both $A\triangleq\sum_{t=1}^{\infty} \Vert \br(t+1)-\br(t)\Vert^2<\infty$ and  $B\triangleq\sum_{t=1}^{\infty} \langle \br(t+1)-\br(t), \br(t)\rangle<\infty$.

As for $A$, we have
\begin{align*}
    A=&\sum_{t=1}^{\infty}\left\Vert \br(t)-\br(t-1) \right\Vert^2
    \\
    =&\sum_{t=1}^{\infty} \left\Vert \bw(t+1)-\bw(t) -\frac{1}{2}\ln \left( \frac{t+1}{t}\right)\hbw-\frac{1}{2}\bn(t+1)+\frac{1}{2}\bn(t)\right\Vert^2
    \\
    \le & 3\left(\sum_{t=1}^{\infty} \left\Vert \bw(t+1)-\bw(t) \right\Vert^2+\frac{1}{2}\sum_{t=1}^{\infty}\ln \left( \frac{t+1}{t}\right) \left\Vert\hbw\right\Vert^2+\sum_{t=1}^{\infty}\frac{1}{4} \left\Vert -\bn(t+1)+\bn(t)\right\Vert^2\right)
    \\
    \overset{(*)}{<}&\infty,
\end{align*}
where Inequality $(*)$ is due to
\begin{equation*}
    \left\Vert \bw(t+1)-\bw(t) \right\Vert=\frac{\eta_1}{\sqrt{t+1}}\left\Vert \frac{\nabla \mL_{\bB(t)}(\bw(t))}{\sqrt{\varepsilon\mathds{1}_d+\hbnu(t+1)}}  \right\Vert\le \frac{\eta_1}{\sqrt{t+1}\sqrt{\varepsilon}}\left\Vert \nabla \mL_{\bB(t)}(\bw(t))\right\Vert=\mathcal{O}\left(\frac{1}{t}\right),
\end{equation*}
$\ln \left( \frac{t+1}{t}\right)=\mathcal{O}\left(\frac{1}{t}\right)$, and $\left\Vert -\bn(t+1)+\bn(t)\right\Vert=\mathcal{O}\left(\frac{1}{t}\right)$.

As for $B$, we have 
\begin{align}
\nonumber
    B=&\sum_{t=1}^{\infty} \left\langle \bw(t+1)-\bw(t) -\frac{1}{2}\ln \left( \frac{t+1}{t}\right)\hbw-\frac{1}{2}\bn(t+1)+\frac{1}{2}\bn(t), \br(t)\right\rangle
    \\
    \nonumber
    =&\sum_{t=1}^{\infty} \left\langle\bw(t+1)-\bw(t)-\frac{N}{2bt}\sum_{i: \bx_i\in \bB(t)\cap \bS_s} \bv_i \bx_i, \br(t)\right\rangle
    \\
    \nonumber
    =&\sum_{t=1}^{\infty} \left\langle-\frac{1}{\sqrt{t+1}}\frac{\nabla \mL_{\bB(t)}(\bw(t))}{\sqrt{\varepsilon\mathds{1}_d+\hbnu(t+1)}}-\frac{N}{2bt}\sum_{i: \bx_i\in \bB(t)\cap \bS_s} \bv_i \bx_i, \br(t)\right\rangle
    \\
    \nonumber
    =&\sum_{t=1}^{\infty} \left\langle-\frac{1}{\sqrt{t+1}}\frac{\nabla \mL_{\bB(t)}(\bw(t))}{\sqrt{\varepsilon\mathds{1}_d+\hbnu(t+1)}}+\frac{1}{\sqrt{t}}\frac{\nabla \mL(\bw(t))}{\sqrt{\varepsilon\mathds{1}_d+\hbnu(t+1)}}, \br(t)\right\rangle
    \\
    \nonumber
    &+\sum_{t=1}^{\infty} \left\langle-\frac{1}{\sqrt{t}}\frac{\nabla \mL_{\bB(t)}(\bw(t))}{\sqrt{\varepsilon\mathds{1}_d+\hbnu(t+1)}}+\frac{1}{\sqrt{t}\sqrt{\varepsilon}}\nabla \mL_{\bB(t)}(\bw(t)), \br(t)\right\rangle
    \\
    \label{eq: rms_ib_mid_result}
    &+\sum_{t=1}^{\infty} \left\langle-\frac{1}{\sqrt{t}\sqrt{\varepsilon}}\nabla \mL_{\bB(t)}(\bw(t))-\frac{N}{2bt}\sum_{i: \bx_i\in \bB(t)\cap \bS_s} \bv_i \bx_i, \br(t)\right\rangle.
\end{align}

On the other hand, as 
\begin{equation*}
    \Vert \bw(t) \Vert \le \Vert\bw(1) \Vert+ \sum_{s=1}^{t-1} \frac{1}{\sqrt{s+1}}\left\Vert \frac{\nabla \mL_{\bB(t)} (\bw(s))}{\sqrt{\varepsilon\mathds{1}_d}+\hbnu(s+1)} \right\Vert =\mathcal{O}(\ln(t)),
\end{equation*}
we have $\Vert \br(t) \Vert=\mathcal{O}(\ln(t))$. Also, we have $\Vert \nabla \mL_{\bB(t)}(\bw(t))^2\Vert =\mathcal{O}(\frac{1}{t})$, and thus $\left\Vert \frac{1}{\sqrt{\varepsilon\mathds{1}_d+\hbnu(t)}}-\frac{1}{\sqrt{\varepsilon\mathds{1}_d}}\right\Vert =\mathcal{O}(\frac{1}{t})$. Combining these estimations, we have 
\begin{equation*}
   \left\langle-\frac{1}{\sqrt{t+1}}\frac{\nabla \mL_{\bB(t)}(\bw(t))}{\sqrt{\varepsilon\mathds{1}_d+\hbnu(t+1)}}+\frac{1}{\sqrt{t}}\frac{\nabla \mL_{\bB(t)}(\bw(t))}{\sqrt{\varepsilon\mathds{1}_d+\hbnu(t+1)}}, \br(t)\right\rangle=\mathcal{O}\left(\frac{\ln t}{t^{\frac{5}{2}}}\right).
\end{equation*}
and 
\begin{equation*}
    \left\langle-\frac{1}{\sqrt{t}}\frac{\nabla \mL_{\bB(t)}(\bw(t))}{\sqrt{\varepsilon\mathds{1}_d+\hbnu(t+1)}}+\frac{1}{\sqrt{t}\sqrt{\varepsilon}}\nabla \mL_{\bB(t)}(\bw(t)), \br(t)\right\rangle=\mathcal{O}\left(\frac{\ln t}{t^2}\right),
\end{equation*}
Therefore, 
\begin{gather*}
\sum_{t=1}^{\infty} \left\langle-\frac{1}{\sqrt{t+1}}\frac{\nabla \mL_{\bB(t)}(\bw(t))}{\sqrt{\varepsilon\mathds{1}_d+\hbnu(t+1)}}+\frac{1}{\sqrt{t}}\frac{\nabla \mL_{\bB(t)}(\bw(t))}{\sqrt{\varepsilon\mathds{1}_d+\hbnu(t+1)}}, \br(t)\right\rangle<\infty,
\\
  \sum_{t=1}^{\infty} \left\langle-\frac{1}{\sqrt{t}}\frac{\nabla \mL_{\bB(t)}(\bw(t))}{\sqrt{\varepsilon\mathds{1}_d+\hbnu(t+1)}}+\frac{1}{\sqrt{t}\sqrt{\varepsilon}}\nabla \mL_{\bB(t)}(\bw(t)), \br(t)\right\rangle <\infty .
\end{gather*}

As for the last term in Eq. (\ref{eq: rms_ib_mid_result}), we have 
\begin{align*}
    &\left\langle-\frac{1}{\sqrt{t}\sqrt{\varepsilon}}\nabla \mL_{\bB(t)}(\bw(t))-\frac{N}{2bt}\sum_{i: \bx_i\in \bB(t)\cap \bS_s} \bv_i \bx_i, \br(t)\right\rangle
    \\
    =&\frac{1}{b\sqrt{t}\sqrt{\varepsilon}}\sum_{i: \bx_i\in \bB(s)\cap \bS_s}\left(- \ell'(\langle \bw(t),\bx_i\rangle)-\frac{1}{\sqrt{t}} e^{-\langle \tbw,\bx_i\rangle}\right)\langle \br(t), \bx_i\rangle
    \\
    &+\frac{1}{b\sqrt{t}\sqrt{\varepsilon}}\sum_{i: \bx_i\in \bB(s)\cap \bS_s^c}\left\langle \br(t),  -\ell'(\langle \bw(t),\bx_i\rangle)\bx_i\right\rangle.
\end{align*}

Then, following the same routine as Lemma \ref{lem: sgdm_ib}, we have 
\begin{equation*}
    \sum_{t=1}^{\infty} \left\langle-\frac{1}{\sqrt{t}\sqrt{\varepsilon}}\nabla \mL_{\bB(t)}(\bw(t))-\frac{N}{2bt}\sum_{i: \bx_i\in \bB(t)\cap \bS_s} \bv_i \bx_i, \br(t)\right\rangle<\infty.
\end{equation*}

The proof is completed.
\end{proof}

\section{Applications \& Extensions}
\subsection{Deriving the conclusion for every dataset}
\label{appen: every_data_set}
In Sections \ref{subsec: main_gdm}, \ref{subsec: main_sgdm}, and \ref{subsec: main_adam}, we only derive the implicit regularization for almost every dataset, but not all the separable datasets. In this section, we show that the "almost every" condition can be removed as the following theorem.

\begin{theorem}
\label{thm: every_dataset}
We have the following conclusions:
\begin{itemize}
    \item For GDM, let all the conditions in Theorem \ref{thm: gdm_main} hold. Then, GDM converges to the $L^2$ max-margin solution;
    \item For SGDM sampling with replacement, let all the conditions in Theorem \ref{thm: sgdm_main} hold. Then, SGDM (w/. r) converges to the $L^2$ max-margin solution (the same for SGDM (w/o. r), except a different learning rate upper bound);
    \item For deterministic Adam, let all the conditions in Theorem \ref{thm: adam_main} hold. Then, Adam (w/s)  converges to the $L^2$ max-margin solution;
    \item For RMSProp (w/o. r), let all the conditions in Theorem \ref{thm: ib_rms} hold. Then, RMSProp (w/o. r) converges to the $L^2$ max-margin solution.
\end{itemize}
\end{theorem}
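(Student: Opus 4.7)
The ``almost every dataset'' hypothesis enters the proofs of Theorems \ref{thm: gdm_main}, \ref{thm: sgdm_main}, \ref{thm: adam_main}, and \ref{thm: ib_rms} at exactly one point: Corollary \ref{coro:definition_of_tbw} uses linear independence of the support vectors to solve the system $\langle \tbw, \bx_i\rangle = -\log(\bv_i/{   C_3})$ uniquely. All of the subsequent Lyapunov and residual arguments depend only on the two structural facts: (a) $\bv_i = {   C_3}\, e^{-\langle \tbw, \bx_i\rangle}$ on support vectors, and (b) $\langle \bx_i, \hbw\rangle > 1$ strictly for non-support vectors. My plan is therefore to construct a suitable $\tbw$ for an arbitrary separable dataset by a variational argument, and then re-run each of the four proofs essentially unchanged.

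For the construction, I fix any KKT decomposition $\hbw = \sum_j \bv_j \bx_j$ with $\bv_j \geq 0$ supported on $\bS_s$ (such a $\bv$ exists by KKT for $\min \|\bw\|^2$ s.t.\ $\langle \bw, \bx_i\rangle \geq 1$, even without uniqueness), let $V' = \mathrm{span}\{\bx_j : \bv_j > 0\}$, and on $V'$ consider the convex function
$$G(\tbw) \;=\; {   C_3} \sum_{j:\bv_j>0} e^{-\langle \tbw, \bx_j\rangle} + \langle \hbw, \tbw\rangle,$$
whose gradient is $\nabla G(\tbw) = \hbw - \sum_j {   C_3}\, e^{-\langle \tbw, \bx_j\rangle} \bx_j$. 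Any critical point $\tbw^{*} \in V'$ then satisfies $\bv_j = {   C_3}\, e^{-\langle \tbw^{*}, \bx_j\rangle}$ for every $\bx_j$ with $\bv_j > 0$, i.e.\ the analogue of Eq.~(\ref{eq: represent_v}).

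Existence of a critical point follows from strict convexity and coercivity of $G$ on $V'$. Strict convexity is immediate because the Hessian $\sum_j e^{-\langle \tbw, \bx_j\rangle} \bx_j\bx_j^\top$ has trivial null space on $V'$. For coercivity, given a unit $\bu \in V'$: if $\langle \bu, \bx_j\rangle < 0$ for some active $\bx_j$ the exponential term of $G$ explodes along $\bu$; otherwise $\langle \bu, \bx_j\rangle \geq 0$ for every active $\bx_j$, and since $\bu \in V'\setminus\{0\}$ cannot be orthogonal to a spanning set of $V'$, some active $\bx_j$ has $\langle \bx_j, \bu\rangle > 0$, giving $\langle \hbw, \bu\rangle \geq \bv_j \langle \bx_j, \bu\rangle > 0$, so the linear term explodes.

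With $\tbw^{*}$ in hand, the remainder of each proof ports over directly, since the estimates on $A_4$--$A_7$ only use relations (a) and (b). The one obstacle I expect to require care is the degenerate case where $\bS_s$ contains \emph{inactive} geometric support vectors $\bx_i$ (with $\langle \bx_i, \hbw\rangle = 1$ but $\bv_i = 0$); these are not captured by (a) and must be absorbed into the non-support bookkeeping. Since they lie exactly on the margin, the decay exponent $\theta$ in Eq.~(\ref{eq: def_theta}) degenerates to $1$ and the naive $t^{-\theta}$ bound on $A_4, A_7$ is no longer summable. The fix is to exploit $\bv_i = 0$ to cancel the leading $t^{-1}$ term in the expansion at such $\bx_i$, leaving only the $\boldsymbol{o}(t^{-1})$ tail from Assumption \ref{assum: exponential-tailed}, which is summable by exactly the \textbf{Case 1}/\textbf{Case 2} analysis of Lemma \ref{lem: bounded_g_gdm}. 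Once this single adjustment is in place, all four theorems extend verbatim to every separable dataset.
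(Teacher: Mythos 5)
Your variational construction of $\tbw$ is a genuinely nice idea: minimizing $G(\tbw)={   C_3}\sum_{j:\bv_j>0}e^{-\langle\tbw,\bx_j\rangle}+\langle\hbw,\tbw\rangle$ over $V'=\mathrm{span}\{\bx_j:\bv_j>0\}$ gives, by strict convexity and coercivity, a $\tbw^*$ with $\hbw=\sum_j {   C_3}\,e^{-\langle\tbw^*,\bx_j\rangle}\bx_j$, and this does remove the reliance on linear independence that Corollary~\ref{coro:definition_of_tbw} needs. (One small correction: a critical point gives equality of the sums, not $\bv_j={   C_3}e^{-\langle\tbw^*,\bx_j\rangle}$ for the originally fixed $\bv$; you should redefine $\bv_j:={   C_3}e^{-\langle\tbw^*,\bx_j\rangle}$ afterward. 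This is harmless but worth being precise about.)

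However, the rest of the proposal has a genuine gap, and it is exactly at the ``one obstacle'' you flagged. When $\bx_i$ has $\langle\bx_i,\hbw\rangle=1$ but $\bv_i=0$, the corresponding term in $A_4(t)$ is $-\eta\,\ell'(\langle\bw(t),\bx_i\rangle)\langle\br(t),\bx_i\rangle\approx\frac{\eta}{t}e^{-\langle\br(t)+\tbw,\bx_i\rangle}\langle\br(t),\bx_i\rangle$, which is $\Theta(t^{-1})$ and not summable. Your stated fix --- ``exploit $\bv_i=0$ to cancel the leading $t^{-1}$ term'' --- is circular: $\bv_i=0$ means $\bx_i$ contributes nothing to $\hbw$, so there is no counterpart $\frac{1}{t}\bv_i e^{-\langle\tbw,\bx_i\rangle}\bx_i$ term for it to cancel against. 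The Case~1/Case~2 analysis in Lemma~\ref{lem: bounded_g_gdm} gives the summable $\mathcal{O}(t^{-1-\mu})$ bound only for vectors where that cancellation is present. More fundamentally, for such degenerate datasets the residual $\br(t)=\bw(t)-\ln(t)\hbw-\tbw$ is \emph{actually unbounded}: the dynamics in the directions spanned by these inactive margin vectors grow at rates like $\log\log t$, so no choice of constant shift $\tbw$ repairs the decomposition. This is precisely why Theorem~\ref{thm: every_dataset} only reasserts directional convergence and drops the boundedness claim of Theorems~\ref{thm: gdm_main}--\ref{thm: adam_main}.

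The paper therefore takes a fundamentally different route: it imports the recursive decomposition from \cite{soudry2018implicit}, building a hierarchy of support sets $\mS_1,\mS_2,\ldots,\mS_M$, projections $\bar{\bP}_m$, subordinate max-margin directions $\hbw_m$, and a residual $\br(t)=\bw(t)-\hbw_1\log(t)-\tbw-\tau(t)$ where $\tau(t)$ carries the nested $\log^{\circ m}(t)$ terms that absorb exactly the slow growth your decomposition misses. The argument then only needs $\hbw_1\log t$ to dominate $\tau(t)$ for directional convergence. Your variational trick could usefully replace the linear-independence step \emph{within} each level of that hierarchy, but it does not substitute for the hierarchy itself; ``all four theorems extend verbatim'' with a single adjustment overestimates how much of the degenerate structure a one-level decomposition can see.
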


It can be easily observe that to prove Theorem \ref{thm: every_dataset}, the analysis of Stage I of every optimizer still works. Therefore, we only need to change Stage II. As the analyses of Stage II are highly overlapped for different optimizers, we only provide a proof of GDM.

To begin with, we present some notations and results on the structure of the separable dataset from \cite{soudry2018implicit}. 

Let $\bar{\mS}_0=\{1,\cdots, N\}$, and $\bar{\bP}_0=\mathbb{I}_{d\times d}$. We then recursively define the index sets $\mS_m^{+}$, $\mS_{m}^{=}$, $\mS_{m}$, and $\bar{\mS}_{m}$:
\begin{gather*}
    \mS_{m}^{+}=\left\{i \in \bar{\mS}_{m-1} \mid \left\langle\hbw_{m}, \bar{\bP}_{m-1} \bx_i\right\rangle>1\right\},
    \\
      \mS_{m}^{=}=\left\{i \in \bar{\mS}_{m-1} \mid \left\langle\hbw_{m}, \bar{\bP}_{m-1} \bx_i\right\rangle=1\right\}=\bar{\mS}_{m-1}/  \mS_{m}^{+},
      \\
      \mS_{m}=\left\{i \in \mS_{m}^{=} \mid \exists \boldsymbol{\alpha} \in \mathbb{R}_{\geq 0}^{N}: \hbw_{m}=\sum_{k=1}^{N} \alpha_{k} \bar{\bP}_{m-1} \bx_{k}, \alpha_{i}>0, \forall j \notin \mathcal{S}_{m}^{=}: \alpha_{j}=0\right\},
      \\
      \bar{\mS}_{m}=\mS_{m}^{=}/\mS_{m}.
\end{gather*}
where $\bar{\mathbf{P}}_{m}=\bar{\mathbf{P}}_{m-1}\left(\mathbf{I}_{d}-\bS_{\mS_{m}} \bS_{\mS_{m}}^{\dagger}\right)$ (we also denote $\bP_m=\mathbb{I}_{d\times d}-\bar{\bP}_m$), and $\hbw_{m}$ is defined as the max-margin solution of dataset $ \bar{\bP}_{m-1} \bS_{\bar{\mS}_{m-1}}$ (that is, the transferred data $x_i$ with index in $\bar{\mS}_{m-1}$ projected through matrix $ \bar{\bP}_{m-1}$):
\begin{equation}
\label{eq: def_m_max_margin_solution}
    \hbw_{m}=\underset{\bw \in \mathbb{R}^{d}}{\operatorname{argmin}}\|\bw\|^{2}, \text { s.t. } \langle \bw, \bar{\bP}_{m-1} \bx_{i}\rangle \geq 1, \forall i \in \bar{\mS}_{m-1} .
\end{equation}

The existence of the $\boldsymbol{\alpha}$ is guaranteed by the KKT condition of Eq. (\ref{eq: def_m_max_margin_solution}). The above procedure will produce a sequence $\hbw_1$, $\hbw_2$, $\cdots$, and will stop at $\hbw_M$ if $\bar{\mS}_M$ is empty (if the sequence is infinite, we let $M=\infty$). For every $i\le M$, we have $\hbw_i$ is non-zero, and $\mS_i$ is non-empty, which leads to $\vert \bar{\mS}_{i} \vert <\vert \bar{\mS}_{i+1} \vert$, and $M\le N$.
 
The following lemma characterize the structure of the dataset.
\begin{lemma}[Lemma 17, \cite{soudry2018implicit}]
\label{lem: soudry_every_dataset_1}
$\forall \boldsymbol{\beta}\in \mathbb{R}^{\vert \bS_1 \vert }_{>0}$, we can find a unique $\tbw_1$, such that 
\begin{equation*}
    \sum_{i \in \mS_{1}} \bx_{i} \beta_{i} \exp \left(-\langle\bx_i, \tbw_{1}\rangle\right)=\hbw_{1},
\end{equation*}
and $\tbw_1\in\operatorname{Col} (\bS_{\mS_1})$.
\end{lemma}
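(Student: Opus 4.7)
The plan is to realize $\tbw_1$ as the unique minimizer of an appropriate strictly convex coercive function on the subspace $\operatorname{Col}(\bS_{\mathcal{S}_1})$, and then read off the required identity from the first-order optimality condition. Concretely, I would define
\begin{equation*}
F(\bw) \;\overset{\triangle}{=}\; \sum_{i \in \mathcal{S}_1} \beta_i \, e^{-\langle \bx_i, \bw\rangle} \;+\; \langle \hbw_1, \bw\rangle,
\end{equation*}
and note that since every $\bx_i$ ($i \in \mathcal{S}_1$) lies in $\operatorname{Col}(\bS_{\mathcal{S}_1})$ and, by the KKT conditions for the program defining $\hbw_1$, $\hbw_1$ itself lies in $\operatorname{Col}(\bS_{\mathcal{S}_1})$, the value $F(\bw)$ depends only on the projection of $\bw$ onto $\operatorname{Col}(\bS_{\mathcal{S}_1})$. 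It therefore suffices to study $F$ as a function on the finite-dimensional subspace $V\overset{\triangle}{=}\operatorname{Col}(\bS_{\mathcal{S}_1})$.

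For strict convexity on $V$, I would compute
\begin{equation*}
\nabla^2 F(\bw) \;=\; \sum_{i \in \mathcal{S}_1} \beta_i \, e^{-\langle \bx_i, \bw\rangle} \, \bx_i \bx_i^{\top},
\end{equation*}
which is positive definite on $V$ because $\beta_i > 0$ and $\{\bx_i\}_{i \in \mathcal{S}_1}$ spans $V$ by definition. Strict convexity on $V$ immediately gives uniqueness. For existence, I would argue that $F$ is coercive on $V$ by taking any sequence $\bw_n = t_n \bu_n \in V$ with $\|\bu_n\|=1$, $t_n\to\infty$, and $\bu_n\to\bu\in V$. Two cases arise: either every inner product $\langle\bx_i,\bu\rangle$ is non-negative for $i\in\mathcal{S}_1$, in which case $\langle\hbw_1,\bu\rangle=\sum_i\alpha_i\langle\bx_i,\bu\rangle>0$ (strictness from the fact that some $\langle\bx_i,\bu\rangle>0$, which must occur because $\bu\in V=\operatorname{span}\{\bx_i\}_{i\in\mathcal{S}_1}$ is a unit vector and so not orthogonal to all spanning vectors) and $F(\bw_n)\to\infty$ linearly; or some $\langle\bx_{i_0},\bu\rangle<0$, in which case $\beta_{i_0}e^{-\langle\bx_{i_0},\bw_n\rangle}$ blows up exponentially and dominates the at-most linear term $\langle\hbw_1,\bw_n\rangle$.

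Having established existence and uniqueness of a minimizer $\tbw_1\in V$, I would finish by invoking first-order optimality. The unconstrained gradient of $F$ is $\nabla F(\bw) = -\sum_{i\in\mathcal{S}_1}\bx_i\beta_i e^{-\langle \bx_i,\bw\rangle} + \hbw_1$, which already lies in $V$ for every $\bw$. Since $\tbw_1$ minimizes $F$ restricted to $V$, this gradient must be orthogonal to $V$ at $\tbw_1$; being simultaneously in $V$, it is zero. Rearranging gives exactly
\begin{equation*}
\sum_{i \in \mathcal{S}_1} \bx_i \, \beta_i \, e^{-\langle \bx_i, \tbw_1\rangle} \;=\; \hbw_1,
\end{equation*}
and the membership $\tbw_1\in\operatorname{Col}(\bS_{\mathcal{S}_1})$ is built into the construction.

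The main obstacle I anticipate is the coercivity argument, because one must exclude the possibility that the linear term $\langle\hbw_1,\bw_n\rangle$ drifts to $-\infty$ faster than the exponential sum can compensate. The key leverage is that the KKT representation $\hbw_1=\sum_{i\in\mathcal{S}_1}\alpha_i\bx_i$ with strictly positive $\alpha_i$ forces any direction $\bu$ that would send $\langle\hbw_1,\bw_n\rangle$ to $-\infty$ to also have at least one $\langle\bx_{i_0},\bu\rangle<0$, triggering exponential blow-up that beats the linear decay; isolating this dichotomy cleanly is the only delicate step of the argument.
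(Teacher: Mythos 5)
Your argument is correct and is the standard variational proof of this fact (and is essentially what the cited Lemma~17 of Soudry et al.\ does): the stated identity is the first-order optimality condition for minimising the strictly convex, coercive potential $F(\bw)=\sum_{i\in\mS_1}\beta_i e^{-\langle\bx_i,\bw\rangle}+\langle\hbw_1,\bw\rangle$ over $\operatorname{Col}(\bS_{\mS_1})$, with strict convexity on that subspace coming from $\operatorname{span}\{\bx_i\}_{i\in\mS_1}=\operatorname{Col}(\bS_{\mS_1})$ and coercivity from the dichotomy you describe. The one fact you invoke without comment --- a representation $\hbw_1=\sum_{i\in\mS_1}\alpha_i\bx_i$ with $\alpha_i>0$ for \emph{every} $i\in\mS_1$ simultaneously --- does merit a line: by definition each $i\in\mS_1$ admits some nonnegative KKT certificate with its own $\alpha_i>0$, and averaging these certificates (the certificate set is convex) produces a single $\boldsymbol{\alpha}$ whose $\mS_1$-coordinates are all strictly positive and whose $\bar{\mS}_1$-coordinates must be zero (any strict positivity there would, by the defining property, place that index in $\mS_1$). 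With that supplied, your coercivity case analysis and the orthogonality-of-the-gradient argument at the minimiser close the proof exactly as written.
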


With Lemma \ref{lem: soudry_every_dataset_1}, we then define $\tbw_m$ as the solution of 
\begin{equation*}
 \frac{\eta}{1-\beta} \sum_{i \in \mathcal{S}_{m}} \exp \left(-\sum_{k=1}^{m} 
 \langle\tbw_{k}, \bx_{i}\rangle\right) \bar{\bP}_{m-1} \bx_i=\hbw_{m},
\end{equation*}
with $\bP_{m-1}\tbw_{m}=0$ and $\bar{\bP}_{m}\tbw_{m}=0$. We also define $\tbw=\sum_{m=1}^M \tbw_m$.

We then have the following lemma:
\begin{lemma}[Lemma 18, \cite{soudry2018implicit}] $\forall m>k\ge 1$, the equations
\begin{equation*}
\sum_{i \in \mS_{m}} \exp \left(-\langle\tbw, \bx_{i}\rangle\right) \bP_{m-1} \bx_{i}=\sum_{k=1}^{m-1}\left[\sum_{i \in \mathcal{S}_{k}} \exp \left(-\langle \tbw ,\bx_i\right) \bx_i \bx_i^{\top}\right] \check{\bw}_{k, m}\end{equation*}
under the constraints $\bP_{k-1}\check{\bw}_{k, m}=0 $ and $\bar{\bP}_{k}\check{\bw}_{k, m}=0 $ have the unique solution $\check{\bw}_{k, m}$.
\end{lemma}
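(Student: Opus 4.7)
The plan is to prove existence and uniqueness by a recursive peeling argument that exploits the nested projection structure $\bar{\bP}_0 \succeq \bar{\bP}_1 \succeq \cdots \succeq \bar{\bP}_{m-1}$ together with the fact that the range of $A_j \triangleq \sum_{i\in\mS_j}e^{-\langle\tbw,\bx_i\rangle}\bx_i\bx_i^\top$ sits inside $\mathrm{span}(\bS_{\mS_j})$. The key structural observation I would first establish is that each $\bar{\bP}_k$ is the orthogonal projection onto a subspace decreasing in $k$, that $\bar{\bP}_k \bS_{\mS_j}=0$ whenever $j\le k$ (this falls out of the recursive definition $\bar{\bP}_k=\bar{\bP}_{k-1}(I-\bS_{\mS_k}\bS_{\mS_k}^{\dagger})$), and that the feasible set $V_k\triangleq\{\bv:\bP_{k-1}\bv=0,\ \bar{\bP}_k\bv=0\}$ equals the ``new'' subspace $\mathrm{span}(\bar{\bP}_{k-1}\bS_{\mS_k})$ added in passing from $\bP_{k-1}$ to $\bP_k$, since $\bP_k-\bP_{k-1}=\bar{\bP}_{k-1}\bS_{\mS_k}\bS_{\mS_k}^{\dagger}$.

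Next I would apply $\bar{\bP}_{m-2}$ to both sides of the target equation. On the right, every term $A_j\check{\bw}_{j,m}$ with $j\le m-2$ is killed because $A_j$ has range in $\mathrm{span}(\bS_{\mS_j})\subseteq\mathrm{null}(\bar{\bP}_{m-2})$, isolating the single equation
\begin{equation*}
\bar{\bP}_{m-2}A_{m-1}\check{\bw}_{m-1,m}=\bar{\bP}_{m-2}\sum_{i\in\mS_m}e^{-\langle\tbw,\bx_i\rangle}\bP_{m-1}\bx_i
\end{equation*}
for $\check{\bw}_{m-1,m}$ alone. Since $\check{\bw}_{m-1,m}\in V_{m-1}=\mathrm{span}(\bar{\bP}_{m-2}\bS_{\mS_{m-1}})$, the map
$\check{\bw}\mapsto\bar{\bP}_{m-2}A_{m-1}\check{\bw}=\sum_{i\in\mS_{m-1}}e^{-\langle\tbw,\bx_i\rangle}\langle \bar{\bP}_{m-2}\bx_i,\check{\bw}\rangle\,\bar{\bP}_{m-2}\bx_i$
is a symmetric positive operator on $V_{m-1}$; positive definiteness follows because the family $\{\bar{\bP}_{m-2}\bx_i\}_{i\in\mS_{m-1}}$ spans $V_{m-1}$ by construction, so its Gram matrix weighted by the positive coefficients $e^{-\langle\tbw,\bx_i\rangle}$ is strictly positive on $V_{m-1}$. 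I would then check that the right-hand side lies in $V_{m-1}$: it is manifestly in $\mathrm{Range}(\bar{\bP}_{m-2})$, and using $\bar{\bP}_{m-2}\bP_{m-1}=\bar{\bP}_{m-2}-\bar{\bP}_{m-1}=\bar{\bP}_{m-2}\bS_{\mS_{m-1}}\bS_{\mS_{m-1}}^{\dagger}$, it visibly lies in $\mathrm{span}(\bar{\bP}_{m-2}\bS_{\mS_{m-1}})=V_{m-1}$, so the restricted operator can be inverted to obtain a unique $\check{\bw}_{m-1,m}$.

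Having uniquely pinned down $\check{\bw}_{m-1,m}$, I would subtract $A_{m-1}\check{\bw}_{m-1,m}$ from the right-hand side, rewrite the residual equation, and apply $\bar{\bP}_{m-3}$. Exactly the same reasoning (with $m-1$ replaced by $m-2$ throughout) kills the contributions of indices $j\le m-3$, reduces the residual to an equation for $\check{\bw}_{m-2,m}$ alone posed on $V_{m-2}=\mathrm{span}(\bar{\bP}_{m-3}\bS_{\mS_{m-2}})$, and produces a unique solution there via positive-definiteness of the restricted Gram-type operator. Iterating this peeling for $k=m-1,m-2,\ldots,1$ yields existence and uniqueness of the full tuple $(\check{\bw}_{1,m},\ldots,\check{\bw}_{m-1,m})$.

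The main obstacle is the bookkeeping in the inductive step: one must verify both that the partially-reduced right-hand side always lies in the target subspace $V_k=\mathrm{span}(\bar{\bP}_{k-1}\bS_{\mS_k})$ after applying $\bar{\bP}_{k-1}$, and that the previously solved contributions $A_j\check{\bw}_{j,m}$ with $j>k$ cleanly project via $\bar{\bP}_{k-1}$ without re-entangling the system. This hinges on carefully tracking how $\bar{\bP}_{k-1}$ interacts with $\bP_{m-1}$ through the telescoping identity $\bar{\bP}_{k-1}-\bar{\bP}_k=\bar{\bP}_{k-1}\bS_{\mS_k}\bS_{\mS_k}^{\dagger}$ and on showing that $\{\bar{\bP}_{k-1}\bx_i\}_{i\in\mS_k}$ is a spanning set of $V_k$; both facts follow from the recursive construction of the $\bar{\bP}$'s but must be verified without appealing to almost-every-data genericity, since this lemma is intended to hold for every separable dataset.
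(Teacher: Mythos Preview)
The paper does not prove this lemma; it merely quotes it verbatim from \cite{soudry2018implicit} (note the header ``Lemma 18, \cite{soudry2018implicit}'') and uses it as a black box in Appendix~\ref{appen: every_data_set}. There is therefore no in-paper proof to compare against.

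That said, your peeling/block-triangular strategy---successively applying $\bar{\bP}_{m-2},\bar{\bP}_{m-3},\ldots,\bar{\bP}_0$ to kill all but one summand on the right, then inverting the restricted positive-definite Gram operator $\check{\bw}\mapsto\sum_{i\in\mS_k}e^{-\langle\tbw,\bx_i\rangle}\langle\bar{\bP}_{k-1}\bx_i,\check{\bw}\rangle\,\bar{\bP}_{k-1}\bx_i$ on $V_k=\mathrm{span}(\bar{\bP}_{k-1}\bS_{\mS_k})$---is precisely the argument given in the cited reference. The structural facts you flag as needing verification (that the $\bar{\bP}_k$ are nested orthogonal projections with $\bar{\bP}_k\bS_{\mS_j}=0$ for $j\le k$, and the telescoping identity $\bar{\bP}_{k-1}-\bar{\bP}_k=\bar{\bP}_{k-1}\bS_{\mS_k}\bS_{\mS_k}^{\dagger}$) are exactly the supporting lemmas Soudry et~al.\ establish before their Lemma~18, so your outline faithfully reconstructs the original proof. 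One small caution: the recursion $\bar{\bP}_m=\bar{\bP}_{m-1}(I-\bS_{\mS_m}\bS_{\mS_m}^{\dagger})$ as transcribed here does not obviously yield an orthogonal projection unless one first checks that $\bS_{\mS_m}\bS_{\mS_m}^{\dagger}$ can be replaced by the projection onto $\mathrm{span}(\bar{\bP}_{m-1}\bS_{\mS_m})$ without changing the product; this is handled in the original reference and is the only place your bookkeeping needs extra care.
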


We then denote
\begin{equation*}
    \br(t)=\bw(t)-\hbw_{1} \log (t)-\tbw-\tau(t),
\end{equation*}
where $\tau(t)=\left(\sum_{m=2}^{M} \hbw_{m} \log ^{\circ m}(t)+\sum_{m=1}^{M} \sum_{k=1}^{m-1} \frac{\check{\bw}_{k, m}}{\prod_{r=k}^{m-1} \log ^{\circ r}(t)}\right)$

Similar to Eq. \ref{eq: definition_of_g}, we define
\begin{equation*}
    g(t)\overset{\triangle}{=}\frac{1}{2}\Vert \br(t)\Vert^2+\frac{\beta}{1-\beta}\langle \br(t),\bw(t)-\bw(t-1)\rangle
    -\frac{\beta}{1-\beta}\sum_{\tau=2}^t\langle \br(\tau)-\br(\tau-1),\bw(\tau)-\bw(\tau-1)\rangle.
\end{equation*}

We then have the following lemma parallel to Lemma \ref{lem:gdm_r_t_bounded}:
\begin{lemma}
Let all conditions in Theorem \ref{thm: gdm_main} hold. 
We have $\sup_t g(t)$ is finite.  Furthermore, $\sup_t\Vert\br(t)\Vert$ is finite if and only if $\sup_t g(t)$ is finite, and consequently $\sup_t\Vert\br(t)\Vert$ is finite.
\end{lemma}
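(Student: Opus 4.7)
The plan is to follow the two-part structure of Lemmas \ref{lem:gdm_r_t_bounded_first} and \ref{lem: bounded_g_gdm}, with the new residual $\br(t)$ now incorporating the nested-logarithm corrections $\tau(t)$ and the higher-order drift vectors $\tbw$ and $\check{\bw}_{k,m}$ dictated by the multi-level support-set decomposition $\mS_1,\mS_2,\ldots,\mS_M$. The scalar inequality $\sum_{t=1}^\infty \Vert \bw(t+1)-\bw(t)\Vert^2<\infty$ from Corollary \ref{coro: update_rule_bounded} is dataset-independent, so Stage I carries over verbatim; only Stage II needs reworking.

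First, for the equivalence between boundedness of $\Vert\br(t)\Vert$ and upper-boundedness of $g(t)$, I would repeat the argument of Lemma \ref{lem:gdm_r_t_bounded_first} after showing that the tail series $\sum_{\tau\ge 2}\vert\langle \br(\tau)-\br(\tau-1),\bw(\tau)-\bw(\tau-1)\rangle\vert$ is finite. Since $\br(\tau)-\br(\tau-1) = (\bw(\tau)-\bw(\tau-1)) - \hbw_1\ln\frac{\tau}{\tau-1} - (\tau(\tau)-\tau(\tau-1))$, and each logarithm derivative $\frac{\mathrm{d}}{\mathrm{d}\tau}\log^{\circ m}\tau$ as well as $\frac{\mathrm{d}}{\mathrm{d}\tau}(\prod_{r=k}^{m-1}\log^{\circ r}\tau)^{-1}$ is $\mathcal{O}(1/\tau)$, the telescoping term is controlled by $\sum\Vert\bw(\tau)-\bw(\tau-1)\Vert^2$ plus $\sum\mathcal{O}(1/\tau^2)$, both finite.

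Second, for the upper bound on $g(t)$, I would compute $g(t+1)-g(t)$ exactly as in Lemma \ref{lem: bounded_g_gdm}, use Eq.~\eqref{eq: rearrange_gdm_2} to replace the second-difference of $\bw$, and obtain
$$g(t+1)-g(t)=\tfrac{1}{2}\Vert\br(t+1)-\br(t)\Vert^2+\langle \br(t),\, -\eta\nabla\mL(\bw(t))-(\hbw_1\ln\tfrac{t+1}{t}+\tau(t+1)-\tau(t))\rangle.$$
The quadratic piece is summable by the same bound as above. For the inner-product piece, I would split the gradient $-\eta\nabla\mL(\bw(t))/N$ and the drift $\hbw_1\ln\tfrac{t+1}{t}+\tau(t+1)-\tau(t)$ according to the partition $\bar{\mS}_M \cup \bigcup_{m=1}^M \mS_m$. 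On $\bS_1$ (first-level supports) the $\hbw_1/t$ term cancels the support-vector gradients exactly as in Lemma \ref{lem: bounded_g_gdm}, thanks to Lemma \ref{lem: soudry_every_dataset_1} and the definition of $\tbw_1$; the residual gives $\mathcal{O}(t^{-1-\mu})$ terms. On $\bS_m$ for $m\ge 2$, the $\log^{\circ m}$ corrections and the $\check{\bw}_{k,m}$ vectors are precisely designed so that the leading-order cancellation $\frac{1}{t\prod_{r<m}\log^{\circ r}t}e^{-\langle\tbw,\bx_i\rangle}\bx_i + \ell'(\langle\bw(t),\bx_i\rangle)\bx_i$ produces a summable remainder; the case analysis on the sign of $\langle\br(t),\bx_i\rangle$ and the sub-exponential exponent match as in the almost-every-dataset case. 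For $\bx_i\in\bar{\mS}_M$, the gradient contribution is super-polynomially small because $\langle\bw(t),\bx_i\rangle$ grows at least like $\ln(t)$ times a constant exceeding $1$.

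The main obstacle will be the higher-order cancellation on $\mS_m$ for $m\ge 2$: unlike the almost-every-dataset case, where a single scalar rescaling sufficed to match $-\ell'$ with $\frac{1}{t}e^{-\langle\tbw,\bx_i\rangle}$, here one must match the \emph{full} asymptotic expansion $\frac{1}{t}\frac{1}{\log t\cdots\log^{\circ(m-1)}t}$. This is exactly what the recursive construction of the $\check{\bw}_{k,m}$ achieves; translating that static cancellation into a summable $\mathcal{O}(t^{-1}(\log t)^{-2})$ bound on $g(t+1)-g(t)$ requires carefully tracking the remainder when expanding $\exp(-\langle\br(t)+\tau(t)+\tbw,\bx_i\rangle)$ against the discrete difference $\tau(t+1)-\tau(t)$, and verifying that all residuals of order $\mathcal{O}(t^{-1}(\log t)^{-2})$ are summable. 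Once this bookkeeping is completed, summing $g(t+1)-g(t)$ yields $\sup_t g(t)<\infty$, hence $\sup_t\Vert\br(t)\Vert<\infty$, and the direction claim $\bw(t)/\Vert\bw(t)\Vert\to\hbw/\Vert\hbw\Vert$ follows exactly as at the end of the proof of Theorem \ref{thm: gdm_main}.
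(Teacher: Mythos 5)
Your approach matches the paper's: compute $g(t+1)-g(t)$ via the GDM second-difference identity, show the quadratic piece is summable using $\sum_t\Vert\bw(t+1)-\bw(t)\Vert^2<\infty$, and control the inner-product piece through the multi-level support-set decomposition with nested-logarithm corrections. The paper simply outsources the summability of that inner-product term to Lemma 14 of \cite{soudry2018implicit}, so the detailed higher-order cancellation bookkeeping you present in your final paragraph is already established there and can be cited rather than re-derived.
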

\begin{proof}
The proof of the second argument follows the same routine as the proof of the first argument in Lemma \ref{lem:gdm_r_t_bounded}, and we omit it here.

As for the first argument, we have 
\begin{small}
\begin{align*}
    \sum_{t=1}^{\infty}(g(t+1)-g(t))=&\frac{1}{2} \sum_{t=1}^{\infty}\Vert \br(t+1)-\br(t) \Vert^2
    \\
    +&\sum_{t=1}^{\infty}\left\langle \br(t), - \frac{\eta}{1-\beta}\nabla \mL(\bw(t))-\ln\frac{t+1}{t} \hbw_1-(\tau(t+1)-\tau(t))\right\rangle,
\end{align*}
\end{small}
where the first term can be shown to be finite similar to Lemma \ref{lem:gdm_r_t_bounded}, while the second term is finite by Lemma 14 in \cite{soudry2018implicit}.

The proof is completed.
\end{proof}

\subsection{Implicit regularization of SGDM (w/o. r)}
\label{appen: mini_SGDM}
This section provides formal description of the implicit regularization of  SGDM (w/o. r) and its corresponding proof. To begin with, we would like to provide a formal definition of SGDM (w/o. r).  SGDM (w/o. r) differs from SGDM by applying sampling without replacement to obtain $\bB(t)$ in Eq. (\ref{eq: def_sgdm}). Specifically, let $K=\frac{N}{b}$. For any $T\ge 0$, we call time series $\{KT+1,\cdots,KT+K\}$ the $(T+1)$-th epoch, and during the $T+1$-th epoch, the dataset $\bS$ is randomly uniformly divided into $K$ parts $\{\bB(KT+1),\cdots,\bB(KT+K)\}$, with $\bigcup_{t=KT+1}^{KT+K} \bB(t)=\bS$. The implicit regularization of SGDM (w/o. r) is then stated as the following theorem:
\begin{theorem}
\label{thm: main_mini_batch_sgd}
Let Assumptions  \ref{assum: separable}, \ref{assum: exponential-tailed}, and \ref{assum: smooth}. (S) hold. Let learning rate $\eta$ be small enough, and $\beta\in [0,1)$.Then, for almost every dataset $\bS$, SGDM (w/o. r) satisfies  $\bw(t)-\ln(t) \hbw$ is bounded as $t\rightarrow \infty$, and $\lim_{t\rightarrow\infty}\frac{\bw(t)}{\Vert \bw(t) \Vert} =\frac{\hbw}{\Vert \hbw \Vert}$.
\end{theorem}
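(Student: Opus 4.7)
\textbf{Proof proposal for Theorem \ref{thm: main_mini_batch_sgd}.} I will follow the two-stage framework established for SGDM (w/. r) in Appendix \ref{appen: sgdm_proof}, adapting each stage to handle the dependence structure introduced by without-replacement sampling. The key auxiliary variable remains $\bu(t)\triangleq\frac{\bw(t)-\beta\bw(t-1)}{1-\beta}$, so that $\bu(t+1)=\bu(t)-\eta\nabla\mL_{\bB(t)}(\bw(t))$; and the candidate Lyapunov function for Stage I is again $\mL(\bu(t))$.

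\textbf{Stage I (loss dynamics).} Unlike with-replacement sampling, one no longer has $\mE[\nabla\mL_{\bB(t)}(\bw(t))\mid\mF_t]=\nabla\mL(\bw(t))$ for every step, because once some samples have already been drawn within an epoch, the distribution of the remaining batches is biased. The plan is to work at the epoch level: let $K=N/b$, fix an epoch $[KT+1,\ KT+K]$, and Taylor-expand $\mL$ around $\bu(KT+1)$ across the whole epoch. The crucial identity is that summed over one epoch the mini-batch gradients at the \emph{frozen} point $\bw(KT+1)$ sum exactly to $K\nabla\mL(\bw(KT+1))$, since each data point is visited precisely once. Writing
\begin{equation*}
\sum_{t=KT+1}^{KT+K}\nabla\mL_{\bB(t)}(\bw(t))=K\nabla\mL(\bw(KT+1))+\sum_{t=KT+1}^{KT+K}\big(\nabla\mL_{\bB(t)}(\bw(t))-\nabla\mL_{\bB(t)}(\bw(KT+1))\big),
\end{equation*}
the residual term is controlled by the global smoothness of $\mL$ (Assumption \ref{assum: smooth}.(S)) and by the fact that within a single epoch each $\Vert\bw(t)-\bw(KT+1)\Vert$ is bounded by the cumulative norm of $\mO(K)$ one-step updates. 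Plugging this into Taylor's expansion of $\mL(\bu(K(T+1)+1))$ around $\bu(KT+1)$, and carrying through the same Young-inequality splitting used in Lemma \ref{lem: sgdm_loss_update} (with an extra Cauchy-Schwartz to absorb the smoothness residuals), one obtains an inequality of the form
\begin{equation*}
\mE[\mL(\bu(K(T+1)+1))]\le\mL(\bu(KT+1))-C\eta\sum_{t=KT+1}^{KT+K}\mE\Vert\nabla\mL(\bw(t))\Vert^2
\end{equation*}
for some positive constant $C$, provided $\eta$ is smaller than an explicit threshold depending on $H,\sigma_{max},\gamma,\beta,b,K$. Summing over $T$ and using $\mL\ge 0$ gives $\sum_{t=1}^{\infty}\mE\Vert\nabla\mL(\bw(t))\Vert^2<\infty$, and hence by Fubini a.s.\ finiteness and $\langle\bw(t),\bx\rangle\to\infty$ for every $\bx\in\bS$, exactly as in Corollary \ref{coro: sgdm_gradient_finite}.

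\textbf{Stage II (parameter dynamics).} Define $\br(t)\triangleq\bw(t)-\ln(t)\hbw-\tbw-\bn(t)$ with $\tbw$ from Corollary \ref{coro:definition_of_tbw} (using ${C_3}=\frac{\eta}{(1-\beta)N}$) and $\bn(t)$ from Lemma \ref{lem: define_n_t}; note the final sentence of that lemma removes the ``almost surely'' qualifier precisely in the without-replacement case. Introduce the same potential $g(t)$ as in Eq.~\eqref{eq: definition_of_g_sgdm}. The equivalence ``$g(t)$ upper bounded $\iff\Vert\br(t)\Vert$ bounded'' from Lemma \ref{lem:sgdm_r_t_bounded} carries over verbatim, since it only uses $\sum\Vert\bw(t+1)-\bw(t)\Vert^2<\infty$ (which now follows from Stage I a.s.). Computing $g(t+1)-g(t)$ one obtains the same three groups of terms as in Lemma \ref{lem: sgdm_ib}: a square-update term summable by Stage I, a support-vector term $A_6(t)$, and a non-support-vector term $A_7(t)$. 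The pointwise bounds $A_6(t),A_7(t)=\mO(t^{-1-\delta})$ derived there rely only on the exponential-tailed structure of $\ell$, the definition of $\theta$ in Eq.~\eqref{eq: def_theta}, and $\bn(t)=\mO(t^{-0.5+\varepsilon})$, none of which depends on whether sampling is with or without replacement.

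\textbf{Main obstacle.} The only genuinely new ingredient is Stage I: one must make precise the epoch-level descent inequality and extract an explicit learning-rate threshold from the smoothness residual $\sum_{t=KT+1}^{KT+K}\Vert\nabla\mL_{\bB(t)}(\bw(t))-\nabla\mL_{\bB(t)}(\bw(KT+1))\Vert$, which in turn requires iterating the one-step update bound $\Vert\bw(t+1)-\bw(t)\Vert\lesssim\eta\sum_{s\le t}\beta^{t-s}\Vert\nabla\mL_{\bB(s)}(\bw(s))\Vert$ and using Lemma \ref{lem: first_second}-type bounds on second moments. Once this bookkeeping is set up, Stage II is essentially identical to the with-replacement case, so the anticipated learning-rate upper bound will have the same qualitative shape as in Theorem \ref{thm: sgdm_main}, with constants degraded by a factor depending on $K$ to absorb the intra-epoch drift.
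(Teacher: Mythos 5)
Your proposal follows essentially the same route as the paper's own proof: Stage I works at the epoch level, freezes the expansion point at $\bu(KT+1)$, exploits the exact identity $\sum_{t=KT+1}^{KT+K}\nabla\mL_{\bB(t)}(\bw(KT+1))=K\nabla\mL(\bw(KT+1))$ coming from without-replacement sampling, and absorbs the intra-epoch drift (together with the momentum carryover from earlier epochs with geometric $\beta^{K(T-\ell)}$ weights) into a residual controlled by global smoothness and a small $\eta$; Stage II then reuses $\br(t)$, $g(t)$, and the $A_6$/$A_7$ decomposition verbatim. The paper proves precisely this (Lemma \ref{lem: loss_iteration_mini_batch_sgld} and the surrounding Appendix \ref{appen: mini_SGDM}), so your outline matches.
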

The without-replacement sampling method leads to the direction of every trajectory of mini-SGDM converge to the max-margin solution, compared to the same conclusion holds for SGDM a.s.. We prove the theorem following the same framework of GDM, by proceeding with two stages.

\textbf{Stage I.} The following lemma proves $\mL(\bu(t))$ is an Lyapunov function for SGDM (w/o. r) and without the a.s. condition.
\begin{lemma}
\label{lem: loss_iteration_mini_batch_sgld}
Let all conditions in Theorem \ref{thm: main_mini_batch_sgd} hold. Then, we have
\begin{equation*}
    \mL(\bu(t+1))\le \mL(\bu(1))-\Omega(\eta) \sum_{s=1}^t\Vert \nabla \mL(\bw(s)) \Vert^2. 
\end{equation*}
\end{lemma}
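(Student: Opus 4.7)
The plan is to mirror the proof of Lemma \ref{lem: sgdm_loss_update} but, since without-replacement sampling destroys the per-step unbiasedness $\mE[\nabla \mL_{\bB(t)}(\bw(t)) \mid \mF_t] = \nabla \mL(\bw(t))$ that powered the expectation argument there, replace that step by an \emph{epoch-level telescoping} argument. The starting point is the same deterministic Taylor inequality around $\bu(t)$: using Assumption \ref{assum: smooth}.(S) together with the identity $\bu(t+1)-\bu(t) = -\eta\nabla \mL_{\bB(t)}(\bw(t))$ from Eq.~(\ref{eq: alter_sgdm}), we get
\begin{equation*}
 \mL(\bu(t+1)) \le \mL(\bu(t)) - \eta\langle \nabla \mL_{\bB(t)}(\bw(t)),\nabla \mL(\bu(t))\rangle + \tfrac{H\sigma_{\max}^2\eta^2}{2N}\Vert \nabla \mL_{\bB(t)}(\bw(t))\Vert^2.
\end{equation*}

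Next I would sum this inequality across a full epoch $t = KT+1,\ldots,K(T+1)$, and exploit the defining property of without-replacement sampling, namely $\bigcup_{t=KT+1}^{K(T+1)}\bB(t)=\bS$, which gives the exact identity $\sum_{t=KT+1}^{K(T+1)} \nabla \mL_{\bB(t)}(\bw) = K\,\nabla \mL(\bw)$ for any \emph{fixed} parameter $\bw$. Freezing the reference point at $\bw(KT+1)$, I split
\begin{equation*}
 \sum_{t=KT+1}^{K(T+1)} \langle \nabla \mL_{\bB(t)}(\bw(t)),\nabla \mL(\bu(t))\rangle = K\Vert \nabla \mL(\bw(KT+1))\Vert^2 + E_T,
\end{equation*}
where $E_T$ collects the two drift contributions $\langle \nabla \mL_{\bB(t)}(\bw(t)) - \nabla \mL_{\bB(t)}(\bw(KT+1)),\nabla \mL(\bu(t))\rangle$ and $\langle \nabla \mL_{\bB(t)}(\bw(KT+1)),\nabla \mL(\bu(t))-\nabla \mL(\bw(KT+1))\rangle$. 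Both are controlled by the global smoothness $H$ times $\Vert \bw(t)-\bw(KT+1)\Vert$ or $\Vert \bu(t)-\bw(KT+1)\Vert$, and those displacements are in turn bounded by $\eta$ times partial sums of $\Vert \nabla \mL_{\bB(s)}(\bw(s))\Vert$ over the current epoch together with a momentum-driven tail reaching into previous epochs (the exact form mirrors Eq.~(\ref{eq: difference_gradient})). An application of Cauchy--Schwarz reduces each factor to $\Vert \nabla \mL_{\bB(s)}(\bw(s))\Vert^2$ (with a geometric $\beta$-weight for the momentum tail), and Lemma~\ref{lem: first_second} then converts these to $\Vert \nabla \mL(\bw(s))\Vert^2$ via the factor $N\sigma_{\max}^2/(b\gamma^2)$.

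Putting the pieces together, the epoch-summed inequality reads
\begin{equation*}
 \mL(\bu(K(T{+}1){+}1)) \le \mL(\bu(KT{+}1)) - \eta\left(K - C\eta\right)\Vert \nabla \mL(\bw(KT{+}1))\Vert^2 + C'\eta^2 \sum_{s\le K(T+1)}\rho^{K(T+1)-s}\Vert \nabla \mL(\bw(s))\Vert^2,
\end{equation*}
where $\rho<1$ absorbs the $\beta$-geometric coupling. Telescoping over epochs and swapping the order of summation in the error term, the double sum collapses to $\frac{1}{1-\rho}\sum_{s\le t}\Vert \nabla \mL(\bw(s))\Vert^2$. For $\eta$ sufficiently small (absorbing $C\eta$ and $C'\eta/(1-\rho)$ into a fraction of $K$), this yields $\mL(\bu(t+1))\le \mL(\bu(1))-\Omega(\eta)\sum_{s\le t}\Vert \nabla \mL(\bw(KT_s{+}1))\Vert^2$ along epoch heads; a final step replaces each epoch-head gradient by the full block of $K$ gradients in that epoch at the cost of another drift term, again handled by smoothness, to obtain the stated bound over all $s$.

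The main obstacle will be the drift analysis: unlike vanilla SGD, the momentum update forces the displacement $\bw(t)-\bw(KT+1)$ to depend on gradients from earlier epochs through the $\beta$-weighted history, so the error term in any epoch is not localized to that epoch. Making sure that, after summing across epochs, the induced double sum is still dominated by the negative $-\eta\Vert\nabla \mL(\bw(s))\Vert^2$ leading term — while uniformly in the (random) partition $\bB(\cdot)$ — is the delicate part and dictates the learning-rate upper bound. All remaining ingredients (the potential $\mL(\bu(t))$, smoothness conversions, and the separability inequality Lemma~\ref{lem: first_second}) are already available from the with-replacement proof.
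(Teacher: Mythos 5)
Your proposal matches the paper's proof in all essential respects: both use the epoch-level Taylor expansion of $\mL(\bu(\cdot))$, invoke the shuffling identity $\sum_{t=KT+1}^{K(T+1)}\nabla\mL_{\bB(t)}(\bw)=K\nabla\mL(\bw)$ at the frozen reference $\bw(KT+1)$, bound the drift by the displacement $\|\bw(KT+t)-\bw(KT+1)\|$ which carries a $\beta$-geometric tail into earlier epochs, apply Cauchy--Schwarz to square the geometric sum, and absorb the resulting $\mathcal{O}(\eta^2)$ double sum into the negative $-\Omega(\eta)\|\nabla\mL(\bw(KT+1))\|^2$ term after swapping summation order. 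The only minor imprecision is your citation of Lemma~\ref{lem: first_second} (an expectation bound) where the \emph{deterministic} inequality $\|\nabla\mL_{\bB}(\bw)\|\le \tfrac{N\sigma_{\max}}{b\gamma}\|\nabla\mL(\bw)\|$ (established in the proof of Lemma~\ref{lem:sgdm_r_t_bounded}) is what is actually needed, but the quantity and its role in the argument are the same.
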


\begin{proof}
    By the Taylor Expansion of $\mL(\bu(t+1))$ at $\bu(t)$, we have
    \begin{align}
    \nonumber
        &\mL(\bu(KT+T+1))
        \\
    \nonumber
        \le& \mL(\bu(KT+1))-\eta \left\langle\nabla \mL(\bu(KT+1)),\sum_{t=1}^{K} \nabla\mL_{\bB(t+KT)}(\bw(t+KT)) \right\rangle
        \\
    \label{eq: expansion_mini}
        +&\frac{H\eta^2}{2}\left\Vert \sum_{t=1}^{K} \mL_{\bB(t+KT)}(\bw(t+KT)) \right\Vert^2.
    \end{align}
    
    On the other hand, for any $t\in\{2,\cdots,K\}$, we have
    \begin{align*}
        &\bw(KT+t)-\bw(KT+1)
        =\eta\sum_{s=1}^t\left(\sum_{\ell=1}^{KT+s}\beta^{KT+s-\ell}\nabla \mL_{\bB(\ell)}(\bw(\ell))\right)
        \\
        =&\eta\sum_{s=1}^t\left(\sum_{\ell=KT+1}^{KT+s}\beta^{KT+s-\ell}\nabla \mL_{\bB(\ell)}(\bw(\ell))\right)+\eta\sum_{s=1}^t\left(\sum_{\ell=1}^{KT}\beta^{KT+s-\ell}\nabla \mL_{\bB(\ell)}(\bw(\ell))\right)
        \\
        =&\eta \sum_{\ell=1}^t \frac{1-\beta^{t-\ell+1}}{1-\beta} \nabla \mL_{\bB(KT+\ell)}(\bw(KT+\ell))+\eta \frac{\beta(1-\beta^t)}{1-\beta}\sum_{\ell=1}^{KT} \beta^{KT-\ell} \nabla \mL_{\bB(\ell)} (\bw(\ell))
        \\
        =&\eta \sum_{\ell=1}^t \frac{1-\beta^{t-\ell+1}}{1-\beta} \nabla \mL_{\bB(KT+\ell)}(\bw(KT+\ell))-\eta \sum_{\ell=1}^t \frac{1-\beta^{t-\ell+1}}{1-\beta} \nabla \mL_{\bB(KT+\ell)}(\bw(KT+1))
        \\
        +&\eta \frac{\beta(1-\beta^t)}{1-\beta}\sum_{\ell=1}^{KT} \beta^{KT-\ell} \nabla \mL_{\bB(\ell)} (\bw(\ell))+\eta \sum_{\ell=1}^t \frac{1-\beta^{t-\ell+1}}{1-\beta} \nabla \mL_{\bB(KT+\ell)}(\bw(KT+1)),
    \end{align*}
    which by $\eta$ is small enough further indicates
    \begin{align*}
        &\Vert \bw(KT+t)-\bw(KT+1)\Vert
        \\
        \le &\eta\left\Vert  \sum_{\ell=1}^t \frac{1-\beta^{t-\ell+1}}{1-\beta} \nabla \mL_{\bB(KT+\ell)}(\bw(KT+\ell))- \sum_{\ell=1}^t \frac{1-\beta^{t-\ell+1}}{1-\beta} \nabla \mL_{\bB(KT+\ell)}(\bw(KT+1))\right\Vert
        \\
        +&\eta\left\Vert \frac{\beta(1-\beta^t)}{1-\beta}\sum_{\ell=1}^{KT} \beta^{KT-\ell} \nabla \mL_{\bB(\ell)} (\bw(\ell))\right\Vert+\eta\left\Vert \sum_{\ell=1}^t \frac{1-\beta^{t-\ell+1}}{1-\beta} \nabla \mL_{\bB(KT+\ell)}(\bw(KT+1))\right\Vert
        \\
        = & \mathcal{O}(\eta) \sum_{\ell=2}^t \Vert \bw(KT+\ell)-\bw(KT+1)\Vert+\mathcal{O}(\eta)\left(\sum_{\ell=1}^{KT} \beta^{KT-\ell}\left\Vert \nabla \mL_{\bB(\ell)} (\bw(\ell))\right\Vert\right)
        \\
        &+\mathcal{O}(\eta)\left\Vert\nabla \mL(\bw(KT+1)) \right\Vert.
    \end{align*}
    Applying the same analysis to $\Vert \bw(KT+t-1)-\bw(KT+1)\Vert$ recursively, we finally obtain  
    \begin{align}
    \nonumber
        &\Vert \bw(KT+t)-\bw(KT+1)\Vert
        \\
    \label{eq: iteration_mini_batch}
        \le &\mathcal{O}(\eta)\left(\sum_{\ell=1}^{KT} \beta^{KT-\ell}\left\Vert \nabla \mL_{\bB(\ell)} (\bw(\ell))\right\Vert\right)+\mathcal{O}(\eta)\left\Vert\nabla \mL(\bw(KT+1)) \right\Vert.
    \end{align}
    
    Applying Eq. (\ref{eq: iteration_mini_batch}) to  the $\left\Vert \nabla \mL_{\bB(\ell)} (\bw(\ell))\right\Vert$ in Eq. (\ref{eq: iteration_mini_batch}) ($\forall \ell \in [1,KT]$) iterative and choosing $\eta$ to be small enough, we further have 
    \begin{align}
    \nonumber
        &\Vert \bw(KT+t)-\bw(KT+1)\Vert
        \\
        \nonumber
        \le &\mathcal{O}(\eta)\left(\sum_{\ell=0}^{T-1} \sqrt{\beta^{K(T-\ell)}}\left\Vert \nabla \mL_{\bB(K\ell+1)} (\bw(K\ell+1))\right\Vert\right)+\mathcal{O}(\eta)\left\Vert\nabla \mL(\bw(KT+1)) \right\Vert
        \\
        \nonumber
        =&\mathcal{O}(\eta)\left(\sum_{\ell=0}^{T} \sqrt{\beta^{K(T-\ell)}}\left\Vert \nabla \mL_{\bB(K\ell+1)} (\bw(K\ell+1))\right\Vert\right).
    \end{align}
    
    Therefore, 
    \begin{align}
        \nonumber
        &\sum_{t=1}^{K} \nabla\mL_{\bB(t+KT)}(\bw(t+KT))
        \\
        \nonumber
        =&\sum_{t=1}^{K} \nabla\mL_{\bB(t+KT)}(\bw(t))+\mathcal{O}\left(\eta\left(\sum_{\ell=0}^{T} \sqrt{\beta^{K(T-\ell)}}\left\Vert \nabla \mL_{\bB(K\ell+1)} (\bw(K\ell+1))\right\Vert\right)\right)
        \\
        \label{eq: mini_batch_change_w}
        =&K\nabla\mL(\bw(t))+\mathcal{O}\left(\eta\left(\sum_{\ell=0}^{T} \sqrt{\beta^{K(T-\ell)}}\left\Vert \nabla \mL_{\bB(K\ell+1)} (\bw(K\ell+1))\right\Vert\right)\right).
    \end{align}
    
    Similarly, one can obtain
    \begin{align}
    \nonumber
        &\nabla\mL(\bu(KT+1))
        \\
        \nonumber
        =&\nabla\mL(\bw(KT+1))+\mathcal{O}\left(\Vert\bw(KT+1)-\bw(KT)\Vert\right)
        \\
        \label{eq: mini_batch_change_u}
        =&\nabla\mL(\bw(KT+1))+\mathcal{O}\left(\eta\left(\sum_{\ell=0}^{T} \sqrt{\beta^{K(T-\ell)}}\left\Vert \nabla \mL_{\bB(K\ell+1)} (\bw(K\ell+1))\right\Vert\right)\right).
    \end{align}
    
    Applying Eq. (\ref{eq: mini_batch_change_w}) and Eq. (\ref{eq: mini_batch_change_u}) back to the Taylor Expansion (Eq. (\ref{eq: expansion_mini})), we have
    \begin{align*}
       & \mL(\bu(KT+T+1))
       \\
       \le& \mL(\bu(KT+1))-\Omega(\eta) \left\langle \nabla \mL(\bw(KT+1)),\nabla \mL(\bw(KT+1))\right\rangle
       \\
       &+\mathcal{O}\left(\eta^2\left(\sum_{\ell=0}^{T} \sqrt{\beta^{K(T-\ell)}}\left\Vert \nabla \mL_{\bB(K\ell+1)} (\bw(K\ell+1))\right\Vert\right)^2\right)
       \\
       \le &\mL(\bu(KT+1))-\Omega(\eta) \left\langle \nabla \mL(\bw(KT+1)),\nabla \mL(\bw(KT+1))\right\rangle
       \\
       &+\mathcal{O}\left(\eta^2\left(\sum_{\ell=0}^{T} \sqrt{\beta^{K(T-\ell)}}\left\Vert \nabla \mL_{\bB(K\ell+1)} (\bw(K\ell+1))\right\Vert^2\right)\right).
    \end{align*}
    
    Summing the above inequality over $T$ and setting $\eta$ small enough leads to the conclusion.
    
    The proof is completed.
\end{proof}

\subsection{Extension to the multi-class classification problem}
\label{appen: multi-class}
{  
As mentioned in Section \ref{sec: preliminary}, despite all the previous analyses are aimed at the binary classification problem, they can be naturally extended to the analyses multi-class classification problem. Specifically, in the linear multi-class classification problem, for any $(\bx,\by)\in \mathbb{R}^{d_X}\times \{1,\cdots,C\}$ in the sample set $\bS$, the (individual) logistic loss with parameter $\bW\in \mathbb{R}^{C\times d_X}$ is denoted as 
\begin{equation*}
    \ell(\by,\bW \bx)=\ln \frac{e^{\bW_{\by,}\bx}}{\sum_{i=1}^C e^{\bW_{i,}\bx}}.
\end{equation*}
Correspondingly, dataset $\bS$ is separable if there exists a parameter $\bW$, such that $\forall (\bx,\by)\in \bS$, we have $\bW_{\by,}\bx> \bW_{i,}\bx$, $\forall i\ne \by$. The multi-class $L^2$ max-margin problem is then defined as
\begin{equation*}
    \min \Vert \bW\Vert_{F},~ subject~to:~\bW_{\by,}\bx\ge \bW_{i,}\bx+1, \forall (\bx,\by)\in\bS, i\ne \by,
\end{equation*}where $\Vert\cdot\Vert_{F}$ denotes the Frobenius norm.
Denote $\hbW$ as the $L^2$ max-margin solution, we have SGDM  and Adam {   (w/s)} still converges to the direction of $\hbW$.
\begin{theorem}
\label{thm: multi}
For linear multi-class classification problem using logistic loss and almost every separable data, with a small enough learning rate, and $1>\beta_2>\beta_1^4\ge 0$ (for Adam {   (w/s)}),  SGDM and Adam {   (w/s)} converge to the multi-class $L^2$ max-margin solution (a.s. for SGDM SGDM (w/. r)).
\end{theorem}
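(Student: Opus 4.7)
The plan is to reduce the multi-class problem to the binary-style analysis already developed by vectorising the parameter matrix and introducing \emph{virtual samples}. For each sample $(\bx_n,\by_n)\in\bS$ and each wrong label $i\ne \by_n$, define $\tilde{\bx}_{n,i}\in\mathbb{R}^{Cd_X}$ so that its block $\by_n$ equals $\bx_n$, its block $i$ equals $-\bx_n$, and the other blocks are zero. Then
\begin{equation*}
(\bW_{\by_n,}-\bW_{i,})\bx_n=\langle\vect(\bW),\tilde{\bx}_{n,i}\rangle\quad\text{and}\quad \ell(\by_n,\bW\bx_n)=\ln\Bigl(1+\sum_{i\ne \by_n}e^{-\langle\vect(\bW),\tilde{\bx}_{n,i}\rangle}\Bigr).
\end{equation*}
The separability of $\bS$ is equivalent to linear separability of the virtual sample set $\widetilde{\bS}=\{\tilde{\bx}_{n,i}\}$, and the max-margin program reduces to $\min \Vert\vect(\bW)\Vert^2$ subject to $\langle\vect(\bW),\tilde{\bx}_{n,i}\rangle\ge 1$. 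Its KKT conditions yield a decomposition $\vect(\hbW)=\sum_{(n,i)\in \widetilde{\bS}_s} v_{n,i}\tilde{\bx}_{n,i}$ with $v_{n,i}>0$ on the virtual support set $\widetilde{\bS}_s$, so the analogues of Lemma~\ref{lem: structure of max-margin vector} and Corollary~\ref{coro:definition_of_tbw} hold and a vector $\tilde{\bW}$ satisfying $\tfrac{\eta}{(1-\beta)N}e^{-\langle\vect(\tilde{\bW}),\tilde{\bx}_{n,i}\rangle}=v_{n,i}$ on the support exists for almost every $\bS$ by the same general-position argument.

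For Stage~I I would transfer the loss-dynamics proofs essentially verbatim. The multi-class logistic loss is globally Lipschitz-smooth in $\vect(\bW)$ with some constant $H$, so Assumption~\ref{assum: smooth}.(\textbf{S}) is in force. The SGDM potential $\mL(\bu(t))$ from Lemma~\ref{lem: sgdm_loss_update} and the Adam potential $\xi(t)$ from Lemma~\ref{lem: loss_update_adam} depend only on smoothness and the linear structure of the parameter update, not on the specific form of $\ell$, and therefore still produce $\sum_{t=1}^\infty\mE\Vert\nabla\mL(\bW(t))\Vert_F^2<\infty$. Combined with the separability of $\widetilde{\bS}$, this forces $\langle\vect(\bW(t)),\tilde{\bx}_{n,i}\rangle\to\infty$ for every $(n,i)$, exactly as in Corollaries~\ref{coro: sgdm_gradient_finite} and~\ref{coro: adam_loss_iteration}.

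For Stage~II, define the residual $\br(t)=\vect(\bW(t))-\ln(t)\vect(\hbW)-\vect(\tilde{\bW})-\vect(\bn(t))$ (omitting $\bn(t)$ in the deterministic Adam case). The gradient takes the form
\begin{equation*}
\nabla\mL(\bW)=-\frac{1}{N}\sum_{n}\sum_{i\ne \by_n}\frac{e^{-z_{n,i}}}{1+\sum_{j\ne \by_n}e^{-z_{n,j}}}\tilde{\bx}_{n,i},\qquad z_{n,i}\triangleq\langle\vect(\bW),\tilde{\bx}_{n,i}\rangle,
\end{equation*}
which is a negative weighted combination of virtual samples, mirroring $\tfrac{1}{N}\sum_i\ell'(\langle\bw,\bx_i\rangle)\bx_i$ in the binary case. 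Because non-support virtual samples satisfy $\langle\vect(\hbW),\tilde{\bx}_{n,i}\rangle\ge\theta>1$, their contribution decays like $t^{-\theta}$ and the softmax denominator collapses to $1+O(t^{-\theta})$ on support virtual samples; the coefficient on each support virtual sample is therefore $\tfrac{1}{t}e^{-\langle\br(t)+\vect(\tilde{\bW})+\bn(t),\tilde{\bx}_{n,i}\rangle}$ up to a $(1+O(t^{-\theta}))$ factor. Plugging this expansion into the potential function $g(t)$ of Lemma~\ref{lem: bounded_g_gdm} (or Lemma~\ref{lem: g_equivalent_r_adam} for Adam) reproduces the same summable bound on $g(t+1)-g(t)$, once the case split on the sign of $\langle\br(t),\tilde{\bx}_{n,i}\rangle$ is performed as before.

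The main obstacle will be controlling the softmax denominator $1+\sum_{j\ne \by_n}e^{-z_{n,j}}$, which couples virtual samples sharing an anchor class $\by_n$ and spoils the clean term-by-term structure exploited in the binary analysis. My plan is to absorb this coupling into a multiplicative $(1+O(t^{-\theta}))$ factor via the asymptotic separation $\theta=\min_{(n,j)\notin \widetilde{\bS}_s}\langle\vect(\hbW),\tilde{\bx}_{n,j}\rangle>1$ from Stage~I, so that the effective per-virtual-sample coefficient recovers the envelopes in Assumption~\ref{assum: exponential-tailed} with slightly modified $\mu_\pm$. Once that is in place, the two-stage framework closes exactly as in the binary setting, yielding $\vect(\bW(t))-\ln(t)\vect(\hbW)$ bounded and hence $\bW(t)/\Vert\bW(t)\Vert_F\to\hbW/\Vert\hbW\Vert_F$.
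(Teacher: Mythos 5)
Your proposal follows essentially the same route as the paper: vectorise $\bW$, recast the multi-class cross-entropy in terms of difference vectors $\tilde{\bx}_{n,i}=(\bA_{\by_n}-\bA_i)\bx_n$, confirm global smoothness, and transfer the Stage~I potential functions unchanged; for Stage~II the paper defines $\br(t)$ exactly as you do and simply cites the multi-class Lemma~20 of \cite{soudry2018implicit} for the summability of $\langle\br(t),-\frac{\eta}{1-\beta}\nabla\mL(\bw(t))-\ln\tfrac{t+1}{t}\hbw\rangle$, which is the very bound your softmax-coupling discussion is aiming to reproduce. One technical slip worth flagging: the denominator $1+\sum_{j\ne\by_n}e^{-z_{n,j}}$ is not $1+O(t^{-\theta})$ but only $1+O(t^{-1})$ in the best case, because support virtual samples sharing the anchor $n$ (decaying like $t^{-1}$) sit in the same sum as the non-support ones (decaying like $t^{-\theta}$); worse, without a priori control of $\br(t)$ the rate cannot be pinned down term by term, which is precisely why the paper defers to the dedicated multi-class argument of \cite{soudry2018implicit} rather than forcing the binary Assumption~\ref{assum: exponential-tailed} envelope. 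Your high-level plan is sound and matches the paper, but the "absorb the coupling into a multiplicative $(1+O(t^{-\theta}))$ factor" step would need to be replaced by the sign-split argument of that lemma (the $D_n\ge 1$ bound handles $\langle\br(t),\tilde{\bx}_{n,i}\rangle\ge0$ cleanly; the opposite sign requires exploiting that a large $D_n$ is caused by some other $z_{n,j}$ being small, whose own term contributes a compensating negative amount).
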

}
Here we use several notations and lemmas from \cite{soudry2018implicit}. We define $\bw=\vect(\bW)$, $\hbw=\vect(\hbW)$, $\be_i\in \mR^{C}$ ($i\in\{1,\cdots,C\}$) satisfying $(\be_i)_j=\delta_{ij}$, and $\bA_i=\be_i\otimes \mI_{d_{X}}$, where $ \mI_{d_{X}}$ is the identity matrix with dimension $d_{X}$. We still consider the normalized data, i.e., $\Vert \bx\Vert \le 1$, $\forall (\bx,\by)\in \bS$. Then, the individual loss of sample $(\bx,\by)$ can be then represented as
\begin{equation*}
    \ell(\by,\bW\bx)=\ln\frac{e^{\langle\bw, \bA_{\by} \bx\rangle}}{\sum_{i=1}^C e^{\langle \bw, \bA_{i}\bx\rangle}}.
\end{equation*}
Furthermore, the gradient of training error at $\bW$ has the form
\begin{equation*}
    \nabla \mL(\bw)=\frac{1}{N}\sum_{(\bx,\by)\in\bS} \sum_{i=1}^C\frac{1}{\sum_{j=1}^C e^{\langle \bw, (\bA_{j}-\bA_{i})\bx\rangle}}(\bA_i-\bA_{\by})\bx.
\end{equation*}
and the Hessian matrix of $\mL$ can be represented as
\begin{equation*}
    \mathcal{H} \mL(\bw)=\frac{1}{N}\sum_{(\bx,\by)\in\bS} \sum_{i=1}^C\frac{\sum_{j=1}^C e^{\langle \bw, (\bA_{j}-\bA_{i})\bx\rangle}}{\left(\sum_{j=1}^C e^{\langle \bw, (\bA_{j}-\bA_{i})\bx\rangle}\right)^2}(\bA_i-\bA_{\by})\bx((\bA_j-\bA_{i})\bx)^{\top},
\end{equation*}
one can then easily verify all absolute value of the eigenvalues of $\mathcal{H}\mL(\bw)$ is no larger than $2$, which indicates $\mL$ is $2$-globally smooth.

On the other hand, the separable assumption leads to $\langle\hbw,(\bA_{\by}-\bA_i)\bx\rangle>0$, $\forall \by\ne i$, which further indicates 
\begin{equation*}
    \langle \nabla \mL(\bw),\hbw\rangle>0.
\end{equation*}

Let $\gamma=\frac{1}{\Vert \hbw\Vert}$, following the similar routine as the binary case, we have for a random subset of $\bS$ sampled uniformly without replacement with size $b$, we have 
\begin{equation}
\label{eq: multi_momentum}
     \Vert\nabla \mL(\bw) \Vert^2\le \mathbb{E}_{\bB(t)} \Vert\nabla \mL_{\bB(t)}(\bw) \Vert^2\le \frac{2N}{\gamma b^2} \Vert\nabla \mL(\bw) \Vert^2.
\end{equation}
Similarly, we have for any positive real series $\{a_t\}_{t=t_1}^{t_2}$,
\begin{equation}
\label{eq: multi_sum}
    \gamma \sum_{t=t_1}^{t_2} a(t) \Vert\nabla \mL(\bw(t)) \Vert\le \left\Vert \sum_{t=t_1}^{t_2} a(t)  \nabla \mL(\bw(t)) \right\Vert \le  \sum_{t=t_1}^{t_2}a(t)\Vert   \nabla \mL(\bw(t)) \Vert.
\end{equation}

The proofs of Stage I can then be obtained with Lyapunov functions unchanged and by replacing the corresponding lemmas using Eq. (\ref{eq: multi_momentum}) and Eq. (\ref{eq: multi_sum}).

As for the proofs of Stage II, the Lyapunov functions are still the same, while we only need to prove the sum of $\langle \br(t), -\frac{\eta}{1-\beta}\nabla \mL(\bw(t))-\ln\frac{t+1}{t}\hbw\rangle$ (for GDM, $\langle \br(t), -\frac{\eta}{1-\beta}\nabla \mL_{\bB(t)} (\bw(t))-\frac{N}{bt}\sum_{i: \bx_i\in \bB(t)\cap \bS_s} \bv_i\bx_i\rangle$ for SGDM, $\langle \br(t),-\sqrt{\varepsilon}\ln\left(\frac{t+1}{t}\right)\hbw-\frac{\eta}{1-\beta}\nabla \mL(\bw(t))\rangle$ for Adam). For the multi-class case using GDM, We present the following lemma from \cite{soudry2018implicit}, while the other two cases can be proved similarly:
\begin{lemma}[Part of the proof of Lemma 20, \cite{soudry2018implicit}] If $\langle\bw(t),(\bA_{\by}-\bA_i)\bx\rangle\rightarrow \infty$ as $t\rightarrow \infty$,  $\forall (\bx,\by)\in \bS$ and $\forall i\ne \by$, we have the sum of $\langle \br(t), -\frac{\eta}{1-\beta}\nabla \mL(\bw(t))-\ln\frac{t+1}{t}\hbw\rangle$ is upper bounded.
\end{lemma}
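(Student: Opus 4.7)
The plan is to mirror the bookkeeping of Lemma \ref{lem: bounded_g_gdm} verbatim; the only genuinely new ingredient is a multi-class linearization of the softmax weight $(\sum_j e^{\langle\bw(t),(\bA_j-\bA_i)\bx\rangle})^{-1}$ around its leading exponential $e^{-\langle\bw(t),(\bA_{\by}-\bA_i)\bx\rangle}$. First I would split
\[
\left\langle \br(t), -\tfrac{\eta}{1-\beta}\nabla\mL(\bw(t)) - \ln\tfrac{t+1}{t}\hbw\right\rangle
= \left\langle \br(t),\bigl(\tfrac{1}{t}-\ln\tfrac{t+1}{t}\bigr)\hbw\right\rangle
+ \left\langle \br(t),-\tfrac{\eta}{1-\beta}\nabla\mL(\bw(t))-\tfrac{1}{t}\hbw\right\rangle,
\]
and call these $T_1(t)$ and $T_2(t)$. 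The multi-class Stage I analysis (global $2$-smoothness of the cross-entropy together with Eqs.~(\ref{eq: multi_momentum})--(\ref{eq: multi_sum})) gives $\sum_t\Vert\bw(t+1)-\bw(t)\Vert^2<\infty$ and hence $\Vert\br(t)\Vert=\mathcal{O}(\sqrt{t}\,)$; combining this with $\tfrac{1}{t}-\ln\tfrac{t+1}{t}=\mathcal{O}(t^{-2})$ makes $\sum_t|T_1(t)|=\mathcal{O}(\sum_t t^{-3/2})$ finite, disposing of $T_1$.

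For $T_2$ I would invoke the multi-class KKT structure: on almost every dataset the support-pair set $\mathcal{P}_s=\{((\bx,\by),i):i\ne\by,\;\langle\hbw,(\bA_{\by}-\bA_i)\bx\rangle=1\}$ carries unique positive weights $\alpha_{(\bx,\by),i}$ with $\hbw=\sum_{\mathcal{P}_s}\alpha_{(\bx,\by),i}(\bA_{\by}-\bA_i)\bx$, and one can then solve $\tfrac{\eta}{(1-\beta)N}e^{-\langle\tbw,(\bA_{\by}-\bA_i)\bx\rangle}=\alpha_{(\bx,\by),i}$ on $\mathcal{P}_s$ for a fixed $\tbw$ exactly as in Corollary \ref{coro:definition_of_tbw}. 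Using $\langle\hbw,(\bA_{\by}-\bA_i)\bx\rangle=1$ on $\mathcal{P}_s$, the identity $\tfrac{1}{t}\hbw=\tfrac{\eta}{(1-\beta)N}\sum_{\mathcal{P}_s}\tfrac{1}{t}e^{-\langle\tbw,(\bA_{\by}-\bA_i)\bx\rangle}(\bA_{\by}-\bA_i)\bx$ reduces $T_2(t)$ to a sum of inner products $\langle\br(t),(\bA_{\by}-\bA_i)\bx\rangle$ indexed by pairs $((\bx,\by),i)$, each with scalar coefficient
\[
\tfrac{\eta}{(1-\beta)N}\Bigl[\tfrac{1}{\sum_j e^{\langle\bw(t),(\bA_j-\bA_i)\bx\rangle}} - \mathds{1}_{((\bx,\by),i)\in\mathcal{P}_s}\cdot\tfrac{1}{t}e^{-\langle\tbw,(\bA_{\by}-\bA_i)\bx\rangle}\Bigr].
\]
For non-support pairs $\langle\hbw,(\bA_{\by}-\bA_i)\bx\rangle\ge\theta>1$, so the first summand is $\mathcal{O}(t^{-\theta}\cdot e^{-\langle\br(t),(\bA_{\by}-\bA_i)\bx\rangle})$ and the Case 1/Case 2 sign dichotomy of Lemma \ref{lem: bounded_g_gdm} (split by the sign of $\langle\br(t),(\bA_{\by}-\bA_i)\bx\rangle$) transfers without change, giving an $\mathcal{O}(t^{-\theta})$ contribution.

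The hard part is the support-pair residual. There I need a uniform two-sided multiplicative bound
\[
\bigl(1-\rho(t)\bigr)e^{-\langle\bw(t),(\bA_{\by}-\bA_i)\bx\rangle} \le \frac{1}{\sum_j e^{\langle\bw(t),(\bA_j-\bA_i)\bx\rangle}} \le \bigl(1+\rho(t)\bigr)e^{-\langle\bw(t),(\bA_{\by}-\bA_i)\bx\rangle}
\]
with $\rho(t)=\mathcal{O}(t^{-\mu})$ for some $\mu>0$, uniformly over the finite pair set. This is precisely where the hypothesis $\langle\bw(t),(\bA_{\by}-\bA_j)\bx\rangle\to\infty$ for \emph{every} competing $j\ne\by$ is used: for $j\ne\by,i$ the ratio $e^{\langle\bw(t),(\bA_j-\bA_i)\bx\rangle}/e^{\langle\bw(t),(\bA_{\by}-\bA_i)\bx\rangle}=e^{-\langle\bw(t),(\bA_{\by}-\bA_j)\bx\rangle}$, and $\bw(t)=\ln(t)\hbw+\mathcal{O}(\sqrt{t}\,)$ together with $\langle\hbw,(\bA_{\by}-\bA_j)\bx\rangle>0$ (by separability) forces this ratio to decay as $t^{-\delta}$ for some $\delta>0$, giving $\rho(t)=\mathcal{O}(t^{-\delta})$. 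Feeding this multiplicative $\rho(t)$ in place of the single-loss tail factor $e^{-\mu_\pm\langle\bw(t),\bx_i\rangle}$ into the Case 1/Case 2 bookkeeping of Lemma \ref{lem: bounded_g_gdm} yields a summable contribution to $T_2(t)$; combined with the non-support bound above this closes the estimate on $T_2(t)$ and hence on the full sum.
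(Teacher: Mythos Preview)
The paper does not prove this lemma itself; it is quoted from \cite{soudry2018implicit} without argument, so there is no in-paper proof to compare against. Your overall architecture---the $T_1/T_2$ split, the support/non-support decomposition via the multi-class KKT representation of $\hbw$, and the sign dichotomy on $\langle\br(t),(\bA_{\by}-\bA_i)\bx\rangle$---is the correct template and is exactly what the cited argument does.

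There is, however, a genuine gap in your justification that $\rho(t)=\mathcal{O}(t^{-\delta})$. You write that $\bw(t)=\ln(t)\hbw+\mathcal{O}(\sqrt t)$ together with $\langle\hbw,(\bA_{\by}-\bA_j)\bx\rangle>0$ forces $e^{-\langle\bw(t),(\bA_{\by}-\bA_j)\bx\rangle}$ to decay polynomially. This does not follow: the exponent equals $\ln(t)\,\langle\hbw,(\bA_{\by}-\bA_j)\bx\rangle+\langle\tbw+\br(t),(\bA_{\by}-\bA_j)\bx\rangle$, and with only $\Vert\br(t)\Vert=\mathcal{O}(\sqrt t)$ available, the second summand can be as negative as $-C\sqrt t$, which swamps the $\ln t$ term. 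The lemma's hypothesis only asserts these inner products diverge, not at any rate, so no pointwise polynomial bound on $\rho(t)$ is available here without circularity. This is precisely the place where the multi-class correction differs from the binary tail factor $e^{-\mu_{\pm}\langle\bw(t),\bx_i\rangle}$: in the binary case the correction involves the \emph{same} inner product as the main term, so a lower bound on $r_i$ (as in the sub-case $r_i\geq-2$ of Lemma~\ref{lem: bounded_g_gdm}) controls it; here $\rho(t)$ involves \emph{other} indices $j$, whose $r_j$ you have not constrained.

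The repair is to replace the false pointwise decay by a summability estimate that \emph{is} delivered by Stage~I. Write $S=S_{(\bx,\by)}(t)=\sum_{k\neq\by}e^{-\langle\bw(t),(\bA_{\by}-\bA_k)\bx\rangle}$; the gradient lower bound in Eq.~(\ref{eq: multi_sum}) gives $\Vert\nabla\mL(\bw(t))\Vert\geq\tfrac{\gamma}{N}\sum_{(\bx,\by)}\tfrac{S}{1+S}$, so $\sum_t\Vert\nabla\mL(\bw(t))\Vert^2<\infty$ yields $\sum_t S_{(\bx,\by)}(t)^2<\infty$ once $S\to0$. Now redo the support-pair bookkeeping: for $r_i\geq0$ the term is $\leq\tfrac{c_i}{t}(e^{-r_i}-1)r_i\leq0$ outright (the upper bound $\tfrac{1}{1+S}\leq1$ needs no correction, unlike the binary $\mu_+$ case). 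For $r_i<0$, split at $e^{-r_i}\gtrless 1+S$: if $e^{-r_i}\geq1+S$ the bracket $\tfrac{e^{-r_i}}{1+S}-1\geq0$ and the term is $\leq0$; if $e^{-r_i}<1+S$ then $|r_i|<\ln(1+S)\leq S$ and the term is at most $\tfrac{c_i}{t}\cdot\tfrac{S}{1+S}\cdot|r_i|\leq\tfrac{c_i}{t}S^2$, whose sum over $t$ is finite by the $\ell^2$ bound on $S$. This closes $T_2$ without any polynomial rate on $\rho(t)$.
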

The proof of Theorem \ref{thm: multi} is then completed.

\subsection{Precisely characterize the convergence rate}
\label{appen: precise}
Theorem \ref{thm: gdm_main} and Theorem \ref{thm: sgdm_main} can be further extended to precisely characterize the asymptotic convergence rate for (S)GDM as the following theorem.

\begin{theorem}
\label{thm: precise}
Let all the conditions in Theorem \ref{thm: gdm_main} (Theorem \ref{thm: sgdm_main}) hold. Assume the linear span of support vectors contains the whole dataset.  Then, we have 
\begin{equation*}
    \lim_{t\rightarrow \infty} t\mL(\bw(t))=C\frac{1}{\eta},
\end{equation*}
where $C$ is a constant independent of learning rate $\eta$, momentum hyperparameter $\beta$, and mini-batch size $b$.
\end{theorem}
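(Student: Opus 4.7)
The plan is to use the decomposition $\bw(t)=\ln(t)\hbw+\tbw+\br(t)$ with $\Vert\br(t)\Vert$ bounded (Theorems~\ref{thm: gdm_main} and \ref{thm: sgdm_main}) to isolate the dominant term of $\mL(\bw(t))$, and then pin down its coefficient via an asymptotic-balance argument on $\nabla\mL(\bw(t))$. The technical core is upgrading the summability bounds from Stage~I into pointwise rates.

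First I would separate the support and non-support contributions to the loss. For $\bx_i\notin\bS_s$, the slackness $\langle\hbw,\bx_i\rangle\geq\theta>1$ from Eq.~(\ref{eq: def_theta}) combined with Assumption~\ref{assum: exponential-tailed} gives $\ell(\langle\bw(t),\bx_i\rangle)=O(t^{-\theta})$, contributing $o(1/t)$ to $\mL(\bw(t))$. For $\bx_i\in\bS_s$, since $\langle\hbw,\bx_i\rangle=1$, the exponential tail yields
\begin{equation*}
t\,\ell(\langle\bw(t),\bx_i\rangle)=\psi_i(t)\bigl(1+o(1)\bigr),\qquad \psi_i(t)\triangleq e^{-\langle\tbw+\br(t),\bx_i\rangle},
\end{equation*}
so $t\mL(\bw(t))=\tfrac{1}{N}\sum_{\bx_i\in\bS_s}\psi_i(t)+o(1)$ and the problem reduces to establishing the limit of each $\psi_i(t)$.

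Next I would prove the asymptotic balance $t\,\eta\nabla\mL(\bw(t))\rightarrow -\hbw$. Telescoping the update $\bw(t+1)-\bw(t)=-\eta\bom(t)$ and using the identity $\sum_{t=1}^{T}\bom(t)=\sum_{s=1}^{T}(1-\beta^{T-s+1})\nabla\mL(\bw(s))$ together with $\bw(T)\sim\ln(T)\hbw$ gives the Cesaro-type estimate $\sum_{s=1}^{T}\nabla\mL(\bw(s))\sim -\ln(T)\hbw/\eta$. Two ingredients then differentiate this into the required pointwise statement: (i) the explicit form $-\nabla\mL(\bw(t))=\tfrac{1}{Nt}\sum_{\bx_i\in\bS_s}\psi_i(t)\bx_i+o(1/t)$ together with boundedness of $\psi_i(t)$ yields $\Vert\nabla\mL(\bw(t))\Vert=O(1/t)$ and, via smoothness of $\mL$, controls oscillations on the momentum time-scale; (ii) an Abel/Tauberian inversion of $\bom(t)=(1-\beta)\sum_{s\le t}\beta^{t-s}\nabla\mL(\bw(s))$ transfers the asymptotic from $\bom(t)$ back to $\nabla\mL(\bw(t))$. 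For SGDM the same scheme is applied to the auxiliary sequence $\bu(t)$ of Eq.~(\ref{eq: alter_sgdm}), whose conditional drift $\mE[\bu(t+1)-\bu(t)\mid\mF_t]=-\eta\nabla\mL(\bw(t))$ is noise-free; almost-sure concentration of the residual martingale is supplied by the $\sum\mE\Vert\nabla\mL(\bw(t))\Vert^2<\infty$ bound of Corollary~\ref{coro: sgdm_gradient_finite}.

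Combining the two pieces gives $\tfrac{1}{N}\sum_{\bx_i\in\bS_s}\psi_i(t)\bx_i\to\hbw/\eta=\sum_{\bx_i\in\bS_s}(\bv_i/\eta)\bx_i$. Linear independence of the support vectors for almost every dataset (Lemma~\ref{lem: structure of max-margin vector}) forces $\psi_i(t)\to N\bv_i/\eta$ for each $\bx_i\in\bS_s$; the spanning assumption of the theorem is used here to exclude residual drift of $\br(t)$ in directions orthogonal to $\operatorname{span}\bS_s$ that would otherwise leave these coefficients under-determined. Hence
\begin{equation*}
t\,\mL(\bw(t))\ \rightarrow\ \tfrac{1}{\eta}\sum_{\bx_i\in\bS_s}\bv_i\ =\ \tfrac{\Vert\hbw\Vert^2}{\eta}\ =\ \tfrac{1}{\gamma^2\eta},
\end{equation*}
using $\langle\hbw,\bx_i\rangle=1$ on $\bS_s$ so that $\Vert\hbw\Vert^2=\sum_{\bx_i\in\bS_s}\bv_i$; thus $C=1/\gamma^2$ is a pure dataset quantity, independent of $\eta$, $\beta$, and $b$. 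The hard step will be the asymptotic-balance argument, i.e.\ passing from the Cesaro estimate to a pointwise $o(1/t)$ rate without losing the leading constant, and (for SGDM) controlling the fluctuations of $\nabla\mL_{\bB(t)}-\nabla\mL$ at the $1/t$ scale along support-vector directions.
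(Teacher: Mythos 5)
The paper gives no proof of this theorem --- it defers in one line to Corollary~1 of \cite{nacson2019convergence} --- so there is nothing in-text to compare your plan against. On its own terms your outline has the right skeleton: the split into support / non-support contributions is correct, the reduction to $t\mL(\bw(t))=\tfrac{1}{N}\sum_{\bx_i\in\bS_s}\psi_i(t)+o(1)$ is correct, and the final constant $C=\sum_{\bx_i\in\bS_s}\bv_i=\langle\hbw,\hbw\rangle=\|\hbw\|^2=1/\gamma^2$ is correct and indeed independent of $\eta$, $\beta$, $b$. The problem sits exactly at the step you flag as hard, and your proposed repair does not close it.

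You want to upgrade the Cesaro-type estimate $\sum_{s\le T}\nabla\mL(\bw(s))=-\frac{\ln T}{\eta}\hbw+O(1)$ to the pointwise limit $t\,\eta\,\nabla\mL(\bw(t))\to-\hbw$, and you invoke an ``Abel/Tauberian inversion'' whose Tauberian condition is $\|\nabla\mL(\bw(t))\|=O(1/t)$. That condition is too weak. With $a_t=t\nabla\mL(\bw(t))$ you have $\sum_{s\le T}a_s/s=-\frac{\ln T}{\eta}\hbw+O(1)$ and $a_t=O(1)$, but a bounded sequence with a logarithmically averaged limit need not converge: take $a_t=2c$ on dyadic blocks $[2^{2k},2^{2k+1})$ and $a_t=0$ otherwise, and the log-average is $c\ln T+O(1)$ while $a_T$ oscillates forever. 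The smoothness estimate you mention, $\|\nabla\mL(\bw(s))-\nabla\mL(\bw(t))\|\lesssim\|\bw(s)-\bw(t)\|=O(\ln(s/t))$, controls oscillation on the additive $|s-t|=O(1)$ scale; to get a Hardy--Littlewood slow-oscillation condition you would need control of $s\nabla\mL(\bw(s))-t\nabla\mL(\bw(t))$ uniformly as $s/t\to1$, which this does not give once multiplied by $t$. More to the point, Theorems~\ref{thm: gdm_main} and \ref{thm: sgdm_main} deliver only \emph{boundedness} of $\br(t)$, which is perfectly compatible with persistent oscillation of $\psi_i(t)=e^{-\langle\tbw+\br(t),\bx_i\rangle}$, in which case the limit $\lim_t t\mL(\bw(t))$ simply fails to exist and your conclusion is not reachable.

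The missing ingredient is a \emph{convergence} statement $\langle\br(t),\bx_i\rangle\to0$ for every $\bx_i\in\bS_s$, not mere boundedness, and it must come from a stability argument rather than a summation-inversion. Working out the one-step recursion of $\br$ (drop $o(1/t)$ errors and the $O(t^{-\theta})$ non-support terms, and use $e^{-\langle\tbw,\bx_i\rangle}=N\bv_i/\eta$ from the choice $C_3=\eta/N$) gives
\begin{equation*}
\br(t+1)-\br(t)\ \approx\ \frac{1}{t}\sum_{\bx_i\in\bS_s}\bv_i\bigl(e^{-\langle\br(t),\bx_i\rangle}-1\bigr)\bx_i\ +\ \text{(momentum correction)}.
\end{equation*}
In logarithmic time $\tau=\ln t$ this is a vanishing perturbation of $\dot{\br}=-A\br+O(\|\br\|^2)$ with $A=\sum_{\bx_i\in\bS_s}\bv_i\,\bx_i\bx_i^\top$ positive definite on $\operatorname{span}\bS_s$; asymptotic stability of the origin, together with the summable perturbations (the momentum correction is $O(1/t)$ of the previous increment, and for SGDM the martingale increments are square-summable by Corollary~\ref{coro: sgdm_gradient_finite}), is what forces $\langle\br(t),\bx_i\rangle\to0$ and hence $\psi_i(t)\to N\bv_i/\eta$. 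This contraction step is the actual heart of the theorem; the telescoping computation only confirms the value of the constant once existence of the limit has been established.

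Finally, your reading of the spanning hypothesis is off. Any component of $\br(t)$ orthogonal to $\operatorname{span}\bS_s$ is annihilated in every $\langle\br(t),\bx_i\rangle$ with $\bx_i\in\bS_s$, so residual drift in those directions cannot ``under-determine'' the $\psi_i$. What the hypothesis actually rules out is the degenerate multi-level structure of Appendix~\ref{appen: every_data_set}, where the clean decomposition $\bw(t)=\ln(t)\hbw+\tbw+\br(t)$ must be augmented with iterated-logarithm terms $\tau(t)$ and the stability analysis above needs to be redone on a nested sequence of subspaces.
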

The proof follows exactly the same routine as (Corollary 1, \cite{nacson2019convergence}) and we omit it here.


\section{Experiments}
\label{appen: experiment}
This section collects several experiments supporting our theoretical results.
\subsection{Experiments on linear model}
\subsubsection{Comparing the training behavior of (S)GD, (S)GDM, deterministic Adam and RMSProp}
The experiments in this section is designed to verify our theoretical results, i.e., Theorems \ref{thm: gdm_main}, \ref{thm: sgdm_main}, \ref{thm: adam_main}, and \ref{thm: ib_rms}. We use the synthetic dataset in (Figure 1, \cite{soudry2018implicit}) and logistic loss, and run (S)GD, (S)GDM, deterministic Adam and stochastic RMSProp over it with different learning rates $\eta=0.1, 0.001$ and different random seeds (for random initialization, random samples despite the support sets $\{((1.5,0.5),1),((0.5,1.5),1), ((-1.5,-0.5),-1),((-1.5,-0.5),-1)\}$), and random mini-batches. Both the angle between the output parameter and max-margin solution and the training accuracy are plotted in Figure \ref{fig: deter_experi_1}. The observations can be summarized as follows:
\begin{itemize}
    \item With proper learning rates, all of (S)GD, (S)GDM, deterministic Adam and stochastic RMSProp converge to the max-margin solution, which supports our theoretical results;
    
      \item (Similarity between (S)GD and (S)GDM).  The asymptotic training behaviors of GD, SGD, GDM and SGDM are highly similar, which supports our Theorem \ref{thm: precise}.
    
    \item (The acceleration effect of Adam). Deterministic Adam and stochastic RMSProp achieve smaller angle with the max-margin solution under the same number of iterations.

\end{itemize}

\begin{figure}[htbp]
\centering
\vspace{-1mm}

\begin{minipage}{1.0\textwidth}
\centering
\begin{subfigure}{.45\textwidth}
          \centering
          \includegraphics[width=1.0\textwidth]{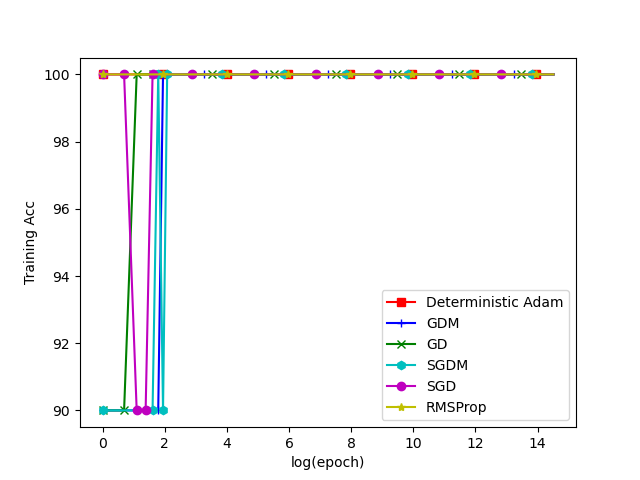}
          \caption{Accuracy: $\eta=0.1$, random seed $=1$}
        \end{subfigure}%
        \hspace{1mm}
\begin{subfigure}{.45\textwidth}
          \centering
          \includegraphics[width=1.0\textwidth]{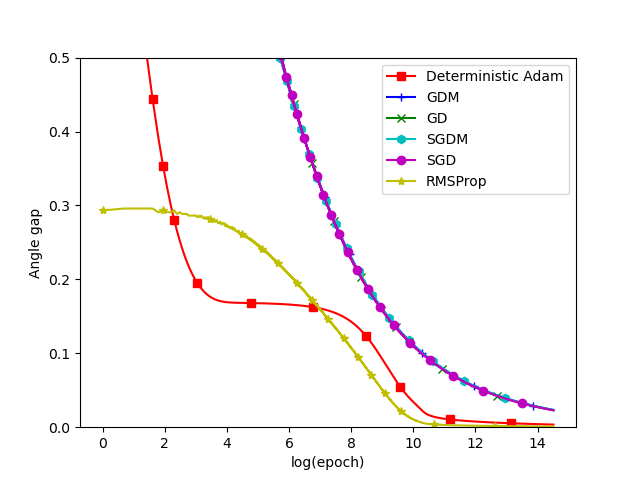}
          \caption{Angle: $\eta=0.1$, random seed $=1$}
\end{subfigure}%
\end{minipage}%

\begin{minipage}{1.0\textwidth}
\centering
\begin{subfigure}{.45\textwidth}
          \centering
          \includegraphics[width=1.0\textwidth]{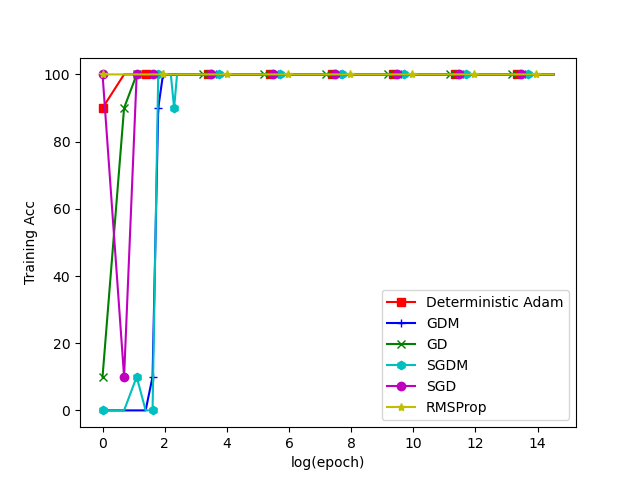}
          \caption{Accuracy: $\eta=0.1$, random seed $=2$}
        \end{subfigure}%
        \hspace{1mm}
\begin{subfigure}{.45\textwidth}
          \centering
          \includegraphics[width=1.0\textwidth]{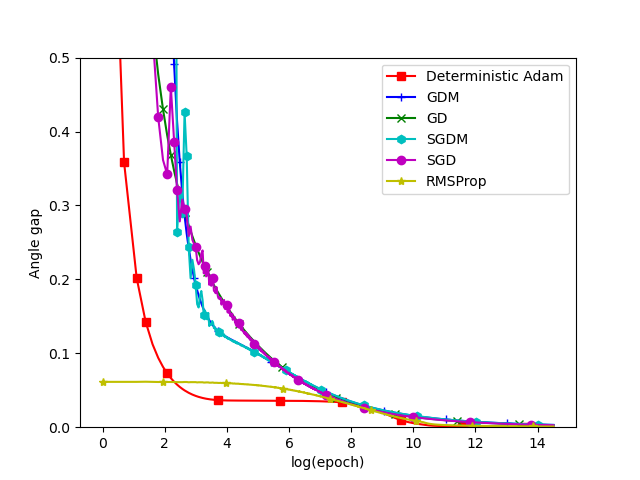}
          \caption{Angle: $\eta=0.1$, random seed $=2$}
\end{subfigure}%
\end{minipage}%

\begin{minipage}{1.0\textwidth}
\centering
\begin{subfigure}{.45\textwidth}
          \centering
          \includegraphics[width=1.0\textwidth]{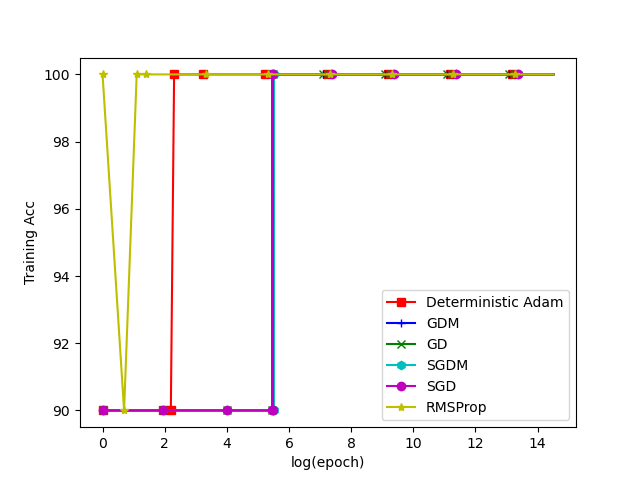}
          \caption{Accuracy: $\eta=0.001$, random seed $=1$}
        \end{subfigure}%
        \hspace{1mm}
\begin{subfigure}{.45\textwidth}
          \centering
          \includegraphics[width=1.0\textwidth]{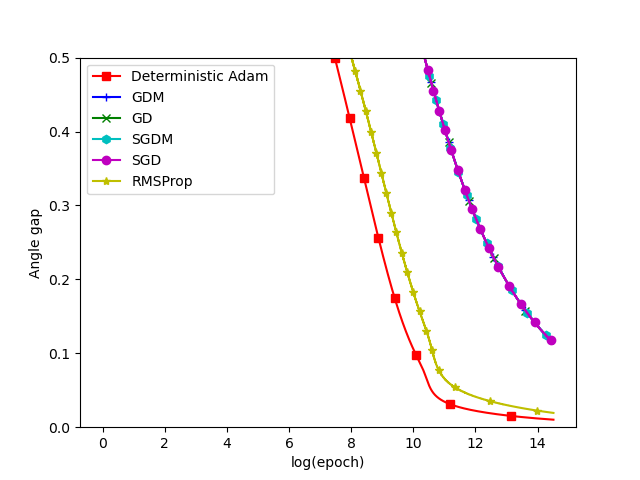}
          \caption{Angle: $\eta=0.001$, random seed $=1$}
\end{subfigure}%
\end{minipage}%

\begin{minipage}{1.0\textwidth}
\centering
\begin{subfigure}{.45\textwidth}
          \centering
          \includegraphics[width=1.0\textwidth]{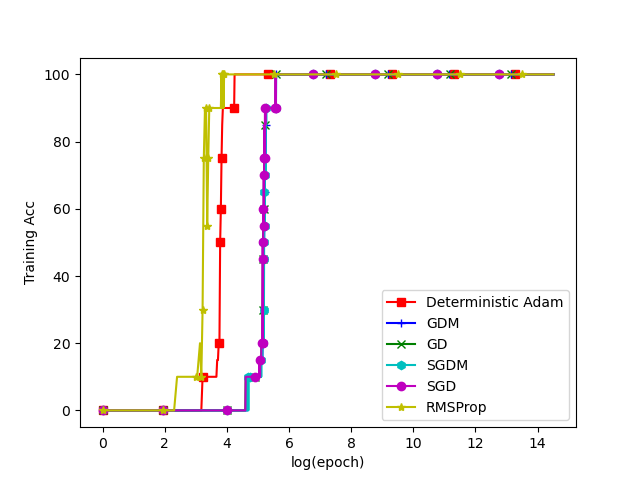}
          \caption{Accuracy: $\eta=0.001$, random seed $=2$}
        \end{subfigure}%
        \hspace{1mm}
\begin{subfigure}{.45\textwidth}
          \centering
          \includegraphics[width=1.0\textwidth]{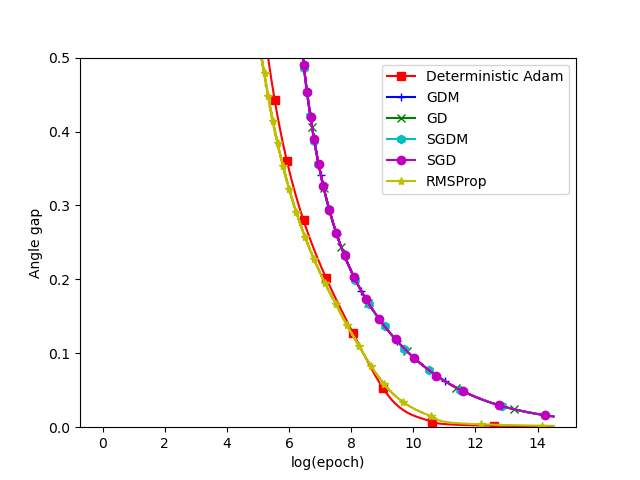}
          \caption{Angle: $\eta=0.001$, random seed $=2$}
\end{subfigure}%
\end{minipage}%
\centering
\caption{Comparison of (S)GD, (S)GDM, deterministic Adam, and stochastic Adam on the synthetic dataset in \cite{soudry2018implicit}.}
\label{fig: deter_experi_1}
\end{figure}

\subsubsection{Adam on ill-posed dataset}
\label{appen:ill}
In Figure 3 of \cite{soudry2018implicit}, an ill-posed synthetic dataset is proposed to support the argument "Adam does not converge to max-margin solution", which contradicts to the theoretical results of this paper. We re-conduct the experiment of Figure 3 in \cite{soudry2018implicit} with the same ill-posed synthetic dataset with different learning rates and different random seeds as Figure \ref{fig: ill_experi_1}. Figure \ref{fig: ill_experi_1}. (f) is  similar to Figure 3 in \cite{soudry2018implicit}, where with learning rate $\eta=0.1$ and random seed $1$, the angle of GD to the max-margin solution is smaller than Adam all the time. However, it can be observed from the amplified figure that the angle of GD keeps still and above $0$ all the time, meaning that GD doesn't converge to the max-margin solution under this setting. However, the angle of Adam to the max-margin solution still keeps decreasing and it's unreasonable to claim "Adam doesn't converge to the max-margin solution"  in this case (the same issue exists in Figure 3 in \cite{soudry2018implicit}). Also, as we mentioned at the beginning of this section, this dataset is ill-posed, which is due to the imbalance between the two components of the data (for all data $((x_1,x_2),y)$ in the dataset, $\vert x_1\vert $ is always smaller than $2$, while $\vert x_2\vert$ is larger than $10$ (and even larger than $30$ despite two data in the dataset)), which requires smaller learning rate. To tackle this problem, we need to tune down the learning rate. By Figure \ref{fig: ill_experi_1}. (b) and (d), after scaling down the learning rate, both GD's angle and Adam's angle keep decreasing.

\begin{figure}[htbp]
\centering
\vspace{-1mm}
\begin{minipage}{1.0\textwidth}
\centering
\begin{subfigure}{.45\textwidth}
          \centering
          \includegraphics[width=1.0\textwidth]{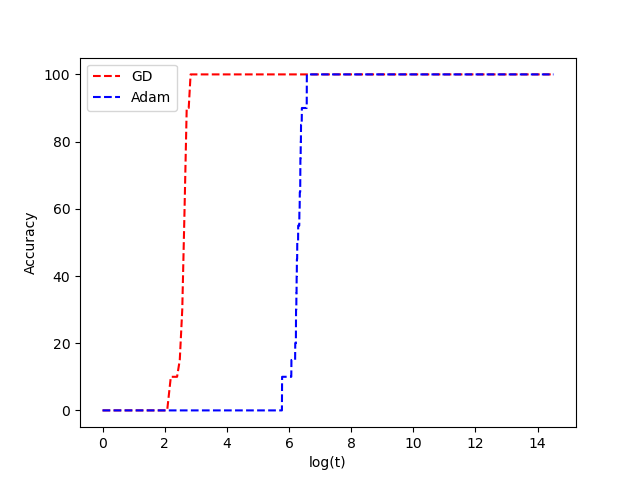}
          \caption{Comparison of Accuracy: $\eta=0.001$, random seed $=1$}
        \end{subfigure}%
        \hspace{1mm}
\begin{subfigure}{.45\textwidth}
          \centering
          \includegraphics[width=1.0\textwidth]{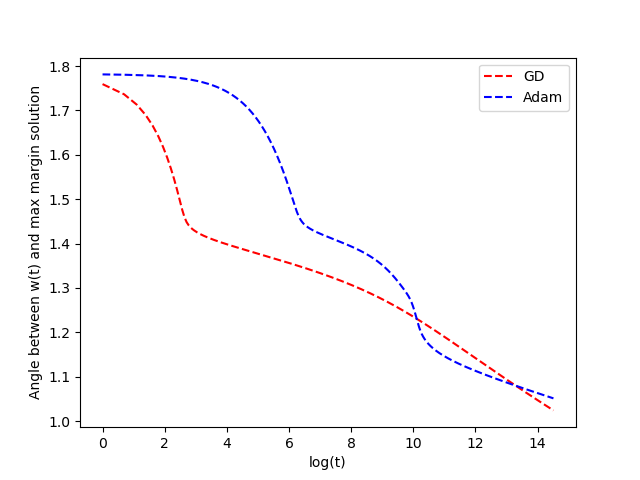}
          \caption{Comparison of Angle: $\eta=0.001$, random seed $=1$}
\end{subfigure}%
\end{minipage}%

\begin{minipage}{1.0\textwidth}
\centering
\begin{subfigure}{.45\textwidth}
          \centering
          \includegraphics[width=1.0\textwidth]{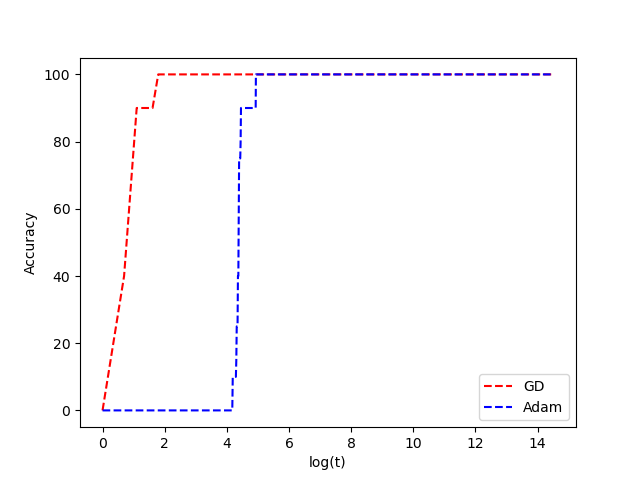}
          \caption{Comparison of Accuracy: $\eta=0.001$, random seed $=2$}
        \end{subfigure}%
        \hspace{1mm}
\begin{subfigure}{.45\textwidth}
          \centering
          \includegraphics[width=1.0\textwidth]{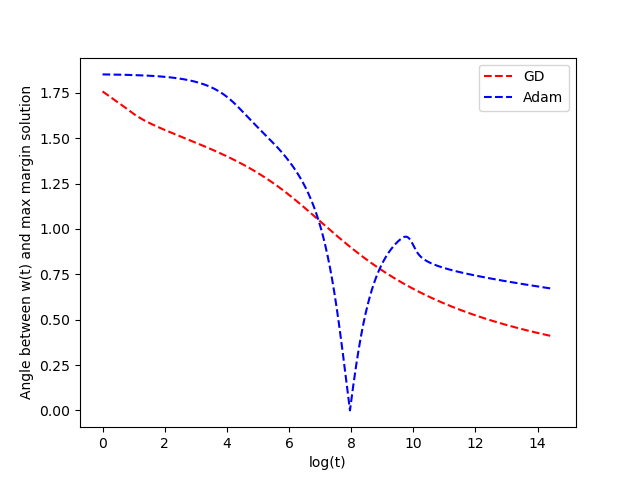}
          \caption{Comparison of Angle: $\eta=0.001$, random seed $=2$}
\end{subfigure}%
\end{minipage}%

\begin{minipage}{1.0\textwidth}
\centering
\begin{subfigure}{.45\textwidth}
          \centering
          \includegraphics[width=1.0\textwidth]{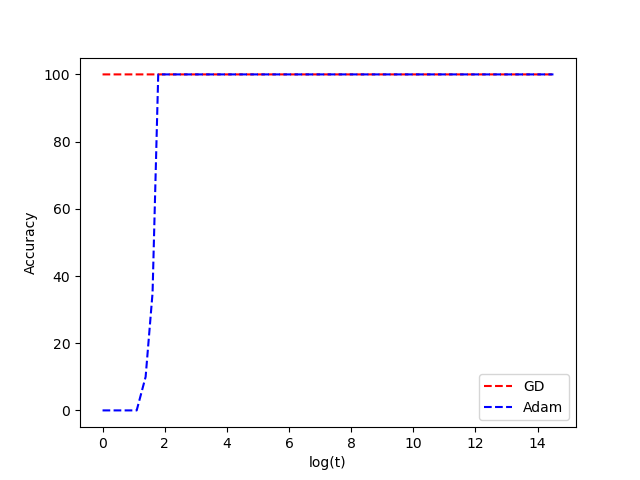}
          \caption{Comparison of Accuracy: $\eta=0.1$, random seed $=1$}
        \end{subfigure}%
        \hspace{1mm}
\begin{subfigure}{.45\textwidth}
          \centering
          \includegraphics[width=1.0\textwidth]{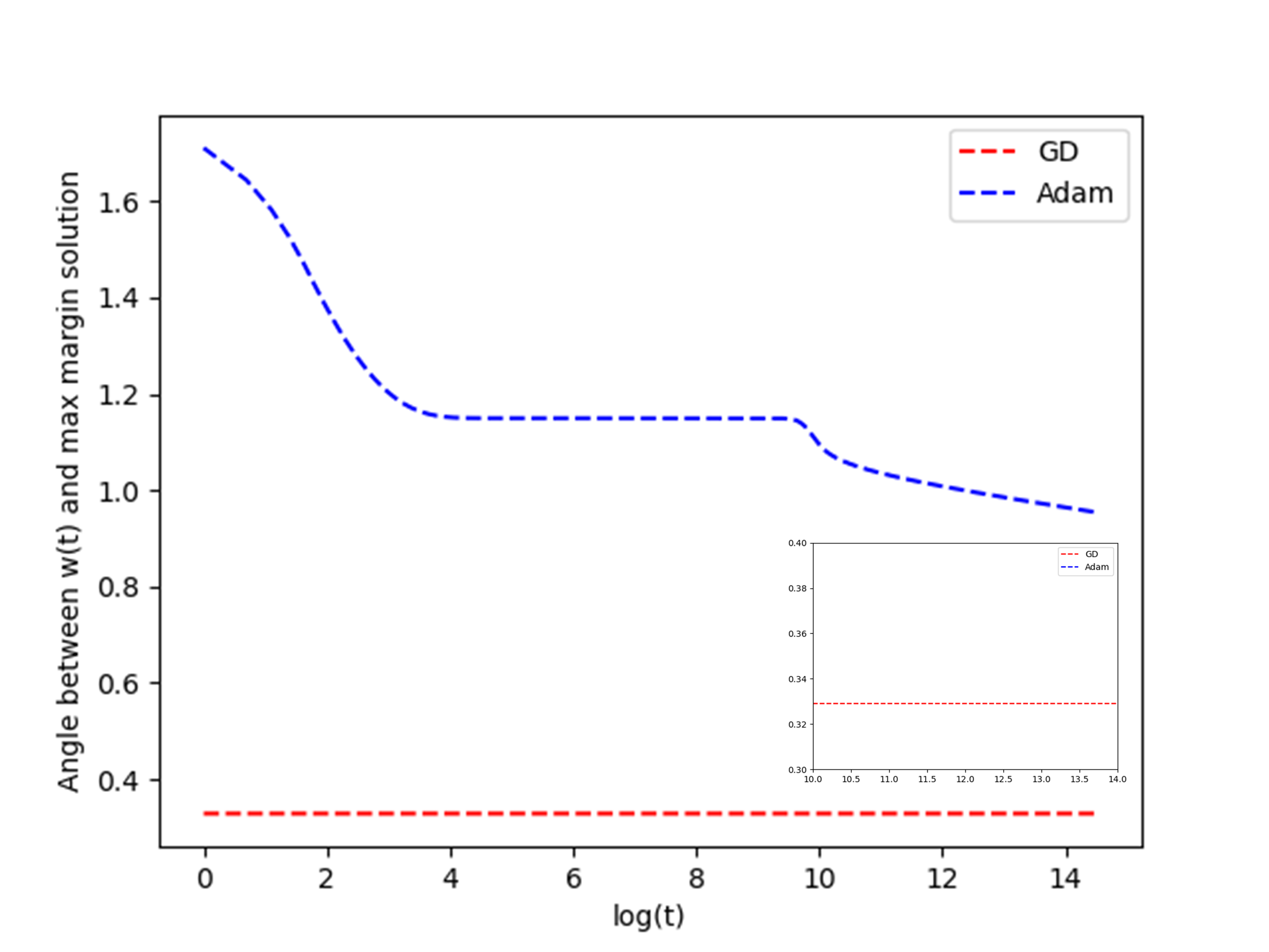}
          \caption{Comparison of Angle: $\eta=0.1$, random seed $=1$}
\end{subfigure}%
\end{minipage}%

\begin{minipage}{1.0\textwidth}
\centering
\begin{subfigure}{.45\textwidth}
          \centering
          \includegraphics[width=1.0\textwidth]{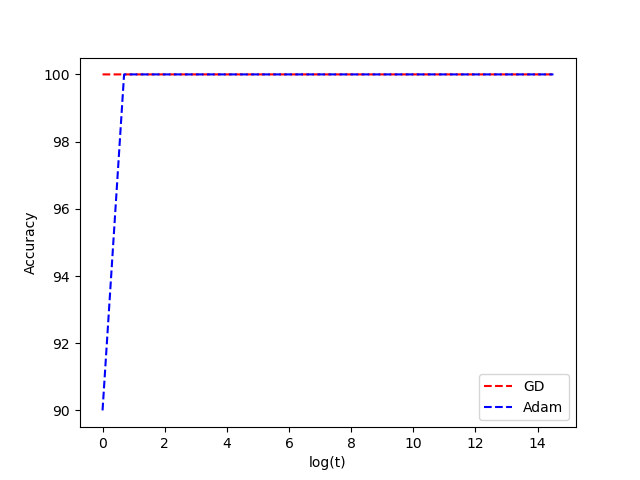}
          \caption{Comparison of Accuracy: $\eta=0.1$, random seed $=2$}
        \end{subfigure}%
        \hspace{1mm}
\begin{subfigure}{.45\textwidth}
          \centering
          \includegraphics[width=1.0\textwidth]{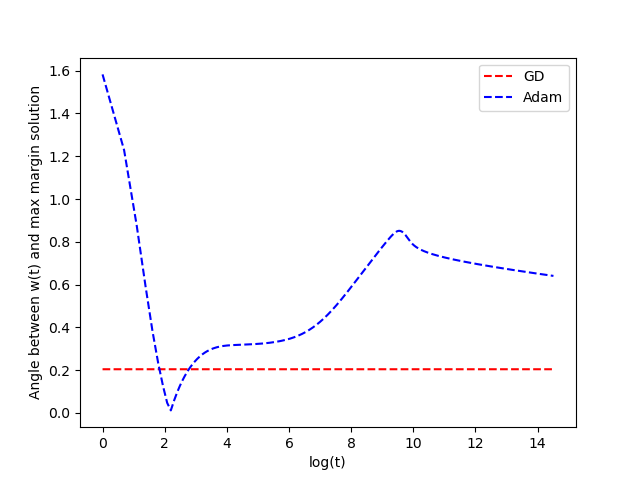}
          \caption{Comparison of Angle: $\eta=0.1$, random seed $=2$}
\end{subfigure}%
\end{minipage}%

\centering
\caption{Comparison of GD and Adam on the ill-posed synthetic dataset in \cite{soudry2018implicit}.}
\label{fig: ill_experi_1}
\end{figure}

\subsection{Evidence from deep neural networks}
We conduct an experiment on the MNIST dataset using the four layer convolutional networks used in \cite{lyu2019gradient, wang2021implicit} (first proposed by \cite{madry2017towards}) to verify whether SGD and SGDM still behave similarly in (homogeneous) deep neural networks. The learning rates of the optimizers are all set to be the default in Pytorch. The results can be seen in Figure \ref{fig: cnn}. It can be observed that (1). SGDM achieves similar test accuracy compared to SGD while (2). SGDM converges faster than SGD.

\begin{figure}[htbp]
\centering
\vspace{-1mm}

\begin{minipage}{1.0\textwidth}
\centering
\begin{subfigure}{.45\textwidth}
          \centering
          \includegraphics[width=1.0\textwidth]{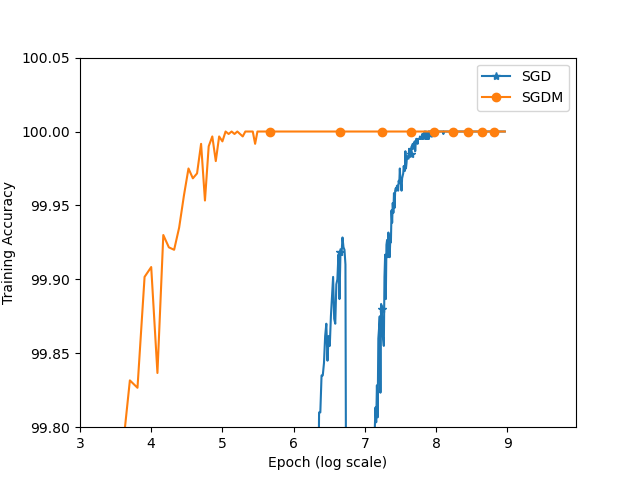}
          \caption{Comparison of Training Accuracy}
        \end{subfigure}%
        \hspace{1mm}
\begin{subfigure}{.45\textwidth}
          \centering
          \includegraphics[width=1.0\textwidth]{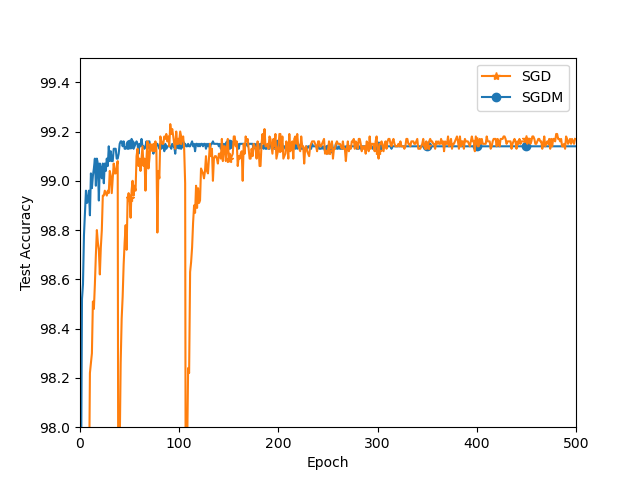}
          \caption{Comparison of Test Accuracy}
\end{subfigure}%
\end{minipage}%
\centering
\caption{Comparison of SGD and SGDM on MNIST with a four-layer CNN.}
\label{fig: cnn}
\end{figure}


\end{document}